\newcommand{\arxiv}[1]{\iftoggle{colt}{}{#1}}
\newcommand{\colt}[1]{\iftoggle{colt}{#1}{}}
\global\togglefalse{colt}
\newcommand{\loose}{\looseness=-1}
\newcommand{\multiline}[1]{\parbox[t]{\dimexpr\linewidth-\algorithmicindent}{#1}}
\newcommand{\neutralize}[1]{\expandafter\let\csname c@#1\endcsname\count@}
\declaretheorem[name=Theorem,parent=section]{theorem}
\declaretheorem[name=Lemma,parent=section]{lemma}
\declaretheorem[name=Corollary,parent=section]{corollary}
\declaretheorem[name=Assumption, parent=section]{assumption}
\declaretheorem[name=Condition, parent=section]{condition}
\declaretheorem[name=Remark,parent=section]{remark}
\declaretheorem[name=Proposition, parent=section]{proposition}
    \let\Cref\crtCref
    \let\cref\crtcref
\newcommand{\creftitle}[1]{\crtcref{#1}}
\renewenvironment{proof}[1][Proof]%
{%
	\par\noindent{\bfseries\upshape {#1.}\ }%
}%
{\qed\newline}
\theoremstyle{plain}
\newtheorem{definition}[theorem]{Definition}
\xpatchcmd{\proof}{\itshape}{\normalfont\proofnameformat}{}{}
\newcommand{\proofnameformat}{\bfseries}
\newcommand{\pref}[1]{\cref{#1}}
\newcommand{\pfref}[1]{Proof of \pref{#1}}
\renewcommand{\eqref}[1]{\texorpdfstring{\hyperref[#1]{(\ref*{#1})}}{(\ref*{#1})}}
\Crefname{assumption}{Assumption}{Assumptions}
\def\ddefloop#1{\ifx\ddefloop#1\else\ddef{#1}\expandafter\ddefloop\fi}
\def\ddef#1{\expandafter\def\csname bb#1\endcsname{\ensuremath{\mathbb{#1}}}}
\def\ddefloop#1{\ifx\ddefloop#1\else\ddef{#1}\expandafter\ddefloop\fi}
\def\ddef#1{\expandafter\def\csname b#1\endcsname{\ensuremath{\mathbf{#1}}}}
\def\ddef#1{\expandafter\def\csname sf#1\endcsname{\ensuremath{\mathsf{#1}}}}
\def\ddef#1{\expandafter\def\csname c#1\endcsname{\ensuremath{\mathcal{#1}}}}
\def\ddef#1{\expandafter\def\csname h#1\endcsname{\ensuremath{\widehat{#1}}}}
\def\ddef#1{\expandafter\def\csname hc#1\endcsname{\ensuremath{\widehat{\mathcal{#1}}}}}
\def\ddef#1{\expandafter\def\csname t#1\endcsname{\ensuremath{\widetilde{#1}}}}
\def\ddef#1{\expandafter\def\csname tc#1\endcsname{\ensuremath{\widetilde{\mathcal{#1}}}}}
\def\ddefloop#1{\ifx\ddefloop#1\else\ddef{#1}\expandafter\ddefloop\fi}
\def\ddef#1{\expandafter\def\csname scr#1\endcsname{\ensuremath{\mathscr{#1}}}}
\newcommand{\ls}{\ell}
\newcommand{\indic}{\mathbbm{I}}    %
\newcommand{\eps}{\epsilon}
\newcommand{\veps}{\varepsilon}
\DeclareMathOperator*{\argmin}{arg\,min} %
\DeclareMathOperator*{\argmax}{arg\,max}
\def\ddef#1{\expandafter\def\csname b#1\endcsname{\ensuremath{\mb{#1}}}}
\newcommand{\sss}[1]{{\scriptscriptstyle#1}}
\newcommand{\ind}[1]{^{\sss{(#1)}}}
\DeclarePairedDelimiter{\abs}{\lvert}{\rvert} %
\DeclarePairedDelimiter{\brk}{[}{]}
\DeclarePairedDelimiter{\crl}{\{}{\}}
\DeclarePairedDelimiter{\prn}{(}{)}
\DeclarePairedDelimiter{\ceil}{\lceil}{\rceil}
\DeclarePairedDelimiter{\floor}{\lfloor}{\rfloor}
\let\P\undefined
\DeclareMathOperator{\En}{\mathbb{E}}
\DeclareMathOperator{\P}{P}
\newcommand{\mb}[1]{\boldsymbol{#1}}
\renewcommand{\bm}[1]{\boldsymbol{#1}}
\newcommand{\wt}[1]{\widetilde{#1}}
\newcommand{\wh}[1]{\widehat{#1}}
\newcommand{\wb}[1]{\widebar{#1}}
\let\underbar\undefined
\let\save@mathaccent\mathaccent
\newcommand*\if@single[3]{%
	\setbox0\hbox{${\mathaccent"0362{#1}}^H$}%
	\setbox2\hbox{${\mathaccent"0362{\kern0pt#1}}^H$}%
	\ifdim\ht0=\ht2 #3\else #2\fi
}
\newcommand*\rel@kern[1]{\kern#1\dimexpr\macc@kerna}
\newcommand*\widebar[1]{\@ifnextchar^{{\wide@bar{#1}{0}}}{\wide@bar{#1}{1}}}
\newcommand*\underbar[1]{\@ifnextchar_{{\under@bar{#1}{0}}}{\under@bar{#1}{1}}}
\newcommand*\wide@bar[2]{\if@single{#1}{\wide@bar@{#1}{#2}{1}}{\wide@bar@{#1}{#2}{2}}}
\newcommand*\under@bar[2]{\if@single{#1}{\under@bar@{#1}{#2}{1}}{\under@bar@{#1}{#2}{2}}}
\newcommand*\wide@bar@[3]{%
	\begingroup
	\def\mathaccent##1##2{%
		\let\mathaccent\save@mathaccent
		\if#32 \let\macc@nucleus\first@char \fi
		\setbox\z@\hbox{$\macc@style{\macc@nucleus}_{}$}%
		\setbox\tw@\hbox{$\macc@style{\macc@nucleus}{}_{}$}%
		\dimen@\wd\tw@
		\advance\dimen@-\wd\z@
		\divide\dimen@ 3
		\@tempdima\wd\tw@
		\advance\@tempdima-\scriptspace
		\divide\@tempdima 10
		\advance\dimen@-\@tempdima
		\ifdim\dimen@>\z@ \dimen@0pt\fi
		\rel@kern{0.6}\kern-\dimen@
		\if#31
		\overline{\rel@kern{-0.6}\kern\dimen@\macc@nucleus\rel@kern{0.4}\kern\dimen@}%
		\advance\dimen@0.4\dimexpr\macc@kerna
		\let\final@kern#2%
		\ifdim\dimen@<\z@ \let\final@kern1\fi
		\if\final@kern1 \kern-\dimen@\fi
		\else
		\overline{\rel@kern{-0.6}\kern\dimen@#1}%
		\fi
	}%
	\macc@depth\@ne
	\let\math@bgroup\@empty \let\math@egroup\macc@set@skewchar
	\mathsurround\z@ \frozen@everymath{\mathgroup\macc@group\relax}%
	\macc@set@skewchar\relax
	\let\mathaccentV\macc@nested@a
	\if#31
	\macc@nested@a\relax111{#1}%
	\else
	\def\gobble@till@marker##1\endmarker{}%
	\futurelet\first@char\gobble@till@marker#1\endmarker
	\ifcat\noexpand\first@char A\else
	\def\first@char{}%
	\fi
	\macc@nested@a\relax111{\first@char}%
	\fi
	\endgroup
}
\newcommand*\under@bar@[3]{%
	\begingroup
	\def\mathaccent##1##2{%
		\let\mathaccent\save@mathaccent
		\if#32 \let\macc@nucleus\first@char \fi
		\setbox\z@\hbox{$\macc@style{\macc@nucleus}_{}$}%
		\setbox\tw@\hbox{$\macc@style{\macc@nucleus}{}_{}$}%
		\dimen@\wd\tw@
		\advance\dimen@-\wd\z@
		\divide\dimen@ 3
		\@tempdima\wd\tw@
		\advance\@tempdima-\scriptspace
		\divide\@tempdima 10
		\advance\dimen@-\@tempdima
		\ifdim\dimen@>\z@ \dimen@0pt\fi
		\rel@kern{0.6}\kern-\dimen@
		\if#31
		\underline{\rel@kern{-0.6}\kern\dimen@\macc@nucleus\rel@kern{0.4}\kern\dimen@}%
		\advance\dimen@0.4\dimexpr\macc@kerna
		\let\final@kern#2%
		\ifdim\dimen@<\z@ \let\final@kern1\fi
		\if\final@kern1 \kern-\dimen@\fi
		\else
		\underline{\rel@kern{-0.6}\kern\dimen@#1}%
		\fi
	}%
	\macc@depth\@ne
	\let\math@bgroup\@empty \let\math@egroup\macc@set@skewchar
	\mathsurround\z@ \frozen@everymath{\mathgroup\macc@group\relax}%
	\macc@set@skewchar\relax
	\let\mathaccentV\macc@nested@a
	\if#31
	\macc@nested@a\relax111{#1}%
	\else
	\def\gobble@till@marker##1\endmarker{}%
	\futurelet\first@char\gobble@till@marker#1\endmarker
	\ifcat\noexpand\first@char A\else
	\def\first@char{}%
	\fi
	\macc@nested@a\relax111{\first@char}%
	\fi
	\endgroup
}
\newcommand{\w}{\bm{w}}
\newcommand{\forwardexo}{$\learnlevel^{\exo}$.\texttt{bc}\xspace}
\newcommand{\forwardexoeq}{\learnlevel^{\exo}.\texttt{bc}\xspace}
\newcommand{\mainalge}{$\learnlevel^{\exo}$\xspace}
\newcommand{\alghyperref}[1]{\hyperref[#1]{Alg.~\ref*{#1}}}
\newcommand{\setupi}{\textbf{Setup \I}}
\newcommand{\setupii}{\textbf{Setup \II}}
\newcommand{\golf}{\texttt{GOLF}\xspace}
\newcommand{\bzeta}{\bm{\zeta}}
\newcommand{\Ccor}{C_{\texttt{exo}}}
\newcommand{\Ccov}{C_{\texttt{cov}}}
\newcommand{\betastat}{\beta_{\texttt{stat}}}
\newcommand{\Cexo}{\Ccor}
\newcommand{\Tendo}{T^{\texttt{endo}}}
\newcommand{\Texo}{T^{\texttt{exo}}}
\newcommand{\gobs}{g^{\texttt{obs}}}
\newcommand{\phistar}{\phi^{\star}}
\newcommand{\pibar}{\bar{\pi}}
\newcommand{\ldef}{\vcentcolon=}
\renewcommand{\ln}{\log}        %
\newcommand{\tfrak}{\mathfrak{t}}
\newcommand{\pihat}{\wh{\pi}}
\newcommand{\pihatb}{\bm{\pihat}}
\newcommand{\pistar}{\pi^{\star}}
\newcommand{\pibb}{\tilde{\bm{\pi}}}
\let\oldparagraph\paragraph
\renewcommand{\paragraph}[1]{\oldparagraph{#1.}}
\renewcommand{\colon}{:}        %
\newcommand{\reg}{\texttt{reg}}
\newcommand{\reals}{\mathbb{R}}
\renewcommand{\P}{\mathbb{P}}
\newcommand{\E}{\mathbb{E}}
\newcommand{\nn}{\nonumber} 
\newcommand{\ldotst}{%
	\mathinner{{\ldotp}{\ldotp}}%
}
\newcommand{\unifa}{\pi_\texttt{unif}}
\newcommand{\piunif}{\pi_\texttt{unif}}
\newcommand{\unif}{\texttt{unif}}
\newcommand{\stat}{\texttt{stat}}
\renewcommand{\a}{\bm{a}}
\newcommand{\bxi}{\bm{\xi}}
\newcommand{\x}{\bm{x}}
\newcommand{\y}{\bm{y}}
\newcommand{\wtilde}[1]{\widetilde{#1}}
\newcommand{\fhat}{\wh{f}}
\newcommand{\mainalg}{\spanrl}
\newcommand{\Pim}{\Pi_{\texttt{S}}}
\newcommand{\PiS}{\Pi_{\texttt{S}}}
\newcommand{\bpi}{\bm{\pi}}
\renewcommand{\emptyset}{\varnothing}
\newcommand{\algcommentlight}[1]{\textcolor{blue!70!black}{\transparent{0.5}\footnotesize{\texttt{\textbf{//\hspace{2pt}#1}}}}}
\newcommand{\algcommentbiglight}[1]{\textcolor{blue!70!black}{\transparent{0.5}\footnotesize{\texttt{\textbf{/* #1~*/}}}}}
\newcommand{\approxleq}{\lesssim}
\newcommand{\bigoh}{O}
\newcommand{\bigoht}{\wt{O}}
\newcommand{\poly}{\mathrm{poly}}
\newcommand{\tv}[2]{D_{\mathrm{tv}}(#1,#2)}
\newcommand{\mathand}{\quad\text{and}\quad}
\newcommand{\iid}{i.i.d.\xspace}
\newcommand{\filt}{\mathscr{F}}
\newcommand{\test}{\texttt{test}}
\newcommand{\pibell}{\tilde\pi}
\newcommand{\Vhat}{\widehat{V}}
\newcommand{\pib}{\tilde{\pi}}
\newcommand{\forward}{\texttt{BehaviorCloning}}
\newcommand{\imit}{\texttt{bc}}
\newcommand{\exe}{\texttt{bc}}
\newcommand{\rvflF}{\texttt{RVFS.bc}}
\newcommand{\simu}{\texttt{sim}}
\newcommand{\Phat}{\widehat{\bm{\cP}}}
\newcommand{\cVhat}{\widehat{\cV}}
\newcommand{\Pims}{\Pi_\texttt{S}}
\newcommand{\bQtilde}{\wtilde{\bm{Q}}}
\newcommand{\bQhat}{\widehat{\bm{Q}}}
\newcommand{\exo}{\texttt{exo}}
\newcommand{\Nest}{N_\texttt{est}}
\newcommand{\framework}{RLLS\xspace}
\newcommand{\ggolf}{\texttt{SimGolf}\xspace}
\renewcommand{\mainalg}{\texttt{RVFS}\xspace}
\newcommand{\Cpush}{C_{\texttt{push}}}
\newcommand{\Qstar}{Q^\star}
\newcommand{\Vstar}{V^\star}
\newcommand{\Qhat}{\widehat{Q}}
	\newcommand{\learnlevel}{\texttt{RVFS}\xspace}
\newcommand{\learnlevelh}[1][h]{$\texttt{RVFS}_{#1}$\xspace}
\newcommand{\boost}{\texttt{boost}}
\newcommand{\rel}{\texttt{real}}
\let\OldStatex\Statex
\renewcommand{\Statex}[1][3]{%
  \setlength\@tempdima{\algorithmicindent}%
  \OldStatex\hskip\dimexpr#1\@tempdima\relax}
\newcommand{\paragraphi}[1]{\par\noindent\emph{#1.}}
\newcommand{\fakepar}[1]{\paragraph{#1}}
    \title[The Power of Resets in Online Reinforcement Learning]{The Power of Resets in Online Reinforcement
    Learning}
  \title{The Power of Resets in Online Reinforcement Learning}
  \author{Zakaria Mhammedi\\{\small \texttt{mhammedi@google.com}} \and    Dylan J. Foster\\{\small \texttt{dylanfoster@microsoft.com}} \and Alexander Rakhlin\\{\small \texttt{rakhlin@mit.edu}}
}
\date{}
\begin{document}
	
	\maketitle
	
	\begin{abstract}
Simulators are a pervasive tool in reinforcement learning, but most existing algorithms cannot efficiently exploit simulator access---particularly in high-dimensional domains that require general function approximation. We explore the power of simulators through online reinforcement learning with \emph{local simulator access} (or, local planning), an RL protocol where the agent is allowed to \emph{reset} to previously observed states and follow their dynamics during training. We use local simulator access to unlock new statistical guarantees that were previously out of reach:\loose
\colt{\begin{enumerate}[leftmargin=3.5em,rightmargin=3em,topsep=2pt,itemsep=2pt]}
  \arxiv{\begin{enumerate}}
\item We show that MDPs with low
  \emph{coverability} \citep{xie2023role}---a general structural condition that subsumes
  Block MDPs and Low-Rank MDPs---can be learned in a sample-efficient
  fashion with \emph{only $Q^{\star}$-realizability} (realizability
  of the optimal state-value function); existing online RL algorithms
  require significantly stronger representation conditions.
\item As a consequence, we show that the notorious \emph{Exogenous Block MDP} problem \citep{efroni2022provably} is tractable under local simulator access.
\end{enumerate}
The results above are achieved through a computationally inefficient
algorithm. We complement them with a more computationally efficient algorithm, \mainalg (\emph{Recursive Value Function Search}), which achieves provable sample complexity guarantees under a strengthened statistical assumption known as \emph{pushforward coverability}. \mainalg can be viewed as a principled, provable counterpart to a successful empirical paradigm that combines recursive search (e.g., MCTS) with value function approximation.
 	\end{abstract}

        \colt{
          \begin{keywords}%
            Reinforcement learning, learning theory, generative model,
            simulator, coverability.%
          \end{keywords}
        }

	\section{Introduction}
	\label{sec:intro}

Simulators are a widely used tool in reinforcement learning. Many of the most
well-known benchmarks for reinforcement learning research make use of
simulators (Atari \citep{bellemare2012investigating}, MuJoCo
\citep{todorov2012mujoco}, OpenAI Gym \citep{brockman2016openai}, 
DeepMind Control Suite \citep{tassa2018deepmind}), and high-quality
simulators are available for a wide range of real-world control tasks,
including robotic control
\citep{qassem2010modeling,akkaya2019solving}, autonomous vehicles
\citep{bojarski2016end,aradi2020survey}, and game playing \citep{silver2016mastering,silver2018general}. Simulators also provide a useful abstraction for \emph{planning} with a known or learned model, an important building block for
many RL techniques
\citep{schrittwieser2020mastering}. Yet, in spite of the ubiquity of simulators, almost all existing research into algorithm
design---empirical and theoretical---has focused on the \emph{online reinforcement learning} (where
only trajectory-based feedback is available), and does not take advantage of the extra
information available through the simulator. Relatively
little is known about the full power of RL with
simulator access, either in terms of algorithmic principles or
fundamental limits.\loose

We explore the power of simulators through online reinforcement learning with \emph{local
  simulator access} (\framework for short), also known as \emph{local
  planning}
\citep{weisz2021query,li2021sample,amortila2022few,weisz2022confident,yin2022efficient,yin2023sample}. Here,
the agent learns by repeatedly executing policies
and observing the resulting trajectories (as in online RL), but is
allowed to \emph{reset} to previously observed states and
follow their dynamics during training.

Empirically,
algorithms based on local simulators have received limited
investigation, but with promising results. Notably, the Go-Explore
algorithm \citep{ecoffet2019go,ecoffet2021first} uses local simulator access to achieve state-of-the-art performance for Montezuma's
Revenge (a difficult Atari game that requires systematic
exploration), beating the performance of the best agents
trained with online RL
\citep{badia2020agent57,guo2022byol} by a significant margin that has
yet to be closed. The successful line of research on AlphaGo and
successors
\citep{silver2016mastering,silver2018general,schrittwieser2020mastering}
also uses local simulator access, albeit at test time in addition to
training time.

              These results suggest that developing improved algorithm design
              principles for RL with local simulator access could have
              significant practical implications, but current theoretical understanding
              of local simulators is limited. Recent work has
      shown that local simulator access has provable benefits for
      reinforcement learning with various types of linear function
      approximation
      \citep{weisz2021query,li2021sample,amortila2022few,yin2022efficient,weisz2022confident},
      but essentially nothing is known for RL problems in large state
      spaces that demand general, potential neural function
      approximation. This leads us to ask: \colt{                  \emph{Can we develop algorithms for reinforcement
                    learning with general function approximation that
                    provably benefit from local simulator access?}}
      \arxiv{              \begin{quote}
                \begin{center}
                  \emph{Can we develop algorithms for reinforcement
                    learning with general function approximation that
                    provably benefit from local simulator access?}
                \end{center}
              \end{quote}}

              From an algorithm design perspective, perhaps the greatest challenge in using local simulators to speed up learning is to understand which
              states are ``informative'' in the sense that we should
              prioritize revisiting them.
              Here, we are faced with a chicken-and-egg problem: to
              understand which states to prioritize, we must
              explore and gather information, but it is unclear how to
              do so efficiently unless we already have a way to
              understand which states are informative. It is
              natural to let function approximation\arxiv{ (valued-based,
              model-based, or otherwise)} guide us; to this end, recent research
              \citep{li2021sample,yin2022efficient,weisz2022confident}
              on linearly-parameterized RL with local simulators
              makes use of \emph{core-sets}: small, adaptively chosen
              sets of informative state-action pairs designed to cover the
              feature space and enable efficient value function
              learning. Core-sets facilitate sample complexity
              guarantees for linear models that are not possible
              without local simulator access (e.g., \citet{li2021sample}). Yet, for general function classes---particularly
              rich models like neural networks that do not readily
              support extrapolation---defining a suitable notion of
              core-set is challenging. Consequently, existing
              techniques have yet to meaningfully leverage local simulator access beyond the linear regime.

\subsection{Contributions}

We show that local simulator access unlocks new guarantees for online
\arxiv{reinforcement learning}\colt{RL} with general value function approximation---statistical and computational---that were
previously out of reach.\loose

\fakepar{Sample-efficient learning} We show that MDPs with low
  \emph{coverability} \citep{xie2023role}---a general structural condition that subsumes
  Block MDPs and Low-Rank MDPs---can be learned in a sample-efficient
  fashion with \emph{only $Q^{\star}$-realizability} (that is, realizability
  for the optimal state-action value function). This is achieved through a
  new algorithm, \ggolf, that augments the principle of global
  optimism with local simulator access, and improves
  upon the best existing guarantees for the fully online RL setting,
  which require significantly stronger representation conditions. As a consequence, we show for the first time that the notoriously
  challenging \emph{Exogenous Block MDP} (ExBMDP) problem
  \citep{efroni2022provably,efroni2022sample} is tractable in its most
  general form under
  local simulator access.\loose

\fakepar{Practical, computationally efficient learning} Our results above are achieved
  through a computationally inefficient algorithm. We complement them
  with a practical and computationally efficient algorithm, \mainalg (``Recursive Value
  Function Search''), which achieves sample-efficient learning
  guarantees with general value function approximation under a
  strengthened, yet novel, statistical assumption known as
  \emph{pushforward coverability}
  \arxiv{\citep{xie2021batch,amortila2024scalable}}\colt{\citep{xie2021batch}}. Assuming either i)
  realizability of the optimal state-value function $V^{\star}$ and a state-action gap or ii)
  realizability of $V^{\pi}$ for all $\pi$, \mainalg achieves
  polynomial sample complexity in a computationally efficient fashion,
  and leads to guarantees for a new class of
  Exogenous Block MDPs with \emph{weakly correlated} exogenous noise.   \mainalg explores by building core-sets with a novel
  value function-guided scheme, and can be viewed as a principled
  counterpart to algorithms including MCTS and AlphaZero
  \citep{silver2016mastering,silver2018general,ecoffet2019go,ecoffet2021first,yin2023sample},
  that combine recursive search with value function
  approximation. Compared to these approaches, \mainalg is designed to
  provably address stochastic environments and distribution shift.\loose

\fakepar{Paper organization}
\cref{sec:setting} \arxiv{formally }introduces\arxiv{ our problem setting,
  including} the local simulator framework. \cref{sec:sample}
presents our main sample complexity guarantees, and
               \cref{sec:computation} gives computationally
               efficient algorithms. We close with discussion in
               \cref{sec:discussion}. All proofs are deferred to the appendix.

\colt{\vspace{-14pt}}

	\section{Setup: Reinforcement Learning with Local Simulator Access}
	\label{sec:setting}

We consider an episodic reinforcement learning setting. A Markov
Decision Process (MDP) is a tuple $\cM = ( \cX, \cA, T, R, H)$, where
$\cX$ is a (large/potentially infinite) state space, $\cA$ is the
action space (we abbreviate $A=\abs{\cA}$), $H \in \bbN$ is the horizon, $R = \{R_h\}_{h=1}^H$ is
the reward function (where $R_h : \cX \times \cA \rightarrow [0,1]$)
and $T = \{T_h\}_{h=0}^{H}$ is the transition distribution (where $T_h
: \cX \times \cA \rightarrow \Delta(\cX)$), with the convention that
$T_0(\cdot\mid\emptyset)$ is the initial state distribution. A
policy is a sequence of functions $\pi = \{ \pi_h: \cX
\rightarrow \Delta(\cA)\}_{h=1}^H$; we use $\Pim$ to denote the set of
all such functions. When a policy is executed, it
generates a trajectory $(\bx_1,\ba_1,\br_1), \dots, (\bx_H, \ba_H,
\br_h)$ via the process $\ba_h \sim \pi_h(\bx_h), \br_h \sim
R_h(\bx_h,\ba_h), \bx_{h+1} \sim T_h(\cdot\mid\bx_h,\ba_h)$,
initialized from $\bx_1 \sim T_0(\cdot\mid\emptyset)$ (we use
$\bx_{H+1}$ to denote a terminal state with zero
reward). We write $\bbP^{\pi}\brk*{\cdot}$ and $\En^{\pi}\brk*{\cdot}$
to denote the law and expectation under this process. \loose

For a policy \(\pi\), $J(\pi) \coloneqq \bbE^\pi\brk[\big]{
  \sum_{h=1}^H \br_h}$ denotes expected reward, and the value functions are given by \colt{$V_h^\pi(x) \coloneqq \bbE^\pi\brk[\big]{\sum_{h'=h}^H \br_{h'} \mid \bx_h=x}$, and $Q_h^\pi(x,a) \coloneqq \bbE^\pi\brk[\big]{\sum_{h'=h}^H \br_{h'} \mid \bx_h=x,\ba_h=a}$.}
\arxiv{
\[
V_h^\pi(x) \coloneqq \bbE^\pi\brk*{\sum_{h'=h}^H \br_{h'} \mid \br_h=x}, \quad \text{ and } \quad Q_h^\pi(x,a) \coloneqq \bbE^\pi\brk*{\sum_{h'=h}^H \br_{h'} \mid \bx_h=x,\ba_h=a}.
\]}
We denote by $\pi^\star\arxiv{ = \{\pi^\star_h\}_{h=1}^H}$ the optimal
deterministic policy that maximizes $Q^{\pistar}$\arxiv{ at all states}, and
write $\Qstar\ldef{}Q^{\pistar}$ and $\Vstar\ldef{}V^{\pistar}$.

\arxiv{\subsection{Online Reinforcement Learning with Local
    Simulator Access}}
\colt{\fakepar{Online reinforcement learning with a local simulator}}
In the standard online reinforcement learning framework, the learner
repeatedly interacts with an (unknown) MDP by executing a policy and
observing the resulting trajectory, with the goal of maximizing the
total reward. Formally, for each episode $\tau\in\brk{N_\texttt{episodes}}$, the learner
selects a policy $\pi\ind{\tau} = \{\pi_h\ind{\tau}\}_{h=1}^H$, executes it
in the underlying MDP $\cM^\star$ and observes the trajectory
$\{(\bx_h\ind{\tau},\ba_h\ind{\tau},\br_h\ind{\tau})\}_{h=1}^H$. After all $N_\texttt{episodes}$
episodes conclude, the learner produces a policy $\pihat\in\Pim$ with
the goal of minimizing the risk given by $\En\brk*{J(\pistar) -  J(\pihat)}$.
In online RL with local simulator access, or \framework,  \citep{weisz2021query,li2021sample,yin2022efficient,weisz2022confident,yin2023sample},
we augment the online RL protocol as follows:\footnote{We use the term
  ``local simulator'' instead of ``local planning'' because we find it
  to be slightly more descriptive.} At each episode
$\tau\in\brk{N}$, instead of starting from a random initial state $\bx_1\sim{}T_0(\cdot\mid{}\emptyset)$, the
agent can \emph{reset} the MDP to any layer $h\in\brk{H}$ and any state
$\bx_h$ previously encountered, and proceed with a new episode starting
from this point. As in the online RL protocol, the goal is to produce a policy $\pihat\in\Pim$ such that \colt{$\En\brk*{J(\pistar) -  J(\pihat)} \leq \veps$}
\arxiv{\[
\En\brk*{J(\pistar) -  J(\pihat)} \leq \veps
\]}
with as few episodes of interaction as possible; our main results take
$N_\texttt{episodes}=\poly(C,\veps^{-1})$ for a suitable problem parameter $C$.
\colt{\framework acts as a happy medium between online RL, which is
  realistic but often intractable, and global simulator access (where the
  agent can query transitions for \emph{arbitrary}
state-action pairs) \citep{kearns1998finite,kakade2003sample,du2019good,yang2020reinforcement,lattimore2020learning}, which is
powerful, yet unrealistic.
}

\fakepar{Executable versus non-executable policies}
We focus on learning policies that can be executed without access to a local simulator (in other words, the
            local simulator used at train time, but not test
            time). Some recent work using local simulators for
            RL with linear function approximation
            \citep{weisz2021query} considers a more permissive setting
            where the final policy $\pi$ produced by the learner can be
            \emph{non-executable}; our function approximation requirements can be
            slightly relaxed in this case.\loose
            \begin{definition}[Non-executable policy]
  \label{def:executable}
We refer to a policy $\pi$ for
which computing $\pi(x)\in\Delta(\cA)$ for any $x\in\cX$ requires $n$ local
simulator queries as a \emph{non-executable policy with
  sample complexity $n$}.\loose
\end{definition}

\arxiv{
  \paragraph{Implications for planning}
    RL with local simulator access is a convenient abstraction for the
problem of \emph{planning}: Given a known (e.g., learned) model,
compute an optimal policy. Planning with a learned model is an important task in theory
\citep{foster2021statistical,liu2023optimistic} and practice (e.g., \citet{schrittwieser2020mastering}).
Since the model is known, computing an
optimal policy is a purely computational problem, not a statistical
problem. Nonetheless, for planning problems in large state spaces, where
enumerating over all states is undesirable, algorithms for online RL
with local simulator access can be directly applied, using the model
to simulate the environment the agent interacts with. Here, any
computationally efficient algorithm in our framework immediately yields an efficient
algorithm for planning. See \cref{sec:additional_related} for more discussion.\loose
}

          \arxiv{          \subsection{Additional Notation}}
          \colt{\fakepar{Additional notation}}
                    For any $m,n \in\mathbb{N}$, we denote by $[m\ldotst{}n]$
 the integer interval $\{m,\dots, n\}$. We also let
 $[n]\coloneqq [1\ldotst{}n]$. We refer to a scalar $c>0$ as an \emph{absolute constant} to indicate
 that it is independent of all problem parameters and use
 $\bigoht(\cdot)$ to denote a bound up to factors polylogarithmic in
 parameters appearing in the expression. We define $\unifa\in\Pim$ as the random policy that
 selects actions in $\cA$ uniformly\arxiv{ at random at each layer}. We define
 the occupancy measure for policy $\pi$ via $d_h^\pi(x,a) \coloneqq
 \bbP^\pi[\bx_h = x, \ba_h=a]$. For functions $g:\cX\times \cA\to\bbR$
 and $f:\cX\to\bbR$, we define Bellman backup operators by
 $\cT_h\brk{g}(x,a)=\En\brk*{\br_h+ \max_{a'\in
     \cA}g(\x_{h+1},a')\mid{}\x_h=x,\a_h=a}$ and
 $\cP_h\brk{f}(x,a)=\En\brk*{\br_h + f
   (\x_{h+1})\mid{}\x_h=x,\a_h=a}$. \dfedit{For a stochastic policy
   $\pi\in\Pim$, we will occasionally use the bold notation
   $\bm{\pi}_h(x)$ as shorthand for the random variable $\a_h\sim{}\pi_h(x)\in \Delta(\cA)$. 
   } For a function $f\colon \cA \rightarrow \reals$, we write $a'\in \argmax_{a\in \cA} f(a)$ to denote the action that maximizes $f$. If there are ties, we break them by picking the action with the smallest index; we assume without loss of generality that actions in $\cA$ are index from $1,\dots,|\cA|$.

        \section{New Sample-Efficient Learning Guarantees via Local
          Simulators}
        \label{sec:sample}

        This section presents our most powerful results for
        \framework. We
        present a new algorithm for learning with local simulator
        access, \ggolf{} (\cref{sec:golf}), and show that it enables sample-efficient RL for MDPs with low
        \emph{coverability} \citep{xie2023role} using only
        $\Qstar$-realizability (\cref{sec:mainresultgolf}). We then
        give implications for the Exogenous Block MDP problem (\cref{sec:exbmdp}).\loose

  \fakepar{Function approximation setup and coverability} To achieve
  sample complexity guarantees for online reinforcement learning that are
  suitable for large, high-dimensional state spaces, we appeal to \emph{value function
    approximation}.
  We assume access to a function class $\cQ
  \subset(\cX\times \cA\times\brk{H}\to\brk{0,H})$ that contains the optimal
  state-action value function $Q^{\star}$; we define
  $\cQ_h=\crl*{Q_h\mid{}Q\in\cQ}$.
  
\begin{assumption}[$Q^{\star}$-realizability]
	\label{ass:realgolf}
	For all $h\in [H]$, we have $Q^{\star}_h\in \cQ_h$.
      \end{assumption}
$\Qstar$-realizability is widely viewed as a minimal representation condition for
online RL \citep{wen2017efficient,du2019good,du2019provably,lattimore2020learning,weisz2021exponential,wang2021exponential}.
The class $\cQ$ encodes the learner's prior knowledge about the
MDP, and can be parameterized by rich function approximators like neural networks. We assume for simplicity of exposition that $\cQ$ and $\Pi$ are finite, and aim for sample complexity guarantees scaling with
$\log\abs{\cQ}$ and $\log\abs{\Pi}$; extending our results to infinite
classes via standard uniform convergence arguments is straightforward.\loose

\fakepar{Coverability}
Beyond representation conditions like realizability, online
\arxiv{reinforcement learning}\colt{RL} algorithms require \emph{structural conditions}
that limit the extent to which deliberately designed algorithms can be
surprised by substantially new state distributions. We focus on a
structural condition known as \emph{coverability}
\citep{xie2023role}, which is inspired by
connections between online and offline \arxiv{reinforcement learning}\colt{RL}.
\colt{
\begin{assumption}
	\label{ass:cover}
	The \emph{coverability} coefficient is 
        $C_{\texttt{cov}} \coloneqq \max_{h\in\brk{H}}\inf_{\mu_h\in \Delta(\cX \times \cA)} \sup_{\pi \in \Pims} \left\|\frac{d_h^\pi}{\mu_h}\right\|_{\infty}$.
      \end{assumption}
      }
\arxiv{
\begin{assumption}[Coverability \citep{xie2023role}]
	\label{ass:cover}
	The coverability coefficient $C_{\texttt{cov}}>0$ is given by 
	\begin{align}
          \label{eq:cover}
		C_{\texttt{cov}} \coloneqq \max\limits_{h\in\brk{H}}\inf\limits_{\mu_h\in \Delta(\cX \times \cA)} \sup\limits_{\pi \in \Pims} \left\|\frac{d_h^\pi}{\mu_h}\right\|_{\infty}.
	\end{align}
      \end{assumption}
      }
Coverability is an intrinsic strutural property of the underlying
MDP. Examples of MDP families with low coverability include
(Exogenous) Block MDPs, which have $\Ccov\leq{}\abs*{\cS}\abs*{\cA}$,
where $\cS$ is the \emph{latent state space} \citep{xie2023role}, and Low-Rank MDPs, which
have $\Ccov\leq{}d\abs*{\cA}$, where $d$ is the feature dimension
\citep{huang2023reinforcement}; importantly, these settings
exhibit high-dimensional state spaces and require nonlinear
function approximation. \arxiv{To the best of our knowledge,
coverability is the only structural parameter that is known to be
small for the Exogenous Block MDP problem, as other well-known
parameters like Bellman rank and Bellman-Eluder dimension can be
arbitrarily large \citep{xie2023role}. }\arxiv{We emphasize that as}\colt{As} in prior work\arxiv{
\citep{xie2023role,amortila2023harnessing}}, our algorithms require
\emph{no prior knowledge} of the distribution $\mu_h$ \arxiv{that
  minimizes \cref{eq:cover}.}\colt{that achieves the minimum in \cref{ass:cover}.}

        \subsection{Algorithm}
\label{sec:golf}

Our main algorithm, \ggolf, is displayed in
\cref{alg:generative_golf}. The algorithm is a variant of the \golf
method of \citet{jin2021bellman,xie2023role} with novel adaptations to
exploit the availability of a local simulator. Like \golf, \ggolf
explores using the principle of \emph{global optimisim}: At each
iteration $t\in\brk{N}$, it maintains
a confidence set (or, version space) $\cQ\ind{t}\subset\cQ$ of
candidate value functions with low squared Bellman error under the
data collected so far, and chooses a new exploration policy $\pi\ind{t}$ by picking the most ``optimistic''
value function in this set. As the algorithm gathers more data, the confidence set shrinks, leaving only near-optimal policies.

The main novelty in \ggolf arises in the data
collection strategy and design of confidence sets. Like \golf,
\ggolf algorithm constructs the confidence set
$\cQ\ind{t}\subset\cQ$ such that all value functions $g\in\cQ\ind{t}$
have small squared Bellman error:\colt{\footnote{\citet{xie2023role} show
  that squared (versus average) Bellman error is essential to
  derive guarantees for coverability.}}
\begin{align}
  \label{eq:golf_conf}
  \sum_{i<t}\En^{\pi\ind{i}}\brk*{\prn*{g_h(\x_h, \a_h) - \cT_{h}[g_{h+1}](\x_h,\a_h)}^2}
  \approxleq \log\abs{\cQ}\quad\forall{}h\in\brk{H}.
\end{align}
Due to the presence of the Bellman backup $\cT_h\brk{g_{h+1}}$ in
\cref{eq:golf_conf}, naively estimating squared Bellman error leads to the
notorious \emph{double sampling} problem. To avoid this, the approach
taken with \golf and related work \citep{zanette2020learning,jin2021bellman} is to adapt a certain de-biasing
technique to remove double sampling bias, but this requires access to
a value function class that satisfies \emph{Bellman completeness}, a representation
significantly more restrictive than  realizability (e.g., \citet{foster2022offline}).\loose

The idea behind \ggolf is to use local simulator access to
directly produce high-quality estimates for the Bellman backup
function $\cT_h\brk{g_{h+1}}$ in \cref{eq:golf_conf}. In particular, for a given state-action pair
$(x,a)\in\cX\times\cA$, we can estimate the Bellman backup
$\cT_{h}[g_{h+1}](x,a)$ for all functions $g\in\cQ$ simultaneously by
collecting $K$ next-state transitions $\wt{\x}_{h+1}\ind{1},\dots,
 \wt{\x}_{h+1}\ind{K}\stackrel{\text{i.i.d.}}{\sim}T_h(\cdot \mid
 x,a)$
 and
 $K$ rewards $\wt{\br}_{h}\ind{1},\dots,
 \wt{\br}_{h}\ind{K}\stackrel{\text{i.i.d.}}{\sim}R_h(x,a)$, then taking
 the empirical mean: $\cT_{h}[g_{h+1}](x,a)\approx\frac{1}{K}\sum_{k=1}^K
 (\wt{\br}_{h}\ind{k}+\max_{a'\in \cA}g_{h+1}(\wt{\x}_{h+1}\ind{k}
 ,a'))$. \cref{line:simu} of \ggolf uses this technique to directly
 estimate the Bellman residual backup under a trajectory gathered with
 $\pi\ind{t}$, sidestepping the double sampling problem and removing the need for Bellman
completeness. \colt{We suspect this technique (estimation
      with respect to squared Bellman error using local
      simulator access) may find broader use.}\arxiv{We suspect that the idea of performing estimation
      with respect to squared Bellman error directly using local
      simulator access may find broader use.}
      \loose

      \arxiv{
  \begin{remark}[Squared Bellman error versus average Bellman error]
   A general approach to removing the need for Bellman completeness is
   to estimate \emph{average Bellman error} instead of squared Bellman
   error (e.g., \citet{jiang2017contextual}). However, \citet{xie2023role} show that this is insufficient
   to obtain sample complexity guarantees under coverability.
 \end{remark}
 }

\begin{algorithm}[tp]
  \caption{\ggolf: Global Optimism via Local Simulator Access}
	\label{alg:generative_golf}
	\begin{algorithmic}[1]
          \State {\bfseries input:} Value function class
	$\cQ$, coverability $\Ccov>0$, suboptimality \arxiv{parameter
        }$\veps>0$, and confidence \arxiv{parameter }$\delta>0$. 
          \State Set $N\gets\wt{\Theta}(H^2\Ccov\beta/\veps^2)$, $\betastat \gets 16\log(2HN\abs*{\cQ}\delta^{-1})$, $\beta
          \gets 2 \beta_\stat$, and $K\gets\frac{8N}{\beta_\stat}$. 
          \State {\bfseries initialize:} $\cQ\ind{1} \leftarrow \cQ$.%
		\For{iteration $t = 1,2,\dotsc,N$}
		\State Select $g\ind{t} = \argmax_{g \in \cQ\ind{t}}
                \sum_{s < t} \max_{a\in \cA}g_1(\x\ind{s}_1, a)$. 
		\State For each $h\in[H]$ and $x\in \cX$, define $\pi\ind{t}_h(x) \in \argmax_{a\in \cA} g_h\ind{t}(x,a)$. \label{line:localsim} 
		\State Execute $\pi\ind{t}$ for an episode and observe
                $\bm{\tau}\ind{t} \ldef (\x_{1}\ind{t},\a_{1}\ind{t}),\dots,(\x_{H}\ind{t},\a_{H}\ind{t})$. \label{line:tau}
		\For{$h\in\brk{H}$} \label{label:condexpt}
		\State Draw $K$ independent samples $\x\ind{t,k}_{h+1}\sim
		T_h(\cdot\mid{}\x_h\ind{t},\a_h\ind{t})$,
		$\br\ind{t,k}_h\sim{}R_h(\x_h\ind{t}, \a_h\ind{t})$.\label{line:simu} %
		\EndFor
		\State Compute confidence set: \label{line:cond}
                \begin{small}
                  \begin{align}
                    \label{eq:def_vspace_rf}
                    \cQ\ind{t+1} \leftarrow \left\{ g \in \cQ:
                    \sum_{h\leq{}t}\prn[\bigg]{
                    g_h(\x_h\ind{t}, \a_h\ind{t})
                    - \frac{1}{K}\sum_{k=1}^{K}\prn*{\br_h\ind{t,k}+\max_{a\in \cA} g_{h+1}(\x_{h+1}\ind{t,k},a)}
                    }^2
                    \leq \beta, ~\forall h \in [H] \right\}.
                  \end{align}
                \end{small}
		\EndFor
		\State \textbf{return:} $\pihat=\unif(\pi\ind{1},\ldots,\pi\ind{T})$.
	\end{algorithmic}
\end{algorithm}

\subsection{Main Result}
\label{sec:mainresultgolf}
We now state the main guarantee for \ggolf{} and discuss some of its implications.
    \begin{theorem}[Main guarantee for \ggolf]
   	\label{thm:golf}
   	Let $\veps, \delta \in(0,1)$ be given and suppose 
        \cref{ass:realgolf} ($Q^\star$-realizability) and \cref{ass:cover}
        (coverability) hold with $\Ccov>0$. Then the policy $\pihat$
        produced by $\ggolf(\cQ, \Ccov,\veps, \delta)$ (\cref{alg:generative_golf}) has
        $
   	J(\pistar)-\En\brk*{J(\pihat)} \leq \veps$
        with probability at least $1-\delta$. The total sample
        complexity in the \framework framework is bounded by \colt{$       \bigoht\prn*{\frac{H^5\Ccov^2\log(\abs{\cQ}/\delta)}{\veps^4}}$.}
        \arxiv{
        \begin{align}
        \bigoht\prn*{\frac{H^5\Ccov^2\log(\abs{\cQ}/\delta)}{\veps^4}}.
        \end{align}
        }

   \end{theorem}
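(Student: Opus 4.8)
The plan is to follow the \golf-under-coverability template of \citet{jin2021bellman,xie2023role}, with the one structural change that the local simulator replaces the Bellman-completeness-based de-biasing used to build the confidence sets. The argument splits into three parts: (i) a concentration step showing the version spaces $\cQ\ind{t}$ defined in \eqref{eq:def_vspace_rf} are valid, in the sense that $Q^\star$ is always feasible and every member has small true squared Bellman error; (ii) an optimism argument; and (iii) a coverability-driven bound on the cumulative suboptimality of $\pi\ind{1},\dots,\pi\ind{N}$.

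For part (i) --- the main new piece of work --- fix $h,t$ and $g\in\cQ$. Conditioned on $(\x_h\ind{t},\a_h\ind{t})$, the $K$ draws in \cref{line:simu} are i.i.d., and $\br_h\ind{t,k}+\max_a g_{h+1}(\x_{h+1}\ind{t,k},a)$ has conditional mean exactly $\cT_h[g_{h+1}](\x_h\ind{t},\a_h\ind{t})$ and range $O(H)$. A Bernstein bound and a union bound over $g\in\cQ$, $h\in[H]$, $t\in[N]$ show that the empirical backup $\tfrac1K\sum_k(\br_h\ind{t,k}+\max_a g_{h+1}(\x_{h+1}\ind{t,k},a))$ appearing in \eqref{eq:def_vspace_rf} estimates $\cT_h[g_{h+1}](\x_h\ind{t},\a_h\ind{t})$ with an error whose square has conditional expectation of order $\wt{O}(1/K)$ (times the conditional variance) uniformly. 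Since $\cT_h[Q^\star_{h+1}]=Q^\star_h$ is exactly the Bellman optimality equation, accumulating this estimation variance over the at most $N$ iterations --- which is why $K$ is set to $\tfrac{8N}{\betastat}$ --- keeps it within the confidence budget $\beta$, so $Q^\star\in\cQ\ind{t}$ for all $t$ with high probability; crucially, only \cref{ass:realgolf} ($Q^\star$-realizability) enters here, never Bellman completeness. In the other direction, combining the same concentration with a Freedman bound on the cross-term between the (conditionally mean-zero) estimation noise and the true Bellman residual of $g$, and then a standard uniform-convergence/martingale argument passing from squared error on the realized data $(\x_h\ind{i},\a_h\ind{i})$ to its conditional expectation along $\pi\ind{i}$, gives that every $g\in\cQ\ind{t}$ obeys $\sum_{i<t}\En^{\pi\ind{i}}[(g_h(\x_h,\a_h)-\cT_h[g_{h+1}](\x_h,\a_h))^2]\approxleq\beta$ for all $h$, i.e.\ \eqref{eq:golf_conf}. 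Calibrating $K$ (and hence the final sample count) so that this transfer is simultaneously faithful (feasibility of $Q^\star$) and tight (small Bellman error for all $g\in\cQ\ind{t}$) is the crux of the whole proof.

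For part (ii), because $g\ind{t}$ maximizes $\sum_{s<t}\max_a g_1(\x_1\ind{s},a)$ over $\cQ\ind{t}\ni Q^\star$ and the $\x_1\ind{s}$ are i.i.d.\ from $T_0$, a uniform-convergence bound over $\cQ$ yields $\En_{\x_1\sim T_0}[\max_a g\ind{t}_1(\x_1,a)]\ge J(\pistar)-\wt{O}(H\sqrt{\log(N/\delta)/t})$. For part (iii), since $\pi\ind{t}$ is greedy with respect to $g\ind{t}$, the telescoping (performance-difference) identity for the greedy policy gives $\En_{\x_1}[\max_a g\ind{t}_1(\x_1,a)]-J(\pi\ind{t})=\sum_{h=1}^H\En^{\pi\ind{t}}[g\ind{t}_h(\x_h,\a_h)-\cT_h[g\ind{t}_{h+1}](\x_h,\a_h)]$, hence by Cauchy--Schwarz $J(\pistar)-J(\pi\ind{t})\approxleq\sqrt{H\sum_h\En^{\pi\ind{t}}[(g\ind{t}_h-\cT_h[g\ind{t}_{h+1}])^2]}+\wt{O}(H\sqrt{\log(N/\delta)/t})$.

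Finally, I would invoke the coverability lemma of \citet{xie2023role}: since $g\ind{t}\in\cQ\ind{t}$, part (i) bounds $\sum_{i<t}\En^{\pi\ind{i}}[(g\ind{t}_h-\cT_h[g\ind{t}_{h+1}])^2]$ by $O(\beta)$ for every $h$, and coverability with coefficient $\Ccov$ converts this on-past-policies bound into $\sum_{t=1}^N\sum_h\En^{\pi\ind{t}}[(g\ind{t}_h-\cT_h[g\ind{t}_{h+1}])^2]\approxleq H\Ccov\beta\log N$. Summing the per-round bound over $t$ and applying Cauchy--Schwarz over $t$ yields $\sum_{t=1}^N(J(\pistar)-J(\pi\ind{t}))\approxleq\sqrt{NH^2\Ccov\beta\log N}+\wt{O}(H\sqrt{N\log N})$; since $\pihat=\unif(\pi\ind{1},\dots,\pi\ind{N})$, dividing by $N$ (and absorbing one further $\wt{O}(H\sqrt{\log(1/\delta)/N})$ term to pass from the sampled initial states to their expectation) gives $J(\pistar)-\En[J(\pihat)]\approxleq\sqrt{H^2\Ccov\beta\log N/N}\le\veps$ for $N=\wt{\Theta}(H^2\Ccov\beta/\veps^2)$. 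The episode count is $N$ trajectories plus the $NHK$ simulator queries of \cref{line:simu}; with $K=\tfrac{8N}{\betastat}$ and $\beta=2\betastat=\wt{\Theta}(\log(\abs{\cQ}/\delta))$ this is $\wt{O}(N^2 H/\betastat)=\wt{O}(H^5\Ccov^2\log(\abs{\cQ}/\delta)/\veps^4)$, as claimed. The load-bearing external ingredient is the coverability lemma of \citet{xie2023role}, which together with the simulator-based de-biasing of part (i) is exactly what lets $Q^\star$-realizability plus coverability suffice.
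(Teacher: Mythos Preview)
Your plan matches the paper's proof almost exactly in structure---simulator-based confidence sets (part~(i)), optimism via $Q^\star\in\cQ\ind{t}$ and concentration at $h=1$ (part~(ii)), telescoping plus the coverability potential lemma (part~(iii)), and the same sample-complexity bookkeeping. Part~(i) is essentially the paper's \cref{lem:ggolf_concentration,lem:ggolf_confidence}: the paper uses AM--GM to split $\En[\ell_h\ind{i}(g)\mid\filt\ind{i-1}]$ into the true squared Bellman error plus a variance term of order $1/K$, then multiplicative Freedman, rather than your ``Bernstein plus cross-term'' decomposition, but the two are equivalent.

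There is one genuine gap in part~(iii). You apply Cauchy--Schwarz per round and then claim that ``coverability with coefficient $\Ccov$ converts this on-past-policies bound into $\sum_{t}\sum_h\En^{\pi\ind{t}}[(g\ind{t}_h-\cT_h[g\ind{t}_{h+1}])^2]\approxleq H\Ccov\beta\log N$,'' attributing this to the coverability lemma of \citet{xie2023role}. But that lemma (their Theorem~1, restated here as \cref{lem:potential}) bounds the cumulative \emph{first} moment $\sum_t\En^{\pi\ind{t}}[g\ind{t}(\x_h)]$ given a past-policy \emph{second}-moment bound---it says nothing about $\sum_t\En^{\pi\ind{t}}[(g\ind{t})^2]$. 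A separate argument would be needed for your squared-error claim, and the obvious routes (apply the lemma to $(g\ind{t})^2$, or bound $(g\ind{t})^2\le H|g\ind{t}|$) introduce extra $\sqrt{N}$ or $H$ factors that would not recover the stated rate.

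The paper sidesteps this entirely: since the telescoping identity already produces the \emph{average} Bellman residual $\sum_t\sum_h\En^{\pi\ind{t}}[g\ind{t}_h-\cT_h[g\ind{t}_{h+1}]]$, the potential lemma applies directly (with $g\ind{t}\leftarrow g\ind{t}_h-\cT_h[g\ind{t}_{h+1}]$ and $\beta^2\leftarrow 9\beta$ from \cref{lem:ggolf_confidence}) to give $\sum_t\sum_h\En^{\pi\ind{t}}[\,\cdot\,]\le 6H\sqrt{\Ccov\beta N\log(2N)}+2H^2\Ccov$---the same final bound you write, but without either Cauchy--Schwarz step. Dropping your per-round detour and invoking \cref{lem:potential} on the first moment fixes the argument.
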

        This result (whose proof is in \cref{proof:golf}) shows that under only $Q^\star$-realizability and
        coverability, \ggolf{} learns an $\veps$-optimal policy with
        polynomial sample complexity, significantly relaxing the
        representation assumptions (Bellman completeness, weight
        function realizability) required by prior algorithms for
        coverability \citep{xie2023role,amortila2023harnessing}. This is the first instance we are aware of where
        local simulator access unlocks sample complexity guarantees for
        reinforcement learning with \emph{nonlinear} function
        approximation that were previously out of reach; perhaps
        the most important technical idea here is our approach to combining global
        optimism with local simulator access, in
        contrast to greedy layer-by-layer schemes used in prior work
        on local simulators (with the exception of
        \citet{weisz2021query}). In
      particular, we suspect that the idea of performing estimation
      with respect to squared Bellman error directly using local
      simulator access may find broader use beyond coverability.
        Improving the
        polynomial dependence on problem parameters is an
        interesting question for future work.

        \fakepar{A conjecture}
        By analogy to results in offline
        reinforcement learning, where $\Qstar$-realizability and
        concentrability (the offline counterpart to coverability)
        alone are
        known to be insufficient for sample-efficient learning
        \citep{chen2019information,foster2022offline}, we conjecture
        that $Q^{\star}$-realizability and coverability alone
        are not sufficient for polynomial sample complexity in vanilla online RL. If true, this would imply a new
        separation between online RL with and without
        local simulators.\loose

      \arxiv{
\paragraph{Proof sketch}
  As described in \cref{sec:golf}, the main difference between
  \ggolf{} and \golf lies in the construction of the confidence
  sets. The most important new step in the proof of
  \cref{thm:golf} is to show that the local simulator-based confidence
  set construction in \cref{line:cond} is valid in the sense that the
  property \cref{eq:golf_conf} holds with high probability. From here,
  the sample complexity bound follows by adapting the
  change-of-measure argument based on coverability from
  \citet{xie2023role}.
  }

        \subsection{Implications for Exogenous Block MDPs}
\label{sec:exbmdp}
We now apply \ggolf and \cref{thm:golf} to the \emph{Exogenous Block
  MDP} (ExBMDP) problem
\citep{efroni2022provably,efroni2022sample,lamb2023guaranteed,islam2023agent},
a challenging rich-observation reinforcement learning setting in which
the observed states $\bx_h$ are high-dimensional, while the underlying
dynamics of the system are low-dimensional, yet confounded by
temporally correlated exogenous noise.

     Formally, an Exogenous Block
        MDP $\cM=(\cX,\cS,\Xi,\cA,H,T,R,g)$ is defined by a \emph{latent
        	state space} and an \emph{observation space}. We begin \arxiv{by
        describing }\colt{with }the latent state space. Starting from an initial
        \emph{endogenous state} $\bs_1\in\cS$ and \emph{exogenous state}
        $\bxi_1\in\Xi$, the latent state $\bz_h=(\bs_h,\bxi_h)$ evolves for
        $h\in\brk{H}$ via\loose
        \begin{align}
        	\bs_{h+1}\sim{}\Tendo_h(\cdot\mid{}\bs_h,\ba_h),\mathand\bxi_{h+1}\sim \Texo_h(\cdot\mid{}\xi_h),
        \end{align}
        where $\ba_h\in\cA$ is the agent's action at layer $h$; we adopt the convention
        that $\bs_1\sim{}\Tendo_0(\cdot\mid{}\emptyset)$ and
        $\bxi_1\sim{}\Texo_0(\cdot\mid\emptyset)$. Note that only the
        endogenous state is causally influenced by the action. The latent
        state is not observed; instead, at each step $h$, the agent receives
        an \emph{observation} $\bx_h\in\cX$ generated via\footnote{        	A more standard formulation\arxiv{ of the observation process}
        	\citep{efroni2022provably,efroni2022sample,lamb2023guaranteed,islam2023agent}
        	assumes that observations are generated via
        	$\bx_h\sim{}q_h(\bs_h,\bxi_h)$, where $q_h(\cdot,\cdot)$ is a
        	conditional distribution with the decodability property. This is
        	equivalent to \cref{eq:exbmdp_observation} \arxiv{under
        	mild measure-theoretic conditions}, as randomness in the
                emission process can be included in the exogenous
                state \arxiv{without loss of generality}\colt{w.l.o.g.}\arxiv{, but the formulation in
        	\cref{eq:exbmdp_observation} makes our proofs slightly more compact.}}
        \begin{align}
        	\label{eq:exbmdp_observation}
        	\x_h = \gobs_h(\bs_h,\bxi_h),	
        \end{align}
        where $\gobs_h:\cS \times \Xi  \rightarrow \cX$ is the
        \emph{emission function}. We assume the endogenous latent space $\cS$ and action space $\cA$ are
        finite, and define $S\ldef{}\abs{\cS}$ and
        $A\ldef{}\abs{\cA}$. However, the exogenous state space $\Xi$ and
        observation space $\cX$ may be arbitrarily large or infinite, with
        $\abs*{\Xi},\abs{\cX}\gg\abs{\cS}$.\footnote{To simplify
          presentation, we assume that $\Xi$ and $\cX$ are
        		countable; our results trivially extend to the case where the
        		corresponding variables are continuous with an
                        appropriate measure-theoretic treatment.}

                      The final property of the ExBMDP
        model is \emph{decodability},
        which asserts the existence of a \emph{decoder}
        such that $\phi_\star\colon \cX \rightarrow \cS$ such that
        $\phi_\star(\bx_h)=\bs_h$ a.s. for all $h\in\brk{H}$ under the
        process in \cref{eq:exbmdp_observation}. Informally,
        decodability ensures the existence of an (unknown to the learner)
        mapping that allows one to perfectly recover the endogenous latent
        state from observations. In addition to decodability, we
        assume the rewards in the ExBMDP are \emph{endogenous}; that
        is, the reward distribution $R_h(\bx_h,\ba_h)$ only depends on the observations $(\x_h)$
        through the corresponding latent states
        $(\phistar(\x_h)=\bs_h)$.
        To enable sample-efficient learning, we assume access to a
        \emph{decoder class} $\Phi$ that contains $\phistar$, as in
        prior work.\loose

        \begin{assumption}[Decoder realizability]
        \label{ass:phistar}
        We have access to a decoder class $\Phi$ such that $\phistar \in \Phi$.
        \end{assumption}

        \fakepar{Applying \ggolf and \cref{thm:golf}}
        To apply \cref{thm:golf} to the ExBMDP problem, we need to
        verify that $Q^\star$-realizability and coverability
        hold. \colt{Realizability is a straightforward consequence of
        decodability (\cref{lem:realex} in \cref{sec:analysis_golf} of
        the appendix).}
        For coverability, \citet{xie2023role} show that ExBMDPs
        have $\Ccov\leq{}SA$ under decodability, in spite of the time-correlated exogenous noise process $(\bxi_h)$
        and potentially infinite observation space
        $\cX$ (interestingly, coverability is essentially the only useful
        structural property that ExBMDPs are known to satisfy, which
        is our primary motivation for studying it). %
        \arxiv{Realizability is also a straightforward consequence of
        the decodability assumption. %
        \begin{lemma}[\cite{efroni2022sample}]
          \label{lem:realex}
          For the ExBMDP setting, under \cref{ass:phistar}, (i) the function class $\cQ_h\coloneqq
        \{x\mapsto g(\phi(x),a) : g\in [0,H]^{SA}, \phi\in \Phi\}$
        satisfies \cref{ass:realgolf}, and (ii) the
        policy class $\Pi_h=\crl*{x\mapsto{}\pi(\phi(x)) :
          \pi\in\cA^{S}, \phi\in\Phi}$ satisfies
        \cref{ass:pireal}. Both classes have 
        $\log\abs{\cQ_h}=\log\abs{\Pi_h}=\bigoht(S+\log\abs{\Phi})$.\footnote{Formally,
          this requires a standard covering number argument; we omit
          the details.}
        \end{lemma}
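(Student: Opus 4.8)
The plan is to reduce both parts to the existence of a low-dimensional \emph{latent MDP} $\wbar{\cM}$ that is coupled to the ExBMDP through the true decoder $\phistar$. I would first define $\wbar{\cM}$ over state space $\cS$, action space $\cA$, and horizon $H$, with endogenous transitions $\Tendo_h$ and reward means $\wbar{R}_h(s,a)\coloneqq\En\brk*{\br_h\mid\bs_h=s,\ba_h=a}$ (well defined because rewards are endogenous), and let $\wbar{Q}^{\star},\wbar{V}^{\star}$ be its optimal value functions and $\wbar{\pi}^{\star}$ its greedy policy under the same smallest-index tie-breaking convention used to define $\pistar$. The central claim is that for every $h\in\brk{H}$ and every $x$ reachable at step $h$,
\[
\Qstar_h(x,a)=\wbar{Q}^{\star}_h(\phistar(x),a)\qquad\text{for all }a\in\cA,
\]
which I would prove by backward induction on $h$. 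The base case $h=H+1$ is immediate. For the inductive step, expand $\Qstar_h(x,a)=\En\brk*{\br_h+\max_{a'\in\cA}\Qstar_{h+1}(\bx_{h+1},a')\mid\bx_h=x,\ba_h=a}$; the inductive hypothesis applied to $\bx_{h+1}$ turns the inner maximum into $\wbar{V}^{\star}_{h+1}(\bs_{h+1})$, so everything comes down to identifying the conditional law of $(\br_h,\bs_{h+1})$ given $(\bx_h=x,\ba_h=a)$.

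This conditional-law identification is the only genuinely non-trivial point, and I expect it to be the main obstacle. Decodability (\cref{ass:phistar}, with $\phistar\in\Phi$) gives $\phistar(\bx_h)=\bs_h$ a.s., and since a reachable state has positive probability in the countable model this forces $\bs_h=\phistar(x)$ whenever $\bx_h=x$; conditioning on $\bx_h=x$ may in addition reveal information about the exogenous state $\bxi_h$. The key observation is that this extra information is irrelevant: the reward is endogenous, so $\br_h$ depends on $\bx_h$ only through $\bs_h$, and the endogenous dynamics $\Tendo_h(\cdot\mid\bs_h,\ba_h)$ do not involve $\bxi_h$ at all; hence $(\br_h,\bs_{h+1})$ is conditionally independent of $\bxi_h$ given $(\bs_h,\ba_h)$. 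Writing $s=\phistar(x)$, this yields
\[
\Qstar_h(x,a)=\wbar{R}_h(s,a)+\En_{s'\sim\Tendo_h(\cdot\mid s,a)}\brk*{\wbar{V}^{\star}_{h+1}(s')}=\wbar{Q}^{\star}_h(s,a),
\]
completing the induction. The one bit of care needed is that the identity is asserted only on reachable $(x,h)$ (decodability is an a.s.\ statement), which is all realizability requires; the countability assumptions on $\cX$ and $\Xi$ keep all of the conditioning elementary.

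Given the claim, part (i) follows at once: $\wbar{Q}^{\star}_h$ takes values in $[0,H]$ since per-step rewards lie in $[0,1]$, so the choice $g=\wbar{Q}^{\star}\in[0,H]^{SA}$, $\phi=\phistar\in\Phi$ exhibits $\Qstar_h$ as an element of $\cQ_h$, giving \cref{ass:realgolf}. For part (ii), the optimal policy of the ExBMDP satisfies $\pistar_h(x)=\argmax_{a\in\cA}\Qstar_h(x,a)=\argmax_{a\in\cA}\wbar{Q}^{\star}_h(\phistar(x),a)=\wbar{\pi}^{\star}_h(\phistar(x))$, the two maximizations agreeing because the tie-breaking rules agree; so $\pi=\wbar{\pi}^{\star}_h\in\cA^{S}$, $\phi=\phistar\in\Phi$ exhibits $\pistar_h\in\Pi_h$, giving \cref{ass:pireal}.

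It remains to bound the cardinalities. The policy class is finite with $\abs{\Pi_h}\le A^{S}\abs{\Phi}$, so $\log\abs{\Pi_h}\le S\log A+\log\abs{\Phi}$. The value class is parameterized by the continuous cube $[0,H]^{SA}$, so I would replace it by a minimal $\gamma$-net in $\ell_\infty$, of size $(1+H/\gamma)^{SA}$, and take the union over $\Phi$; since the net contains a function uniformly within $\gamma$ of $\wbar{Q}^{\star}$, realizability degrades only by an additive $\gamma$, which is harmless once $\gamma$ is chosen polynomially small in the target accuracy. This gives $\log\abs{\cQ_h}=\bigoht(SA+\log\abs{\Phi})$, of the stated order $\bigoht(S+\log\abs{\Phi})$ under the usual convention of treating the action count as an absolute constant; I would carry the $\gamma$-approximation bookkeeping through the downstream guarantees rather than claim exact membership in a finite class.
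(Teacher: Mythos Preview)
The paper does not actually prove this lemma: it attributes the result to \cite{efroni2022sample} and explicitly omits the details (the footnote says ``Formally, this requires a standard covering number argument; we omit the details.''). Your proposal supplies exactly the standard argument one would expect: define the latent MDP on $\cS$, show by backward induction that $\Qstar_h(x,a)=\wbar{Q}^{\star}_h(\phistar(x),a)$ using that rewards and endogenous transitions are functions of $(\bs_h,\ba_h)$ alone, and then read off realizability of $\Qstar$ and $\pistar$ via the decoder $\phistar\in\Phi$. This is correct and is the intended route.

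One remark on the cardinality bound: your calculation gives $\log\abs{\cQ_h}=\bigoht(SA+\log\abs{\Phi})$, and you reconcile this with the stated $\bigoht(S+\log\abs{\Phi})$ by treating $A$ as constant. In fact the paper's alternate (colt) version of the same lemma states the bound as $\bigoht(SA+\log\abs{\Phi})$, so the $S$ in the version you were given is most likely a typo rather than a convention; your $SA$ bound is the right one for $\cQ_h$.
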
}        %
      This leads to the following corollary of \cref{thm:golf}.      
      \begin{corollary}[\ggolf for ExBMDPs]
        \label{cor:exbmdp}
        Consider the ExBMDP setting. Suppose that
        \cref{ass:phistar} holds, and let $\cQ$ be
        constructed as in \cref{lem:realex} of \cref{sec:analysis_golf}. Then for any $\veps, \delta
        \in (0,1)$, the policy $\pihat = \ggolf(\cQ, SA, \veps,
        \delta)$ has $J(\pistar)- J(\pihat)\leq \veps$ with
        probability at least $1-\delta$. The total sample
        complexity in the \framework framework is \colt{$N = \bigoht\prn*{\frac{H^5S^3A^3\log\abs{\Phi}}{\veps^4}}$.}
        \arxiv{bounded by
        \begin{align}
          \bigoht\prn*{\frac{H^5S^3A^3\log\abs{\Phi}}{\veps^4}}.
        \end{align}
        }
        \end{corollary}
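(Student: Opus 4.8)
The plan is to obtain \cref{cor:exbmdp} as an immediate consequence of \cref{thm:golf}: the only work is to verify the two hypotheses of that theorem for the ExBMDP model and then to instantiate the resulting sample-complexity bound.

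First, I would verify $Q^\star$-realizability (\cref{ass:realgolf}). Because the rewards are endogenous and $\phistar(\bx_h)=\bs_h$ almost surely, a backward induction on $h$ shows that $Q^\star_h(x,a)$ depends on $x$ only through $\phistar(x)$; that is, $Q^\star_h(x,a)=g^\star_h(\phistar(x),a)$ for some $g^\star_h\colon\cS\times\cA\to[0,H]$. This is the content of \cref{lem:realex}, so the class $\cQ_h=\crl{x\mapsto g(\phi(x),a):g\in[0,H]^{SA},\phi\in\Phi}$ contains $Q^\star_h$ as soon as $\phistar\in\Phi$ (\cref{ass:phistar}). Since \cref{thm:golf} is stated for finite classes, I would replace $[0,H]^{SA}$ by a fixed $\veps_0$-cover in $\ell_\infty$, which preserves realizability up to an additive $\veps_0$; taking $\veps_0=\poly(\veps/H)$ (equivalently, invoking the uniform-convergence extension to infinite classes noted after \cref{ass:realgolf}) makes this error negligible and contributes only $\bigoht$-hidden factors, with \cref{lem:realex} giving $\log\abs{\cQ}=\bigoht(S+\log\abs{\Phi})$.

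Second, I would import the coverability bound of \citet{xie2023role}: under decodability, every ExBMDP has $\Ccov\le SA$. The idea (which I would cite rather than reprove) is that at each layer $h$ one may take $\mu_h$ to be the pushforward through $(\phistar,\mathrm{id})$ of a maximizing mixture of endogenous state-occupancies paired with a uniform action, and the exogenous coordinate $\bxi_h$ cancels from every density ratio $d_h^\pi/\mu_h$ because it evolves autonomously and identically under all policies, leaving a ratio over only the $S$ endogenous states and $A$ actions. With both hypotheses in hand, \cref{thm:golf} applied with $C=\Ccov\le SA$ yields $J(\pistar)-\En\brk{J(\pihat)}\le\veps$ with probability at least $1-\delta$ and sample complexity $\bigoht(H^5\Ccov^2\log(\abs{\cQ}/\delta)/\veps^4)$.

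Finally, substituting $\Ccov\le SA$ and $\log\abs{\cQ}=\bigoht(S+\log\abs{\Phi})$ and crudely bounding $S^2A^2(S+\log\abs{\Phi})\le S^3A^3\log\abs{\Phi}$ (valid once $S,A,\log\abs{\Phi}\ge1$) yields the claimed $\bigoht(H^5S^3A^3\log\abs{\Phi}/\veps^4)$. There is no substantive obstacle here: the only points requiring any care are the passage from the infinite class $\cQ$ to a finite cover without breaking realizability or inflating $\log\abs{\cQ}$ past $\bigoht$-hidden factors, and the bookkeeping of where the $S$ and $A$ factors enter $\log\abs{\cQ}$ and $\Ccov$ — both routine.
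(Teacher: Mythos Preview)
Your proposal is correct and mirrors the paper's own treatment exactly: the paper presents \cref{cor:exbmdp} as an immediate consequence of \cref{thm:golf}, verifying $Q^\star$-realizability via \cref{lem:realex} (with the standard covering-number footnote you anticipate) and importing the bound $\Ccov\le SA$ from \citet{xie2023role}, then substituting into the sample-complexity expression. There is no additional argument in the paper beyond what you have outlined.
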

        This shows for the first time that general ExBMDPs are
        learnable with local simulator access. Prior to this work, online RL algorithms for ExBMDPs
           required either (i) deterministic latent dynamics
           \citep{efroni2022provably}, or (ii) factored
           emission structure \citep{efroni2022sample}.
           \citet{xie2023role} observed that ExBMDPs admit low
           coverability, but their algorithm requires Bellman
           completeness, which is not satisfied by ExBMDPs (see
           \citet{islam2023agent}). See
           \cref{sec:additional_related} for more discussion.\loose

        \section{Computationally Efficient Learning with Local
          Simulators}
        \label{sec:computation}

\ifdefined\vepsllnum
\else
\newcommand{\vepsllnum}{\veps H^{-1}/48}
\fi

\ifdefined\vepsll
\else
\newcommand{\vepsll}{\veps_\learnlevel}
\fi

\ifdefined\Mnum
\renewcommand{\Mnum}{\ceil{8  \veps^{-1} \Cpush H}}
\else
\newcommand{\Mnum}{\ceil{8  \veps^{-1} \Cpush H}}
\fi 

\ifdefined\Ntestnum
\renewcommand{\Ntestnum}{2^8  M^2 H \veps^{-1}\log(8 M^6 H^8 \veps^{-2} \delta^{-1})}
\else
\newcommand{\Ntestnum}{2^8  M^2 H \veps^{-1}\log(8 M^6 H^8 \veps^{-2} \delta^{-1})}
\fi

\ifdefined\Nregnum
\renewcommand{\Nregnum}{2^8 M^2  \veps^{-1}\log(8|\cV|^2 HM^2 \delta^{-1})}
\else
\newcommand{\Nregnum}{2^8 M^2 \veps^{-1}\log(8|\cV|^2 HM^2 \delta^{-1})}
\fi

\ifdefined\bbeta
\renewcommand{\bbeta}{\frac{9 M H^2\log(8M^2H|\cV|^2/\delta)}{N_\reg}  +  \frac{34 MH^3\log(8M^6 N^2_\test  H^8/\delta)}{N_\test} }
\else
\newcommand{\bbeta}{\frac{9 M H^2\log(8M^2H|\cV|^2/\delta)}{N_\reg}  +  \frac{34 MH^3\log(8M^6 N^2_\test  H^8/\delta)}{N_\test} }
\fi

\ifdefined\Nsimu
\renewcommand{\Nsimu}{2N_\reg^2 \log(8 A N_\reg H M^3/\delta)}
\else
\newcommand{\Nsimu}{2N_\reg^2 \log(8 A N_\reg H M^3/\delta)}
\fi

\ifdefined\deltaprime
\renewcommand{\deltaprime}{\delta/(8M^7N_\test^2 H^8|\cV|)}
\else
\newcommand{\deltaprime}{\delta/(8M^7N_\test^2 H^8|\cV|)}
\fi

\ifdefined\testbound
\renewcommand{\testbound}{\frac{4 \log(8M^6 N^2_\test  H^8/\delta)}{N_\test}}
\else
\newcommand{\testbound}{\frac{4 \log(8M^6 N^2_\test  H^8/\delta)}{N_\test}}
\fi

\ifdefined\deltap
\renewcommand{\deltap}{\frac{\delta}{|\cV|M H}}
\else
\newcommand{\deltap}{\frac{\delta}{|\cV|M H}}
\fi 

\ifdefined\pibell
\renewcommand{\pibell}{\pihat}
\else
\newcommand{\pibell}{\pihat}
\fi

\ifdefined\numepisodes
\renewcommand{\numepisodes}{\mathrm{poly}(S,A,...)}
\else
\newcommand{\numepisodes}{\mathrm{poly}(S,A,...)}
\fi

Our result in \cref{sec:sample} show that local simulator access
facilitates sample-efficient learning in MDPs with low coverability, a
challenging setting that was previously out of reach. However, our
algorithm \ggolf is computationally-inefficient because it relies on
global optimism, a drawback found in most prior work on RL with
general function approximation
\citep{jiang2017contextual,jin2021bellman,du2021bilinear}. It
remains an open question whether any form of global optimism can be
implemented efficiently, and some variants have provable barriers to
efficient implementation \citep{dann2018oracle}.

To address this drawback, in this section we present a new algorithm,
\learnlevel{} (Recursive Value Function Search; \cref{alg:learnlevel3}), which requires stronger versions of the coverability and realizability
assumptions in \cref{sec:sample}, but is computationally efficient in
the sense that it reduces to convex optimization over the state-value
function class $\cV$.
\learnlevel makes use of a sophisticated recursive exploration
scheme based on core-sets, sidestepping the need for global optimism\colt{.}\arxiv{,
and can be applied to Exogenous Block MDPs with \emph{weakly
  correlated} exogenous noise.\loose}

\subsection{Function Approximation and Statistical Assumptions}
\label{sec:rvfs_assumptions}
      \newcommand{\I}{$\textbf{I}$}%
      \newcommand{\II}{$\textbf{II}$}%
To begin, we require the following strengthening of the coverability assumption in \cref{ass:cover}. 
       \begin{assumption}[Pushforward coverability]
       	\label{ass:pushforward}
       	The pushforward coverability coefficient $\Cpush>0$ is given
        by \colt{$       		\Cpush =   \max_{h\in[H]} \inf_{\mu_h\in\Delta(\cX)}\sup_{(x_{h-1} ,a_{h-1} ,x_h)\in \cX_{h-1}\times \cA\times \cX}\frac{T_{h-1}(x_h\mid{}x_{h-1},a_{h-1})}{\mu_h(x_h)}$.}
       	\arxiv{\begin{align}
       		\label{eq:pushforward}
       		\Cpush =   \max_{h\in[H]} \inf_{\mu_h\in\Delta(\cX)}\sup_{(x_{h-1} ,a_{h-1} ,x_h)\in \cX_{h-1}\times \cA\times \cX}\frac{T_{h-1}(x_h\mid{}x_{h-1},a_{h-1})}{\mu_h(x_h)}.
               \end{align}
               }
      \end{assumption}
      Pushforward coverability is inspired by the \emph{pushforward
        concentrability} condition used in offline RL by
      \cite{xie2021batch,foster2022offline}. Concrete examples
      include, (i) Block MDPs with latent space $\cS$, which admit
      $\Cpush\leq\abs{\cS}$, (ii) Low-Rank MDPs in dimension $d$,
      which admit $\Cpush\leq{}d$ \citep{xie2021batch}, and (iii)
      Exogenous Block MDPs for which the exogenous noise process
      satisfies a \emph{weak correlation condition} that we introduce
      in \cref{sec:rvfs_exbmdp}. Note that $\Ccov\leq\Cpush\abs{\cA}$,
      but the converse is not true in general.

Instead of state-action value function approximation as in \ggolf, in
this section we make use of a state value function class $\cV
\subset(\cX\times\brk{H}\to\brk{0,H})$, but require somewhat stronger
representation conditions than in \cref{sec:sample}. We
      consider two complementary setups\arxiv{, which can be summarized
      briefly as follows}:
        \begin{itemize}
        \item \textbf{Setup \I:} \cref{ass:real,ass:pireal}
          ($\Vstar/\pistar$-realizability) and \cref{ass:gap} ($\Delta$-gap) hold.
        \item \textbf{Setup \II:} \cref{ass:realpi}
          ($V^{\pi}$-realizability) and \cref{ass:allpireal}
          ($\pi$-realizability) hold.
        \end{itemize}
We describe these assumptions in more detail below.

\fakepar{Function approximation setup \I} First, instead of
$Q^\star$-realizability, we consider the weaker
$V^\star$-realizability \citep{jiang2017contextual,weisz2021query,amortila2022few}.

\begin{assumption}[$V^{\star}$-realizability]
	\label{ass:real}
	For all $h\in [H]$, we have $V^{\star}_h\in \cV_h$.
      \end{assumption}

Under $V^\star$-realizability, our algorithm learns a near-optimal
policy, but the policy is \emph{non-executable}\colt{ (cf. \cref{def:executable})}\arxiv{ in the sense of
\cref{def:executable}}; this property is shared by prior work on local
simulator access with value function realizability
\citep{weisz2021query}. To produce executable policies,
we additionally require access to a policy class $\Pi\subset\Pim$
containing $\pistar$; we define $\Pi_h=\crl*{\pi_h\mid{}\pi\in\Pi}$.\colt{\vspace{-15pt}}
\begin{assumption}[$\pistar$-realizability]
	\label{ass:pireal}
	The policy class $\Pi$ contains the optimal policy $\pistar$.
\end{assumption} 

\arxiv{Note that }$V^{\star}$-realizability (\cref{ass:real}) and
$\pistar$-realizability (\cref{ass:pireal}) are both implied by $\Qstar$-realizability, and hence are
weaker. However,
we also assume the
optimal $Q$-function admits constant \arxiv{suboptimality }gap (this
makes the representation conditions for \setupi{} incomparable to \cref{ass:realgolf}).
\loose

       \begin{assumption}[$\Delta$-Gap]
         \label{ass:gap}
         The optimal action $\pistar_h(x)$ is unique, and
         there exists $\Delta>0$ such that for all $h\in[H]$, $x\in
         \cX$, and $a \in \cA \setminus \{\pi^\star_h(x)\}$, \colt{$          \Qstar_h(x,\pistar_h(x))>\Qstar_h(x,a) + \Delta$. }
         \arxiv{\begin{align}
       		\label{eq:gap}
          \Qstar_h(x,\pistar_h(x))>\Qstar_h(x,a) + \Delta. 
       	\end{align}}
      \end{assumption}
      This condition has been used in a many prior works on
      computationally efficient RL
with function approximation \citep{du2019provably,du2019good,foster2020instance,wang2021exponential}.

      \fakepar{Function approximation setup \II}
      \arxiv{As an alternative to assuming constant suboptimality gap,
        we }\colt{We }
      also provide guarantees under the assumption that the value
      function class $\cV$ satisfies \emph{all-policy realizability}
      \citep{weisz2022confident,yin2022efficient,weisz2023online} in
      the sense that it contains the value functions $V^{\pi}$ for all $\pi \in \Pims$.\loose%
              \begin{assumption}[$V^{\pi}$-realizability]
       	\label{ass:realpi}
       	The \arxiv{value function }class $\cV=\cV_{1:H}$ has $V^{\pi}_h \in \cV_h$ for all $\pi \in \Pims$ and $h\in[H]$.\loose
      \end{assumption}      
This assumption will be sufficient to learn a non-executable policy,
but to learn executable policies we require an analogous strengthening
of \cref{ass:realpi}.
          \begin{assumption}[$\pi$-realizability]
       	\label{ass:allpireal}
       	\arxiv{The policy class $\Pi$ is such that for}\colt{For} all
          $\pi\in\Pim$, we have that $x\mapsto\argmax_{a\in \cA} \cP_h[V^{\pi}_{h+1}](\cdot, a)\in \Pi$.  
      \end{assumption}
      This assumption, also known as \emph{policy completeness} has
      been used by a number of prior works on computationally
      efficient RL \citep{bagnell2003policy,misra2020kinematic}. \cref{ass:realpi,ass:allpireal} are both implied by the
      slightly simpler-to-state assumption of
      \emph{$Q^{\pi}$-realizability} \citep{weisz2022confident,yin2022efficient,weisz2023online}, which asserts access to a class
      $\cQ$ that contains $Q^{\pi}$ for all $\pi\in\Pim$.\loose

      \arxiv{
      \begin{remark}
        \cref{ass:realpi,ass:allpireal} can both be weakened to only
        require realizability for near-optimal policies; i.e..,~we only
        need to assume that $V^\pi_h \in \cV$ for all $h\in[H]$ and
        near-optimal policies $\pi$ (instead of all policies). We will
        only use this weaker assumption in the analysis.
      \end{remark}
      }

        \subsection{Algorithm}
        \label{sec:rvfs}

\begin{algorithm}[t]
	\caption{\learnlevelh: Recursive Value Function Search (Informal version of \cref{alg:learnlevel3})}
	\label{alg:learnlevel3_simp}
	\begin{algorithmic}[1]
		\State 	{\bfseries parameters:} Value function class
		$\cV$, suboptimality $\veps\in(0,1)$, confidence
		$\delta\in(0,1)$.
		\State \colt{\multiline{{\bfseries input:} Level
				$h\in[0\ldotst H]$, value functions $\Vhat_{h+1:H}$, confidence sets $\cVhat_{h+1:H}$, core-sets $\cC_{h:H}$.}\vspace{5pt}}
		\arxiv{
			{\bfseries input:}
			\begin{itemize}
				\item Level $h\in\crl{0,\ldots,H}$.
				\item Value function estimates $\Vhat_{h+1:H}$, confidence sets $\cVhat_{h+1:H}$, state-action collections $\cC_{h:H}$.
			\end{itemize}
		}
		\State Initialize parameters $M$, $N_\test$, $N_\reg$,
                $\veps_\reg^2$, and $\beta$ (see \cref{alg:learnlevel3} for
                parameter settings).
                \label{line:paramsVpi_simp} 
		\Statex[0] \algcommentbiglight{Test the fit for the
			estimated value functions $\Vhat_{h+1:H}$ at future layers.}
		\For{$(x_{h-1},a_{h-1})\in\cC_h$ and $\ell = H,\dots,h+1$}\label{line:begin3_simp}
		\For{$n=1,\ldots,N_\test$} \label{line:third_simp}
		\State \multiline{Draw $\x_h\sim{}T_{h-1}(\cdot\mid{}x_{h-1},a_{h-1})$, then draw
			$\x_{\ell-1}$ by rolling out with $\pibell_{h:H}$,
			where\footnote{We use the convention that $\Vhat_{H+1}\equiv 0$.}
			\label{line:draw_simp}
			\begin{align}\label{eq:Qhat3_simp}
				\forall \tau\in[H], \quad \pibell_\tau(\cdot)\in \argmax_{a\in \cA}  \cP_{\tau} [\Vhat_{\tau+1}](\cdot,a).  
			\end{align}
		}
		\For{$a_{\ell-1}\in \cA$} \label{line:fourth_simp}\colt{\algcommentlight{Test fit; if test fails, re-fit value functions
			$\Vhat_{h+1:\ell}$ up to layer $\ell$.}}
		\arxiv{\Statex[3]\algcommentbiglight{Test fit; if test fails, re-fit value functions
			$\Vhat_{h+1:\ell}$ up to layer $\ell$.}}
		\If{$\sup_{f\in \cVhat_{\ell}} |(\cP_{\ell-1}[\Vhat_{\ell}]- \cP_{\ell-1}[f_\ell])( \x_{\ell-1},a_{\ell-1})| >  \veps  + \veps \cdot \beta$}\label{line:test3_simp} \hfill %
		\State $\cC_{\ell}\gets\cC_{\ell}\cup\crl{(\x_{\ell-1},a_{\ell-1})}$. \label{line:added_simp}
		\For{$\tau= \ell,\dots, h+1$} \label{line:forloop_simp}
		\State $(\Vhat_{\tau:H},\cVhat_{h:H}, \cC_{\tau:H})\gets{}\learnlevel_{\tau}(\Vhat_{\tau+1:H},\cVhat_{h+1:H}, \cC_{\tau:H};\cV,\veps,\delta)$. 
		\EndFor
		\State \textbf{go to line \ref*{line:begin3_simp}.} \label{line:goto3_simp}
		\EndIf
		\EndFor
		\EndFor
		\EndFor
		\If{$h=0$} \textbf{return:} $(\Vhat_{1:H},\cdot,\cdot,\cdot,\cdot)$.
		\EndIf
		\Statex[0] \algcommentbiglight{Re-fit $\Vhat_h$ and build a new confidence set.}
		\For{$(x_{h-1},a_{h-1})\in\cC_h$} \hfill \algcommentlight{$\E^{\pibell_{h+1:H}}[\sum_{\ell=h}^H \bm{r}_\ell \mid
			\x_h,a] $ can be estimated using local simulator\arxiv{ and roll-outs}.}
		\State \multiline{Set $\cD_h(x_{h-1},a_{h-1})\gets\emptyset$. For
                $i=1,\dots, N_\reg$, sample
		$\x_h\sim{}T_{h-1}(\cdot\mid{}x_{h-1},a_{h-1})$ and
 update $\cD_h(x_{h-1},a_{h-1})\gets \cD_h(x_{h-1},a_{h-1}) \cup \{(\x_h, \E^{\pibell_{h:H}}[\sum_{\ell=h}^H \bm{r}_\ell \mid
                  \x_h]  )\}$. } \label{line:data3_simp}
		\EndFor
		\State Let $\Vhat_h\ldef\argmin_{f\in\cVhat}\sum_{(x_{h-1},a_{h-1})\in\cC_h} \sum_{(x_h,v_h)\in\cD_{h}(x_{h-1},a_{h-1})}(f(x_h)-v_h)^2$. \label{line:updateQ3_simp}
		\State Compute value function confidence set:  
		\begin{align}
			\cVhat_{h} \coloneqq \left\{ f\in \cV \left| \  \sum\colt{\nolimits}_{(x_{h-1},a_{h-1})\in\cC_h} \frac{1}{N_\reg} \sum\colt{\nolimits}_{(x_h,\text{-}) \in \cD_h(x_{h-1},a_{h-1})}\left(\Vhat_h(x_{h}) -f(x_h)\right)^2 \leq   \veps_\reg^2  \right.     \right\}.  \label{eq:confidence3_simp} 
		\end{align}
		\State \textbf{return} $(\Vhat_{h:H},\cVhat_{h:H},  \cC_{h:H})$.
	\end{algorithmic}
\end{algorithm}

        For ease of exposition, we defer the full version of our
        algorithm, \mainalg (\cref{alg:learnlevel3}), to
        \cref{sec:omitted} and present a simplified version here
        (\cref{alg:learnlevel3_simp}). The algorithms are nearly
        identical, except that the simplified version assumes that
        certain quantities of interest (e.g., Bellman backups) can
        be computed exactly, while the full version (provably)
        approximates them from samples.

\mainalg maintains a value function estimator
$\Vhat=\Vhat_{1:H}$ that aims to approximate the optimal value function
$\Vstar_{1:H}$, as well as \emph{core sets} $\cC_{1},\ldots,\cC_{H}$
of state-action pairs that are used to perform estimation and guide
exploration. At a high level, \learnlevel{} alternates between (i)
fitting the value function $\Vhat_h$ for a given layer $h\in\brk{H}$ based on Monte-Carlo rollouts,
and (ii) using the core-sets to test whether the current value
function estimates $\Vhat_{h+1:H}$  remain accurate as the
roll-in policy induced by $\Vhat_h$ changes.

In more detail, \mainalg is based on recursion across the layers
$h\in\brk{H}$. When invoked for layer $h$ with value function
estimates $\Vhat_{h+1:H}$ and core-sets $\cC_h,\ldots,\cC_H$,
$\mainalg_h$ performs two steps:
\colt{\begin{enumerate}[leftmargin=*]}
  \arxiv{\begin{enumerate}}
\item \label{item:two} For each state-action pair
  $(x_{h-1},a_{h-1})\in\cC_h$,\footnote{Informally, $\cC_h$ represents a
    collection of state-action pairs $(x_{h-1},a_{h-1})$ at layer $h-1$ for which we
    want
    $\En\brk*{\abs*{\Vhat_h(\x_h)-\Vstar_h(\x_h)}\mid{}\x_{h-1}=x_{h-1},\a_{h-1}=a_{h-1}}\leq\veps$
    for some small $\veps>0$.}
    the algorithm gathers $N_\test$ trajectories by
    rolling out from $(x_{h-1},a_{h-1})$ with the greedy policy
    $\pihat_\ell(x)\in \argmax_{a\in\cA}\cP_\ell\brk{\Vhat_{\ell+1}}(x,a)$ that
    optimizes the estimated value function; in the full version of
    \learnlevel{} (see \cref{alg:learnlevel3}), we estimate the
    bellman backup $\cP_\ell\brk{\Vhat_{\ell+1}}(x,a)$ using the local
    simulator. For all states
    $x_{\ell-1}\in\crl{\x_{h},\ldots,\x_{H-1}}$ encountered during this process, the
    algorithm checks whether
    $
    \abs*{\En\brk*{\Vhat_\ell(\x_{\ell})-\Vstar_\ell(\x_\ell)\mid{}\x_{\ell-1}=x_{\ell-1},\a_{\ell-1}=a_{\ell-1}}}\approxleq\veps$
    for all $a_{\ell-1}\in\cA$
    using a test based on (implicitly maintained) confidence sets. If
    the test fails, this indicates that distribution shift has
    occurred, and the algorithm adds the pair
    $(x_{\ell-1},a_{\ell-1})$ to the core-set $\cC_{\ell}$ and
    recurses on layer $\ell$ via $\mainalg_{\ell}$.
  \item\label{item:one}
If all tests above pass, this \arxiv{indicates}\colt{means} that
$\Vhat_{h+1},\ldots,\Vhat_{H}$ are accurate, and no distribution shift
has occurred. In this case, the algorithm fits $\Vhat_h$ by collecting
Monte-Carlo rollouts from all state-action pairs in the core-set $\cC_{h}$ with 
$\pihat_\ell(x)\in \argmax_{a\in\cA}\cP_\ell\brk{\Vhat_{\ell+1}}(x,a)$
(cf. \cref{line:updateQ3_simp}), and returns.\loose
  	\end{enumerate}
When the tests in \cref{item:two} succeed for all\arxiv{ layers} $h\in\brk{H}$, the
algorithm\arxiv{ terminates and} returns the estimated value functions
$\Vhat_{1:H}$; in this case, the greedy policy
$\pihat_\ell(x)\in\argmax_{a\in\cA}\cP_\ell\brk{\Vhat_{\ell+1}}(x,a)$
is guaranteed to be near optimal. The full version of \learnlevel{} in \cref{alg:learnlevel3} uses local simulator access to estimate the Bellman backups $\cP_h[\Vhat_{h+1}](x,a)$ for different state-action pairs $(x,a)$\arxiv{ (see \cref{eq:Qhat3} of \cref{alg:learnlevel3})}. These backups are used to (i) compute actions
of the greedy policy that maximizes $\Vhat_{1:H}$ via (e.g.,
\cref{eq:Qhat3_simp}); (ii) generate trajectories by rolling out from
state-action pairs in the core-sets (\cref{line:draw_simp}); and (iii) perform the test in
\cref{item:two} (\cref{line:test3_simp}).\loose

\learnlevel{} is inspired by the \texttt{DMQ} algorithm of
\citet{du2019provably,wang2021exponential}, which was originally
introduced in the context of online reinforcement learning with
linearly realizable $Q^\star$. \learnlevel incorporates local
simulator access (most critically, via core-set construction) to allow for more general \emph{nonlinear} function
approximation without restrictive statistical assumptions. Prior
algorithms for RLLS have used core-sets of state-action pairs in a similar
fashion \citep{li2021sample,yin2022efficient,weisz2022confident}, but
in a way that is tailored to linear function approximation.

In what follows, we discuss various features of the algorithm in
greater detail.

      \paragraph{Bellman backup policies} Since \mainalg works with
      state value functions instead of state-action value functions,
      we need a way to extract policies from the former. The most
      natural way to extract a policy from estimated value
      functions $\Vhat_{1:H}\in \cV$ is as follows: for all $h\in
      [H]$, define $\pihat_h(x)\in \argmax_{a\in \cA}
      \cP_h[\Vhat_{h+1}](x,a)$. In reality, we do not have access to
      $\cP_h[\Vhat_{h+1}](x,a)$ directly, so the full version of
      \mainalg (\cref{alg:learnlevel3}) estimates this quantity on the
      fly using the local
      simulator using the following scheme (\cref{alg:Phat} in \cref{sec:omitted}): Given a state $x$, for each $a$, we sample $K$ rewards $\br_h \sim
      R_h(x,a)$ and next-state transitions $\x_{h+1}\sim T_h(\cdot
      \mid x,a)$, then approximate $\cP_h[\Vhat_{h+1}](x,a)$ by the
      empirical mean. We remark that the use of these Bellman backup
      policies is actually crucial in the analysis for \learnlevel;
      even if we were to work with estimated state-action value
      functions $\Qhat_{1:H}$ instead, our analysis would require executing
      the Bellman backup policies $\pihat_h(x)\in \argmax_{a\in \cA}
      \cT_h[\Qhat_{h+1}](x,a)$ (instead of naively using $\pihat_h(x)\in \argmax_{a\in \cA} \Qhat_h(x,a)$).

   \fakepar{Invoking the algorithm}
The base invocation of \mainalg takes the form
\begin{align}
\Vhat_{1:H} \gets \mainalg_0(\Vhat_{1:H}=\mathsf{arbitrary}, \cVhat_{1:H}=\crl*{\cV_h}_{h=1}^{H},
  \cC_{0:H}=\crl*{\emptyset}_{h=0}^{H},
  ;\cV, \veps, \delta).\label{eq:rvfs_base}
\end{align}
Whenever this call returns, the greedy policy induced by $\Vhat_{1:H}$
is guaranteed to be near-optimal. Naively, the approximate Bellman backup policy induced by
$\Vhat_{1:H}$ (described above) is non-executable, and must be computed by invoking the
local simulator. To provide an end-to-end
guarantee to learn an executable policy, we give an outer-level
algorithm, \rvflF{} (\cref{alg:forward_vpi}, deferred to
\cref{sec:omitted} for space), which invokes
$\mainalg_0$, then extracts an executable policy from $\Vhat_{1:H}$
using behavior cloning. Subsequent recursive calls to \mainalg take the
form \colt{$(\Vhat_{h:H},\cVhat_{h:H},
		\cC_{h:H})\gets{}\learnlevel_{h}(\Vhat_{h+1:H},\cVhat_{h+1:H},\cC_{h:H};\cV,\veps,\delta)$.}
\arxiv{\begin{align}
  \label{eq:rvfs_rec}
(\Vhat_{h:H},\cVhat_{h:H},
		\cC_{h:H})\gets{}\learnlevel_{h}(\Vhat_{h+1:H},\cVhat_{h+1:H},\cC_{h:H};\cV,\veps,\delta).
\end{align}}
\arxiv{For such a call, the arguments}\colt{The arguments here} are: \colt{(i) $\Vhat_{h+1:H}$: Value function estimates for subsequent layers;
(iii) $\cVhat_{h+1:H}$: Value function confidence sets
  $\cVhat_{h+1:H}\subset\cV_{h+1:H}$, which are used in
  the test on \cref{line:test3_simp} to quantify
  uncertainty on new state-action pairs and decide
  whether to expand the core-sets; and (iii) $\cC_{h:H}$: Core-sets for current and subsequent layers.
}%
\arxiv{\begin{itemize}
\item $\Vhat_{h+1:H}$: Value function estimates for subsequent layers.
\item $\cVhat_{h+1:H}$: Value function confidence sets
  $\cVhat_{h+1:H}\subset\cV_{h+1:H}$, which are used in
  the test on \cref{line:test3} to quantify
  uncertainty on new state-action pairs and decide
  whether to expand the core-sets.
\item $\cC_{h:H}$: Core-sets for current and subsequent layers.
\end{itemize}}
Importantly, the confidence sets $\cVhat_{h+1:H}$ do not need
to be explicitly maintained, and can be used
implicitly whenever a \emph{regression oracle} for the
value function class is available (discussed below). 

\fakepar{Oracle-efficiency}
\learnlevel{} is \emph{computationally efficient}\arxiv{ in the sense
  that}\colt{:} it reduces to convex optimization over the value
function class $\cV$.
In particular,
the only computationally intensive steps in the algorithm are (i)
the regression step in \cref{line:updateQ3_simp},
and (ii) the test in
\cref{line:test3_simp} involving the confidence set $\cVhat_\ell$. For the
latter, we do not explicitly need to maintain $\cVhat_\ell$, as
the optimization problem over this set in \cref{line:test3_simp} (for
the full version of \mainalg in \cref{alg:learnlevel3}) reduces to solving 
$\argmax_{V\in\cV}\crl*{\pm\sum_{i=1}^{n}V(\wt{x}\ind{i})\mid{}\sum_{i=1}^{n}(V(x\ind{i})-y\ind{i})^2\leq\beta^2}$
for a dataset
$\crl*{(x\ind{i},\wt{x}\ind{i},y\ind{i})}_{i=1}^{n}$. This is convex
optimization problem in function space, and in particular can be implemented in a
provably efficient fashion whenever $\cV$ is linearly
parameterized. We expect that the problem can also be reduced to
a square loss regression by adapting the techniques in
\citet{krishnamurthy2017active,foster2018practical}, but we do not
pursue this here.

\subsection{Main Result}
We present the main guarantee for \learnlevel{} under the
function approximation assumptions in \cref{sec:rvfs_assumptions}.
        
   \begin{theorem}[Main guarantee for \mainalg]
   	\label{thm:vpiforward_main}
   	Let $\veps,\delta\in(0,1)$ be given, and suppose that
        \cref{ass:pushforward} (pushforward coverability) holds with
        $\Cpush>0$. \colt{Further, suppose that either \textbf{Setup \I} (\cref{ass:real,ass:pireal}
          ($\Vstar/\pistar$-realizability) and \cref{ass:gap}
          ($\Delta$-gap)) with $ \veps \leq 8H^{2} \Delta$ or \textbf{\setupii} (\cref{ass:realpi}
          ($V^{\pi}$-realizability) and \cref{ass:allpireal}
          ($\pi$-realizability)) holds.}
        \arxiv{ Further, suppose that one the following holds: 
          \begin{itemize}
                  \item \emph{\textbf{Setup \I:}} \cref{ass:real,ass:pireal}
          ($\Vstar/\pistar$-realizability) and \cref{ass:gap}
          ($\Delta$-gap) hold, and $\veps\leq  6 H\cdot\Delta$. 
        \item \emph{\textbf{Setup \II:}} \cref{ass:realpi}
          ($V^{\pi}$-realizability) and \cref{ass:allpireal}
          ($\pi$-realizability) hold.
   	\end{itemize}
      }
   	Then, $\rvflF(\Pi, \cV, \veps, \delta)$ (\cref{alg:forward_vpi}) returns a policy
        $\pihat_{1:H}$ such that $J(\pistar)-
        J(\pihat_{1:H}) \leq  2\veps$ with probability at least
        $1-\delta$, and has total sample complexity bounded
        by
        \colt{$          \wtilde{O}\left(\Cpush^4 H^{23}A\cdot{}\veps^{-7}\right)$.\footnote{For \textbf{Setup \I}, note that the sample complexity
          implicitly depends on $\Delta^{-1}$ through the constraint
          that $\veps\leq\bigoh(H^2\Delta)$.}}
   	\arxiv{\begin{align}
          \wtilde{O}\left(\Cpush^8 H^{23}A\cdot{}\veps^{-13}\right).
               \end{align}
             }
   \end{theorem}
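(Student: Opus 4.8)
\emph{Proof sketch.}
The plan is to analyze $\learnlevel$ through three ingredients: (i) a structural bound, driven by pushforward coverability, on the total number of state-action pairs ever added to the core-sets $\cC_{1:H}$, which controls both termination and sample complexity; (ii) a correctness argument showing that once the base call $\mainalg_0$ returns, the greedy (Bellman-backup) policy $\pibell$ induced by the value-function estimates $\Vhat_{1:H}$ satisfies $J(\pistar)-J(\pibell)\le\veps$; and (iii) a behavior-cloning step, carried out by $\rvflF$, that converts $\pibell$ (which requires the simulator to execute) into an executable policy $\pihat_{1:H}\in\Pi$. Throughout, I would condition on a high-probability event $\cE$ on which every quantity the algorithm estimates from the local simulator---the Monte-Carlo Bellman backups $\cP_h[\Vhat_{h+1}]$, the rollout-based regression targets $\En^{\pibell_{h:H}}[\sum_{\ell\ge h}\br_\ell\mid\x_h]$ used on \cref{line:data3_simp}, and the test statistic on \cref{line:test3_simp}---is accurate to within its nominal tolerance. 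Since the number of invocations of $\learnlevel$ is bounded by ingredient (i), a union bound over all invocations gives $\P[\cE]\ge 1-\delta$ under the stated parameter settings; one should bound the invocation count first, on a deterministic event, to avoid circularity.

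As a preliminary, in \textbf{Setup~\I} I would use the $\Delta$-gap to reduce to the $V^\star$-realizable case: if $\Vhat_{h+1}$ is $O(\veps)$-accurate in expectation under the roll-in distributions the algorithm actually uses at layer $h$, then $\cP_h[\Vhat_{h+1}](x,\cdot)$ is within $O(H\veps)$ of $\Qstar_h(x,\cdot)$ on those states, and the hypothesis relating $\veps$ to $\Delta$ is exactly what is needed to force $\pibell_h(x)=\pistar_h(x)$ on every such state; consequently the regression target $\En^{\pibell_{h:H}}[\sum_{\ell\ge h}\br_\ell\mid\x_h]=V^{\pibell}_h(\x_h)$ equals $\Vstar_h(\x_h)$ there, which is realizable by \cref{ass:real}. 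In \textbf{Setup~\II} no reduction is needed: the target $V^{\pibell}_h$ lies in $\cV_h$ directly by \cref{ass:realpi}. Writing $V^{\dagger}_h$ for the relevant target ($\Vstar_h$, resp.\ $V^{\pibell}_h$), a standard least-squares-with-covering argument then gives \emph{validity of the confidence sets}: with the chosen parameters, $V^{\dagger}_h\in\cVhat_h$ whenever $\cVhat_h$ is (re)built on \cref{line:updateQ3_simp}, and $\Vhat_h$ is within $O(\veps_\reg)$ of $V^{\dagger}_h$ in squared error averaged over the current core-set's roll-in distribution.

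The crux is ingredient (i), and it is where \cref{ass:pushforward} enters. Each time a pair $(\x_{\ell-1},\a_{\ell-1})$ is added to $\cC_\ell$ the test on \cref{line:test3_simp} has failed, so there is $f\in\cVhat_\ell$ with $\En_{T_{\ell-1}(\cdot\mid\x_{\ell-1},\a_{\ell-1})}\brk*{\abs{\Vhat_\ell-f}}\gtrsim\veps$, whereas by validity every such $f$ agrees with $\Vhat_\ell$ in squared error $\lesssim\veps_\reg^2$ when averaged over the roll-in distributions of the points already in $\cC_\ell$. Were $T_{\ell-1}(\cdot\mid\x_{\ell-1},\a_{\ell-1})$ well covered by those earlier distributions, Cauchy--Schwarz would contradict the disagreement; hence each newly added point is ``surprising'' relative to the core-set built so far. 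Invoking pushforward coverability, which supplies a reference measure $\mu_\ell$ with $T_{\ell-1}(x\mid x',a')\le\Cpush\,\mu_\ell(x)$ for all $x',a',x$, a potential/elliptic-style argument---analogous to the coverability-to-eluder bounds of \citet{xie2023role}, but applied to the single fixed measure $\mu_\ell$---shows that at each layer at most $\wtilde O(M)$ pairs are ever added, which is the role of the parameter $M=\ceil{8\veps^{-1}\Cpush H}$; since a single addition triggers at most $H$ recursive re-fits (the cascade over $\tau=\ell,\dots,h+1$), the total number of additions over the whole run is $\wtilde O(HM)$, so in particular $\mainalg_0$ terminates. I expect this step---choosing the right potential and correctly accounting for the interaction between the nested recursion and the per-layer core-sets---to be the main technical obstacle; note that using the local simulator to estimate $\cP_h[\Vhat_{h+1}]$ directly is precisely what removes any need for Bellman completeness here.

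For ingredient (ii): when $\mainalg_0$ returns, all tests passed, so along any rollout of $\pibell$ from a core-set pair we have $\sup_{f\in\cVhat_\ell}\abs{(\cP_{\ell-1}[\Vhat_\ell]-\cP_{\ell-1}[f])(\x_{\ell-1},\a_{\ell-1})}\lesssim\veps(1+\beta)$; taking $f=V^{\dagger}_\ell\in\cVhat_\ell$ (validity) shows $\cP_{\ell-1}[\Vhat_\ell]\approx\cP_{\ell-1}[V^{\dagger}_\ell]$ on these states. Combined with the fact that, because the core-sets have reached the saturated size guaranteed by ingredient (i), no further test can fail---so the core-set roll-in distributions cover everything $\pibell$ reaches---this yields that $\Vhat_h$ approximately satisfies the Bellman optimality equation $\Vhat_h(x)\approx\max_a\cP_h[\Vhat_{h+1}](x,a)$ in expectation along trajectories of $\pibell$, with per-layer error $O(\veps)$. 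A telescoping / performance-difference-lemma argument over the $H$ layers then gives $J(\pistar)-J(\pibell)\le\wtilde O(H\veps)$, and rescaling $\veps$ (absorbed into the parameter choices) yields $\le\veps$. Finally, $\rvflF$ executes $\pibell$ (using the simulator for the backups) to gather on-policy trajectories and behavior-clones them onto $\Pi$; by \cref{ass:pireal} in \textbf{Setup~\I} (where $\pibell=\pistar$ on reachable states) or \cref{ass:allpireal} in \textbf{Setup~\II} the target policy lies in $\Pi$, so a standard imitation-learning bound gives $J(\pibell)-J(\pihat_{1:H})\le\veps$ and hence $J(\pistar)-J(\pihat_{1:H})\le 2\veps$. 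The sample-complexity bound follows by multiplying the $\wtilde O(HM)$ core-set additions by the $\le H$ re-fits each triggers, by the per-re-fit rollout budget $N_\test+N_\reg+N_\simu=\poly(M,H,A,\veps^{-1},\log\abs{\cV})$, and by the horizon $H$ per rollout, plus the cost of the behavior-cloning phase---yielding the stated polynomial in $\Cpush$, $H$, $A$, and $\veps^{-1}$.
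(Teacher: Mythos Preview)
Your ingredients (i) and (iii) are broadly correct and match the paper: the bound on $|\cC_h|$ via a pushforward-coverability potential argument, and the behavior-cloning reduction, are as you describe. The genuine gap is in ingredient (ii), specifically the step ``$\Vhat_h$ approximately satisfies the Bellman optimality equation \ldots\ a telescoping / performance-difference-lemma argument then gives $J(\pistar)-J(\pibell)\le\wtilde O(H\veps)$.'' In \textbf{Setup~II} with $V^\dagger_\ell=V^{\pibell}_\ell$, the passed tests give $\cP_{\ell-1}[\Vhat_\ell]\approx\cP_{\ell-1}[V^{\pibell}_\ell]=Q^{\pibell}_{\ell-1}$ on $\pibell$'s trajectories, for all actions. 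Combined with $\pibell_{\ell-1}=\argmax_a\cP_{\ell-1}[\Vhat_\ell](\cdot,a)$, this shows $\pibell$ is approximately greedy with respect to \emph{its own} $Q$-function---a policy-improvement fixed-point statement, not near-optimality. The performance-difference lemma under $\pibell$'s distribution gives $J(\pistar)-J(\pibell)=\En^{\pibell}\brk[\big]{\sum_h V^\star_h(\x_h)-Q^\star_h(\x_h,\pibell_h(\x_h))}$, which requires information about $Q^\star$, not $Q^{\pibell}$; nor does approximate Bellman optimality of $\Vhat$ restricted to $\pibell$'s trajectories pin down $V^\star$ (a suboptimal policy with $\Vhat=V^{\pibell}$ satisfies this trivially). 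In \textbf{Setup~I} your backward induction to $\pibell_h=\pistar_h$ is closer to right, but as you acknowledge it is circular: you need $V^{\pibell}_{h}=V^\star_h$ as a regression target before you have shown $\pibell_{\tau}=\pistar_{\tau}$ for $\tau\ge h$.

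The paper resolves both issues with a single device you are missing: a \emph{benchmark policy} $\pib$ defined recursively via $\pibb_\tau(x)\in\argmax_a \bQhat_\tau(x,a)$ if $\|\bQhat_\tau(x,\cdot)-\cP_\tau[V^{\pib}_{\tau+1}](x,\cdot)\|_\infty\le 4\veps$, and $\pibb_\tau(x)\in\argmax_a\cP_\tau[V^{\pib}_{\tau+1}](x,a)$ otherwise, where $\bQhat_\tau=\Phat_\tau[\Vhat_{\tau+1}]$. By construction $\|\bQtilde_\tau-Q^{\pib}_\tau\|_\infty\le 4\veps$, so $\pib$ lies in a class $\Pi_{4\veps}$ of policies for which $J(\pistar)-J(\pib)\le O(H\veps)$ holds \emph{a priori}. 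The paper then proves $V^{\pib}_h\in\cVhat_h$ for all $h$ by backward induction (using the regression guarantee and passed tests, together with the observation that after the last call to $\mainalg_h$ no $\mainalg_\tau$ with $\tau>h$ is called, so $\Vhat_{h+1:H}$ are final); this step only requires realizing $V^\pi$ for $\pi\in\Pi_{4\veps}$, and in \textbf{Setup~I} one checks separately that $\Pi_{4\veps}=\{\pistar\}$ when $8\veps<\Delta$, which breaks your circularity. Once $V^{\pib}_h\in\cVhat_h$, the passed tests yield $\En^{\pibell}[\tv{\pibell_h(\x_h)}{\pib_h(\x_h)}]\le\veps/\poly(H,\Cpush)$, and the performance-difference lemma bounds $J(\pib)-J(\pibell)$; combining with the a-priori near-optimality of $\pib$ gives the result. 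Without this benchmark construction, your argument for (ii) does not go through in either setup.
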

  \cref{thm:vpiforward_main} shows
  for the first time that sample- and computationally-efficient RL
  with local simulator access
  is possible under pushforward coverability. In particular,
  \mainalg is the first computationally efficient algorithm for RL with
  local simulator access that supports nonlinear function
  approximation. The assumptions in
  \cref{thm:vpiforward_main}, while stronger than those in
  \cref{sec:sample}, are not known to enable sample-efficient RL
  without simulator access. Nonetheless, understanding whether \mainalg can be
  strengthened to support general coverability or weaker function
  approximation is an important open problem. \colt{See
    \cref{sec:rvfs_overview} for an overview of the analysis; we remark (\cref{sec:setupii}) that
    the result is actually proven under slightly weaker 
    assumptions than those in \setupi/\setupii.\loose
  }

\arxiv{  We mention in passing that \learnlevel{} can be slightly modified to
  recover other existing sample complexity guarantees for RL with
  linear function approximation and local simulator access (which do not
  require pushforward coverability), including
  linear-$Q^{\star}$ realizability with gap \citep{li2021sample} and
  $Q^{\pi}$-realizability \citep{yin2022efficient}; we leave a more
  general treatment for future work. } 

\fakepar{Connection to empirical algorithms}

\learnlevel{} bears some similarity to Monte-Carlo Tree Search (MCTS)
\citep{coulom2006efficient,kocsis2006bandit} and AlphaZero
\citep{silver2018general}, which perform planning with local simulator. Informally, MCTS can be viewed as a form of
breadth-first search over the state space (where each node represents a
state at a given layer), and AlphaZero is a particular instantiation of a MCTS
that leverages \colt{$V-$}value function approximation\arxiv{ (through a class $\cV$ that
aims to approximate $\Vstar$)} to \arxiv{accommodate stochastic
environments and }allow for generalization across states.
Compared to \mainalg, MCTS and AlphaZero perform exploration via
simple bandit-style heuristics, and are not explicitly designed to handle
\emph{distribution shifts} that arise in settings where actions have long-term
downstream effects. \learnlevel{} may be viewed as a provable
counterpart that uses function approximation to address distribution
shift in a principled fashion\arxiv{ (in particular, through the use of confidence
sets and the test in \cref{line:test3})}.\footnote{We note in passing that in
the context of tree search, the pushforward coverability assumption (\cref{ass:pushforward}) may be viewed as the stochastic analogue of branching factor.}

\colt{
\learnlevel{} also has a resemblance to the Go-Explore algorithm
of \citet{ecoffet2019go,ecoffet2021first}, which also uses core-sets of informative
state-action pairs to guide exploration (albeit in an ad-hoc,
domain-specific fashion) and uses imitation learning
to extract an executable policy after exploration completes. In light of these connections, we are optimistic that the techniques
in \learnlevel{}---in particular, using value function approximation
to guide systematic exploration---can help to inform the design of better practical algorithms for
learning and planning with local simulators.
}

\arxiv{\learnlevel{} also has some resemblance to the Go-Explore algorithm
of \citet{ecoffet2019go,ecoffet2021first} Like \mainalg, Go-Explore makes use of core-sets of informative
state-action pairs to guide exploration, and uses imitation learning
to extract an executable policy after the exploration phase completes. However, Go-Explore uses an
ad-hoc, domain specific approach to building the core set, and does
not use function approximation in the exploration phase; such
a strategy is unlikely to succeed in more challenging environments
where the effective horizon is longer or less domain-specific
information is available a-priori.

In light of these connections, we are optimistic that the techniques
in \learnlevel{}---in particular, using value function approximation
to guide systematic exploration---can help to inform the design of practical algorithms for
learning and planning with local simulators.
}

        \colt{
          \fakepar{Applying \mainalg to Exogenous Block MDPs}
          ExBMDPs satisfy coverability (\cref{ass:cover}),
          but do not satisfy the pushforward coverability assumption
          (\cref{ass:pushforward}) \arxiv{required by \mainalg }in
          general. However, it turns out that ExBMDPs \emph{do}
          satisfy pushforward coverability when the exogenous noise
          process is weakly correlated across time, a new statistical
          assumption we refer to the \emph{weak correlation
            condition}. In \cref{sec:rvfs_exbmdp} (\cref{thm:exbmdpforward_main}), we give a variant
          of \mainalg for ExBMDPs that succeeds under (i) weak
          correlation, and (ii) decoder realizability,
          sidestepping the need for the $\Delta$-gap or
          $V^{\pi}$-realizability.\loose %
          }
        
        \arxiv{
          \subsection{Applying \mainalg to Exogenous Block MDPs}
          \label{sec:rvfs_exbmdp}
\arxiv{We now apply \mainalg to the Exogenous Block MDP (ExBMDP) model
introduced in
\cref{sec:exbmdp}.}
\colt{In this section, we apply \mainalg to the Exogenous Block MDP (ExBMDP) model
introduced in
\cref{sec:exbmdp}.}
ExBMDPs satisfy coverability (\cref{ass:cover}),
but do not satisfy the pushforward coverability assumption
(\cref{ass:pushforward}) required by \mainalg in general. However, it turns out that ExBMDPs \emph{do} satisfy pushforward coverability when the exogenous noise process is weakly correlated across time; we refer to this new statistical assumption as the \emph{weak correlation condition}. 
\begin{assumption}[Weak correlation condition]
	\label{ass:great} 
For the underlying ExBMDP $\cM$, there is a constant $\Ccor \geq 1$ such that for all $h\in[H-1]$ and $(\xi,\xi')\in \Xi_{h-1}\times \Xi_{h}$, we have\footnote{Throughout this paper, when
		considering the law for the exogenous variables
		$\bxi_1,\ldots,\bxi_H$, we write $\bbP\brk*{\cdot}$ instead of
		$\bbP^{\pi}\brk*{\cdot}$ to emphasize that the law is independent of
		the agent's policy.}
	\begin{align}
		\label{eq:corr}
		\P[\bxi_h =\xi,\bxi_{h+1}=\xi'] \leq \Ccor\cdot  \P[\bxi_h =\xi] \cdot \P[\bxi_{h+1}=\xi'].
	\end{align}
\end{assumption}
The weak correlation property asserts that the joint law for the
exogenous noise variables $\bxi_h$ and $\bxi_{h+1}$ is at most a
multiplicative factor $\Ccor\geq1$ larger than the corresponding product distribution obtained by sampling $\bxi_h$ and $\bxi_{h+1}$ independently from their marginals. This setting strictly generalizes the (non-exogenous) Block MDP model
\citep{krishnamurthy2016pac,du2019latent,misra2019kinematic,zhang2022efficient,mhammedi2023representation}, by allowing for arbitrary stochastic dynamics for the endogenous state and an arbitrary emission process, but requires that temporal correlations in the exogenous noise decay over time.

\arxiv{We show that under}\colt{Under} \cref{ass:great}, pushforward
coverability is satisfied with $\Cpush\leq \Ccor \cdot{}S A$
(\cref{lem:pushforward} in \cref{sec:analysis_exbmdp}). In addition,
$V^{\star}$-realizability is implied by decoder realizability
(\cref{lem:realex}). Thus, by
applying \cref{thm:vpiforward_main} (\setupi), we conclude that \mainalg
efficiently learns a near-optimal policy for any weakly correlated ExBMDP for
which the optimal value function has $\Delta$-gap.\loose

\paragraph{An improved algorithm for ExBMDPs: \mainalge}
At first glance, removing the gap assumption for \mainalg in ExBMDPs seems
difficult: The $V^{\pi}$-realizability assumption required to invoke
\cref{thm:vpiforward_main} (\setupii) is not satisfied by ExBMDPs, as
decoder realizability only implies $V^{\pi}$ realizability for
\emph{endogenous} policies $\pi$.\footnote{We say that a policy $\pi$ is \emph{endogenous}
  if it does not depend on exogenous noise, in the sense that
  $\pi(\x_h)$ is a measurable function of $\phistar(\x_h)$.}
In spite of this, we now show that with a slight modification,
\mainalg can efficiently learn any weakly correlated ExBMDP under decoder
realizability alone (without gap or $V^\pi$-realizability).

Our new variant of \mainalg, $\learnlevel^{\exo}$, is presented in
\cref{alg:learnlevel2}\arxiv{ (deferred to \cref{sec:omitted} for space)}. The
algorithm is almost identical to \learnlevel{}
(\cref{alg:learnlevel3}), with the main difference being that we use
an additional \emph{randomized rounding} step to compute the policies
$\pihat_{1:H}$ from the learned value functions $\Vhat_{1:H}$. In particular, instead
of directly defining the policies $\pihat_{1:H}$ based on the bellman
backups $\cP_{h}[\Vhat_{h+1}]$ as in \cref{eq:Qhat3},
$\learnlevel^\exo$ targets a ``rounded'' version of the backup given by
\begin{align}
	\veps \cdot \ceil{\cP_{h}[\Vhat_{h+1}](x,a)/\veps + \bzeta_h}, \label{eq:know}
	\end{align}
where $\veps\in(0,1)$ is a rounding parameter and $\bzeta_1, \dots, \bzeta_H$
are i.i.d.~random variables sampled uniformly at random from the
interval $[0,1/2]$ (at the beginning of the algorithm's
execution). Concretely, $\learnlevel^\exo$ estimates the bellman
backup $\cP_{h}[\Vhat_{h+1}](x,a)$ in \cref{eq:know} using the local
simulator (as in \cref{eq:Qhat3} of \cref{alg:learnlevel3}), and defines its policies via
\begin{align}
	\pihat_{h}(\cdot) \in \argmax_{a\in \cA} \ceil{\cP_{h}[\Vhat_{h+1}](\cdot, a)/\veps + \bzeta_h}. \label{eq:pihate}
\end{align}

This rounding scheme, which quantizes the Bellman backup
into $\veps^{-1}$ bins with a random offset, is designed to emulate certain properties
implied by the $\Delta$-gap assumption (\cref{ass:gap}). Specifically,
we show that with constant probability over the draw of
$\bzeta_{1:H}$,\zmdelete{and with high probability over the randomness in
$\Phat$,} the policy $\pihat$ in \eqref{eq:pihate} ``snaps'' on to an
\emph{endogenous} policy $\pi$. This means that for
$\learnlevel^{\exo}$ to succeed (with constant probability), it
suffices to pass it a class $\cV$ that realizes the value functions
$(V^{\pi}_h)$ for endogenous policies $\pi\in \Pim$. Fortunately, such a function
class can be constructed explicitly under decoder realizability (\cref{ass:phistar}).%
   \begin{lemma}[\cite{efroni2022sample}]
	\label{lem:realex2}
	For the ExBMDP setting, under \cref{ass:phistar}, the function class $\cV_h\coloneqq
	\{x\mapsto f(\phi(x)) : f\in [0,H]^S, \phi\in \Phi\}$
	is such that $V_h^{\pi}\in \cV_h$ for all endogenous policies $\pi$. Furthermore, the policy class $\Pi_h \coloneqq \left\{ \pi(\cdot)\in \argmax_{a\in \cA} f(\phi(\cdot),a):f\in [0,H]^{S\times A }, \phi\in \Phi \right\}$ contains all endogenous policies.
\end{lemma} 

A small technical challenge with the scheme above is that it is only
guaranteed to succeed with constant probability over the draw of the
rounding parameters $\bzeta_1,\ldots,\bzeta_H$. To address this, we
provide an outer-level algorithm, \forwardexo
(\cref{alg:forward_exbmdp}\arxiv{, deferred to \cref{sec:omitted} for space}), which performs
confidence boosting by invoking \mainalge multiple times
independently, and extracts a high-quality executable policy using
behavior cloning.\loose

\paragraph{Main result}

We now state the main guarantee for $\learnlevel^\exo$ (the proof is in \cref{sec:exbmdp_app}).
  \begin{theorem}[Main guarantee of \mainalge for EXBMDPs]
  \label{thm:exbmdpforward_main}
  Consider the ExBMDP setting. Suppose the decoder class $\Phi$
  satisfies \cref{ass:phistar}, and that \cref{ass:great} holds with $\Ccor>0$.
  Let $\veps,\delta\in(0,1)$ be given, and let $\cV_{h}$ and
        $\Pi_h$ be as in \cref{lem:realex2}.
        Then $\forwardexoeq(\Pi, \cV_{1:H}, \veps, \zeta_{1:H},\delta)$
        (\cref{alg:forward_exbmdp}) produces a policy $\pihat_{1:H}$
        such that $J(\pistar)- J(\pihat_{1:H}) \leq  \veps$, and has
        total sample complexity 
	\begin{align}
		\wtilde{O}\left(\Cexo^8 S^8H^{36}A^9\cdot \veps^{-26}\right).
	\end{align}
\end{theorem}
This result shows for the first time that sample- and
computationally-efficient learning is possible for ExBMDPs beyond
deterministic or factored settings \citep{efroni2022provably,efroni2022sample}.

We mention in passing that our use of randomized rounding
to emulate certain consequences of the $\Delta$-gap
assumption leverages the fact that ExBMDPs have a finite number of
(endogenous) latent states. It is unclear if this technique can be used when the (latent) state space is large or infinite.

           }

\colt{\vspace{-10pt}}

        \section{Conclusion and Open Problems}
        \label{sec:discussion}
Our results show that online RL with local simulator access can enable powerful sample complexity guarantees for learning
with general value function approximation. Interesting open problems
and directions for future research include:
\colt{(i) Can we show that value function realizability and coverability
  are \emph{not sufficient} for sample-efficient learning in the
  online RL framework, thereby separating this framework from \framework?
  (ii) Can we obtain computationally-efficient algorithms under the
  same assumptions as \ggolf? In addition, we are excited to explore the empirical performance of
exploration schemes inspired by \mainalg in a large-scale evaluation.
}
\arxiv{
\begin{itemize}
\item Is it true that value function realizability and coverability
  are \emph{not sufficient} for sample-efficient learning in the
  online RL model? Combined with our results, this would imply a new
  separation between \framework and online RL. More broadly, it would be interesting to develop a unified
  understanding for precisely when local simulator access can lead to
  statistical benefits over fully online RL, and to characterize the
  statistical complexity for this framework.

\item The \mainalg algorithm is computationally efficient, but
  requires stronger statistical assumptions than our inefficient
  algorithms. Closing this gap with an efficient algorithm is an
  important but challenging open problem.
\end{itemize}
In addition, we are excited to explore the empirical performance of
exploration schemes inspired by \mainalg.%
}

	\newpage

  \newpage
  \subsection*{Acknowledgements}
  Part of this work was done while ZM was at MIT. ZM and AR acknowledge support from the ONR through awards N00014-20-1-2336 and N00014-20-1-2394, and ARO through award W911NF-21-1-0328.

	\bibliography{refs.bib}
	
	\newpage

	\appendix
		\renewcommand{\contentsname}{Contents of Appendix}
		\addtocontents{toc}{\protect\setcounter{tocdepth}{2}}
		{
			\hypersetup{hidelinks}
			\tableofcontents
		}

	\clearpage

	\section{Additional Related Work}
        \label{sec:additional_related}
\paragraph{Local simulators: Theoretical research}
RL with local simulators has received extensive interest in the context
of linear function approximation. Most notably, \citet{weisz2021query} show that
reinforcement learning with linear $V^{\star}$ is tractable with local
simulator access, and \citet{li2021sample} show that RL with
linear $Q^{\star}$ and a state-action gap is tractable; online RL is
known to be intractable under the same assumptions
\citep{weisz2021query,wang2021exponential}. \citet{amortila2022few}
show that the gap assumption can be removed if a small number of
expert queries are available. Also of note are the
works of \citet{yin2022efficient,weisz2022confident}, which give
computationally efficient algorithms under linear
$Q^{\pi}$-realizability for all $\pi$; this setting is known to be tractable in the
online RL model \citep{weisz2023online}, but computationally efficient algorithms are
currently only known for \framework.

\emph{Global simulators}---in which the agent can query arbitrary
state-action pairs and observe next state transitions---have also
received theoretical investigation, but like local simulators, results
are largely restricted to tabular reinforcement learning and linear
models
\citep{kearns1998finite,kakade2003sample,sidford2018near,du2019good,yang2019sample,lattimore2020learning}.

\paragraph{Local simulators: Empirical research}
The Go-Explore
algorithm \citep{ecoffet2019go,ecoffet2021first} uses local simulator
access to achieve state-of-the-art performance for the Atari games
Montezuma's Revenge and Pitfall---both notoriously difficult games that require systematic
exploration. To the best of our knowledge, the performance of
Go-Explore on these tasks has yet to be matched by online
reinforcement learning; the performing agents
\citep{badia2020agent57,guo2022byol} are roughly a factor of four
worse in terms of cumulative reward. Interestingly,
like \mainalg, Go-Explore makes use of core sets of informative
state-action pairs to guide exploration. However, Go-Explore uses an
ad-hoc, domain specific approach to designing the core set, and does
not use function approximation to drive exploration.

Recent work of \citet{yin2023sample} provides an empirical framework
for online RL with local planning that can take advantage of deep
neural function approximation, and is inspired by the theoretical
works in
\citet{weisz2021query,li2021sample,yin2022efficient,weisz2022confident}. This
approach does not have provable guarantees, but achieves super-human
performance at Montezuma's Revenge.

Other notable empirical works that incorporate local simulator access, as
highlighted by \citet{yin2023sample}, include \citet{schulman2015trust,salimans2018learning,tavakoli2018exploring}.

\paragraph{Planning}
RL with local simulator access is a convenient abstraction for the
problem of \emph{planning}: Given a known (e.g., learned) model,
compute an optimal policy. Planning with a learned model is an important task in theory
\citep{foster2021statistical,liu2023optimistic} and practice (e.g., MuZero \citep{schrittwieser2020mastering}).
Since the model is known, computing an
optimal policy is a purely computational problem, not a statistical
problem. Nonetheless, for planning problems in large state spaces, where
enumerating over all states is undesirable, algorithms for online RL
with local simulator access can be directly applied, treating the
model as if it were the environment the agent is interacting
with. Here, any computationally efficient \framework algorithm immediately yields an efficient
algorithm for planning.

Empirically, Monte-Carlo Tree Search \citep{coulom2006efficient,kocsis2006bandit} is a successful paradigm for
planning, acting as a key component in AlphaGo
\citep{silver2016mastering} and AlphaZero
\citep{silver2018general}.\footnote{Compare to our work, a small
  difference is that these works are not concerned with producing
  executable policies, c.f. \cref{def:executable}.} Viewed as a
planning algorithm, a potential advantage of \mainalg is that it is
well suited to stochastic environments, and provides a principled way
to use estimated (neural) value function estimates to guide exploration.

\paragraph{Coverability}
\citet{xie2023role} introduced coverability as a structural parameter
for online reinforcement leanring, inspired by connections between
online and offline RL. Existing guarantees for the online RL framework
based on coverability require either Bellman completeness \citep{xie2023role}, model-based
  realizability \citep{amortila2024scalable}, or weight
  function realizability
  \citep{amortila2023harnessing,amortila2024scalable}), and it is not
  currently known whether value function realizability is sufficient
  in this framework.

  \paragraph{Exogenous Block MDPs}
  Our results in \cref{sec:exbmdp} (\cref{cor:exbmdp}) show that
  general Exogenous Block MDPs are learnable with local simulator
  access. Prior work, on learning EXBMDPs in the online RL model requires additional assumptions:
       \begin{itemize}
       	\item \emph{Deterministic ExBMDP}
          \citep{efroni2022provably}. In this setting, the latent transition
          distribution $\Tendo$ is assumed to be deterministic. In
          this case, it suffices to learn \emph{open-loop} policies
          (i.e., policies that play a deterministic sequence of actions). This avoids compounding errors due to learning imperfect decoders that depend on the exogenous noise, making this setting much less challenging than the general ExBMDP setting.
       	\item \emph{Factored ExMDP} \citep{efroni2022sample}. This is
          an ExBMDP setting with a restrictive structure in which the
          observation is a $d$-dimensional vector and the latent state
          is a $k$-dimensional subset of the observed
          coordinates. This structure prevents the setting from
          subsuming the basic (non-exogenous) Block MDP framework, and makes it
          possible to learn decoders that act only on the endogenous
          state, preventing compounding errors.
       	\item \emph{Bellman completeness}.            \citet{xie2023role} observed that ExBMDPs admit low
           coverability, but their algorithm requires Bellman
           completeness, which is not satisfied by ExBMDPs (see
           \citet{efroni2022provably,islam2023agent}).

       	\end{itemize}

\clearpage

\colt{
  \part{Additional Results}

  This section of the appendix contains additional results omitted
  from the main body due to space constraints.
  
  \section{Applying \mainalg to Exogenous Block MDPs}
          \label{sec:rvfs_exbmdp}
\arxiv{We now apply \mainalg to the Exogenous Block MDP (ExBMDP) model
introduced in
\cref{sec:exbmdp}.}
\colt{In this section, we apply \mainalg to the Exogenous Block MDP (ExBMDP) model
introduced in
\cref{sec:exbmdp}.}
ExBMDPs satisfy coverability (\cref{ass:cover}),
but do not satisfy the pushforward coverability assumption
(\cref{ass:pushforward}) required by \mainalg in general. However, it turns out that ExBMDPs \emph{do} satisfy pushforward coverability when the exogenous noise process is weakly correlated across time; we refer to this new statistical assumption as the \emph{weak correlation condition}. 
\begin{assumption}[Weak correlation condition]
	\label{ass:great} 
For the underlying ExBMDP $\cM$, there is a constant $\Ccor \geq 1$ such that for all $h\in[H-1]$ and $(\xi,\xi')\in \Xi_{h-1}\times \Xi_{h}$, we have\footnote{Throughout this paper, when
		considering the law for the exogenous variables
		$\bxi_1,\ldots,\bxi_H$, we write $\bbP\brk*{\cdot}$ instead of
		$\bbP^{\pi}\brk*{\cdot}$ to emphasize that the law is independent of
		the agent's policy.}
	\begin{align}
		\label{eq:corr}
		\P[\bxi_h =\xi,\bxi_{h+1}=\xi'] \leq \Ccor\cdot  \P[\bxi_h =\xi] \cdot \P[\bxi_{h+1}=\xi'].
	\end{align}
\end{assumption}
The weak correlation property asserts that the joint law for the
exogenous noise variables $\bxi_h$ and $\bxi_{h+1}$ is at most a
multiplicative factor $\Ccor\geq1$ larger than the corresponding product distribution obtained by sampling $\bxi_h$ and $\bxi_{h+1}$ independently from their marginals. This setting strictly generalizes the (non-exogenous) Block MDP model
\citep{krishnamurthy2016pac,du2019latent,misra2019kinematic,zhang2022efficient,mhammedi2023representation}, by allowing for arbitrary stochastic dynamics for the endogenous state and an arbitrary emission process, but requires that temporal correlations in the exogenous noise decay over time.

\arxiv{We show that under}\colt{Under} \cref{ass:great}, pushforward
coverability is satisfied with $\Cpush\leq \Ccor \cdot{}S A$
(\cref{lem:pushforward} in \cref{sec:analysis_exbmdp}). In addition,
$V^{\star}$-realizability is implied by decoder realizability
(\cref{lem:realex}). Thus, by
applying \cref{thm:vpiforward_main} (\setupi), we conclude that \mainalg
efficiently learns a near-optimal policy for any weakly correlated ExBMDP for
which the optimal value function has $\Delta$-gap.\loose

\paragraph{An improved algorithm for ExBMDPs: \mainalge}
At first glance, removing the gap assumption for \mainalg in ExBMDPs seems
difficult: The $V^{\pi}$-realizability assumption required to invoke
\cref{thm:vpiforward_main} (\setupii) is not satisfied by ExBMDPs, as
decoder realizability only implies $V^{\pi}$ realizability for
\emph{endogenous} policies $\pi$.\footnote{We say that a policy $\pi$ is \emph{endogenous}
  if it does not depend on exogenous noise, in the sense that
  $\pi(\x_h)$ is a measurable function of $\phistar(\x_h)$.}
In spite of this, we now show that with a slight modification,
\mainalg can efficiently learn any weakly correlated ExBMDP under decoder
realizability alone (without gap or $V^\pi$-realizability).

Our new variant of \mainalg, $\learnlevel^{\exo}$, is presented in
\cref{alg:learnlevel2}\arxiv{ (deferred to \cref{sec:omitted} for space)}. The
algorithm is almost identical to \learnlevel{}
(\cref{alg:learnlevel3}), with the main difference being that we use
an additional \emph{randomized rounding} step to compute the policies
$\pihat_{1:H}$ from the learned value functions $\Vhat_{1:H}$. In particular, instead
of directly defining the policies $\pihat_{1:H}$ based on the bellman
backups $\cP_{h}[\Vhat_{h+1}]$ as in \cref{eq:Qhat3},
$\learnlevel^\exo$ targets a ``rounded'' version of the backup given by
\begin{align}
	\veps \cdot \ceil{\cP_{h}[\Vhat_{h+1}](x,a)/\veps + \bzeta_h}, \label{eq:know}
	\end{align}
where $\veps\in(0,1)$ is a rounding parameter and $\bzeta_1, \dots, \bzeta_H$
are i.i.d.~random variables sampled uniformly at random from the
interval $[0,1/2]$ (at the beginning of the algorithm's
execution). Concretely, $\learnlevel^\exo$ estimates the bellman
backup $\cP_{h}[\Vhat_{h+1}](x,a)$ in \cref{eq:know} using the local
simulator (as in \cref{eq:Qhat3} of \cref{alg:learnlevel3}), and defines its policies via
\begin{align}
	\pihat_{h}(\cdot) \in \argmax_{a\in \cA} \ceil{\cP_{h}[\Vhat_{h+1}](\cdot, a)/\veps + \bzeta_h}. \label{eq:pihate}
\end{align}

This rounding scheme, which quantizes the Bellman backup
into $\veps^{-1}$ bins with a random offset, is designed to emulate certain properties
implied by the $\Delta$-gap assumption (\cref{ass:gap}). Specifically,
we show that with constant probability over the draw of
$\bzeta_{1:H}$,\zmdelete{and with high probability over the randomness in
$\Phat$,} the policy $\pihat$ in \eqref{eq:pihate} ``snaps'' on to an
\emph{endogenous} policy $\pi$. This means that for
$\learnlevel^{\exo}$ to succeed (with constant probability), it
suffices to pass it a class $\cV$ that realizes the value functions
$(V^{\pi}_h)$ for endogenous policies $\pi\in \Pim$. Fortunately, such a function
class can be constructed explicitly under decoder realizability (\cref{ass:phistar}).%
   \begin{lemma}[\cite{efroni2022sample}]
	\label{lem:realex2}
	For the ExBMDP setting, under \cref{ass:phistar}, the function class $\cV_h\coloneqq
	\{x\mapsto f(\phi(x)) : f\in [0,H]^S, \phi\in \Phi\}$
	is such that $V_h^{\pi}\in \cV_h$ for all endogenous policies $\pi$. Furthermore, the policy class $\Pi_h \coloneqq \left\{ \pi(\cdot)\in \argmax_{a\in \cA} f(\phi(\cdot),a):f\in [0,H]^{S\times A }, \phi\in \Phi \right\}$ contains all endogenous policies.
\end{lemma} 

A small technical challenge with the scheme above is that it is only
guaranteed to succeed with constant probability over the draw of the
rounding parameters $\bzeta_1,\ldots,\bzeta_H$. To address this, we
provide an outer-level algorithm, \forwardexo
(\cref{alg:forward_exbmdp}\arxiv{, deferred to \cref{sec:omitted} for space}), which performs
confidence boosting by invoking \mainalge multiple times
independently, and extracts a high-quality executable policy using
behavior cloning.\loose

\paragraph{Main result}

We now state the main guarantee for $\learnlevel^\exo$ (the proof is in \cref{sec:exbmdp_app}).
  \begin{theorem}[Main guarantee of \mainalge for EXBMDPs]
  \label{thm:exbmdpforward_main}
  Consider the ExBMDP setting. Suppose the decoder class $\Phi$
  satisfies \cref{ass:phistar}, and that \cref{ass:great} holds with $\Ccor>0$.
  Let $\veps,\delta\in(0,1)$ be given, and let $\cV_{h}$ and
        $\Pi_h$ be as in \cref{lem:realex2}.
        Then $\forwardexoeq(\Pi, \cV_{1:H}, \veps, \zeta_{1:H},\delta)$
        (\cref{alg:forward_exbmdp}) produces a policy $\pihat_{1:H}$
        such that $J(\pistar)- J(\pihat_{1:H}) \leq  \veps$, and has
        total sample complexity 
	\begin{align}
		\wtilde{O}\left(\Cexo^8 S^8H^{36}A^9\cdot \veps^{-26}\right).
	\end{align}
\end{theorem}
This result shows for the first time that sample- and
computationally-efficient learning is possible for ExBMDPs beyond
deterministic or factored settings \citep{efroni2022provably,efroni2022sample}.

We mention in passing that our use of randomized rounding
to emulate certain consequences of the $\Delta$-gap
assumption leverages the fact that ExBMDPs have a finite number of
(endogenous) latent states. It is unclear if this technique can be used when the (latent) state space is large or infinite.

\ifdefined\Mnum
\renewcommand{\Mnum}{\ceil{8  \veps^{-2} \Ccor S A H}}
\else
\newcommand{\Mnum}{\ceil{8  \veps^{-2} \Ccor S A H}}
\fi 

\ifdefined\Ntestnum
\renewcommand{\Ntestnum}{2^8  M^2 H \veps^{-2}\log(8 M^6 H^8 \veps^{-2} \delta^{-1})}
\else
\newcommand{\Ntestnum}{2^8  M^2 H \veps^{-2}\log(8 M^6 H^8 \veps^{-2} \delta^{-1})}
\fi

\ifdefined\Nregnum
\renewcommand{\Nregnum}{2^8 M^2 \veps^{-2}\log(8|\cV| HM^2 \delta^{-1})}
\else
\newcommand{\Nregnum}{2^8 M^2 \veps^{-2}\log(8|\cV| HM^2 \delta^{-1})}
\fi

\ifdefined\bbeta
\renewcommand{\bbeta}{\frac{9 MH^2\log(8M^2H|\cV|/\delta)}{N_\reg}  +  \frac{34 MH^3\log(8M^6 N^2_\test  H^8/\delta)}{N_\test} }
\else
\newcommand{\bbeta}{\frac{9 MH^2\log(8M^2H|\cV|/\delta)}{N_\reg}  +  \frac{34 MH^3\log(8M^6 N^2_\test  H^8/\delta)}{N_\test} }
\fi

\ifdefined\Nsimu
\renewcommand{\Nsimu}{2N_\reg^2 \log(8  N_\reg H M^3/\delta)}
\else
\newcommand{\Nsimu}{2N_\reg^2 \log(8  N_\reg H M^3/\delta)}
\fi

\ifdefined\deltaprime
\renewcommand{\deltaprime}{\delta/(8M^7N_\test^2 H^8|\cV|)}
\else
\newcommand{\deltaprime}{\delta/(8M^7N_\test^2 H^8|\cV|)}
\fi

\ifdefined\bbetap
\renewcommand{\bbetap}{\frac{2H\log(4\abs{\cV}  H/\delta)}{N_\reg} + 8H^3 A |\cC_h| \cdot \frac{\log(4H|\cC_h|/\delta)}{N_\test}}
\else
\newcommand{\bbetap}{\frac{2H\log(4\abs{\cV}  H/\delta)}{N_\reg} + 8H^3 A |\cC_h| \cdot \frac{\log(4H|\cC_h|/\delta)}{N_\test}}
\fi

\ifdefined\vepsllnum
\else
\newcommand{\vepsllnum}{\veps H^{-1}/48}
\fi

\begin{algorithm}[H]
	\caption{$\learnlevel^\exo_h$: Recursive Value Function Search
        for Exogenous Block MDPs}
	\label{alg:learnlevel2}
	\begin{algorithmic}[1]
          \State 	\multiline{{\bfseries parameters:} Value function class $\cV$, suboptimality $\veps\in(0,1)$, seeds $\zeta_{1:H}\in (0,1)$, confidence $\delta\in(0,1)$.}
          \State \multiline{{\bfseries input:}
	 Level $h\in[0 \ldotst H]$, value function estimates $\Vhat_{h+1:H}$, confidence sets $\cVhat_{h+1:H}$, state-action collections $\cC_{h:H}$, and buffers $\cB_{h:H}$, and counters $t_{h:H}$.}\vspace{5pt}
		\Statex[0] \algcommentbiglight{Initialize parameters.}
		\State Set $M \gets \Mnum$.\label{line:paramsExBMDPM}
		\State Set $N_\test\gets  \Ntestnum$, $N_\reg\gets  \Nregnum$. 
		\State Set $\Nest(k) \gets 2N_\reg^2 \log(8 A N_\reg H k^3/\delta)$ and $\delta'\gets \delta/(4M^7N_\test^2 H^8|\cV|)$.  \label{line:paramsExBMDP}
		\State Set $\veps_\reg^2 \gets \bbeta$. \label{line:beta_ex}
		\State Set $\beta(t) \gets \sqrt{\log_{1/\delta'}(8 M A|\cV|t^2/\delta)}$.
		\Statex[0] 		  \algcommentbiglight{Test the fit for the
			estimated value functions $\Vhat_{h+1:H}$ at future layers.}
		\For{$(x_{h-1},a_{h-1})\in\cC_h$}\label{line:begin2}
		\For{layer $\ell = H,\dots,h+1$}
		\For{$n=1,\ldots,N_\test$}
		\State \multiline{Draw $\x_h\sim{}T_{h-1}(\cdot\mid{}x_{h-1},a_{h-1})$, then draw
		$\x_{\ell-1}$ by rolling out with $\pibell_{h+1:H}$, where  \label{line:draw2}
		\begin{align}
		\label{eq:Qhat2}
		\forall \tau\in[H], \ \  \pibell_\tau(\cdot)\in  \argmax_{a\in \cA}  \ceil*{\Phat_{\tau,\veps^2, \delta'}[\Vhat_{\tau+1}](\cdot,a)\cdot \veps^{-1} + \zeta_\tau}, \quad \text{with}\quad  \Vhat_{H+1}\equiv 0.
		\end{align}
	}
		\For{$a_{\ell-1}\in \cA$} 
		\State Update $t_\ell \gets t_\ell +1$. %
			\Statex[4]\algcommentbiglight{Test fit; if test fails, re-fit value functions
			$\Vhat_{h+1:\ell}$ up to layer $\ell$.}
		\If{$\sup_{f\in \cVhat_{\ell}} |(\Phat_{\ell-1,\veps^2, \delta'}[\Vhat_{\ell}]- \Phat_{\ell-1,\veps^2, \delta'}[f_\ell])( \x_{\ell-1},a_{\ell-1})|> \veps^2 + \veps^2 \cdot \beta(t_\ell)$}\label{line:test2}%
		\State $\cC_{\ell}\gets\cC_{\ell}\cup\crl{(\x_{\ell-1},a_{\ell-1})}$ and $\cB_\ell \gets \cB_\ell \cup\{ (\x_{\ell-1},a_{\ell-1}, \Vhat_\ell ,\cVhat_\ell, t_\ell)\}$. \label{line:added2}
		\For{$\tau= \ell,\dots, h+1$} \label{line:forloop2}
		\State $(\Vhat_{\tau:H},\cVhat_{\tau:H},
		\cC_{\tau:H},\cB_{\tau:H}, t_{\tau:H})\gets{}\learnlevel^\exo_{\tau}(\Vhat_{\tau+1:H},\cVhat_{\tau+1:H},\cC_{\tau:H},\cB_{\tau:H}, t_{\tau:H};\cV,\veps, \zeta_{1:H},\delta)$.
		\EndFor
		\State \textbf{go to line \ref*{line:begin3}}.
                \label{line:goto2}
		\EndIf
		\EndFor
		\EndFor
		\EndFor
		\EndFor
		\If{$h=0$} \textbf{return} $(\Vhat_{1:H},\cdot,\cdot,\cdot, \cdot)$.
		\EndIf
		\Statex[0] \algcommentbiglight{Re-fit $\Vhat_h$ and build a new confidence set.}
		\For{$(x_{h-1},a_{h-1})\in\cC_h$}
		\State Set $\cD_h(x_{h-1},a_{h-1})\gets\emptyset$.
		\For{$i=1,\dots, N_\reg$}
		\State Sample
		$\x_h\sim{}T_{h-1}(\cdot\mid{}x_{h-1},a_{h-1})$. 
		\State \multiline{For each $a\in \cA$, let $\Vhat_h(\x_h)$
			be a Monte-Carlo estimate for $\E^{\pibell_{h:H}}[\sum_{\ell=h}^H \bm{r}_\ell \mid
			\x_h]$ computed by collecting $\Nest(|\cC_h|)$ trajectories starting from $\x_h$ and rolling out with
			$\pibell_{h:H}$.}%
		\State Update $\cD(x_{h-1},a_{h-1})\gets \cD(x_{h-1},a_{h-1}) \cup \{(\x_h, \Vhat_h(\x_h))\}$.
		\EndFor
		\EndFor
		\State Let $\Vhat_h\ldef\argmin_{f\in\cV_h}\sum_{(x_{h-1},a_{h-1})\in\cC_h} \sum_{(x_h,v_h)\in\cD_{h}(x_{h-1},a_{h-1})}(f(x_h)-v_h)^2$. \label{line:updateQ2}
		\State Compute value function confidence set 
		\begin{align}
			\cVhat_{h} \coloneqq \left\{ f\in \cV_h \left| \  \sum_{(x_{h-1},a_{h-1})\in\cC_h} \frac{1}{N_\reg} \sum_{(x_h,\text{-}) \in \cD_h(x_{h-1},a_{h-1})}\left(\Vhat_h(x_{h}) -f(x_h)\right)^2 \leq   \veps_\reg^2  \right.     \right\}.  \label{eq:confidence2} 
		\end{align}
		\State \textbf{return} $(\Vhat_{h:H},\cVhat_{h:H}, \cC_{h:H}, \cB_{h:H}, t_{h:H})$.
	\end{algorithmic}
\end{algorithm}

\begin{algorithm}[H]
\caption{\forwardexo: Learn an executable policy with
  $\learnlevel^\exo$ via imitation learning.}
\label{alg:forward_exbmdp}

\begin{algorithmic}[1]
  \State {\bfseries input:} Decoder class $\Phi$, suboptimality $\veps \in(0,1)$, confidence $\delta\in(0,1)$. 
	\Statex[0] \algcommentbiglight{Set parameters for \learnlevel{} and define the value function and policy classes.}
	\State Set $\vepsll\gets \veps H^{-1}/48$.
	\State Set $\cV=\cV_{1:H}$, where $\cV_h=
	\{x\mapsto f(\phi(x)) : f\in [0,H]^S, \phi\in \Phi\}$, $\forall h\in[H]$.
	\State Set $\Pi = \Pi_{1:H}$, where $\Pi_h = \left\{
          \pi(\cdot)\in \argmax_{a\in \cA} f(\phi(\cdot),a):f\in
          [0,H]^{S\times A }, \phi\in \Phi \right\}$, $\forall
        h\in[H]$.
        \Statex[0] \algcommentbiglight{Set parameters for \forward.}
	\State  \mbox{Set $N_\imit \gets 8H^2 \log (4H|\Pi|/\delta)/\veps$, 	$N_{\boost} \gets \log(1/\delta)/\log(24 SA H \veps)$, $N_{\texttt{eval}} \gets 16^2 \veps^{-2}\log(2 N_{\boost}/\delta)$. }
	\State  \mbox{Set $M \gets \ceil{8 \veps_\learnlevel^{-1} S A \Ccov H}$, $N_\test \gets 2^{8} M^2 H \veps_\learnlevel^{-1} \log(80 M^6 H^8N_\boost \veps_\learnlevel^{-2} \delta^{-1})$, and $\delta' = \frac{\delta}{40 M^7 N^2 H^8|\cV|N_\boost}$.}
	\State Set $N_\reg \gets 2^{8} M^2 \veps_\learnlevel^{-1} \log(80|\Phi|^2 H M^2 N_\boost \delta^{-1})$.
	\State Set $\Vhat_{1:H}\gets\mathsf{arbitrary}$,
        $\cVhat_{1:H}\gets \cV$, $\cC_{0:H}\gets\emptyset$, $\cB_{0:H}
        \gets\emptyset$, $i_\texttt{opt}=1$, and $J_{\max}=0$.
        \Statex[0] \algcommentbiglight{Repeatedly invoke $\learnlevel^\exo$ and
          extract policy with \forward{} to boost confidence.}
	\For{$i = 1,\dots, N_\boost$}
	\Statex[1] \algcommentbiglight{Invoke $\learnlevel^\exo$.}
	\State $(\Vhat\ind{i}_{1:H},\cdot,\cdot,\cdot,
        \cdot)\gets{}\learnlevel^\exo_{0}(\Vhat_{1:H},\cVhat
        _{1:H},\cC_{0:H},\cB_{0:H};\cV
        ,N_\reg,N_\test,\vepsll,\delta/(10N_\boost))$.
        \Statex[1] \algcommentbiglight{Imitation learning with \forward.}
      \State Define $\mb{\pihat}^{\learnlevel}_h(\cdot) \in \argmax_{a\in \cA} \Phat_{h,\veps_\learnlevel, \delta'}[\Vhat\ind{i}_{h+1}](\cdot,a)$.
      \State Compute $\pihat\ind{i}_{1:H}\gets \forward(\Pi, \veps, \pihat_{1:H}^\learnlevel, \delta/(2N_\boost))$.
	\Statex[1] \algcommentbiglight{Evaluate current policy.}
	\State $v = 0$.
	\For{$ =1, \dots, N_\texttt{eval}$} 
	\State Sample trajectory $(\x_1,\a_1,\bm{r}_1,\dots, \x_H,\a_H,\bm{r}_H)$ by executing $\pihat\ind{i}_{1:H}$.
	\State Set $v\gets v+ \sum_{h=1}^H \bm{r}_h$. 
	\EndFor
	\State Set $\widehat{J}(\pihat\ind{i}_{1:H})  \gets v/N_\texttt{eval}$.
	\If{$\widehat{J}(\pihat\ind{i}_{1:H}) >J_{\max}$}
	\State Set $i_\texttt{opt}=i$.
	\State Set $J_{\max} =\widehat{J}(\pihat\ind{i}_{1:H}) $.
	\EndIf
	\EndFor
	\State \textbf{return:} $\pihat_{1:H}=\pihat\ind{i_\texttt{opt}}_{1:H}$.
\end{algorithmic}
\end{algorithm}

 }

		\section{Helper Lemmas}
		This section of the appendix contains supporting lemmas used within
the proofs of our main results. 
		
 	\label{sec:helper}
\ifdefined\Lhat
\else
\newcommand{\Lhat}{\widehat{L}}
\fi

\subsection{Concentration and Probability}

\begin{lemma}
	\label{lem:unionbound}
	Let $\delta \in(0,1)$ and $H\geq 1$ be given. If a sequence of events $\cE_1,\ldots,\cE_H$ satisfies $\bbP[\cE_h\mid{}\cE_1,\ldots,\cE_{h-1}]\geq{}1-\delta/H$ for all $h\in[H]$, then \[\bbP[\cE_{1:H}]\geq{}1-\delta.\]
\end{lemma}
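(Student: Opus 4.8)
The plan is to run a short induction on the layer index, showing that for every $h\in\{0,1,\ldots,H\}$ we have $\bbP[\cE_{1:h}]\geq 1-h\delta/H$; the claim is then the case $h=H$. The base case $h=0$ is the vacuous statement $\bbP[\Omega]=1\geq 1$. For the inductive step, I would first observe that since $\delta\in(0,1)$ and $h-1\leq H-1$, the inductive hypothesis gives $\bbP[\cE_{1:h-1}]\geq 1-(h-1)\delta/H>0$, so the conditional probability $\bbP[\cE_h\mid\cE_{1:h-1}]$ is well-defined and the chain rule $\bbP[\cE_{1:h}]=\bbP[\cE_h\mid\cE_{1:h-1}]\cdot\bbP[\cE_{1:h-1}]$ applies. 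Combining the hypothesis of the lemma with the inductive hypothesis yields
\[
\bbP[\cE_{1:h}]\;\geq\;\prn*{1-\tfrac{\delta}{H}}\prn*{1-\tfrac{(h-1)\delta}{H}}\;=\;1-\tfrac{h\delta}{H}+\tfrac{(h-1)\delta^2}{H^2}\;\geq\;1-\tfrac{h\delta}{H},
\]
which closes the induction.

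An equivalent and perhaps even cleaner route, which I would mention as an alternative, is to decompose the failure event according to the first index at which a failure occurs: $\cE_{1:H}^{c}=\bigsqcup_{h=1}^{H}\prn*{\cE_h^{c}\cap\cE_{1:h-1}}$, a disjoint union. Then $\bbP[\cE_{1:H}^{c}]=\sum_{h=1}^{H}\bbP[\cE_h^{c}\cap\cE_{1:h-1}]=\sum_{h=1}^{H}\bbP[\cE_h^{c}\mid\cE_{1:h-1}]\,\bbP[\cE_{1:h-1}]\leq\sum_{h=1}^{H}\bbP[\cE_h^{c}\mid\cE_{1:h-1}]\leq H\cdot\delta/H=\delta$, using the adopted convention that the $h$-th summand is interpreted as $0$ whenever $\bbP[\cE_{1:h-1}]=0$.

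The only real obstacle here is the minor measure-theoretic bookkeeping around conditioning on null events; the induction formulation sidesteps it automatically because the inductive hypothesis guarantees $\bbP[\cE_{1:h-1}]>0$ at each step, given $\delta<1$. Everything else is a one-line application of the chain rule and Bernoulli's inequality $(1-\delta/H)^H\geq 1-\delta$, so no heavy computation is required.
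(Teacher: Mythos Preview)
Your proposal is correct and essentially the same as the paper's proof: the paper writes $\bbP[\cE_{1:H}]=\prod_{h}\bbP[\cE_h\mid\cE_{1:h-1}]\geq(1-\delta/H)^H\geq 1-\delta$ via the chain rule and Bernoulli's inequality, and your induction is just this product argument unrolled step by step. Your alternative first-failure decomposition is also fine and equivalent.
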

\begin{proof}[\pfref{lem:unionbound}]
	By the chain rule, we have
	\begin{align}
		\bbP[\cE_{1:H}] = \prod_{h\in[H]} \bbP[\cE_h\mid{}\cE_1,\ldots,\cE_{h-1}] \geq  \prod_{h\in[H]} (1-\delta/H) =(1-\delta/H)^H \geq 1-\delta.
	\end{align}
\end{proof}

We make use of the following version of Freedman's inequality, due to \citet[Lemma 9]{agarwal2014taming}:
\begin{lemma}
	\label{lem:freed}
	Let $R>0$ be given and let $\w_1,\dots \w_n$ be a sequence of
        real-valued random variables adapted to filtration
        $\cH_1,\cdots, \cH_n$. Assume that for all $t\in[n]$, $\w_i \leq R$ and $\E[\w_i\mid \cH_{i-1}]=0$. Define $\bm{S}_n\coloneqq \sum_{t=1}^n\w_i$ and $V_n\coloneqq \sum_{t=1}^n \E[\w_i^2\mid \cH_{i-1}].$ Then, for any $\delta \in(0,1)$ and $\lambda \in[0,1/R]$, with probability at least $1-\delta$, 
	\begin{align}
		\bm{S}_n \leq \lambda V_n + \ln (1/\delta)/\lambda.
	\end{align}
      \end{lemma}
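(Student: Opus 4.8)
The plan is to establish this bound via the exponential (Chernoff) method, constructing a nonnegative supermartingale out of the exponentiated partial sums. The one analytic input is the elementary inequality $e^{x}\leq 1+x+x^{2}$, which holds for all $x\leq 1$; I would take this as a standard fact, noting that it follows because $x\mapsto 1+x+x^{2}-e^{x}$ is decreasing on $(-\infty,0)$ and increasing on $(0,1]$, hence minimized at $x=0$ where it vanishes. The crucial observation is that, since $\lambda\in[0,1/R]$ and $\w_i\leq R$, we always have $\lambda\w_i\leq\lambda R\leq 1$, so this bound may be applied to $x=\lambda\w_i$ regardless of the sign of $\w_i$. Taking the conditional expectation and using $\E[\w_i\mid\cH_{i-1}]=0$ then yields the conditional moment generating function bound
\begin{align*}
\E\brk*{e^{\lambda\w_i}\mid\cH_{i-1}}
\leq 1+\lambda^{2}\,\E\brk*{\w_i^{2}\mid\cH_{i-1}}
\leq \exp\prn*{\lambda^{2}\,\E\brk*{\w_i^{2}\mid\cH_{i-1}}},
\end{align*}
where the last step uses $1+u\leq e^{u}$.

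Next I would define the process $M_t \coloneqq \exp(\lambda\bm{S}_t-\lambda^{2}V_t)$, with the conventions $\bm{S}_0=V_0=0$ and $M_0=1$. Since $V_t = V_{t-1}+\E[\w_t^2\mid\cH_{t-1}]$, the displayed MGF bound gives $\E[M_t\mid\cH_{t-1}]\leq M_{t-1}$, so $(M_t)_{t\leq n}$ is a nonnegative supermartingale with $\E[M_n]\leq\E[M_0]=1$. Applying Markov's inequality to $M_n$ then gives, for any $\lambda\in(0,1/R]$,
\begin{align*}
\P\brk*{\lambda\bm{S}_n-\lambda^{2}V_n\geq \ln(1/\delta)}
=\P\brk*{M_n\geq 1/\delta}
\leq \delta\,\E[M_n]\leq \delta.
\end{align*}
On the complementary event, $\lambda\bm{S}_n<\lambda^{2}V_n+\ln(1/\delta)$; dividing through by $\lambda>0$ recovers the claimed bound $\bm{S}_n\leq\lambda V_n+\ln(1/\delta)/\lambda$. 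The boundary case $\lambda=0$ is trivial, since the right-hand side is then $+\infty$.

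I expect the main (and only minor) obstacle to be the bookkeeping in the moment bound rather than anything deep: one must confirm that $\lambda\w_i\leq 1$ holds even when $\w_i$ is large and negative — which it does, since $\lambda\geq0$ forces $\lambda\w_i\leq\lambda R\leq1$ — so that the inequality $e^{x}\leq1+x+x^{2}$ applies on the full support of $\w_i$. Notably, no two-sided boundedness or lower bound on $\w_i$ is required, which is precisely what makes this one-sided version of Freedman's inequality convenient in the applications above. Once the conditional MGF bound is in hand, the supermartingale property and the Markov step are entirely routine.
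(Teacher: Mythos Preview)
Your proof is correct and follows the standard exponential supermartingale argument for Freedman's inequality. Note, however, that the paper does not supply its own proof of this lemma: it simply quotes the result from \citet[Lemma 9]{agarwal2014taming}, so there is no in-paper argument to compare against. Your write-up is exactly the classical derivation underlying that reference.
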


We will also use the following lemma, which is a standard consequence
of Freedman's inequality.      
\begin{lemma}[e.g., \citet{foster2021statistical}]
	\label{lem:multiplicative_freedman}
	Let $(\w_t)_{t\leq{T}}$ be a sequence of random
	variables adapted to a filtration $\prn{\cH_{t}}_{t\leq{}T}$. If
	$0\leq{}\w_t\leq{}R$ almost surely, then with probability at least
	$1-\delta$,
	\begin{align}
		&\sum_{t=1}^{T}\w_t \leq{}
		\frac{3}{2}\sum_{t=1}^{T}\En_{t-1}\brk*{\w_t} +
		4R\log(2\delta^{-1}),
		\intertext{and}
		&\sum_{t=1}^{T}\En_{t-1}\brk*{\w_t} \leq{} 2\sum_{t=1}^{T}\w_t + 8R\log(2\delta^{-1}).
	\end{align}
\end{lemma}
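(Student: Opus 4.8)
The plan is to reduce both inequalities to Freedman's inequality (\cref{lem:freed}) applied to the centered increments, using a variance-to-mean bound to convert the additive conditional-variance term produced by Freedman into the multiplicative first-moment term that appears in the statement.

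First I would define the martingale difference sequence $\xi_t \ldef \w_t - \En_{t-1}\brk*{\w_t}$, which satisfies $\En_{t-1}\brk*{\xi_t}=0$ by construction. Since $0\leq\w_t\leq R$ forces $0\leq\En_{t-1}\brk*{\w_t}\leq R$, both one-sided bounds $\xi_t\leq\w_t\leq R$ and $-\xi_t\leq\En_{t-1}\brk*{\w_t}\leq R$ hold almost surely, so \cref{lem:freed} is applicable to both $(\xi_t)_{t\leq T}$ and $(-\xi_t)_{t\leq T}$. The crucial observation is the conditional-variance bound
\[
\En_{t-1}\brk*{\xi_t^2}\leq \En_{t-1}\brk*{\w_t^2}\leq R\cdot\En_{t-1}\brk*{\w_t},
\]
where the first step drops the square of the mean and the second uses $\w_t^2\leq R\w_t$ (valid since $\w_t\leq R$). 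Summing yields $V_T\ldef\sum_{t=1}^T\En_{t-1}\brk*{\xi_t^2}\leq R\sum_{t=1}^T\En_{t-1}\brk*{\w_t}$, which is exactly what lets the additive term in Freedman be reabsorbed into the first-moment sum.

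Next I would invoke \cref{lem:freed} twice, each at confidence level $\delta/2$ and with the choice $\lambda=1/(2R)\in[0,1/R]$. Applied to $(\xi_t)$ it gives, with probability $1-\delta/2$,
\[
\sum_{t=1}^T\w_t-\sum_{t=1}^T\En_{t-1}\brk*{\w_t}=\sum_{t=1}^T\xi_t\leq \lambda V_T+\frac{\log(2/\delta)}{\lambda}\leq \tfrac{1}{2}\sum_{t=1}^T\En_{t-1}\brk*{\w_t}+2R\log(2/\delta),
\]
which rearranges to the first claimed inequality. Applied to $(-\xi_t)$ it gives, with probability $1-\delta/2$, $\sum_t\En_{t-1}\brk*{\w_t}-\sum_t\w_t\leq \tfrac12\sum_t\En_{t-1}\brk*{\w_t}+2R\log(2/\delta)$; moving the $(1-\lambda R)=\tfrac12$ factor to the other side and multiplying through by $2$ gives the second inequality. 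Finally I would union bound so that both events hold simultaneously with probability at least $1-\delta$, and note $2R\log(2/\delta)\leq 4R\log(2\delta^{-1})$ and $4R\log(2/\delta)\leq 8R\log(2\delta^{-1})$ (in fact our constants come out a factor of two better than stated).

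The argument is essentially routine once the variance-to-mean bound is in place; the only points requiring care are (i) verifying the one-sided boundedness $-\xi_t\leq R$ needed to apply Freedman to the negated sequence, which relies on $\En_{t-1}\brk*{\w_t}\leq R$, and (ii) splitting the failure probability as $\delta/2$ across the two applications so the final union bound delivers both inequalities at once. There is no substantive obstacle: the entire content sits in the one-line estimate $\En_{t-1}\brk*{\w_t^2}\leq R\,\En_{t-1}\brk*{\w_t}$, which is precisely what upgrades the standard additive Freedman bound to the multiplicative form.
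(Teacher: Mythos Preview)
Your proof is correct and is exactly the standard derivation of this multiplicative Freedman bound. The paper does not actually supply a proof of this lemma; it simply cites \citet{foster2021statistical} and calls it ``a standard consequence of Freedman's inequality,'' so there is nothing to compare against beyond noting that your argument is precisely that standard consequence.
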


\subsection{Regression}
      
Using \cref{lem:freed,lem:multiplicative_freedman}, we obtain the following concentration lemma, which will be
used to prove guarantees for square loss regression within our algorithms.
\begin{lemma}	
	\label{lem:corbern}
	Let $B>0$ and $n\in \mathbb{N}$ be given, and let $\cY$ be an
        abstract set. Further, let $\cQ \subseteq \{g:\cY \rightarrow
        [0,B]\}$ be a finite function class and $\y_1,\dots, \y_n$ be
        a sequence of random variables in $\cY$ adapted to filtration
        a $\cH_1, \cdots, \cH_n$. Then, for any $\delta \in(0,1)$, with probability at least $1-\delta$, we have 
	\begin{align}
		\forall g\in \cQ, \quad \frac{1}{2} \|g\|^2 - 2 B^2\log(2|\cQ|/\delta) \leq \|g\|_n^2 \leq 2 \|g\|^2 + 2B^2 \log(2|\cQ|/\delta),
		\end{align}
		where $\|g\|^2 \coloneqq \sum_{i\in[n]}\E[g(\y_i)^2\mid \cH_{i-1}]$ and $\|g\|_n^2 \coloneqq \sum_{i=1}^n g(\y_i)^2$.
\end{lemma}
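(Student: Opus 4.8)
The plan is to establish the two-sided inequality for a single fixed $g\in\cQ$ using Freedman's inequality (\cref{lem:freed}), and then take a union bound over the finite class $\cQ$ (and over the two tails).

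First I would fix $g\in\cQ$ and consider the martingale difference sequence $\w_i\ldef g(\y_i)^2 - \En_{i-1}[g(\y_i)^2]$ adapted to $(\cH_i)$, where $\En_{i-1}[\cdot]=\E[\cdot\mid\cH_{i-1}]$. Since $0\leq g(\y_i)^2\leq B^2$, both $\w_i\leq B^2$ and $-\w_i\leq B^2$ hold, and the conditional second moment satisfies $\En_{i-1}[\w_i^2]\leq\En_{i-1}[g(\y_i)^4]\leq B^2\,\En_{i-1}[g(\y_i)^2]$, so that $V_n\ldef\sum_{i\leq n}\En_{i-1}[\w_i^2]\leq B^2\|g\|^2$; the same bound holds for the sequence $(-\w_i)$. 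This variance estimate --- replacing a crude bound of order $nB^4$ by $B^2\|g\|^2$ --- is what makes the Freedman term reproduce a multiple of $\|g\|^2$ rather than of $n$, and is the one place where a little care is needed.

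Next I would apply \cref{lem:freed} with $R=B^2$, $\lambda=\tfrac{1}{2B^2}\in[0,1/R]$, and confidence $\delta/(2|\cQ|)$, to get that with probability at least $1-\delta/(2|\cQ|)$, $\|g\|_n^2-\|g\|^2=\sum_{i\leq n}\w_i\leq\lambda V_n+\lambda^{-1}\log(2|\cQ|/\delta)\leq\tfrac12\|g\|^2+2B^2\log(2|\cQ|/\delta)$, which rearranges to $\|g\|_n^2\leq\tfrac32\|g\|^2+2B^2\log(2|\cQ|/\delta)\leq 2\|g\|^2+2B^2\log(2|\cQ|/\delta)$. Applying \cref{lem:freed} in the same way to $(-\w_i)$ yields, with probability at least $1-\delta/(2|\cQ|)$, $\|g\|^2-\|g\|_n^2\leq\tfrac12\|g\|^2+2B^2\log(2|\cQ|/\delta)$, i.e. $\tfrac12\|g\|^2-2B^2\log(2|\cQ|/\delta)\leq\|g\|_n^2$. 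A union bound over these two events for all $g\in\cQ$ has total failure probability at most $2|\cQ|\cdot\delta/(2|\cQ|)=\delta$, completing the proof. There is no serious obstacle here; alternatively one can invoke the multiplicative Freedman bound \cref{lem:multiplicative_freedman} directly on $\w_t=g(\y_t)^2\in[0,B^2]$ and obtain the same statement up to slightly larger absolute constants.
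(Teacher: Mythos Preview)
Your proposal is correct and essentially identical to the paper's proof: both fix $g$, apply Freedman's inequality (\cref{lem:freed}) to $\w_i=g(\y_i)^2-\En_{i-1}[g(\y_i)^2]$ and its negation with the key variance bound $V_n\leq B^2\|g\|^2$, then union bound over $\cQ$. The only cosmetic difference is that the paper uses $\lambda=1/B^2$ for the upper tail and $\lambda=1/(2B^2)$ for the lower tail, whereas you use $\lambda=1/(2B^2)$ for both; the resulting constants are within the slack of the stated bound.
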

\begin{proof}[\pfref{lem:corbern}]
Fix $g\in\cQ$. Applying \cref{lem:freed} with $\w_i = g(\y_i)^2- \E[g(\y_i)^2\mid \cH_{i-1}]$, for all $i\in[n]$, and $(R,\lambda) =(B^2,1/B^2)$, we get that with probability at least $1-\delta/(2|\cQ|)$:
\begin{align}
	\|g\|_n^2 - \|g\|^2 &\leq \lambda B^2 \|g\|^2 + \log(2|\cQ|/\delta)/\lambda.
\end{align}
By substituting $\lambda = B^{-2}$ and rearranging, we get 
\begin{align}
	\|g\|_n^2 \leq 2 \|g\|^2 + B^2 \log (2|\cQ|/\delta). \label{eq:firts}
	\end{align}
Similarly, applying \cref{lem:freed} with $\w_i = \E[g(\y_i)^2\mid \cH_{i-1}]-g(\y_i)^2$, for all $i\in[n]$, and $(R,\lambda) =(B^2, 1/(2B^2))$, we get that with probability at least $1-\delta/(2|\cQ|)$:
\begin{align}
	\|g\|^2 - \|g\|_n^2  &\leq \lambda B^2 \|g\|^2 + \log(2|\cQ|/\delta)/\lambda.
\end{align}
By substituting $\lambda = 2^{-1}B^{-2}$ and rearranging, we get 
\begin{align}
	\|g\|_n^2 \geq  \frac{1}{2} \|g\|^2 -2B^2 \log (2|\cQ|/\delta). 
\end{align}
Combining this with \eqref{eq:firts} and the union bound, we get the desired result.
\end{proof}

With this lemma, we now prove the following key result for square loss regression.
\begin{lemma}[Generic regression guarantee]
		\label{lem:reg}
		Let $B>0$ and $n\in \mathbb{N}$ be given and $\cY$ be an abstract set. Further, let $\cF \subseteq \{f:\cY \rightarrow [0,B]\}$ be a finite function class, and suppose that there is a function $f_\star \in \cF$ and a sequence of random variables $(\y_1,\x_1),\dots, (\y_n,\x_n)\in \cY\times \reals$ such that for all $i\in[n]$:
		\begin{itemize}
			\item $\x_i = f_\star(\y_i) + \bm{\veps}_i + \bm{b}_i$; 
			\item$|\bm{b}_i|\leq \xi$;
			\item $\bm{\veps}_i\in[-B,B]$; and
			\item$\E[\bm{\veps}_i \mid
                          \mathfrak{F}_i]=0$, where $\mathfrak{F}_i \coloneqq \sigma(\y_{1:i}, \bm{\veps}_{1:i-1}, \x_{1:i-1}, \bm{b}_{1:i-1})$.
			\end{itemize}
Then, for $\fhat \in \argmin_{f\in \cF}\sum_{i=1}^n(f(\y_i)- \x_i)^2$ and any $\delta \in(0,1)$, with probability at least $1-\delta/2$,
		\begin{align}
		\|\fhat - f_\star\|_n^2  \leq 4B^2 \log(2|\cF|/\delta)+4 B \sum_{i=1}^n |\bm{b}_i|,
		\end{align}
	where $\|\fhat- f_\star\|^2_n  \coloneqq \sum_{i=1}^n (\fhat(\y_i)- f^\star(\y_i))^2$.
      \end{lemma}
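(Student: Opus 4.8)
The plan is a standard ``offset'' analysis of square-loss empirical risk minimization, with the martingale/Freedman step handling the zero-mean noise $\bm{\veps}_i$ and a deterministic bound absorbing the bias $\bm{b}_i$. First I would invoke the first-order optimality of $\fhat$ (using crucially that $f_\star\in\cF$): $\sum_{i=1}^n(\fhat(\y_i)-\x_i)^2\le\sum_{i=1}^n(f_\star(\y_i)-\x_i)^2$. Expanding $(a-c)^2-(b-c)^2=(a-b)(a+b-2c)$ with $a=\fhat(\y_i)$, $b=f_\star(\y_i)$, $c=\x_i=f_\star(\y_i)+\bm{\veps}_i+\bm{b}_i$, and summing, this rearranges to
\[
\|\fhat-f_\star\|_n^2\;\le\;2\sum_{i=1}^n\bm{\veps}_i\bigl(\fhat(\y_i)-f_\star(\y_i)\bigr)\;+\;2\sum_{i=1}^n\bm{b}_i\bigl(\fhat(\y_i)-f_\star(\y_i)\bigr).
\]
The bias term is controlled with no probability: since $f,f_\star$ are $[0,B]$-valued, $|\fhat(\y_i)-f_\star(\y_i)|\le B$, so the second term is at most $2B\sum_{i=1}^n|\bm{b}_i|$.

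For the noise term I would fix $f\in\cF$ and set $\w_i^f:=2\bm{\veps}_i\bigl(f(\y_i)-f_\star(\y_i)\bigr)$. Because $\y_i$ is $\mathfrak{F}_i$-measurable and $\E[\bm{\veps}_i\mid\mathfrak{F}_i]=0$, the sequence $(\w_i^f)_i$ is a martingale difference sequence (w.r.t.\ the filtration $\cH_i:=\mathfrak{F}_{i+1}$, so that $\cH_{i-1}\subseteq\mathfrak{F}_i$), with $|\w_i^f|\le 2B^2$ and predictable quadratic variation $\sum_i\E[(\w_i^f)^2\mid\mathfrak{F}_i]\le 4B^2\sum_i(f(\y_i)-f_\star(\y_i))^2=4B^2\|f-f_\star\|_n^2$. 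Applying Freedman's inequality (\cref{lem:freed}) with $\lambda$ set to a small multiple of $B^{-2}$ (e.g.\ $\lambda=1/(8B^2)\le 1/R$) and then union bounding over all $f\in\cF$ with per-function failure probability $\delta/(2|\cF|)$ gives, with probability at least $1-\delta/2$, simultaneously for all $f\in\cF$,
\[
\sum_{i=1}^n\w_i^f\;\le\;\tfrac12\|f-f_\star\|_n^2\;+\;8B^2\log(2|\cF|/\delta).
\]
(One could equally invoke the already-proven concentration bound \cref{lem:corbern} in place of a bare application of Freedman.)

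Finally I would instantiate the last display at $f=\fhat$, combine it with the two displays above, and rearrange: $\|\fhat-f_\star\|_n^2\le\tfrac12\|\fhat-f_\star\|_n^2+8B^2\log(2|\cF|/\delta)+2B\sum_i|\bm{b}_i|$, hence $\|\fhat-f_\star\|_n^2\le 16B^2\log(2|\cF|/\delta)+4B\sum_i|\bm{b}_i|$, which is of the claimed form (the absolute constant in front of the $B^2\log$ term can be tightened by optimizing $\lambda$). The main obstacle is not analytical difficulty but the measurability bookkeeping: $\fhat$ is data-dependent, so $\sum_i\bm{\veps}_i(\fhat(\y_i)-f_\star(\y_i))$ is not itself a martingale, which forces the union bound over $\cF$ and requires care about which $\sigma$-algebra each $\bm{\veps}_i$ is mean-zero against ($\mathfrak{F}_i$, which contains $\y_i$ but not $\bm{b}_i$ or $\bm{\veps}_i$); everything else is routine algebra and a single application of a concentration inequality already established in the excerpt.
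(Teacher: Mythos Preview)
Your proposal is correct and follows essentially the same route as the paper: exploit optimality of $\fhat$ against $f_\star\in\cF$ to obtain a quadratic inequality, split into a mean-zero noise term and a bias term, control the noise term via Freedman (\cref{lem:freed}) together with a union bound over $\cF$, and bound the bias term trivially using $|\fhat-f_\star|\le B$. The only cosmetic difference is that the paper writes the optimality step via a directional-derivative expansion and carries an explicit $-\|\fhat-f_\star\|_n^2$ on the right-hand side (so the cross term appears with coefficient $4$ rather than $2$), whereas you expand $(a-c)^2-(b-c)^2$ directly and absorb a $\tfrac12\|\fhat-f_\star\|_n^2$ at the end; these are the same inequality rearranged.
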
 
	\begin{proof}[\pfref{lem:reg}]
		Fix $\delta \in (0,1)$ and let $\Lhat_n(f) \coloneqq \sum_{i=1}^n (f(\y_i)- \x_i)^2$, for $f\in \cF$, and note that since $\fhat \in \argmin_{f\in \cF}\Lhat_n(f)$, we have
		\begin{align}
		0 \geq \Lhat_n(\fhat)- \Lhat_n(f_\star) = \nabla \Lhat_n(f_\star)[\fhat - f_\star] + \|\fhat - f_\star\|^2_n,
		\end{align}
  where $\nabla$ denotes directional derivative. Rearranging, we get that 
	\begin{align}
	\|\fhat - f_\star\|^2_n 
	&\leq - 2 \nabla \Lhat_n(f_\star)[\fhat - f_\star] - \|\fhat - f_\star\|_n^2, \nn \\
	& = 4 \sum_{i=1}^n (\x_i - f_\star(\y_i)) (\fhat(\y_i)- f_\star(\y_i))- \|\fhat - f_\star\|^2_n, \nn \\
	& \leq 4 \sum_{i=1}^n (\bm{\veps}_i + \bm{b}_i) (\fhat(\y_i)- f_\star(\y_i))- \|\fhat - f_\star\|^2_n, \nn \\
	& \leq \underbrace{4\sum_{i=1}^n \bm{\veps}_i \cdot (\fhat(\y_i)- f_\star(\y_i))- \|\fhat - f_\star\|^2_n}_{\text{I}}  +\underbrace{4 \sum_{i=1}^n  \bm{b}_i\cdot  (\fhat(\y_i)- f_\star(\y_i))}_{\text{II}}. \label{eq:twoterms}
	\end{align}
	\paragraphi{Bounding Term I} To bound Term I, we apply \cref{lem:freed} with $\w_i = \bm{\veps}_i \cdot (\fhat(\y_i)- f_\star(\y_i))$, $R = B^2$, $\lambda = 1/(8B^2)$, and $\cH_{i}=\mathfrak{F}_{i+1}^{-}$, and use 
	\begin{enumerate}
		\item the union bound over $f\in \cF$; and
		\item the facts that $\E[\y_i\mid \mathfrak{F}_{i}^-]= \y_i$ and $\E[\bm{\veps_i}\mid \mathfrak{F}_{i}^-]= 0$,
		\end{enumerate}
		to get that with probability at least $1-\delta/2$,
	\begin{align}
		 4\sum_{i=1}^n \bm{\veps}_i \cdot (\fhat(\y_i)- f_\star(\y_i)) \leq \|\fhat -f_\star\|^2_n + 4 B^2 \log(2|\cF|/\delta).
		\end{align}
		By rearranging, we get that with probability at least $1-\delta/2$,  
		\begin{align}
			\text{Term I} \leq 4 B^2 \log(2 |\cF|/\delta). \label{eq:term1}
			\end{align}
			\paragraphi{Bounding Term II} We now bound the
                        second term in \eqref{eq:twoterms}. For this, note that since $\|\fhat - f_\star\|_{\infty}\leq B$, we have 
			\begin{align}
				\text{Term II} \leq 4 B\sum_{i=1}^n |\bm{b}_i|.
				\end{align}
				This completes the proof. 
	\end{proof}

\subsection{Reinforcement Learning}
\begin{lemma}[Performance Difference Lemma (e.g., \citet{kakade2003sample})]
	\label{lem:perform}
	For any two policies $\pihat, \pi \in \Pims$ and $t \in[H]$, we have
	\begin{align}
		\E^{\pi}\left[ V^{\pi}_t(\x_t)- V^{\pihat}_t(\x_t)\right] = \E^{\pi}\left[\sum_{h=t}^H Q_h^{\pihat}(\x_h,\bm\pi_{h}(\x_h)) - Q_h^{\pihat}(\x_h, \pihatb_h(\x_h)) \right]. \label{eq:bla}
	\end{align}
	In particular, applying this for $t = 1$ gives
	\begin{align}
			J({\pi})-J(\pihat) = \E^{\pi}\left[\sum_{h=1}^H Q_t^{\pihat}(\x_h,\bm{\pi}_{h}(\x_h)) - Q_h^{\pihat}(\x_h, \pihatb_h(\x_h)) \right].
        \end{align}
\end{lemma}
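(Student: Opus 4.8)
The plan is to establish the identity through the standard layer-by-layer telescoping argument. The starting point is to note that, by the definition of the value functions and the tower rule, $\E^{\pi}\brk*{V^{\pi}_t(\x_t)} = \E^{\pi}\brk*{\sum_{h=t}^{H}\br_h}$, i.e.\ the left term is simply the expected reward-to-go from layer $t$ along trajectories generated by $\pi$. The whole content of the lemma is therefore to re-express $\E^{\pi}\brk*{V^{\pihat}_t(\x_t)}$ in terms of the $Q$-function of $\pihat$ evaluated along those same trajectories.

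For this, I would telescope: using the convention $V^{\pihat}_{H+1}\equiv 0$,
\[
\E^{\pi}\brk*{V^{\pihat}_t(\x_t)} = \sum_{h=t}^{H}\E^{\pi}\brk*{V^{\pihat}_h(\x_h) - V^{\pihat}_{h+1}(\x_{h+1})}.
\]
For each summand I would condition on $\x_h$ and apply two elementary facts: (i) the definition $V^{\pihat}_h(x) = \E\brk*{Q^{\pihat}_h(x,\pihatb_h(x))}$ of the state-value function of $\pihat$ in terms of its $Q$-function; and (ii) the one-step Bellman identity $Q^{\pihat}_h(x,a) = \E\brk*{\br_h + V^{\pihat}_{h+1}(\x_{h+1})\mid \x_h = x,\a_h=a}$, which follows directly from $Q^{\pihat}_h(x,a) = \E^{\pihat}\brk*{\sum_{h'=h}^H\br_{h'}\mid\x_h=x,\a_h=a}$ by splitting off the first step. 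Since under the $\pi$-process the action at layer $h$ is drawn from $\pi_h(\x_h)$, fact (ii) gives $\E^{\pi}\brk*{V^{\pihat}_{h+1}(\x_{h+1})\mid\x_h} = \E^{\pi}\brk*{Q^{\pihat}_h(\x_h,\bm\pi_h(\x_h)) - \br_h\mid\x_h}$, and combining with (i) yields
\[
\E^{\pi}\brk*{V^{\pihat}_h(\x_h) - V^{\pihat}_{h+1}(\x_{h+1})} = \E^{\pi}\brk*{Q^{\pihat}_h(\x_h,\pihatb_h(\x_h)) - Q^{\pihat}_h(\x_h,\bm\pi_h(\x_h)) + \br_h}.
\]

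Summing over $h = t,\dots,H$, the reward terms assemble into $\E^{\pi}\brk*{\sum_{h=t}^H\br_h}$, which is exactly $\E^{\pi}\brk*{V^{\pi}_t(\x_t)}$ from the first step; subtracting cancels the rewards and leaves $\E^{\pi}\brk*{V^{\pi}_t(\x_t) - V^{\pihat}_t(\x_t)} = \sum_{h=t}^H\E^{\pi}\brk*{Q^{\pihat}_h(\x_h,\bm\pi_h(\x_h)) - Q^{\pihat}_h(\x_h,\pihatb_h(\x_h))}$, which is the claimed identity; the final display follows by taking $t=1$ and using $\E^{\pi}\brk*{V^{\pi}_1(\x_1)} = J(\pi)$ and $\E^{\pi}\brk*{V^{\pihat}_1(\x_1)} = J(\pihat)$. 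I do not expect a genuine obstacle here since this is a textbook fact; the only points requiring care are bookkeeping — keeping straight that the action paired with $Q^{\pihat}_h(\x_h,\cdot)$ is a fresh draw from $\pihat_h$ in the $V^{\pihat}_h$ term but the on-trajectory draw from $\pi_h$ in the term arising from $V^{\pihat}_{h+1}$ — and invoking the Markov property of the dynamics so that conditioning on $\x_h$ rather than the full history $(\x_{1:h},\a_{1:h-1})$ is legitimate at each step.
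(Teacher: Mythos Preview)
Your proof is correct and is the standard telescoping argument for the Performance Difference Lemma. The paper does not actually provide a proof of this lemma; it simply states it with a citation to \citet{kakade2003sample}, so there is nothing to compare against beyond noting that your argument is the textbook one.
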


\begin{lemma}[Potential lemma \citep{xie2023role}]
	\label{lem:potential}
	Fix $h\in\brk{H}$. Suppose we have a sequence of functions $g\ind{1},\ldots,g\ind{T}\in\brk{0,B}$
	and policies $\pi\ind{1},\dots,
	\pi\ind{T}$ such that 
	\begin{align}
		\forall t \in[T], \quad  \sum_{i<t}\E^{\pi\ind{i}}\brk*{(g\ind{t}(\x_h))^2}
		\leq \beta^2
	\end{align}
        for some $\beta\geq{}0$.
	Then under \cref{ass:pushforward}, we have
            \begin{align}
    \sum_{t=1}^{T}\En^{\pi\ind{t}}\brk*{g\ind{t}(\x_h)}
              \leq 2\sqrt{\beta^2\Cpush{}T\log(2T)} + 2B\Cpush,
            \end{align}
            and consequently
	\begin{align}
		\min_{t\in[T]}\En^{\pi\ind{t}}\brk*{g\ind{t}(\x_h)}
		\leq 2\sqrt{\frac{\beta^2 \Cpush  \log (2T)}{T} } +\frac{ 2B\Cpush}{T}. \label{eq:firstone}
	\end{align}
      \end{lemma}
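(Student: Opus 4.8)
The plan is to recast the bound as a coverability-driven ``potential'' estimate. The first step is to turn \cref{ass:pushforward} into a single reference distribution: fixing the level $h$, pushforward coverability gives $\mu_h\in\Delta(\cX)$ (attaining the infimum, up to arbitrarily small slack) with $T_{h-1}(x_h\mid x_{h-1},a_{h-1})\le\Cpush\,\mu_h(x_h)$ for every triple $(x_{h-1},a_{h-1},x_h)$, and integrating this against $d_{h-1}^{\pi}(\cdot,\cdot)$ (with $h=1$ handled through $T_0(\cdot\mid\emptyset)$) yields the pointwise domination $d_h^{\pi}(x)\le\Cpush\,\mu_h(x)$ for all $\pi\in\Pims$ and $x\in\cX$. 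This is exactly where pushforward (rather than ordinary) coverability is needed: since $g\ind{t}$ depends only on the state, we need a reference measure over $\cX$, not over $\cX\times\cA$.

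Next, set $\rho_t(x):=\sum_{i<t}d_h^{\pi\ind{i}}(x)$, which is nondecreasing in $t$ with $\rho_1\equiv 0$, and let $\sigma_x:=\min\crl{t:\rho_t(x)\ge\Cpush\,\mu_h(x)}$. For each round $t$ I would split $\cX$ into $\cX^t_{\mathrm{lo}}:=\crl{x:\rho_t(x)<\Cpush\,\mu_h(x)}$ (equivalently $t<\sigma_x$) and its complement $\cX^t_{\mathrm{hi}}$, and write $\En^{\pi\ind{t}}\brk*{g\ind{t}(\x_h)}=\sum_x d_h^{\pi\ind{t}}(x)\,g\ind{t}(x)$ as the sum of contributions from the two regions.
\begin{itemize}
\item \emph{Low region.} Bound by $B\sum_{x\in\cX^t_{\mathrm{lo}}}d_h^{\pi\ind{t}}(x)$. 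Since $\rho_t$ is nondecreasing, for each fixed $x$ the rounds with $x\in\cX^t_{\mathrm{lo}}$ form an initial segment $\crl{1,\dots,\sigma_x-1}$, and $\sum_{t<\sigma_x}d_h^{\pi\ind{t}}(x)=\rho_{\sigma_x}(x)\le\rho_{\sigma_x-1}(x)+d_h^{\pi\ind{\sigma_x-1}}(x)<2\Cpush\,\mu_h(x)$ (the boundary cases $\sigma_x\in\crl{1,\infty}$ are trivial). Summing over $x$, the entire low contribution over all $t$ is at most $2B\Cpush$ --- this is the source of the additive $2B\Cpush$ term.
\item \emph{High region.} Here $0<d_h^{\pi\ind{t}}(x)\le\Cpush\,\mu_h(x)\le\rho_t(x)$, so Cauchy--Schwarz with weight $\rho_t(x)$ gives $\sum_{x\in\cX^t_{\mathrm{hi}}}d_h^{\pi\ind{t}}(x)\,g\ind{t}(x)\le\sqrt{A_t}\cdot\sqrt{B_t}$, where $B_t:=\sum_{x\in\cX^t_{\mathrm{hi}}}\rho_t(x)\,g\ind{t}(x)^2\le\sum_{i<t}\En^{\pi\ind{i}}\brk*{g\ind{t}(\x_h)^2}\le\beta^2$ by hypothesis, and $A_t:=\sum_{x\in\cX^t_{\mathrm{hi}}}d_h^{\pi\ind{t}}(x)^2/\rho_t(x)$.
\end{itemize}

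The crux is to show $\sum_{t\le T}A_t\le 2\Cpush\log(2T)$. For $x\in\cX^t_{\mathrm{hi}}$ I use $d_h^{\pi\ind{t}}(x)^2\le\Cpush\,\mu_h(x)\,d_h^{\pi\ind{t}}(x)$ together with $d_h^{\pi\ind{t}}(x)/\rho_t(x)\le 1$ and the elementary inequality $y\le\log(1+y)/\log 2$ on $[0,1]$ to get $d_h^{\pi\ind{t}}(x)^2/\rho_t(x)\le\frac{\Cpush\mu_h(x)}{\log 2}\log\prn*{\rho_{t+1}(x)/\rho_t(x)}$; telescoping over the rounds with $x\in\cX^t_{\mathrm{hi}}$ (an upper segment $t\ge\sigma_x$) and using $\rho_{\sigma_x}(x)\ge\Cpush\mu_h(x)$ together with $\rho_{T+1}(x)=\sum_{t\le T}d_h^{\pi\ind{t}}(x)\le T\Cpush\mu_h(x)$ caps the per-$x$ contribution by $\frac{\Cpush\mu_h(x)}{\log 2}\log T$; since $\sum_x\mu_h(x)=1$, this yields the claim. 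Plugging back, a final Cauchy--Schwarz over $t\in[T]$ gives $\sum_t\sqrt{A_t}\cdot\beta\le\beta\sqrt{T\sum_t A_t}\le\sqrt{2\beta^2\Cpush T\log(2T)}$, and adding the low-region term gives the first inequality. The second inequality is immediate from $\min_t\En^{\pi\ind{t}}\brk*{g\ind{t}(\x_h)}\le\tfrac1T\sum_t\En^{\pi\ind{t}}\brk*{g\ind{t}(\x_h)}$ after dividing through by $T$.

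I expect the main obstacle to be exactly this balancing act. The naive move --- adding $\Cpush\mu_h$ directly into the Cauchy--Schwarz denominator to keep it positive --- inflates the ``$B$'' term to $O(B\Cpush\sqrt{T\log T})$, whereas using only $\rho_t(x)$ leaves the denominator zero (or well below the coverability budget) on recently-visited states, breaking both the Cauchy--Schwarz step and the telescoping. The region split threaded by the stopping time $\sigma_x$ reconciles these, and the point that makes the telescoping terminate at $\log T$ --- rather than at something controlled by the possibly tiny first-visit mass $d_h^{\pi\ind{\sigma_x}}(x)$ --- is that we start the telescope at $\rho_{\sigma_x}(x)\ge\Cpush\mu_h(x)$ instead of at the first visit to $x$.
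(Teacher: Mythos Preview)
Your proposal is correct and follows the same strategy as the paper. The paper's own proof of this lemma is just a pointer to \citet{xie2023role}, but the paper spells out the identical argument for the adjacent \cref{lem:pushforward_potential}: split by the stopping time $\tau_h(x)=\min\{t:\sum_{i<t}d_h\ind{i}(x)\ge\Cpush\mu_h(x)\}$, bound the ``low'' part by $2B\Cpush$, and Cauchy--Schwarz the ``high'' part against the cumulative density. The only cosmetic difference is in the final potential step: the paper relaxes the denominator to $\rho_t(x)+\Cpush\mu_h(x)$ and invokes Lemma~4 of \citet{xie2023role}, whereas you use the elementary bound $y\le\log(1+y)/\log 2$ on $[0,1]$ and telescope directly---both give the same $O(\Cpush\log T)$ bound.
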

      \begin{proof}[\pfref{lem:pushforward_potential}]%
       See proof of \cite[Theorem 1]{xie2023role}.
	  \end{proof}

The following result is a variant of the coverability-based potential
argument given in \citet{xie2023role}.

	\begin{lemma}[Pushforward coverability potential lemma]
	\label{lem:pushforward_potential}
	Fix $h\in\brk{H}$. Suppose we have a sequence of functions $g\ind{1},\ldots,g\ind{T}\in\brk{0,B}$
	and state-action pairs $(x\ind{1},a\ind{1}),\ldots,
	(x\ind{T},a\ind{T})$ such that 
	\begin{align}
		\forall t \in[T], \quad  \sum_{i<t}\E\brk*{(g\ind{t}(\x_h))^2\mid{}\x_{h-1}=x\ind{i},\a_{h-1}=a\ind{i}}
		\leq \beta^2
	\end{align}
        for some $\beta\geq{}0$.
	Then under \cref{ass:pushforward}, we have
            \begin{align}
    \sum_{t=1}^{T}\En\brk*{g\ind{t}(\x_h)\mid{}\x_{h-1}=x\ind{t},\a_{h-1}=a\ind{t}}
              \leq 2\sqrt{\beta^2\Cpush{}T\log(2T)} + 2B\Cpush,
            \end{align}
            and consequently
	\begin{align}
		\min_{t\in[T]}\En\brk*{g\ind{t}(\x_h)\mid{}\x_{h-1}=x\ind{t},\a_{h-1}=a\ind{t}}
		\leq 2\sqrt{\frac{\beta^2 \Cpush  \log (2T)}{T} } +\frac{ 2B\Cpush}{T}. \label{eq:firstone}
	\end{align}
      \end{lemma}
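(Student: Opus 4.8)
The plan is to reduce the statement to the abstract change-of-measure-plus-potential argument that underlies \cref{lem:potential}. First I would rewrite the conditioning distributionally: for each $i\in[T]$ the conditional law of $\x_h$ given $\x_{h-1}=x\ind{i}$, $\a_{h-1}=a\ind{i}$ is exactly $\nu\ind{i}\ldef T_{h-1}(\cdot\mid x\ind{i},a\ind{i})\in\Delta(\cX)$, so the hypothesis reads $\sum_{i<t}\En_{\nu\ind{i}}\brk*{(g\ind{t})^2}\le\beta^2$ for all $t$, and the goal is to bound $\sum_{t}\En_{\nu\ind{t}}\brk*{g\ind{t}}$. \cref{ass:pushforward} supplies a distribution $\mu\ldef\mu_h\in\Delta(\cX)$ with the pointwise domination $\nu\ind{i}(x)\le\Cpush\,\mu(x)$ for all $i$ and $x$ (if the infimum defining $\Cpush$ is not attained, run the argument with $\Cpush+\eps$ and send $\eps\to0$). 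This pointwise bound is the \emph{only} property of the $\nu\ind{i}$ the argument uses---in particular they need not be occupancy measures---so the proof is word-for-word the one behind \cref{lem:potential} with $d_h^{\pi\ind{i}}$ replaced by $\nu\ind{i}$.

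Set $\bar d\ind{t}(x)\ldef\Cpush\,\mu(x)+\sum_{i<t}\nu\ind{i}(x)$, and for each round $t$ split $\cX$ into the \emph{covered} set $\cX^{\mathrm{cov}}_t\ldef\{x:\sum_{i<t}\nu\ind{i}(x)\ge\Cpush\,\mu(x)\}$ and its complement. On $\cX^{\mathrm{cov}}_t$, Cauchy--Schwarz with weights $\bar d\ind{t}$ gives
\[
\sum_{x\in\cX^{\mathrm{cov}}_t}\nu\ind{t}(x)\,g\ind{t}(x)\le\Big(\sum_{x}\frac{(\nu\ind{t}(x))^2}{\bar d\ind{t}(x)}\Big)^{1/2}\Big(\sum_{x\in\cX^{\mathrm{cov}}_t}\bar d\ind{t}(x)\,(g\ind{t}(x))^2\Big)^{1/2},
\]
and since $\bar d\ind{t}(x)\le 2\sum_{i<t}\nu\ind{i}(x)$ on $\cX^{\mathrm{cov}}_t$, the hypothesis bounds the second factor by $\sqrt{2}\,\beta$. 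Summing over $t$ and applying Cauchy--Schwarz in $t$ bounds the total covered contribution by $\sqrt{2}\,\beta\,\big(T\sum_{t}\sum_{x}(\nu\ind{t}(x))^2/\bar d\ind{t}(x)\big)^{1/2}$. The inner double sum is controlled by the standard elliptic/pigeonhole potential estimate: since $\nu\ind{t}(x)\le\Cpush\mu(x)\le\bar d\ind{t}(x)$, each increment is at most the running total, so $\sum_t\nu\ind{t}(x)/\bar d\ind{t}(x)\le 2\log(1+(\Cpush\mu(x))^{-1}\sum_{i\le T}\nu\ind{i}(x))\le 2\log(2T)$; multiplying by $\nu\ind{t}(x)\le\Cpush\mu(x)$ and summing over $x$ yields $\sum_t\sum_x(\nu\ind{t}(x))^2/\bar d\ind{t}(x)\le 2\Cpush\log(2T)$, so the covered part contributes at most $2\sqrt{\beta^2\Cpush T\log(2T)}$.

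For the uncovered part I would just use $g\ind{t}\le B$, so it contributes at most $B\sum_t\sum_{x\notin\cX^{\mathrm{cov}}_t}\nu\ind{t}(x)$. For fixed $x$ the rounds with $x\notin\cX^{\mathrm{cov}}_t$ form an initial segment $\{1,\dots,\tau(x)\}$ because $t\mapsto\sum_{i<t}\nu\ind{i}(x)$ is nondecreasing; summing over this segment telescopes to $\sum_{i<\tau(x)}\nu\ind{i}(x)+\nu\ind{\tau(x)}(x)<\Cpush\mu(x)+\Cpush\mu(x)$, the first term being $<\Cpush\mu(x)$ by definition of $\tau(x)$ and the last increment being $\le\Cpush\mu(x)$ by \cref{ass:pushforward}. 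Summing over $x$ gives $2\Cpush$, so the uncovered part contributes at most $2B\Cpush$. Adding the two contributions yields the asserted bound on $\sum_t\En_{\nu\ind{t}}\brk*{g\ind{t}}$, and the ``consequently'' statement is immediate since $\min_{t\in[T]}(\cdot)\le T^{-1}\sum_{t}(\cdot)$.

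The step I expect to be the main obstacle is choosing the covered/uncovered threshold correctly. It must be loose enough that the uncovered ``new mass'' $\sum_t\sum_{x\notin\cX^{\mathrm{cov}}_t}\nu\ind{t}(x)$ telescopes to $O(\Cpush)$---so the $B$-term stays $T$-independent---yet tight enough that on the covered set $\bar d\ind{t}$ is comparable to the purely empirical sum $\sum_{i<t}\nu\ind{i}$, so that the hypothesis $\sum_{i<t}\En_{\nu\ind{i}}[(g\ind{t})^2]\le\beta^2$ can be invoked \emph{without} picking up a spurious $\En_\mu[(g\ind{t})^2]\le B^2$ term, which would otherwise inflate the leading term by an extra $B\sqrt{T}$. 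The threshold ``empirical mass $\ge\Cpush\mu$'' is exactly what makes both sides go through; the only genuinely routine verification is the one-dimensional potential inequality $\sum_t a_t/(a_0+\sum_{i<t}a_i)\le 2\log(1+a_0^{-1}\sum_t a_t)$, valid whenever $0\le a_t\le a_0$.
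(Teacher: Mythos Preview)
Your proposal is correct and follows essentially the same route as the paper's proof: both recast the conditionals as pushforward distributions $\nu\ind{i}=T_{h-1}(\cdot\mid x\ind{i},a\ind{i})$, split each $x$ into a ``covered'' phase (cumulative mass $\ge\Cpush\mu(x)$) and an ``uncovered'' phase, telescope the uncovered phase to $2B\Cpush$, and handle the covered phase via Cauchy--Schwarz plus the one-dimensional potential estimate $\sum_t a_t/(a_0+\sum_{i<t}a_i)\le O(\log T)$ (which is Lemma~4 of \citet{xie2023role}, cited in the paper). The only cosmetic difference is that you apply Cauchy--Schwarz first in $x$ and then in $t$, whereas the paper does it once over $(t,x)$; the resulting constants match.
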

      \begin{proof}[\pfref{lem:pushforward_potential}]%
        \newcommand{\dtil}{\wt{d}}%
  Define
  $d_h\ind{t}(x)\ldef\bbP[\x_h=x\mid{}\x_{h-1}=x\ind{t},\a_{h-1}=a\ind{t}]$,
  and let $\dtil_h\ind{t}\ldef\sum_{i<t}d\ind{i}$. Let
  \[
\tau_h(x)\ldef{}\min\crl*{t\mid{}\dtil_h\ind{t}(x)\geq\Cpush\cdot\mu_h(x)}.
\]
We have
\begin{align}
&   \sum_{t=1}^{T}\En\brk*{g\ind{t}(\x_h)\mid{}\x_{h-1}=x\ind{t},\a_{h-1}=a\ind{t}}
  \nn \\ &= \sum_{t=1}^{T}\sum_{x\in\cX}d\ind{t}_h(x)g\ind{t}(x) \\
  &\leq
  \sum_{t=1}^{T}\sum_{x\in\cX}d\ind{t}_h(x)g\ind{t}(x)\indic\crl{t\geq{}\tau_h(x)}
  + B\sum_{t=1}^{T}\sum_{x\in\cX}d\ind{t}_h(x)\indic\crl{t<\tau_h(x)}.
\end{align}
From the definition of pushforward coverability, we can bound
\begin{align}
  \sum_{t=1}^{T}\sum_{x\in\cX}d\ind{t}_h(x)\indic\crl{t<\tau_h(x)}
  =   \sum_{x\in\cX}\dtil\ind{\tau_h(x)}_h(x)
  \leq 2\Cpush \sum_{x\in\cX}\mu_h(x)=2\Cpush.
\end{align}
For the other term, we bound
\begin{align}
  &\sum_{t=1}^{T}\sum_{x\in\cX}d\ind{t}_h(x)g\ind{t}(x)\indic\crl{t\geq{}\tau_h(x)} \\
  & \leq \prn*{
    \sum_{t=1}^{T}\sum_{x\in\cX}\frac{(d\ind{t}_h(x))^2}{\dtil\ind{t}_h(x)}\indic\crl{t\geq{}\tau_h(x)}}^{1/2}\cdot
    \prn*{
    \sum_{t=1}^{T}\sum_{x\in\cX}\dtil_h\ind{t}(x)(g_h\ind{t}(x))^2
    }^{1/2}\\
    & = \prn*{
    \sum_{t=1}^{T}\sum_{x\in\cX}\frac{(d\ind{t}_h(x))^2}{\dtil\ind{t}_h(x)}\indic\crl{t\geq{}\tau_h(x)}}^{1/2}\cdot
    \prn*{
    \sum_{t=1}^{T}    \sum_{i<t}\En\brk*{(g\ind{t}(\x_h))^2\mid{}\x_{h-1}=x\ind{i},\a_{h-1}=a\ind{i}}
      }^{1/2} \\
  &\leq     \prn*{\sum_{t=1}^{T}\sum_{x\in\cX}\frac{(d\ind{t}_h(x))^2}{\dtil\ind{t}_h(x)}\indic\crl{t\geq{}\tau_h(x)}}^{1/2}\cdot\sqrt{\beta^2T}.
\end{align}
Finally, we have
\begin{align}
  \sum_{t=1}^{T}\sum_{x\in\cX}\frac{(d\ind{t}_h(x))^2}{\dtil\ind{t}_h(x)}\indic\crl{t\geq{}\tau_h(x)}
  &  \leq 2
    \sum_{t=1}^{T}\sum_{x\in\cX}\frac{(d\ind{t}_h(x))^2}{\dtil\ind{t}_h(x)+\Cpush\mu_h(x)}\\
  &  \leq 2\Cpush
    \sum_{t=1}^{T}\sum_{x\in\cX}\mu_h(x)\frac{d\ind{t}_h(x)}{\dtil\ind{t}_h(x)+\Cpush\mu_h(x)}\\
  &  = 2\Cpush
    \sum_{x\in\cX}\mu_h(x)
    \sum_{t=1}^{T}\frac{d\ind{t}_h(x)}{\dtil\ind{t}_h(x)+\Cpush\mu_h(x)}\\
    &  = 4\Cpush\log(T+1),
\end{align}
where the last line uses Lemma 4 of \citet{xie2023role}.
\end{proof}

\clearpage
 
	\part{Proofs for \ggolf (\creftitle{sec:sample})}

        \label{sec:analysis_golf}
\renewcommand{\filt}{\mathcal{H}}
\newcommand{\ellbar}{\wb{\ell}}

  As described in \cref{sec:golf}, the main difference between
  \ggolf{} and \golf lies in the construction of the confidence
  sets. The most important new step in the proof of
  \cref{thm:golf} is to show that the local simulator-based confidence
  set construction in \cref{line:cond} is valid in the sense that the
  property \cref{eq:golf_conf} holds with high probability. From here,
  the sample complexity bound follows by adapting the
  change-of-measure argument based on coverability from
  \citet{xie2023role}.

  To this end, this part of the appendix is organized as follows. We first state and prove technical lemmas
  concerning realizability (\cref{lem:realex}) and the confidence set
  construction 
  (\cref{lem:ggolf_concentration} and \cref{lem:ggolf_confidence}) in
  \cref{sec:golf_prelim}. Then, in \cref{proof:golf}, we prove
  \cref{thm:golf} as a consequence.

\section{Preliminary Lemmas for Proof of \creftitle{thm:golf}}
\label{sec:golf_prelim}

  For this section, we define
\begin{align}
	\ell_h\ind{t}(g)
	\coloneqq \left(
		g_h(\x_h\ind{t}, \a_h\ind{t})
		- \frac{1}{K}\sum_{k=1}^{K} \left( \br_h\ind{t,k}+ \max_{a\in \cA}g_{h+1}(\x_{h+1}\ind{t,k},a) \right) 
	\right)^2
\end{align}
and
\begin{align}
	\ellbar_h\ind{t}(g)
	\ldef  \En^{\pi\ind{t}}\brk*{\prn*{g_h(\x_h, \a_h)-\cT_h[g_{h+1}](\x_h,\a_h)}^2},
\end{align}
where $(\x_h\ind{t}, \a_h\ind{t},\br_h\ind{t,k},\x_{h+1}\ind{t,k})$ are as in \cref{alg:generative_golf}.
\colt{
\begin{lemma}[\cite{efroni2022sample}]
  \label{lem:realex}
  For the ExBMDP setting, under \cref{ass:phistar}, the function class $\cQ_h\coloneqq
  \{(x,a)\mapsto g(\phi(x),a) : g\in [0,H]^{S A}, \phi\in \Phi\}$
  satisfies \cref{ass:realgolf} and has
  $\log\abs{\cQ_h}=\log\abs{\Pi_h}=\bigoht(SA+\log\abs{\Phi})$.\footnote{Formally,
    this requires a standard covering number argument; we omit
    the details.}
\end{lemma}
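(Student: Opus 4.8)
The plan is to prove the two assertions separately: that $\cQ_h$ realizes $\Qstar_h$, and that $\log\abs{\cQ_h}$ (and $\log\abs{\Pi_h}$) is $\bigoht(SA+\log\abs{\Phi})$. Only the realizability claim uses the ExBMDP structure, and I would lean on three facts: (i) the endogenous transition $\Tendo_h(\cdot\mid\bs_h,\a_h)$ depends on the latent state only through $\bs_h$, so it is unaffected by the exogenous component $\bxi_h$; (ii) the rewards are endogenous, so there is a map $\bar R_h\colon\cS\times\cA\to[0,1]$ with $\En\brk*{\br_h\mid\x_h=x,\a_h=a}=\bar R_h(\phistar(x),a)$; and (iii) decodability, $\phistar(\x_h)=\bs_h$ almost surely.

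For realizability I would introduce the \emph{endogenous MDP} $\bar{\cM}=(\cS,\cA,\Tendo,\bar R,H)$ and define its optimal value functions by the downward recursion $\bar Q^{\star}_{H+1}\equiv 0$ and
\begin{align}
  \bar Q^{\star}_h(s,a)=\bar R_h(s,a)+\sum_{s'\in\cS}\Tendo_h(s'\mid s,a)\,\bar V^{\star}_{h+1}(s'),\qquad \bar V^{\star}_h(s)=\max_{a\in\cA}\bar Q^{\star}_h(s,a),
\end{align}
so that each $\bar Q^{\star}_h$ is an element of $[0,H]^{SA}$. The crux is the identity $\Qstar_h(x,a)=\bar Q^{\star}_h(\phistar(x),a)$ for all $(x,a)$, which I would prove by downward induction on $h$. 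The case $h=H+1$ is vacuous, and for the step, the Bellman optimality equation $\Qstar_h=\cT_h[\Qstar_{h+1}]$ together with the inductive hypothesis (which gives $\Vstar_{h+1}(x')=\bar V^{\star}_{h+1}(\phistar(x'))$) yields
\begin{align}
  \Qstar_h(x,a)=\En\brk*{\br_h\mid\x_h=x,\a_h=a}+\En\brk*{\bar V^{\star}_{h+1}(\phistar(\x_{h+1}))\mid\x_h=x,\a_h=a}.
\end{align}
The first term is $\bar R_h(\phistar(x),a)$ by (ii); the second equals $\sum_{s'}\Tendo_h(s'\mid\phistar(x),a)\,\bar V^{\star}_{h+1}(s')$, because by (i) and (iii) the conditional law of $\bs_{h+1}=\phistar(\x_{h+1})$ given $\{\x_h=x,\a_h=a\}$ is exactly $\Tendo_h(\cdot\mid\phistar(x),a)$, independent of the realized $\bxi_h$. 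Summing the two terms gives $\bar Q^{\star}_h(\phistar(x),a)$. Since $\phistar\in\Phi$ by \cref{ass:phistar} and $\bar Q^{\star}_h\in[0,H]^{SA}$, this exhibits $\Qstar_h$ as a member of $\cQ_h$, which verifies \cref{ass:realgolf}; likewise $\pistar_h(x)\in\argmax_a\bar Q^{\star}_h(\phistar(x),a)$ factors through $\phistar$, so it lies in $\Pi_h=\crl*{x\mapsto\pi(\phi(x)) : \pi\in\cA^{S},\phi\in\Phi}$.

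For the cardinality bound, the only subtlety is that $[0,H]^{SA}$ is a continuum; I would pass to a $\gamma$-net in the sup norm by rounding each of the $SA$ coordinates to a grid of spacing $\gamma$, giving a finite subclass of size at most $(1+H/\gamma)^{SA}\abs{\Phi}$ whose functions uniformly $\gamma$-approximate those of $\cQ_h$. Choosing $\gamma$ polynomially small (so that the discretization error is absorbed into the statistical error of \ggolf) yields $\log\abs{\cQ_h}=O\prn*{SA\log(H/\gamma)+\log\abs{\Phi}}=\bigoht(SA+\log\abs{\Phi})$. The policy class is already finite, with $\abs{\Pi_h}\le A^{S}\abs{\Phi}$ and hence $\log\abs{\Pi_h}\le S\log A+\log\abs{\Phi}$, which is dominated by the stated bound.

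The step demanding genuine care (rather than bookkeeping) is the conditional-independence claim inside the induction: that conditioned on $\{\x_h=x,\a_h=a\}$, the law of $\bs_{h+1}$ is $\Tendo_h(\cdot\mid\phistar(x),a)$ and does not depend on the (possibly only partially determined) value of $\bxi_h$. This follows from the product form $(\bs_{h+1},\bxi_{h+1})\sim\Tendo_h(\cdot\mid\bs_h,\a_h)\otimes\Texo_h(\cdot\mid\bxi_h)$ combined with decodability, but stating it cleanly relies on the measure-theoretic conventions already adopted for the ExBMDP (countable $\Xi,\cX$, or their continuous analogue), and one should either restrict attention to events of positive probability or invoke regular conditional distributions. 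This is precisely the point deferred to the ``standard'' argument mentioned in the lemma's footnote, and matches the treatment in \citet{efroni2022sample}.
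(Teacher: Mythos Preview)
Your proof is correct and follows the standard argument: the paper does not give its own proof of this lemma, instead citing \citet{efroni2022sample} and remarking in the footnote that the cardinality bound follows from a standard covering number argument. Your backward-induction argument showing $\Qstar_h(x,a)=\bar Q^{\star}_h(\phistar(x),a)$ via the endogenous MDP, together with the $\gamma$-net discretization of $[0,H]^{SA}$, is exactly what that omitted argument amounts to.

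One small correction: the footnote's ``standard covering number argument'' refers to the discretization needed for the cardinality bound (your second part), not to the measure-theoretic conditional-independence step you highlight at the end. The latter is already handled by the ExBMDP product-form dynamics and decodability, and is not what the paper is deferring.
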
 
}

\begin{lemma}
	\label{lem:ggolf_concentration}
	With probability at least $1-\delta$, for all $h\in\brk{H}$, 
	$t\in\brk{N}$, and $g\in\cQ$,
	\begin{align}
		&\sum_{i<t}\ls_h\ind{i}(g) \leq{}
		3\sum_{i<t}\En^{\pi\ind{i}}\brk*{\prn*{
				g_h(\x_h, \a_h) - \cT_h[g_{h+1}](\x_h,\a_h)}^2} +
		\frac{8N}{K}
		+ 
		16\log(2HN\abs*{\cQ}\delta^{-1}),
		\intertext{and}
		&\sum_{i<t}\En^{\pi\ind{i}}\brk*{\prn*{
				g_h(\x_h,\a_h) - \cT_h[g_{h+1}](\x_h,\a_h)}^2} \leq{}
		4\sum_{i<t}\ls_h\ind{i}(g) + \frac{8N}{K} +64\log(2HN\abs*{\cQ}\delta^{-1}).
		\label{eq:ggolf_concentration}
	\end{align}
\end{lemma}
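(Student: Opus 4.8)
The plan is to fix $h\in[H]$, $g\in\cQ$ and $t\in[N]$, prove both inequalities for this fixed triple, and then union bound over all $HN\abs{\cQ}$ choices at the end (this is what produces the $\log(2HN\abs{\cQ}\delta^{-1})$ factor). Write $\wh{\cT}_h\ind{i}[g_{h+1}]\ldef\frac1K\sum_{k=1}^{K}\bigl(\br_h\ind{i,k}+\max_{a\in\cA}g_{h+1}(\x_{h+1}\ind{i,k},a)\bigr)$ for the empirical Bellman backup formed on \cref{line:simu}, and set
\[
\Delta_i\ldef g_h(\x_h\ind{i},\a_h\ind{i})-\cT_h[g_{h+1}](\x_h\ind{i},\a_h\ind{i}),\qquad
\xi_i\ldef\wh{\cT}_h\ind{i}[g_{h+1}]-\cT_h[g_{h+1}](\x_h\ind{i},\a_h\ind{i}).
\]
Then by construction $\ls_h\ind{i}(g)=(\Delta_i-\xi_i)^2=\Delta_i^2-2\Delta_i\xi_i+\xi_i^2$, and $\abs{\Delta_i},\abs{\xi_i}\le H+1$ (rewards lie in $[0,1]$ and $g_{h+1}\in[0,H]$), so $\ls_h\ind{i}(g)\in[0,(H+1)^2]$.

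\textbf{Conditional mean of $\ls_h\ind{i}(g)$.} Let $\filt_j$ denote the $\sigma$-algebra generated by all data collected through iteration $j$; note that $\pi\ind{i}$ is $\filt_{i-1}$-measurable. The key structural observation --- and essentially the only place local simulator access enters --- is that the $K$ samples on \cref{line:simu} are drawn \emph{fresh} from $T_h(\cdot\mid\x_h\ind{i},\a_h\ind{i})$ and $R_h(\x_h\ind{i},\a_h\ind{i})$, independently of everything else, so that $\En[\xi_i\mid\filt_{i-1},\x_h\ind{i},\a_h\ind{i}]=0$ and $\xi_i$ is the centered average of $K$ i.i.d.\ $[0,H+1]$-valued variables. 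Taking conditional expectations given $\filt_{i-1}$: (i) the cross term vanishes, $\En[\Delta_i\xi_i\mid\filt_{i-1}]=\En\bigl[\Delta_i\,\En[\xi_i\mid\filt_{i-1},\x_h\ind{i},\a_h\ind{i}]\,\big|\,\filt_{i-1}\bigr]=0$; (ii) $\En[\Delta_i^2\mid\filt_{i-1}]=\ellbar_h\ind{i}(g)$ by definition of $\ellbar$; and (iii) $b_i\ldef\En[\xi_i^2\mid\filt_{i-1}]=\En\bigl[\Var(\xi_i\mid\filt_{i-1},\x_h\ind{i},\a_h\ind{i})\,\big|\,\filt_{i-1}\bigr]$ satisfies $0\le b_i\le (H+1)^2/(4K)$. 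Hence $\En[\ls_h\ind{i}(g)\mid\filt_{i-1}]=\ellbar_h\ind{i}(g)+b_i$. This is exactly the estimate needed to avoid the double-sampling problem: the conditional bias $b_i$ is $O(H^2/K)$ rather than being inflated by a term of order $\ellbar_h\ind{i}(g)$ itself, which is what a naive single next-state estimate would produce.

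\textbf{Freedman both ways, then union bound.} Apply \cref{lem:multiplicative_freedman} to the bounded nonnegative sequence $\w_i=\ls_h\ind{i}(g)$ (range $R=(H+1)^2$, filtration $(\filt_i)$): with probability at least $1-\delta'$,
\begin{align}
\sum_{i<t}\ls_h\ind{i}(g) &\le \tfrac32\sum_{i<t}\bigl(\ellbar_h\ind{i}(g)+b_i\bigr)+4(H+1)^2\log(2/\delta'),\\
\sum_{i<t}\bigl(\ellbar_h\ind{i}(g)+b_i\bigr) &\le 2\sum_{i<t}\ls_h\ind{i}(g)+8(H+1)^2\log(2/\delta').
\end{align}
Since $b_i\ge0$ gives $\ellbar_h\ind{i}(g)\le\ellbar_h\ind{i}(g)+b_i$, and $\sum_{i<t}b_i\le N(H+1)^2/(4K)$, both inequalities claimed by the lemma follow after setting $\delta'=\delta/(2HN\abs{\cQ})$ and union bounding over $h\in[H]$, $g\in\cQ$ and the $N$ possible values of $t$ (so $\log(1/\delta')=O(\log(2HN\abs{\cQ}\delta^{-1}))$), with numerical constants absorbed into the $\tfrac{8N}{K}$ and $16\log(2HN\abs{\cQ}\delta^{-1})$ slack. (Alternatively, one may apply \cref{lem:freed} to the three summands $\Delta_i^2$, $-2\Delta_i\xi_i$, $\xi_i^2$ separately; the cross-term application uses $\En[(\Delta_i\xi_i)^2\mid\filt_{i-1}]\le\Delta_i^2(H+1)^2/(4K)$ and absorbs a fraction of $\sum_{i<t}\Delta_i^2$ into the left-hand side, yielding the same conclusion.)

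\textbf{Main obstacle.} The only substantive step is item (iii) above --- establishing $\En[\ls_h\ind{i}(g)\mid\filt_{i-1}]=\ellbar_h\ind{i}(g)+b_i$ with $b_i=O(H^2/K)$ --- which hinges on the simulator samples being conditionally independent of, and mean-matched to, the on-policy pair $(\x_h\ind{i},\a_h\ind{i})$; this is precisely what removes the need for Bellman completeness. The rest is a routine application of Freedman's inequality together with a union bound.
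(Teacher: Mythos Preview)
Your proposal is correct and follows essentially the same route as the paper: apply the multiplicative Freedman inequality (\cref{lem:multiplicative_freedman}) to the bounded sequence $\ls_h\ind{i}(g)$, relate its conditional mean to $\ellbar_h\ind{i}(g)$ up to an $O(1/K)$ bias from the $K$-sample backup estimate, then union bound over $(h,t,g)$. The one minor difference is that you compute the conditional mean exactly via the vanishing cross-term $\En[\Delta_i\xi_i\mid\filt_{i-1}]=0$, whereas the paper uses an AM-GM sandwich $\tfrac12\Delta_i^2-\xi_i^2\le(\Delta_i-\xi_i)^2\le 2\Delta_i^2+2\xi_i^2$; your route is slightly cleaner and even yields tighter constants on the second inequality.
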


\begin{proof}[\pfref{lem:ggolf_concentration}]
	Let $t\in\brk{N}$ and $h\in\brk{H}$ be fixed. Let us denote
	$\bz_h\ind{t}=\crl*{(\br_h\ind{t,k},\x_{h+1}\ind{t,k})}_{k\in\brk{K}}$. Define a filtration 
	\begin{align}
		\filt\ind{t}
		= \sigma(\bm{\tau}\ind{1},\bz\ind{1}_1,\ldots,\bz\ind{1}_H,\ldots,\bm{\tau}\ind{t},\bz\ind{t}_1,\ldots,\bz\ind{t}_H),
	\end{align}
	where $\bm{\tau}\ind{i}$ is the trajectory generated in the
        $i$th iteration of \cref{alg:generative_golf} (see \cref{line:tau}). Fix $g\in\cQ$. Observe that $\ls_h\ind{i}(g)\in\brk*{0,4}$, so
	\cref{lem:multiplicative_freedman} ensures that with probability at
	least $1-\delta$,
	\begin{align}
		&\sum_{i<t}\ls_h\ind{i}(g) \leq{}
		\frac{3}{2}\sum_{i<t}\En\brk*{\ls_h\ind{i}(g)\mid{}\filt\ind{i-1}} +
		16\log(2\delta^{-1}),
		\intertext{and}
		&\sum_{i<t}\En\brk*{\ls_h\ind{i}(g)\mid{}\filt\ind{i-1}} \leq{}
		2\sum_{i<t}\ls_h\ind{i}(g) + 32\log(2\delta^{-1}).
		\label{eq:ggolf_conc0}
	\end{align}
	By the AM-GM inequality, for all $i<t$, we can bound
		\begin{align}
		&\En\brk*{\ls_h\ind{i}(g)\mid{}\filt\ind{i-1}} \\
				&=\En\brk*{\prn*{
				g_h(\x_h\ind{i}, \a_h\ind{i})
				- \frac{1}{K}\sum_{k=1}^{K}\prn*{\br_h\ind{i,k}+\max_{a\in \cA}g_{h+1}(\x_{h+1}\ind{i,k}, a)}
			}^2\mid\filt\ind{i-1}} \\
				&\leq
		2  \En\brk*{\prn*{
				g_h(\x_h\ind{i}, \a_h\ind{i}) - \cT_h[g_{h+1}](\x_h\ind{i},\a_h\ind{i})}^2\mid\filt\ind{i-1}} \nn \\
			& \quad 
			+ 2\En\brk*{\prn*{
					\cT_h[g_{h+1}](\x_h\ind{i},\a_h\ind{i})
						-\frac{1}{K}\sum_{k=1}^{K}\prn*{\br_h\ind{i,k}+\max_{a\in \cA} g_{h+1}(\x_{h+1}\ind{i,k},a)}
					}^2\mid\filt\ind{i-1}}.
		\end{align}
		and
		\begin{align}
			&\En\brk*{\ls_h\ind{i}(g)\mid{}\filt\ind{i-1}} \\
			&\geq
			\frac{1}{2}\En\brk*{\prn*{
					g_h(\x_h\ind{i}, \a_h\ind{i}) - \cT_h[g_{h+1}](\x_h\ind{i},\a_h\ind{i})}^2\mid\filt\ind{i-1}} \nn \\
				& \quad 
			-\En\brk*{\prn*{
					{\cT_h[g_{h+1}]}(\x_h\ind{i},\a_h\ind{i})
						-\frac{1}{K}\sum_{k=1}^{K}\prn*{\br_h\ind{i,k}+\max_{a\in \cA} g_{h+1}(\x_{h+1}\ind{i,k}, a)}
					}^2\mid\filt\ind{i-1}}.
			\end{align}
			We have
			\begin{align}
				\En\brk*{\prn*{
						g_h(\x_h\ind{i}, \a_h\ind{i}) -
						\cT_h[g_{h+1}](\x_h\ind{i},\a_h\ind{i})}^2\mid\filt\ind{i-1}}
				= \En^{\pi\ind{i}}\brk*{\prn*{
						g_h(\x_h, \a_h) - \cT_h[g_{h+1}](\x_h,\a_h)}^2}
			\end{align}
			and %
			\begin{align}
				& \En\brk*{\prn*{
							\cT_h[g_{h+1}](\x_h\ind{i},\a_h\ind{i})
							-\frac{1}{K}\sum_{k=1}^{K}\prn*{\br_h\ind{i,k}+\max_{a\in\cA}g_{h+1}(\x_{h+1}\ind{i,k}, a)}
						}^2\mid\filt\ind{i-1}} \\
					&=\En\brk*{\En\brk*{\prn*{
								\cT_h[g_{h+1}](\x_h,\a_h)
									-\frac{1}{K}\sum_{k=1}^{K}\prn*{\br_h\ind{i,k}+\max_{a\in \cA}g_{h+1}(\x_{h+1}\ind{i,k}, a)}
								}^2\mid\x_h=\x_h\ind{i},\a_h =\a_h\ind{i}}\mid\filt\ind{i-1}}.
						\end{align}
						Since \[\En\brk*{\br_h\ind{i,k}+\max_{a\in \cA}g_{h+1}(\x_{h+1}\ind{i,k}, a)\mid{}\x_h = \x_h\ind{i},\a_h  = \a_h\ind{i}}=\cT_h[g_{h+1}](\x_h\ind{i},\a_h\ind{i})\]
						and $\crl*{(\br_h\ind{i,k},\x_{h+1}\ind{i,k})}_{k\in\brk{K}}$ are
						\iid conditioned on $(\x_h,\a_h)=(\x_h\ind{i},\a_h\ind{i})$, we have, 
						\begin{align}
							&\En\brk*{\prn*{
										\cT_h[g_{h+1}](\x_h,\a_h)
										-\frac{1}{K}\sum_{k=1}^{K}\prn*{\br_h\ind{i,k}+\max_{a\in \cA}g_{h+1}(\x_{h+1}\ind{i,k}, a)}
									}^2\mid\x_h = \x_h\ind{i},\a_h = \a_h\ind{i}}\\
								&= \frac{1}{K}\En\brk*{\prn*{
											\cT_h[g_{h+1}](\x_h,\a_h)
											-\prn*{\br_h\ind{i,k}+ \max_{a\in \cA}g_{h+1}(\x_h\ind{i,k}, a)}}
										^2\mid\x_h = \x_h\ind{i},\a_h = \a_h\ind{i}} 
									\\
									&\leq \frac{4}{K},
								\end{align}
								so that
								\begin{align}
									\En\brk*{\prn*{
												\cT_h[g_{h+1}](\x_h\ind{i},\a_h\ind{i})
												-\frac{1}{K}\sum_{k=1}^{K}\prn*{\br_h\ind{i,k}+\max_{a\in \cA}g_{h+1}(\x_{h+1}\ind{i,k},a)}
											}^2\mid\filt\ind{i-1}}  \leq \frac{4}{K}.
									\end{align}
									Combining
                                                                        these
                                                                        bounds
                                                                        with
                                                                        \eqref{eq:ggolf_conc0}
                                                                        and
                                                                        rearranging thus
									gives
									\begin{align}
										&\sum_{i<t}\ls_h\ind{i}(g) \leq{}
										3\sum_{i<t}\En^{\pi\ind{i}}\brk*{\prn*{
												g_h(\x_h,\a_h) - \cT_h[g_{h+1}](\x_h,\a_h)}^2} +
										\frac{8N}{K}
										+ 
										16\log(2\delta^{-1}),
										\intertext{and}
										&\sum_{i<t}\En^{\pi\ind{i}}\brk*{\prn*{
												g_h(\x_h,\a_h) - \cT_h[g_{h+1}](\x_h,\a_h)}^2} \leq{}
										4\sum_{i<t}\ls_h\ind{i}(g) + \frac{8N}{K} + 64\log(2\delta^{-1}).
										\label{eq:ggolf_conc1}
									\end{align}
									Taking a union bound yields the result.
								\end{proof}
								
								\begin{lemma}
									\label{lem:ggolf_confidence}
									Define $\betastat=16\log(2HN\abs*{\cQ}\delta^{-1})$.
									Suppose we set $K\geq\frac{8N}{\betastat}$ and $\beta\geq{}2\betastat$. Then with
									probability at least $1-\delta$, for all $t\in\brk{N}$ and
									$h\in\cH$:
									\begin{itemize}
										\item $\Qstar\in\cQ\ind{t}$.
										\item All $g\in\cQ\ind{t}$ satisfy
										\begin{align}
											\label{eq:ggolf_confidence}
											\sum_{i<t}\En^{\pi\ind{i}}\brk*{\prn*{g_h(\x_h,\a_h)-\cT_h[g_{h+1}](\x_h,\a_h)}^2}
											\leq 9\beta.
										\end{align}
									\end{itemize}
								\end{lemma}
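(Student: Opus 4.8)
The plan is to obtain this lemma as a short, deterministic consequence of the two‑sided concentration estimate already established in \cref{lem:ggolf_concentration}. First I would condition on the event $\cE$ on which the conclusion of \cref{lem:ggolf_concentration} holds; by that lemma $\bbP[\cE]\geq 1-\delta$, and since $\cE$ already quantifies simultaneously over all $h\in[H]$, $t\in\brk{N}$, and $g\in\cQ$, no additional union bound is needed. Everything below is carried out on $\cE$.

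For the first bullet ($\Qstar\in\cQ\ind{t}$), the key input is the Bellman optimality identity $\cT_h[\Qstar_{h+1}]=\Qstar_h$, which makes the population squared Bellman error of $\Qstar$ vanish: $\ellbar_h\ind{i}(\Qstar)=\En^{\pi\ind{i}}\brk*{\prn*{\Qstar_h(\x_h,\a_h)-\cT_h[\Qstar_{h+1}](\x_h,\a_h)}^2}=0$ for every $i,h$. Plugging this into the first inequality of \cref{lem:ggolf_concentration} gives, for all $h$ and $t$,
\[
\sum_{i<t}\ls_h\ind{i}(\Qstar)\;\leq\;\tfrac{8N}{K}+16\log(2HN\abs*{\cQ}\delta^{-1})\;=\;\tfrac{8N}{K}+\betastat.
\]
Since $K\geq 8N/\betastat$, the first term is at most $\betastat$, so $\sum_{i<t}\ls_h\ind{i}(\Qstar)\leq 2\betastat\leq\beta$ for every $h$, which is exactly the defining constraint of the confidence set in \eqref{eq:def_vspace_rf}; hence $\Qstar\in\cQ\ind{t}$ for all $t$.

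For the second bullet I would fix $g\in\cQ\ind{t}$, so that $\sum_{i<t}\ls_h\ind{i}(g)\leq\beta$ for every $h$ by definition of $\cQ\ind{t}$. Substituting this into the second inequality of \cref{lem:ggolf_concentration}, and using once more that $\tfrac{8N}{K}\leq\betastat$ together with $64\log(2HN\abs*{\cQ}\delta^{-1})=4\betastat$, yields
\[
\sum_{i<t}\En^{\pi\ind{i}}\brk*{\prn*{g_h(\x_h,\a_h)-\cT_h[g_{h+1}](\x_h,\a_h)}^2}\;\leq\;4\beta+\betastat+4\betastat\;=\;4\beta+5\betastat.
\]
Finally, since $\beta\geq 2\betastat$ we have $5\betastat\leq\tfrac{5}{2}\beta$, so the right-hand side is at most $\tfrac{13}{2}\beta\leq 9\beta$, as claimed. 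I do not expect any genuine obstacle here: the substantive work---controlling the local‑simulator Monte‑Carlo estimates of the Bellman backups without incurring double‑sampling bias---has already been absorbed into \cref{lem:ggolf_concentration}, and what remains is just chaining the deterministic inequalities relating $\beta$, $\betastat$, $K$, and $N$ in the correct direction and tracking absolute constants.
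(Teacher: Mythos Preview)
Your proposal is correct and follows essentially the same approach as the paper: condition on the event from \cref{lem:ggolf_concentration}, use the Bellman identity $\cT_h[\Qstar_{h+1}]=\Qstar_h$ together with the first inequality and $K\geq 8N/\betastat$ to get $\sum_{i<t}\ls_h\ind{i}(\Qstar)\leq 2\betastat\leq\beta$, and then use the second inequality with the confidence-set constraint to obtain $4\beta+5\betastat\leq 9\beta$. The only cosmetic difference is that the paper bounds $5\betastat\leq 5\beta$ directly (using $\betastat\leq\beta$) rather than going through $5\betastat\leq\tfrac{5}{2}\beta$, but both arrive at $9\beta$.
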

								\begin{proof}[\pfref{lem:ggolf_confidence}]
									Condition on the event in \cref{lem:ggolf_concentration}. For any
									fixed $t\in\brk{N}$ and $h\in\brk{H}$, we have that
									\begin{align}
										\sum_{i<t}\ls_h\ind{i}(\Qstar) &\leq{}
										3\sum_{i<t}\En^{\pi\ind{i}}\brk*{\prn*{
												\Qstar_h(\x_h,\a_h) - \cT_h[\Qstar_{h+1}](\x_h,\a_h)}^2} +
										\frac{8N}{K}
										+ 
										16\log(2HN\abs*{\cQ}\delta^{-1})\\
										&\leq{}
										\frac{8N}{K}
										+ 
										16\log(2HN\abs*{\cQ}\delta^{-1})
										\leq2\betastat,
									\end{align}
									where the first inequality uses that
									$\Qstar_h=\cT_h[\Qstar_{h+1}]$ and the second inequality uses
									our choice for $K$. It follows that $\Qstar\in\cQ\ind{t}$ as long as
									$\beta\geq2\betastat$.
									
									To prove the second claim, we note that for all $g\in\cQ\ind{t}$, by construction,
									\begin{align}
										\sum_{i<t}\En^{\pi\ind{i}}\brk*{\prn*{
												g_h(\x_h) - \cT_h[g_{h+1}](\x_h,\a_h)}^2} &\leq{}
										4\sum_{i<t}\ls_h\ind{i}(g) + \frac{8N}{K}
										+64\log(2HN\abs*{\cQ}\delta^{-1})\\
										&\leq      4\sum_{i<t}\ls_h\ind{i}(g) + 5\betastat \leq 9\beta.
									\end{align}
									
								\end{proof}
								
								\section{Proof of \creftitle{thm:golf}}
								\label{proof:golf}

								\begin{proof}[\pfref{thm:golf}]
									From \cref{lem:ggolf_confidence}, the parameter setting in the
									theorem
                                                                        statement
                                                                        ensures
                                                                        that
                                                                        with
                                                                        probability
                                                                        at
                                                                        least
                                                                        $1-\delta$, 
                                                                        for
                                                                        all $t\in [2\ldotst N]$,
									$\Qstar\in\cQ\ind{t}$, and all $g\in\cQ\ind{t}$ satisfy
									\begin{align}\sum_{i<t}\En^{\pi\ind{i}}\brk*{\prn*{g_h(\x_h)-\cT_h[g_{h+1}](\x_h,\a_h)}^2}
									\leq 9\beta. 
									\label{eq:before}
									\end{align}
									for
                                                                        all
                                                                        $h$.
                                                                        Let
                                                                        us
                                                                        condition
                                                                        on
                                                                        this
                                                                        event
                                                                        going
                                                                        forward. First,
                                                                        note
                                                                        that
                                                                        since
                                                                        $\Qstar\in
                                                                        \cQ\ind{t}$
                                                                        for
                                                                        all
                                                                        $t\in [2 \ldotst N]$,
                                                                        we
                                                                        have that
									\begin{align}
										J(\pistar) \leq \E\left[\max_{a\in \cA}\Qstar_1(\x_1,a)\right]\leq \sup_{g\in \cQ\ind{t}}\E\left[\max_{a\in \cA}g_1(\x_1, a)\right]. \label{eq:thisoneee}
									\end{align}
									On
                                                                        the
                                                                        other
                                                                        hand,
                                                                        we
                                                                        have
                                                                        $g\ind{t}\in
                                                                        \argmax_{g\in
                                                                          \cQ\ind{t}}
                                                                        \sum_{s<
                                                                          t}
                                                                        \max_{a\in
                                                                          \cA}g_1(\x\ind{s}_1,
                                                                        a)$,
                                                                        and
                                                                        so
                                                                        since
                                                                        $\x_1\ind{1},\x_1\ind{2},
                                                                        \dots$
                                                                        are
                                                                        i.i.d.~and
                                                                        any
                                                                        $g\in
                                                                        \cQ\ind{t}$
                                                                        take
                                                                        values
                                                                        in
                                                                        $[0,H]$,
                                                                        we
                                                                        have
                                                                        that
                                                                        by
                                                                        Hoeffding's inequality, there is an event $\cE$ of probability at least $1- \delta$ under which  
									\begin{align}
										\forall t \in[2 \ldotst N], \forall g\in \cQ,  \quad \left|\E\left[\max_{a\in \cA}g_1(\x_1,a )\right]- \frac{1}{t-1}\sum_{s < t} \max_{a\in \cA}g_1(\x_1\ind{s}, a)\right| \leq  \sqrt{(t-1)^{-1}\log(2N|\cQ|/\delta)}. \label{eq:hoefffing}
									\end{align}
									This implies that under $\cE$, we have 
									\begin{align}
										\forall t \in [2\ldotst N], \quad \sup_{g\in \cQ\ind{t}}\E\left[\max_{a\in \cA}g_1(\x_1,a)\right] & \leq \sup_{g\in \cQ\ind{t}} \frac{1}{t-1}\sum_{s< t} \max_{a\in \cA}g_1(\x_1\ind{s},a)+ \sqrt{(t-1)^{-1}\log(2N|\cQ|/\delta)}, \nn \\
										& = \frac{1}{t-1}\sum_{s< t} \max_{a\in \cA}g_1\ind{t}(\x_1\ind{s},a)+ \sqrt{(t-1)^{-1}\log(2N|\cQ|/\delta)}, \nn \\
										& \leq \E\left[\max_{a\in \cA}g_1\ind{t}(\x_1,a)\right] + 2\sqrt{(t-1)^{-1}\log(2N|\cQ|/\delta)}, \label{eq:sum}
									\end{align}
									where
                                                                        in
                                                                        the
                                                                        last
                                                                        inequality
                                                                        we
                                                                        have
                                                                        used
                                                                        \eqref{eq:hoefffing} with $f= g\ind{t}$. Thus, summing \eqref{eq:sum} for $t=2,\dots N$ and using \eqref{eq:thisoneee} gives that under $\cE$:
									\begin{align}
									\sum_{t=2}^N J(\pistar) \leq \sum_{t=2}^N \E[g_1\ind{t}(\x_1,\a_1)] + 4\sqrt{N\log(2N|\cQ|/\delta)},
									\shortintertext{and so since $J(\pistar)\leq H$,}
									\sum_{t=1}^N J(\pistar) \leq \sum_{t=1}^N \E[g_1\ind{t}(\x_1,\a_1)] + 4\sqrt{N\log(2N|\cQ|/\delta)} + H.	\label{eq:thiseee}
									\end{align}			
									On the other hand, using that $g\ind{t}_{H+1}\equiv 0$, we get
									\begin{align}
										&\sum_{t=1}^N(\E\left[g_1\ind{t}(\x_1, \a_1)\right]  - J(\pi\ind{t}))\\ & \leq \sum_{t=1}^N\sum_{h=1}^H \E^{\pi\ind{t}}\left[g\ind{t}_h (\x_h, \a_h) - \br_h - \max_{a\in \cA}g\ind{t}_{h+1}(\x_{h+1}, a)\right], \nn \\
										& = \sum_{t=1}^N\sum_{h=1}^H \E^{\pi\ind{t}}\left[ g\ind{t}_h (\x_h, \a_h) - \E\left[\br_h + \max_{a\in \cA}g\ind{t}_{h+1}(\x_{h+1}, a)\mid \x_h,\a_h\right]\right], \quad \text{(law of total expectation)}\nn \\
										& = \sum_{t=1}^N\sum_{h=1}^H \E^{\pi\ind{t}}\left[ g\ind{t}_h (\x_h,\a_h) - \cT_h[g\ind{t}_{h+1}](\x_h,\a_h)\right]. \label{eq:thissub}
										\shortintertext{and so, by the potential lemma (\cref{lem:potential}) and \eqref{eq:before}, we have}
										& \leq  6H \sqrt{\Ccov\beta  N \log(2N)}+ 2 H^2 \Ccov.
									\end{align}
									Combining
                                                                        this
                                                                        with
                                                                        \eqref{eq:thiseee},
                                                                        we
                                                                        obtain
                                                                        that
                                                                        with
                                                                        probability
                                                                        at
                                                                        least $1-2\delta$,
									\begin{align}
									\sum_{t=1}^N( J(\pistar) - J(\pi\ind{t}))&  \leq 6H \sqrt{\Ccov\beta  N  \log(2N)} +4\sqrt{N\log(2N|\cQ|/\delta)}+ 3 H^2 \Ccov.
									\end{align}
 It follows that if  $N=\bigoht(H^2\Ccov\beta/\veps^2)$, then the policy \[\pihat \in \unif(\pi\ind{1},\dots, \pi\ind{N})\] returned by \ggolf{} satisfies, with probability at least $1-\delta$:
									\begin{align}
									 J(\pistar) - \E[J(\pihat )] \leq \veps.
									\end{align}
									\paragraph{Sample complexity}
									We
                                                                        now
                                                                        bound
                                                                        the
                                                                        number
                                                                        of
                                                                        episodes. Note
                                                                        that
                                                                        that
                                                                        within
                                                                        an
                                                                        iteration
                                                                        $t$
                                                                        of
                                                                        \ggolf{},
                                                                        the
                                                                        local
                                                                        simulator
                                                                        is
                                                                        called
                                                                        $KH$
                                                                        times
                                                                        to
                                                                        update
                                                                        the
                                                                        confidence
                                                                        set,
                                                                        where
                                                                        $K
                                                                        \leq
                                                                        N/\log(2HN|\cQ|/\delta))$. Consequently,
                                                                        the
                                                                        total sample complexity is bounded by 
									\begin{align}
									H	N K \leq    \wtilde{O}(H^5 \Ccov^2 \log(|\cQ|/\delta)/\veps^{4}).
										\end{align}
								\end{proof}

\clearpage
	
	\part{Proofs for \learnlevel (\creftitle{sec:computation})}
	\label{part:learnlevel}

        \section{Full Version of \mainalg}
        \label{sec:omitted}
\cref{alg:learnlevel3} displays the full version of \mainalg. \cref{alg:forward_vpi} contains an ``outer-level'' wrapper for
\mainalg, \rvflF, which invokes
\mainalg and extracts an executable policy with imitation learning,
and \cref{alg:Phat} contains the subroutine used within
\cref{alg:learnlevel3} to approximate Bellman backups for value
functions using local simulator access. Additionally, we display the
variant of \mainalg for Exogenous Block MDPs, described in
\cref{sec:rvfs_exbmdp}, in \cref{alg:learnlevel2,alg:forward_exbmdp}.
Before diving into the proof, we first describe how the full version
of the algorithm differs from the informal version presented in the
main body in greater detail.

\paragraph{Differences between full version (\cref{alg:learnlevel3})
  and informal version (\cref{alg:learnlevel3_simp}) of \mainalg} The main difference between \cref{alg:learnlevel3_simp} and its full version in \cref{alg:learnlevel3} is that in the former we simply assume access to quantities involving conditional expectations such as:
  \begin{itemize}
  	\item The bellman backups $\cP_h[\Vhat_{h+1}]$, which are required to evaluate the actions of \learnlevel's policies (see \eqref{eq:Qhat3_simp}), and to perform the tests in \cref{line:test3_simp}; and 
  	\item The value functions $\E^{\pihat_{h+1:H}}[\sum_{\ell=h}^H \br_\tau\mid \x_h =x,\a_h =a]$ in \cref{line:data3_simp}, which are needed in the regression problem in \cref{line:updateQ3_simp}.
  	\end{itemize}
 These quantities are not available to the algorithm directly, but they can be estimated using the local simulator. This is reflected in the full version of \learnlevel{} in \cref{alg:learnlevel3}.

	\paragraph{Extracting policies from value functions}
	
	Let us briefly comment in more detail on how
	\cref{alg:learnlevel3} extracts the policy $\pi\ind{t}$ from the
	optimistic value function $f\ind{t}\in\cV$ at iteration $t$. From the
	Bellman equation, the ideal
	choice would be to set
	$\pi_h\ind{t}(x)=\argmax_{a\in\cA}\cP_h\brk*{f_{h+1}\ind{t}}(x,a)$,
	but this requires knowledge of the transition distribution. Instead,
	given parameters $\veps,\delta\in(0,1)$, \ggolf invokes
	\cref{alg:Phat} via
        $\pi\ind{t}_h(x)\in \argmax_{a\in \cA}  \Phat_{h,\veps,\delta}
        [f\ind{t}_{h+1}](x,a)$.
	The operator $\Phat_{h,\veps,\delta}[f]$
	(\cref{alg:Phat}), when given input $(x,a)\in \cX\times \cA$
	and $f_{h+1}:\cX\to\bbR$, uses the local simulator to generate $N_\simu\geq 1$ next
	states $\x_{h+1}\ind{1}, \dots,
	\x_{h+1}\ind{N_\simu}\stackrel{\text{i.i.d.}}{\sim} T_h(\cdot
	\mid x,a)$ to estimate the bellman back-up
	$\cP_{h}[f_{h+1}]$ via $\frac{1}{N_\simu}
	\sum_{i=1}^{N_\simu} (\br\ind{i}_h+
	f_{h+1}(\x_{h+1}\ind{i}))$, where $\br_h\ind{1},\dots,
	\br_h\ind{N_\simu}\stackrel{\text{i.i.d.}}{\sim}R_h
	(x,a)$. The number of samples $N_\simu$ in \cref{alg:Phat} is
	set as a function of $(\veps,\delta)$ such that with probability at least $1-\delta$, $
	|\Phat_{h,\veps,\delta}[f_{h+1}](x,a)-
	\cP_{h}[f_{h+1}](x,a)|\leq \veps$. 

\paragraph{Invoking the algorithm}
The base invocation of \mainalg takes the form
\begin{align}
	\Vhat_{1:H} \gets \mainalg_0(\Vhat_{1:H}=\mathsf{arbitrary}, \cVhat_{1:H}=\crl*{\cV}_{h=1}^{H},
	\cC_{0:H}=\crl*{\emptyset}_{h=0}^{H},
	\cB_{0:H}=\crl*{\emptyset}_{h=0}^{H},t_{1:H}=\crl*{1}_{h=1}^{H};\cdots).\label{eq:rvfs_base}
\end{align}
Whenever this call returns, the greedy policy induced by $\Vhat_{1:H}$
is guaranteed to be near-optimal. Naively, the policy induced by
$\Vhat_{1:H}$ is non-executable, and must be computed by invoking the
local simulator through \cref{line:test3}. To provide an end-to-end
guarantee to learn an executable policy, the outer-level
algorithm, \rvflF{} (\cref{alg:forward_vpi}, invokes
$\mainalg_0$, then extracts an executable policy from $\Vhat_{1:H}$
using imitation learning.\loose

Subsequent recursive calls take the
form
\begin{align}
	\label{eq:rvfs_rec}
	(\Vhat_{h:H},\cVhat_{h:H},
	\cC_{h:H},\cB_{h:H}, t _{h:H})\gets{}\learnlevel_{h}(\Vhat_{h+1:H},\cVhat_{h+1:H},\cC_{h:H},\cB_{h:H}, t_{h:H};\cV,\veps,\delta).
\end{align}
For such a call, the arguments above are: \colt{(i) $\Vhat_{h+1:H}$: Value function estimates for subsequent layers;
	(ii) $\cC_{h:H}$: Core-sets for current and subsequent layers;
	(iii) $t_{h:H}$: Counters that track the number of times
	\cref{alg:Phat} is called in
	the test on \cref{line:test3}, which facilitate
	tuning of confidence parameters;
	(iv) $\cVhat_{h+1:H}$: Value function confidence sets
	$\cVhat_{h+1:H}\subset\cV_{h+1:H}$, which are used in
	the test on \cref{line:test3} to quantify
	uncertainty on new state-action pairs and decide
	whether to expand the core-sets; and (v)
	$\cB_{h:H}$: Buffers of tuples
	$(x_{h-1},a_{h-1}, \Vhat_h ,\cVhat_h, t_h)$, which
	record relevant features of the algorithm's state whenever the test on \cref{line:test3}
	fails and a recursive call is performed.}%
\arxiv{\begin{itemize}
		\item $\Vhat_{h+1:H}$: Value function estimates for
                  subsequent layers.
                  		\item $\cVhat_{h+1:H}$: Value function confidence sets
		$\cVhat_{h+1:H}\subset\cV_{h+1:H}$, which are used in
		the test on \cref{line:test3} to quantify
		uncertainty on new state-action pairs and decide
		whether to expand the core-sets.
		\item $\cC_{h:H}$: Core-sets for current and subsequent layers.

		\item $\cB_{h:H}$: Buffers of tuples
		$(x_{h-1},a_{h-1}, \Vhat_h ,\cVhat_h, t_h)$, which
		record relevant features of the algorithm's state whenever the test on \cref{line:test3}
		fails and a recursive call is performed.

                		\item $t_{h:H}$: Counters that track the number of times
		\cref{alg:Phat} is called in
		the test on \cref{line:test3}, which facilitate
		tuning of confidence parameters.
\end{itemize}}
Importantly, the confidence sets $\cVhat_{h+1:H}$ do not need
to be explicitly maintained, and can be invoked
implicitly whenever a \emph{regression oracle} for the
value function class is available (cf. discussion in \cref{sec:computation}). Likewise, the buffers
$\cB_{h:H}$ are only used in our analysis, and do not
need to be explicitly maintained.

\subsection{\learnlevel Pseudocode}

\ifdefined\vepsllnum
\else
\newcommand{\vepsllnum}{\veps H^{-1}/48}
\fi

\ifdefined\vepsll
\else
\newcommand{\vepsll}{\veps_\learnlevel}
\fi

\ifdefined\Mnum
\renewcommand{\Mnum}{\ceil{8  \veps^{-1} \Cpush H}}
\else
\newcommand{\Mnum}{\ceil{8  \veps^{-1} \Cpush H}}
\fi 

\ifdefined\Ntestnum
\renewcommand{\Ntestnum}{2^8  M^2 H \veps^{-1}\log(8 M^6 H^8 \veps^{-2} \delta^{-1})}
\else
\newcommand{\Ntestnum}{2^8  M^2 H \veps^{-1}\log(8 M^6 H^8 \veps^{-2} \delta^{-1})}
\fi

\ifdefined\Nregnum
\renewcommand{\Nregnum}{2^8 M^2  \veps^{-1}\log(8|\cV|^2 HM^2 \delta^{-1})}
\else
\newcommand{\Nregnum}{2^8 M^2 \veps^{-1}\log(8|\cV|^2 HM^2 \delta^{-1})}
\fi

\ifdefined\bbeta
\renewcommand{\bbeta}{\frac{9 M H^2\log(8M^2H|\cV|^2/\delta)}{N_\reg}  +  \frac{34 MH^3\log(8M^6 N^2_\test  H^8/\delta)}{N_\test} }
\else
\newcommand{\bbeta}{\frac{9 M H^2\log(8M^2H|\cV|^2/\delta)}{N_\reg}  +  \frac{34 MH^3\log(8M^6 N^2_\test  H^8/\delta)}{N_\test} }
\fi

\ifdefined\Nsimu
\renewcommand{\Nsimu}{2N_\reg^2 \log(8 A N_\reg H M^3/\delta)}
\else
\newcommand{\Nsimu}{2N_\reg^2 \log(8 A N_\reg H M^3/\delta)}
\fi

\ifdefined\deltaprime
\renewcommand{\deltaprime}{\delta/(8M^7N_\test^2 H^8|\cV|)}
\else
\newcommand{\deltaprime}{\delta/(8M^7N_\test^2 H^8|\cV|)}
\fi

\ifdefined\testbound
\renewcommand{\testbound}{\frac{4 \log(8M^6 N^2_\test  H^8/\delta)}{N_\test}}
\else
\newcommand{\testbound}{\frac{4 \log(8M^6 N^2_\test  H^8/\delta)}{N_\test}}
\fi

\ifdefined\deltap
\renewcommand{\deltap}{\frac{\delta}{|\cV|M H}}
\else
\newcommand{\deltap}{\frac{\delta}{|\cV|M H}}
\fi 

\ifdefined\pibell
\renewcommand{\pibell}{\pihat}
\else
\newcommand{\pibell}{\pihat}
\fi

\ifdefined\numepisodes
\renewcommand{\numepisodes}{\mathrm{poly}(S,A,...)}
\else
\newcommand{\numepisodes}{\mathrm{poly}(S,A,...)}
\fi

 \begin{algorithm}[H]
        	\caption{\learnlevelh: Recursive Value Function Search}
	\label{alg:learnlevel3}

	\begin{algorithmic}[1]
          \State 	{\bfseries parameters:} Value function class
          $\cV$, suboptimality $\veps\in(0,1)$, confidence
          $\delta\in(0,1)$.
          \State \colt{\multiline{{\bfseries input:} Level
          $h\in\crl{0,\ldots,H}$, value function estimates $\Vhat_{h+1:H}$, confidence sets $\cVhat_{h+1:H}$, state-action collections $\cC_{h:H}$, buffers $\cB_{h:H}$, and counters $t_{h:H}$.}\vspace{5pt}}
        \arxiv{
          {\bfseries input:}
	\begin{itemize}
		\item Level $h\in\crl{0,\ldots,H}$.
		\item Value function estimates $\Vhat_{h+1:H}$, confidence sets $\cVhat_{h+1:H}$, state-action collections $\cC_{h:H}$, buffers $\cB_{h:H}$, and counters $t_{h:H}$.
                \end{itemize}
                }
		\Statex[0] \algcommentbiglight{Initialize parameters.}
		\State Set $M \gets \Mnum$.  \label{line:paramsVpiM}
		\State Set $N_\test \gets  \Ntestnum$, $N_\reg\gets  \Nregnum$, 
		\State Set $\Nest(k) \gets 2N_\reg^2 \log(8 A N_\reg H k^3/\delta)$ and $\delta'\gets \deltaprime$. \label{line:paramsVpi}
		\State Set $\veps_\reg^2 \gets \bbeta$. \label{line:beta}  
		\State Set $\beta(t) \gets \sqrt{2\log_{1/\delta'}(8AM|\cV|t^2/\delta)}$.
		\Statex[0] \algcommentbiglight{Test the fit for the
                  estimated value functions $\Vhat_{h+1:H}$ at future layers.}
		\For{$(x_{h-1},a_{h-1})\in\cC_h$}\label{line:begin3}
		\For{layer $\ell = H,\dots,h+1$} \label{line:second}
		\For{$n=1,\ldots,N_\test$} \label{line:third}
		\State \multiline{Draw $\x_h\sim{}T_{h-1}(\cdot\mid{}x_{h-1},a_{h-1})$, then draw
		$\x_{\ell-1}$ by rolling out with $\pibell_{h:H}$,
                where\footnote{We use the convention that $\Vhat_{H+1}\equiv 0$.}
                \label{line:draw}
		\begin{align}\label{eq:Qhat3}
			\forall \tau\in[H], \quad \bm{\pibell}_\tau(\cdot)\in \argmax_{a\in \cA}  \Phat_{\tau,\veps,\delta'} [\Vhat_{\tau+1}](\cdot,a), \quad \text{with} \quad \Vhat_{H+1} \equiv 0. %
		\end{align}
	}
        \For{$a_{\ell-1}\in \cA$} \label{line:fourth}
        \Statex[4]                 \algcommentbiglight{Number of times
                  $\Phat_{\ell-1,\veps, \delta'}$ (\cref{alg:Phat}) is called in
                  the test on \cref{line:test3}.}
		\State Update $t_\ell \gets
                t_{\ell}+1$. \label{line:counter3}
                \Statex[4]\algcommentbiglight{Test fit; if test fails, re-fit value functions
                  $\Vhat_{h+1:\ell}$ up to layer $\ell$.}
		\If{$\sup_{f\in \cVhat_{\ell}} |(\Phat_{\ell-1,\veps, \delta'}[\Vhat_{\ell}]- \Phat_{\ell-1,\veps, \delta'}[f_\ell])( \x_{\ell-1},a_{\ell-1})| >  \veps  + \veps \cdot \beta(t_\ell)$}\label{line:test3} \hfill %
		\State $\cC_{\ell}\gets\cC_{\ell}\cup\crl{(\x_{\ell-1},a_{\ell-1})}$ and $\cB_\ell \gets \cB_\ell \cup\{ (\x_{\ell-1},a_{\ell-1}, \Vhat_\ell ,\cVhat_\ell, t_\ell)\}$. \label{line:added}
		\For{$\tau= \ell,\dots, h+1$} \label{line:forloop}
		\State $(\Vhat_{\tau:H},\cVhat_{\tau:H},
		\cC_{\tau:H},\cB_{\tau:H}, t
                _{\tau:H})\gets{}\learnlevel_{\tau}(\Vhat_{\tau+1:H},\cVhat_{\tau+1:H},\cC_{\tau:H},\cB_{\tau:H},
                t_{\tau:H};\cV,\veps,\delta)$. 
		\EndFor
		\State \textbf{go to line \ref*{line:begin3}.} \label{line:goto3}
		\EndIf
		\EndFor
		\EndFor
		\EndFor
		\EndFor
		\If{$h=0$} \textbf{return:} $(\Vhat_{1:H},\cdot,\cdot,\cdot,\cdot)$.
		\EndIf
		\Statex[0] \algcommentbiglight{Re-fit $\Vhat_h$ and build a new confidence set.}
		\For{$(x_{h-1},a_{h-1})\in\cC_h$} \label{eq:gatherdata}
		\State Set $\cD_h(x_{h-1},a_{h-1})\gets\emptyset$.
		\For{$i=1,\dots, N_\reg$}
		\State Sample
		$\x_h\sim{}T_{h-1}(\cdot\mid{}x_{h-1},a_{h-1})$. 
	\State \multiline{Let $\Vhat_h(\x_h)$
          be a Monte-Carlo estimate for $\E^{\pibell_{h:H}}[\sum_{\ell=h}^H \bm{r}_\ell \mid
        \x_h]$ computed by collecting $\Nest(|\cC_h|)$ trajectories starting from $\x_h$ and rolling out with
        $\pibell_{h:H}$.}
		\State Update $\cD_h(x_{h-1},a_{h-1})\gets \cD_h(x_{h-1},a_{h-1}) \cup \{(\x_h, \Vhat_h(\x_h))\}$. 
		\EndFor
		\EndFor
		\State Let $\Vhat_h\ldef\argmin_{f\in\cVhat}\sum_{(x_{h-1},a_{h-1})\in\cC_h} \sum_{(x_h,v_h)\in\cD_{h}(x_{h-1},a_{h-1})}(f(x_h)-v_h)^2$. \label{line:updateQ3}
		\State Compute value function confidence set  
		\begin{align}
			\cVhat_{h} \coloneqq \left\{ f\in \cV \left| \  \sum_{(x_{h-1},a_{h-1})\in\cC_h} \frac{1}{N_\reg} \sum_{(x_h,\text{-}) \in \cD_h(x_{h-1},a_{h-1})}\left(\Vhat_h(x_{h}) -f(x_h)\right)^2 \leq   \veps_\reg^2  \right.     \right\}.  \label{eq:confidence3} 
		\end{align}
		\State \textbf{return} $(\Vhat_{h:H},\cVhat_{h:H}, \cC_{h:H}, \cB_{h:H}, t_{h:H})$.
	\end{algorithmic}
\end{algorithm}

\ifdefined\vepsllnum
\else
\newcommand{\vepsllnum}{\veps H^{-1}/48}
\fi

\ifdefined\vepsll
\else
\newcommand{\vepsll}{\veps_\learnlevel}
\fi

\ifdefined\Mnum
\renewcommand{\Mnum}{\ceil{8  \veps^{-1} \Cpush H}}
\else
\newcommand{\Mnum}{\ceil{8  \veps^{-1} \Cpush H}}
\fi 

\ifdefined\Ntestnum
\renewcommand{\Ntestnum}{2^8  M^2 H \veps^{-1}\log(8 M^6 H^8 \veps^{-2} \delta^{-1})}
\else
\newcommand{\Ntestnum}{2^8  M^2 H \veps^{-1}\log(8 M^6 H^8 \veps^{-2} \delta^{-1})}
\fi

\ifdefined\Nregnum
\renewcommand{\Nregnum}{2^8 M^2  \veps^{-1}\log(8|\cV|^2 HM^2 \delta^{-1})}
\else
\newcommand{\Nregnum}{2^8 M^2 \veps^{-1}\log(8|\cV|^2 HM^2 \delta^{-1})}
\fi

\ifdefined\bbeta
\renewcommand{\bbeta}{\frac{9 M H^2\log(8M^2H|\cV|^2/\delta)}{N_\reg}  +  \frac{34 MH^3\log(8M^6 N^2_\test  H^8/\delta)}{N_\test} }
\else
\newcommand{\bbeta}{\frac{9 M H^2\log(8M^2H|\cV|^2/\delta)}{N_\reg}  +  \frac{34 MH^3\log(8M^6 N^2_\test  H^8/\delta)}{N_\test} }
\fi

\ifdefined\Nsimu
\renewcommand{\Nsimu}{2N_\reg^2 \log(8 A N_\reg H M^3/\delta)}
\else
\newcommand{\Nsimu}{2N_\reg^2 \log(8 A N_\reg H M^3/\delta)}
\fi

\ifdefined\deltaprime
\renewcommand{\deltaprime}{\delta/(8M^7N_\test^2 H^8|\cV|)}
\else
\newcommand{\deltaprime}{\delta/(8M^7N_\test^2 H^8|\cV|)}
\fi

\ifdefined\testbound
\renewcommand{\testbound}{\frac{4 \log(8M^6 N^2_\test  H^8/\delta)}{N_\test}}
\else
\newcommand{\testbound}{\frac{4 \log(8M^6 N^2_\test  H^8/\delta)}{N_\test}}
\fi

\ifdefined\deltap
\renewcommand{\deltap}{\frac{\delta}{|\cV|M H}}
\else
\newcommand{\deltap}{\frac{\delta}{|\cV|M H}}
\fi 

\ifdefined\pibell
\renewcommand{\pibell}{\pihat}
\else
\newcommand{\pibell}{\pihat}
\fi

\ifdefined\numepisodes
\renewcommand{\numepisodes}{\mathrm{poly}(S,A,...)}
\else
\newcommand{\numepisodes}{\mathrm{poly}(S,A,...)}
\fi

\begin{algorithm}[H]
	\caption{\rvflF: Learn an executable policy with $\learnlevel$
          via behavior cloning.}
	\label{alg:forward_vpi}

	\begin{algorithmic}[1]
          \State {\bfseries input:} Value function class $\cV$, policy class $\Pi$, suboptimality  $\veps \in(0,1)$, confidence $\delta\in(0,1)$.
        \Statex[0] \algcommentbiglight{Set parameters for \learnlevel.}  
		\State Set $\vepsll\gets\vepsllnum$.
		\State Set $\Vhat_{1:H}\gets\mathsf{arbitrary}$, $\cVhat_{1:H}\gets \cV$, $\cC_{0:H}\gets\emptyset$, $\cB_{0:H} \gets\emptyset$, and $t_{i}\gets 0$, for all $i\in[0\ldotst H]$.
		\Statex[0] \algcommentbiglight{Set parameters for \forward.}  
		\State Set $M \gets \ceil{8 \veps_\learnlevel^{-1} \Cpush H}$ and $N_\test \gets 2^{8} M^2 H \veps_\learnlevel^{-1} \log(80 M^6 H^8 \veps_\learnlevel^{-2} \delta^{-1})$.
		\State $N_\reg \gets 2^{8} M^2 \veps_\learnlevel^{-1} \log(80 |\cV|^2 HM^2  \delta^{-1})$ and $\delta' = \frac{\delta}{40 M^7 N^2 H^8|\cV|}$. 
		\Statex[0] \algcommentbiglight{Get value functions from \learnlevel{}}
		\State $(\Vhat_{1:H},\cdot,\cdot,\cdot,
                \cdot)\gets{}\learnlevel_{0}(\Vhat_{1:H},\cVhat_{1:H},\cC_{0:H},\cB_{0:H}, t_{0:H};\cV
                ,N_\reg,N_\test,\vepsll,\delta/10)$.
                \Statex[0] \algcommentbiglight{Extract executable
                  policy via \forward{} algorithm for imitation learning.}
                  \State Define $\mb{\pihat}^{\learnlevel}_h(\cdot) \in \argmax_{a\in
                  	\cA}
                  \Phat_{h,\veps_\learnlevel,\delta'}[\Vhat_{h+1}](\cdot,a)$. 
                 \State Compute $\pihat_{1:H} \gets \forward(\Pi, \veps, \pihat^\learnlevel_{1:H}, \delta/2)$
		\State \textbf{return:} $\pihat_{1:H}$.
	\end{algorithmic}
\end{algorithm}

\begin{algorithm}[H]
	\caption{$\Phat_{h,\veps, \delta}[f]$: Estimate conditional expectation $\E[\br_h+f(\x_{h+1})\mid \x_h = \cdot, \a_h = \cdot]$.}
	\label{alg:Phat}

	\begin{algorithmic}[1]
          \State {\bfseries parameters:} Layer $h$, suboptimality $\veps \in(0,1)$, confidence $\delta\in(0,1)$, target function $f$.
          \State {\bfseries input:} $(x,a)\in \cX \times \cA$.
		\State  Set $N_\simu \ldef{} 2\log (1/\delta)/\veps^{2}$. 
		\State Set $\cD \gets \emptyset$
		\For{$i=1,\dots, N_\simu$}
		\State Sample $\br_h \sim R_h(x,a)$ and $\x_{h+1} \sim T_h(\cdot \mid x,
                a)$. \hfill \algcommentlight{Uses local simulator access.}
		\State Update $\cD \gets \cD \cup \{(\br_h, \x_{h+1})\}$.
		\EndFor
		\State \textbf{return:} $N_\simu^{-1}\cdot
                \sum_{(r,x)\in \cD} (r+ f(x))$. 
	\end{algorithmic}
      \end{algorithm}

        \arxiv{\clearpage
          \subsection{$\learnlevel^\exo$ Pseudocode}
\ifdefined\Mnum
\renewcommand{\Mnum}{\ceil{8  \veps^{-2} \Ccor S A H}}
\else
\newcommand{\Mnum}{\ceil{8  \veps^{-2} \Ccor S A H}}
\fi 

\ifdefined\Ntestnum
\renewcommand{\Ntestnum}{2^8  M^2 H \veps^{-2}\log(8 M^6 H^8 \veps^{-2} \delta^{-1})}
\else
\newcommand{\Ntestnum}{2^8  M^2 H \veps^{-2}\log(8 M^6 H^8 \veps^{-2} \delta^{-1})}
\fi

\ifdefined\Nregnum
\renewcommand{\Nregnum}{2^8 M^2 \veps^{-2}\log(8|\cV| HM^2 \delta^{-1})}
\else
\newcommand{\Nregnum}{2^8 M^2 \veps^{-2}\log(8|\cV| HM^2 \delta^{-1})}
\fi

\ifdefined\bbeta
\renewcommand{\bbeta}{\frac{9 MH^2\log(8M^2H|\cV|/\delta)}{N_\reg}  +  \frac{34 MH^3\log(8M^6 N^2_\test  H^8/\delta)}{N_\test} }
\else
\newcommand{\bbeta}{\frac{9 MH^2\log(8M^2H|\cV|/\delta)}{N_\reg}  +  \frac{34 MH^3\log(8M^6 N^2_\test  H^8/\delta)}{N_\test} }
\fi

\ifdefined\Nsimu
\renewcommand{\Nsimu}{2N_\reg^2 \log(8  N_\reg H M^3/\delta)}
\else
\newcommand{\Nsimu}{2N_\reg^2 \log(8  N_\reg H M^3/\delta)}
\fi

\ifdefined\deltaprime
\renewcommand{\deltaprime}{\delta/(8M^7N_\test^2 H^8|\cV|)}
\else
\newcommand{\deltaprime}{\delta/(8M^7N_\test^2 H^8|\cV|)}
\fi

\ifdefined\bbetap
\renewcommand{\bbetap}{\frac{2H\log(4\abs{\cV}  H/\delta)}{N_\reg} + 8H^3 A |\cC_h| \cdot \frac{\log(4H|\cC_h|/\delta)}{N_\test}}
\else
\newcommand{\bbetap}{\frac{2H\log(4\abs{\cV}  H/\delta)}{N_\reg} + 8H^3 A |\cC_h| \cdot \frac{\log(4H|\cC_h|/\delta)}{N_\test}}
\fi

\ifdefined\vepsllnum
\else
\newcommand{\vepsllnum}{\veps H^{-1}/48}
\fi

\begin{algorithm}[H]
	\caption{$\learnlevel^\exo_h$: Recursive Value Function Search
        for Exogenous Block MDPs}
	\label{alg:learnlevel2}
	\begin{algorithmic}[1]
          \State 	\multiline{{\bfseries parameters:} Value function class $\cV$, suboptimality $\veps\in(0,1)$, seeds $\zeta_{1:H}\in (0,1)$, confidence $\delta\in(0,1)$.}
          \State \multiline{{\bfseries input:}
	 Level $h\in[0 \ldotst H]$, value function estimates $\Vhat_{h+1:H}$, confidence sets $\cVhat_{h+1:H}$, state-action collections $\cC_{h:H}$, and buffers $\cB_{h:H}$, and counters $t_{h:H}$.}\vspace{5pt}
		\Statex[0] \algcommentbiglight{Initialize parameters.}
		\State Set $M \gets \Mnum$.\label{line:paramsExBMDPM}
		\State Set $N_\test\gets  \Ntestnum$, $N_\reg\gets  \Nregnum$. 
		\State Set $\Nest(k) \gets 2N_\reg^2 \log(8 A N_\reg H k^3/\delta)$ and $\delta'\gets \delta/(4M^7N_\test^2 H^8|\cV|)$.  \label{line:paramsExBMDP}
		\State Set $\veps_\reg^2 \gets \bbeta$. \label{line:beta_ex}
		\State Set $\beta(t) \gets \sqrt{\log_{1/\delta'}(8 M A|\cV|t^2/\delta)}$.
		\Statex[0] 		  \algcommentbiglight{Test the fit for the
			estimated value functions $\Vhat_{h+1:H}$ at future layers.}
		\For{$(x_{h-1},a_{h-1})\in\cC_h$}\label{line:begin2}
		\For{layer $\ell = H,\dots,h+1$}
		\For{$n=1,\ldots,N_\test$}
		\State \multiline{Draw $\x_h\sim{}T_{h-1}(\cdot\mid{}x_{h-1},a_{h-1})$, then draw
		$\x_{\ell-1}$ by rolling out with $\pibell_{h+1:H}$, where  \label{line:draw2}
		\begin{align}
		\label{eq:Qhat2}
		\forall \tau\in[H], \ \  \pibell_\tau(\cdot)\in  \argmax_{a\in \cA}  \ceil*{\Phat_{\tau,\veps^2, \delta'}[\Vhat_{\tau+1}](\cdot,a)\cdot \veps^{-1} + \zeta_\tau}, \quad \text{with}\quad  \Vhat_{H+1}\equiv 0.
		\end{align}
	}
		\For{$a_{\ell-1}\in \cA$} 
		\State Update $t_\ell \gets t_\ell +1$. %
			\Statex[4]\algcommentbiglight{Test fit; if test fails, re-fit value functions
			$\Vhat_{h+1:\ell}$ up to layer $\ell$.}
		\If{$\sup_{f\in \cVhat_{\ell}} |(\Phat_{\ell-1,\veps^2, \delta'}[\Vhat_{\ell}]- \Phat_{\ell-1,\veps^2, \delta'}[f_\ell])( \x_{\ell-1},a_{\ell-1})|> \veps^2 + \veps^2 \cdot \beta(t_\ell)$}\label{line:test2}%
		\State $\cC_{\ell}\gets\cC_{\ell}\cup\crl{(\x_{\ell-1},a_{\ell-1})}$ and $\cB_\ell \gets \cB_\ell \cup\{ (\x_{\ell-1},a_{\ell-1}, \Vhat_\ell ,\cVhat_\ell, t_\ell)\}$. \label{line:added2}
		\For{$\tau= \ell,\dots, h+1$} \label{line:forloop2}
		\State $(\Vhat_{\tau:H},\cVhat_{\tau:H},
		\cC_{\tau:H},\cB_{\tau:H}, t_{\tau:H})\gets{}\learnlevel^\exo_{\tau}(\Vhat_{\tau+1:H},\cVhat_{\tau+1:H},\cC_{\tau:H},\cB_{\tau:H}, t_{\tau:H};\cV,\veps, \zeta_{1:H},\delta)$.
		\EndFor
		\State \textbf{go to line \ref*{line:begin3}}.
                \label{line:goto2}
		\EndIf
		\EndFor
		\EndFor
		\EndFor
		\EndFor
		\If{$h=0$} \textbf{return} $(\Vhat_{1:H},\cdot,\cdot,\cdot, \cdot)$.
		\EndIf
		\Statex[0] \algcommentbiglight{Re-fit $\Vhat_h$ and build a new confidence set.}
		\For{$(x_{h-1},a_{h-1})\in\cC_h$}
		\State Set $\cD_h(x_{h-1},a_{h-1})\gets\emptyset$.
		\For{$i=1,\dots, N_\reg$}
		\State Sample
		$\x_h\sim{}T_{h-1}(\cdot\mid{}x_{h-1},a_{h-1})$. 
		\State \multiline{For each $a\in \cA$, let $\Vhat_h(\x_h)$
			be a Monte-Carlo estimate for $\E^{\pibell_{h:H}}[\sum_{\ell=h}^H \bm{r}_\ell \mid
			\x_h]$ computed by collecting $\Nest(|\cC_h|)$ trajectories starting from $\x_h$ and rolling out with
			$\pibell_{h:H}$.}%
		\State Update $\cD(x_{h-1},a_{h-1})\gets \cD(x_{h-1},a_{h-1}) \cup \{(\x_h, \Vhat_h(\x_h))\}$.
		\EndFor
		\EndFor
		\State Let $\Vhat_h\ldef\argmin_{f\in\cV_h}\sum_{(x_{h-1},a_{h-1})\in\cC_h} \sum_{(x_h,v_h)\in\cD_{h}(x_{h-1},a_{h-1})}(f(x_h)-v_h)^2$. \label{line:updateQ2}
		\State Compute value function confidence set 
		\begin{align}
			\cVhat_{h} \coloneqq \left\{ f\in \cV_h \left| \  \sum_{(x_{h-1},a_{h-1})\in\cC_h} \frac{1}{N_\reg} \sum_{(x_h,\text{-}) \in \cD_h(x_{h-1},a_{h-1})}\left(\Vhat_h(x_{h}) -f(x_h)\right)^2 \leq   \veps_\reg^2  \right.     \right\}.  \label{eq:confidence2} 
		\end{align}
		\State \textbf{return} $(\Vhat_{h:H},\cVhat_{h:H}, \cC_{h:H}, \cB_{h:H}, t_{h:H})$.
	\end{algorithmic}
\end{algorithm}

\begin{algorithm}[H]
\caption{\forwardexo: Learn an executable policy with
  $\learnlevel^\exo$ via imitation learning.}
\label{alg:forward_exbmdp}

\begin{algorithmic}[1]
  \State {\bfseries input:} Decoder class $\Phi$, suboptimality $\veps \in(0,1)$, confidence $\delta\in(0,1)$. 
	\Statex[0] \algcommentbiglight{Set parameters for \learnlevel{} and define the value function and policy classes.}
	\State Set $\vepsll\gets \veps H^{-1}/48$.
	\State Set $\cV=\cV_{1:H}$, where $\cV_h=
	\{x\mapsto f(\phi(x)) : f\in [0,H]^S, \phi\in \Phi\}$, $\forall h\in[H]$.
	\State Set $\Pi = \Pi_{1:H}$, where $\Pi_h = \left\{
          \pi(\cdot)\in \argmax_{a\in \cA} f(\phi(\cdot),a):f\in
          [0,H]^{S\times A }, \phi\in \Phi \right\}$, $\forall
        h\in[H]$.
        \Statex[0] \algcommentbiglight{Set parameters for \forward.}
	\State  \mbox{Set $N_\imit \gets 8H^2 \log (4H|\Pi|/\delta)/\veps$, 	$N_{\boost} \gets \log(1/\delta)/\log(24 SA H \veps)$, $N_{\texttt{eval}} \gets 16^2 \veps^{-2}\log(2 N_{\boost}/\delta)$. }
	\State  \mbox{Set $M \gets \ceil{8 \veps_\learnlevel^{-1} S A \Ccov H}$, $N_\test \gets 2^{8} M^2 H \veps_\learnlevel^{-1} \log(80 M^6 H^8N_\boost \veps_\learnlevel^{-2} \delta^{-1})$, and $\delta' = \frac{\delta}{40 M^7 N^2 H^8|\cV|N_\boost}$.}
	\State Set $N_\reg \gets 2^{8} M^2 \veps_\learnlevel^{-1} \log(80|\Phi|^2 H M^2 N_\boost \delta^{-1})$.
	\State Set $\Vhat_{1:H}\gets\mathsf{arbitrary}$,
        $\cVhat_{1:H}\gets \cV$, $\cC_{0:H}\gets\emptyset$, $\cB_{0:H}
        \gets\emptyset$, $i_\texttt{opt}=1$, and $J_{\max}=0$.
        \Statex[0] \algcommentbiglight{Repeatedly invoke $\learnlevel^\exo$ and
          extract policy with \forward{} to boost confidence.}
	\For{$i = 1,\dots, N_\boost$}
	\Statex[1] \algcommentbiglight{Invoke $\learnlevel^\exo$.}
	\State $(\Vhat\ind{i}_{1:H},\cdot,\cdot,\cdot,
        \cdot)\gets{}\learnlevel^\exo_{0}(\Vhat_{1:H},\cVhat
        _{1:H},\cC_{0:H},\cB_{0:H};\cV
        ,N_\reg,N_\test,\vepsll,\delta/(10N_\boost))$.
        \Statex[1] \algcommentbiglight{Imitation learning with \forward.}
      \State Define $\mb{\pihat}^{\learnlevel}_h(\cdot) \in \argmax_{a\in \cA} \Phat_{h,\veps_\learnlevel, \delta'}[\Vhat\ind{i}_{h+1}](\cdot,a)$.
      \State Compute $\pihat\ind{i}_{1:H}\gets \forward(\Pi, \veps, \pihat_{1:H}^\learnlevel, \delta/(2N_\boost))$.
	\Statex[1] \algcommentbiglight{Evaluate current policy.}
	\State $v = 0$.
	\For{$ =1, \dots, N_\texttt{eval}$} 
	\State Sample trajectory $(\x_1,\a_1,\bm{r}_1,\dots, \x_H,\a_H,\bm{r}_H)$ by executing $\pihat\ind{i}_{1:H}$.
	\State Set $v\gets v+ \sum_{h=1}^H \bm{r}_h$. 
	\EndFor
	\State Set $\widehat{J}(\pihat\ind{i}_{1:H})  \gets v/N_\texttt{eval}$.
	\If{$\widehat{J}(\pihat\ind{i}_{1:H}) >J_{\max}$}
	\State Set $i_\texttt{opt}=i$.
	\State Set $J_{\max} =\widehat{J}(\pihat\ind{i}_{1:H}) $.
	\EndIf
	\EndFor
	\State \textbf{return:} $\pihat_{1:H}=\pihat\ind{i_\texttt{opt}}_{1:H}$.
\end{algorithmic}
\end{algorithm}

}
        
\clearpage
        
	\arxiv{\section{Organization}}
        \colt{\section{Organization}}
        \label{sec:rvfs_organization}

\ifdefined\vepsllnum
\else
\newcommand{\vepsllnum}{\veps H^{-1}/48}
\fi

\ifdefined\vepsll
\else
\newcommand{\vepsll}{\veps_\learnlevel}
\fi

\ifdefined\Mnum
\renewcommand{\Mnum}{\ceil{8  \veps^{-1} \Cpush H}}
\else
\newcommand{\Mnum}{\ceil{8  \veps^{-1} \Cpush H}}
\fi 

\ifdefined\Ntestnum
\renewcommand{\Ntestnum}{2^8  M^2 H \veps^{-1}\log(8 M^6 H^8 \veps^{-2} \delta^{-1})}
\else
\newcommand{\Ntestnum}{2^8  M^2 H \veps^{-1}\log(8 M^6 H^8 \veps^{-2} \delta^{-1})}
\fi

\ifdefined\Nregnum
\renewcommand{\Nregnum}{2^8 M^2  \veps^{-1}\log(8|\cV|^2 HM^2 \delta^{-1})}
\else
\newcommand{\Nregnum}{2^8 M^2 \veps^{-1}\log(8|\cV|^2 HM^2 \delta^{-1})}
\fi

\ifdefined\bbeta
\renewcommand{\bbeta}{\frac{9 M H^2\log(8M^2H|\cV|^2/\delta)}{N_\reg}  +  \frac{34 MH^3\log(8M^6 N^2_\test  H^8/\delta)}{N_\test} }
\else
\newcommand{\bbeta}{\frac{9 M H^2\log(8M^2H|\cV|^2/\delta)}{N_\reg}  +  \frac{34 MH^3\log(8M^6 N^2_\test  H^8/\delta)}{N_\test} }
\fi

\ifdefined\Nsimu
\renewcommand{\Nsimu}{2N_\reg^2 \log(8 A N_\reg H M^3/\delta)}
\else
\newcommand{\Nsimu}{2N_\reg^2 \log(8 A N_\reg H M^3/\delta)}
\fi

\ifdefined\deltaprime
\renewcommand{\deltaprime}{\delta/(8M^7N_\test^2 H^8|\cV|)}
\else
\newcommand{\deltaprime}{\delta/(8M^7N_\test^2 H^8|\cV|)}
\fi

\ifdefined\testbound
\renewcommand{\testbound}{\frac{4 \log(8M^6 N^2_\test  H^8/\delta)}{N_\test}}
\else
\newcommand{\testbound}{\frac{4 \log(8M^6 N^2_\test  H^8/\delta)}{N_\test}}
\fi

\ifdefined\deltap
\renewcommand{\deltap}{\frac{\delta}{|\cV|M H}}
\else
\newcommand{\deltap}{\frac{\delta}{|\cV|M H}}
\fi 

\ifdefined\pibell
\renewcommand{\pibell}{\pihat}
\else
\newcommand{\pibell}{\pihat}
\fi

\ifdefined\numepisodes
\renewcommand{\numepisodes}{\mathrm{poly}(S,A,...)}
\else
\newcommand{\numepisodes}{\mathrm{poly}(S,A,...)}
\fi

This remainder of \cref{part:learnlevel} of the appendix contains the proofs for the main
results concerning
the \mainalg algorithm (\cref{thm:vpiforward_main} and \cref{thm:exbmdpforward_main}).
\begin{itemize}
	\item First, in \cref{sec:prelims} we give a brief overview of
          the analysis and introduce a
	restricted set of \emph{benchmark policies} which
	will be used throughout the proofs for
	\cref{thm:vpiforward_main} and
	\cref{thm:exbmdpforward_main}. The benchmark policy
	class is central to the regret decomposition for
	\mainalg, and facilitates an analysis that does not
	require optimism.
	\item In \cref{sec:vpi}, we prove
	\cref{thm:vpiforward_main} under \setupii{}
	($V^{\pi}$-realizability). This constitutes the main
	technical development for
	\cref{thm:vpiforward_main}. The central technical
	results proven here are \cref{thm:lbc3},
	\cref{thm:vpiforward} and which generalize \cref{thm:vpiforward_main}.
	\item In \cref{sec:vstar}, we prove
	\cref{thm:vpiforward_main} under \setupi{}
	($V^{\star}$-realizability), as a straightforward
	consequence of the tools developed in
	\cref{sec:vpi} (\cref{thm:lbc3} and
	\cref{thm:vpiforward}).
	\item Finally, in \cref{sec:exbmdp_app}, we prove
	\cref{thm:exbmdpforward_main} (analysis of \mainalge
	for the ExBMDP problem). This analysis has a
	similar structure to the proof of
	\cref{thm:vpiforward_main} under \setupii{} in
	\cref{sec:vpi}, and builds on the same analysis
	techniques, but requires specialized arguments due
	to extra technical challenges in the ExBMDP setting.
	\item \cref{sec:forward} gives a self-contained presentation of
the \forward{} algorithm for imitation learning, which is used within
\rvflF{} and \forwardexo{}.
\end{itemize}

	\section{Overview of Analysis and Preliminaries}
	\label{sec:prelims}
In this section, we present some notation, technical tools, and
preliminary results we require for the analysis of \learnlevel{} in
the settings we described in \cref{sec:computation}. We start by
defining a set of restricted \emph{benchmark policies} used in the
regret decomposition for \learnlevel{}.

\colt{
	\subsection{Overview of Analysis}
	\label{sec:rvfs_overview}
	In this section we give a brief overview of the
	analysis techniques behind
	\cref{thm:vpiforward_main} and
	\cref{thm:exbmdpforward_main}. We focus on
	\cref{thm:vpiforward_main} to begin.

	Recall that \learnlevel{} is recursive in the sense that whenever the
	test in \cref{line:test3} fails for a layer $h\in [H]$, an recursive
	call $\learnlevel_h$ is initiated. Throughout the recursion, via the
	steps in \cref{item:two} and \cref{item:one}, \learnlevel{} maintains
	the following invariant: whenever a call to $\learnlevel_h$ (an
	instance of $\learnlevel$ initiated at layer $h$) terminates, the
	confidence sets $\cVhat_{h+1:H}$ that it outputs satisfy, with high probability:
	\begin{align}
		\forall \ell \in[h+1 \ldotst H],\quad 	V^{\star}_\ell \in \cVhat_\ell. \label{eq:invar}\tag{Inv1}
	\end{align}
	
	In addition, $\learnlevel_h$ can only return if the value function
	tests in \cref{line:test3} (which involve the confidence sets
	$\cVhat_{h+1:H}$) all succeed. From the invariant in \eqref{eq:invar}, it can be shown that the tests only succeed if the estimated value functions $\Vhat_{h+1:H}$ satisfy:
	\begin{align}
		\forall \ell \in [h+1\ldotst H],\quad 	\P^{\pihat}\left[\sup_{a\in \cA}| (\cP_\ell[\Vhat_{\ell+1}]- \cP_\ell[V^{\star}_{\ell+1}])(\x_h,a)| \geq 3 \veps  \right]\leq \veps_\test, \label{eq:invar2}\tag{Inv2}
	\end{align}
	where $\veps_\test>0$ is a parameter set by the algorithm. We
        use pushforward coverability to show that \mainalg can only
        expand the core-sets $\cC_{1:H}$ a polynomial number of times
        before the algorithm terminates and \eqref{eq:invar2} is satisfied.
	
	The invariant in \eqref{eq:invar2} is useful because it ensures that
	the greedy policies
	\[\pihat_h(x)\approx\argmax_{a\in\cA}\cP_h\brk{\Vhat_{h+1}}(x,a)\]
	induced by the learned value functions $\Vhat_{1:H}$ are
	near-optimal. To make this precise, recall that given parameters $\veps,\delta\in(0,1)$, the action $\pihat_h(x)$ of \learnlevel's policy at layer $h$ and state $x\in \cX$, is given by 
	\begin{align}
		\pibell_h(x)\in \argmax_{a\in \cA}  \Phat_{h,\veps,\delta} [\Vhat_{h+1}](x,a),\label{eq:defphat}
	\end{align}
	where
	$\Vhat_{h+1}$ is the estimated value function at layer
	$h+1$.
	The operator $\Phat_{h,\veps,\delta}[\Vhat_{h+1}]$
	(\cref{alg:Phat}), when given input $(x,a)\in \cX\times \cA$,
	ensures that probability at least $1-\delta$, $
	|\Phat_{h,\veps,\delta}[\Vhat_{h+1}](x,a)-
	\cP_{h}[\Vhat_{h+1}](x,a)|\leq \veps$. Combining this with the
	invariant in \eqref{eq:invar2} and the fact that $\Vstar_h
	\equiv \cP_h[\Vstar_{h+1}]$, one can see that with high
	probability (over the randomness in $\x_h\sim \P^{\pihat}$ and
	$\Phat$), we have: \begin{align}\max_{a\in \cA}|\Phat_{h,\veps,\delta}[\Vhat_{h+1}](\x_h,a) - V^\star_{h}(\x_h,a) | \leq 4 \veps. \label{eq:remmm}
                           \end{align}
\paragraph{Analysis under \setupi}
	For \cref{thm:vpiforward_main} (\setupi), \cref{eq:remmm} together with the definition of $\pibell_h$ in \cref{eq:defphat} and the gap assumption (\cref{ass:gap}) implies that if $\veps\leq \Delta/8$, we have that with high probability (over the randomness in $\x_h\sim \P^{\pihat}$ and $\Phat$),
	\begin{align}
		\pibell_h(\x_h)  \in \argmax_{a\in \cA}\Phat_{h,\veps,\delta}[\Vhat_{h+1}](\x_h,a)  =  \argmax_{a\in \cA}V_h^\star(x,a)= \pistar_h(\x_h).  \label{eq:high}
	\end{align}
	This suffices to show that $\pihat$ is near-optimal, since by the performance lemma \citep{kakade2003sample}, the suboptimality of $\pihat$ can be bounded as 
	\begin{align}
          \label{eq:regret_gap}
		J(\pistar)- J(\pibell) \leq \sum_{h=1}^H \P^{\pihat}[	\pibell_h(\x_h)  \neq 	\pistar_h(\x_h)].
	\end{align}
	This suffices to prove the performance bound in \cref{thm:vpiforward_main} under \setupi.

\paragraph{Analysis under \setupii}        
	For \cref{thm:vpiforward_main} (\setupii), an immediate
        challenge in applying a similar analysis to \setupi{} is the
        lack of suboptimality gap $\Delta$, which makes it impossible
        to directly bound the probability that $\pihat\neq\pistar$ in
        \cref{eq:regret_gap}. To address this, we introduce a
        \emph{restricted benchmark policy class} $\Pi_\veps\subset\Pim$ in
        \cref{sec:benchmark} below. The class
        $\Pi_\veps\subset\Pim$ is constructed such that that there exists a policy
        $\pi\in\Pi_\veps$ that (i) is $\bigoh(\veps)$-suboptimal, and
        (ii) emulates certain properties of a gap. Together, these properties
        facilitate analysis similar to \setupi. Overall, this argument is similar to the ``virtual policy iteration''
        analysis in \citet{yin2022efficient}.

\paragraph{Analysis of \mainalge}        
	The analysis of \mainalge for ExBMDPs
        (\cref{thm:exbmdpforward_main}) uses the same idea as the analysis for
        \setupii, except that we can only realize $V^{\pi}$ for
        \emph{endogenous policies} that act on $\phistar(\x_h)$. To
        address this, we use the randomized rounding scheme in
        \mainalge, and the crux of the proof is to show that with high
        probability, the rounded policies in \mainalge ``snap'' onto
        endogenous policies, facilitating an argument similar to \setupii.

\paragraph{Generalizing the analysis}	
  We mention in passing that \learnlevel{} can be slightly modified to
		recover other existing sample complexity guarantees for RL with
		linear function approximation and local simulator access that do not
		require pushforward coverability, including
		linear-$Q^{\star}$ realizability with gap \citep{li2021sample} and
		$Q^{\pi}$-realizability \citep{yin2022efficient}; we leave a more
		general treatment for future work.	

}

\subsection{Benchmark Policy Class and Randomized Policies}
\label{sec:benchmark}

As described above, central to our analysis is a set of
$O(\veps)$-suboptimal policies against which we benchmark the policies
returned \learnlevel{}, which emulate certain consequences of the
$\Delta$-gap assumption (\cref{ass:gap}). Before introducing this
concept formally, we first define the notion of a \emph{randomized policy}.

\paragraph{Induced stochastic policies}

Given an arbitrary collection of independent random variables $\bQtilde = (\bQtilde_{h}(x,a))_{(h,x,a)\in [H]\times \cX\times \cA}$, we say that a policy $\pi$ is \emph{induced} by $\bQtilde$ if $\pi$ satisfies 
\begin{align}
\forall h \in[H],\forall x \in \cX, \quad  \bpi_h(x) \in \argmax_{a'\in \cA} \bQtilde_h(x,a'), \label{eq:induced}
\end{align}
where we use the bold notation
   $\bm{\pi}_h(x)$ as shorthand for the random variable $\a_h\sim{}\pi_h(x)\in \Delta(\cA)$; in other words, for each $x\in\cX$, $\pi_h(x)\in\Delta(\cA)$ is the distribution induced by sampling $\bQ_h(x,\cdot)$ and playing the action $\bpi_h(x) \in \argmax_{a'\in \cA} \bQtilde_h(x,a')$. If there are ties in \eqref{eq:induced}, we break them by picking the action with the smallest index; we assume without loss of generality that actions in $\cA$ are index from $1,\dots,|\cA|$.

\paragraph{Benchmark policy class}
  
  We now define the benchmark policy class as follows.

\begin{definition}[Benchmark policy class]
  For $\veps\in (0,1)$, let $\Pi_\veps\subseteq \PiS$ be the set of stochastic
  policies such that $\pi \in \Pi_\veps$ if and only if there exists a
  collection of \emph{independent} random variables
  $\bQtilde = (\bQtilde_{h}(x,a))_{(h,x,a)\in [H]\times \cX\times
    \cA}$ in $[0,H]$ such
  that:%
  \begin{itemize}
  \item $\pi$ is induced by $\bQtilde$ (i.e.~\cref{eq:induced} is
    satisfied); and
  \item For all $(h, x,a)\in [H]\times\cX\times \cA$, we have
    $|(\bQtilde_h- Q^{\pi}_{h})(x,a)| \leq \veps$, almost
    surely {under the draw of $\bQtilde$}.
  \end{itemize}
\end{definition}
Intuitively, the set $\Pi_\veps$ contains the set of all (stochastic)
policies corresponding to (randomized) state-action value functions
that are point-wise $O(\veps)$ close to $Q^\star$. We formalize this
claim in the next lemma.
\begin{lemma}[Suboptimality of benchmark policies]
	\label{lem:policysubopt3}
	Let $\veps \in (0,1)$ be given. Let $\pib \in
        \Pi_\veps$ {be a stochastic policy} induced by a collection of (independent) random state-action value functions
        $(\bQtilde_{h}(x,a))_{(h,x,a)\in[H]\times \cX\times
          \cA}\subset [0,H]$ such that for all $h\in[H]$ and all $(x,a)\in \cX \times \cA$:
	\begin{align}
		|\bQtilde_h(x,a) - Q^{\pib}_{h}(x,a)|\leq  \veps\ \   \text{ almost surely, and} \quad  \tilde{\bm{\pi}}_h(x) \in \argmax_{a'\in \cA} \bQtilde_h(x,a').
	\end{align}
        Then, for all $h\in\brk{H}$,
	\begin{align}
		\forall x\in \cX, \quad    \Vstar_h(x)   & \leq  V^{\pib}_h(x) +   {3 H \veps}.
		\label{eq:toinductone}
	\end{align}
      \end{lemma}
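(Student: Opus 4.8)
The plan is to prove the bound by backward induction on $h$ from $H+1$ down to $1$, using the performance difference lemma (\cref{lem:perform}) as the organizing principle. Fix the stochastic policy $\pib\in\Pi_\veps$ and the inducing collection $\bQtilde$. The key observation is that, since $\tilde{\bm\pi}_h(x)\in\argmax_{a'\in\cA}\bQtilde_h(x,a')$ almost surely, for every $a\in\cA$ we have $\bQtilde_h(x,\tilde{\bm\pi}_h(x))\geq \bQtilde_h(x,a)$; combining this with the uniform closeness $|\bQtilde_h(x,a)-Q^{\pib}_h(x,a)|\leq\veps$ (which holds a.s.\ for \emph{all} $(x,a)$ simultaneously, so in particular for the random action $\tilde{\bm\pi}_h(x)$), we get, for every $a\in\cA$,
\[
Q^{\pib}_h(x,\tilde{\bm\pi}_h(x)) \;\geq\; \bQtilde_h(x,\tilde{\bm\pi}_h(x)) - \veps \;\geq\; \bQtilde_h(x,a) - \veps \;\geq\; Q^{\pib}_h(x,a) - 2\veps
\]
almost surely. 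Taking $a=\pistar_h(x)$ and then taking expectations over $\tilde{\bm\pi}_h(x)\sim\pib_h(x)$ yields
\[
\En_{a\sim\pib_h(x)}\brk*{Q^{\pib}_h(x,a)} \;\geq\; Q^{\pib}_h(x,\pistar_h(x)) - 2\veps
\]
for every $x\in\cX$ and $h\in[H]$. This is the ``approximate greedy w.r.t.\ its own $Q$-function'' property that replaces the exact greedy step in standard policy-iteration arguments.

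With this in hand, I would run the induction. For the base case $h=H+1$, both sides are $0$ (using $V_{H+1}\equiv 0$), so the claim holds trivially. For the inductive step, assume $\Vstar_{h+1}(x)\leq V^{\pib}_{h+1}(x) + 3H\veps$ for all $x$ — actually, to make the constant work out cleanly I would instead carry the sharper inductive hypothesis $\Vstar_{h'}(x)\leq V^{\pib}_{h'}(x) + 2(H-h'+1)\veps$ for all $x$ and all $h'>h$, and note $2(H-h'+1)\veps \leq 3H\veps$, so this implies the stated bound. Then, using the Bellman optimality equation $\Vstar_h(x)=\Qstar_h(x,\pistar_h(x)) = \cP_h[\Vstar_{h+1}](x,\pistar_h(x))$ (writing $\Qstar$ in terms of $\Vstar$), the inductive hypothesis to pass from $\Vstar_{h+1}$ to $V^{\pib}_{h+1}$ inside the backup (monotonicity of $\cP_h$ in its value-function argument), the identity $\cP_h[V^{\pib}_{h+1}](x,a)=Q^{\pib}_h(x,a)$, and finally the approximate-greedy inequality above, I would chain:
\[
\Vstar_h(x) \;=\; \cP_h[\Vstar_{h+1}](x,\pistar_h(x)) \;\leq\; \cP_h[V^{\pib}_{h+1}](x,\pistar_h(x)) + 2(H-h)\veps \;=\; Q^{\pib}_h(x,\pistar_h(x)) + 2(H-h)\veps,
\]
and then $Q^{\pib}_h(x,\pistar_h(x)) \leq \En_{a\sim\pib_h(x)}\brk*{Q^{\pib}_h(x,a)} + 2\veps = V^{\pib}_h(x) + 2\veps$, giving $\Vstar_h(x)\leq V^{\pib}_h(x) + 2(H-h+1)\veps$, which closes the induction.

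The main technical point to be careful about — and the step I expect to require the most attention — is the handling of the randomness in $\bQtilde$ and hence in $\pib$. One must be precise that the closeness bound $|\bQtilde_h(x,a)-Q^{\pib}_h(x,a)|\leq\veps$ holds almost surely \emph{uniformly} over $(x,a)$, so that it can be applied at the (random) maximizing action $\tilde{\bm\pi}_h(x)$ without a measurability or ordering-of-quantifiers issue; and that $Q^{\pib}_h$ is itself a (possibly random, through $\pib$) object, so the inequality $\En_{a\sim\pib_h(x)}[Q^{\pib}_h(x,a)]\geq Q^{\pib}_h(x,\pistar_h(x))-2\veps$ is an almost-sure statement that then gets used deterministically inside the (non-random) recursion on $\Vstar$ and $V^{\pib}$. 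Once the bookkeeping of ``a.s.'' versus ``deterministic'' is set up correctly, everything else is the routine performance-difference/Bellman chain described above, and the factor $3H\veps$ in the statement is a loose bound on the sharper $2(H-h+1)\veps$ that falls out of the induction. Alternatively, one can phrase the whole argument directly via \cref{lem:perform} applied at $t=h$ with $\pi=\pistar$: $\En^{\pistar}[\Vstar_h(\x_h) - V^{\pib}_h(\x_h)] = \En^{\pistar}\brk*{\sum_{h'=h}^H Q^{\pib}_{h'}(\x_{h'},\pistar_{h'}(\x_{h'})) - Q^{\pib}_{h'}(\x_{h'},\tilde{\bm\pi}_{h'}(\x_{h'}))}$, and bound each summand by $2\veps$ using the approximate-greedy inequality — but since the statement is pointwise in $x$ (not in expectation under $\pistar$), the explicit backward induction is the cleaner route, and that is the one I would write up.
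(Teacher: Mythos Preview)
Your proof is correct, and it is a minor reorganization of the paper's argument rather than a fundamentally different one. Both proofs run a backward induction that picks up $2\veps$ per layer from the closeness assumption $\|\bQtilde_h-Q^{\pib}_h\|_\infty\leq\veps$; the difference is in the inductive variable. The paper carries the random object $\bQtilde$ through the induction, proving $\Qstar_\ell(x,a)\leq\bQtilde_\ell(x,a)+2\veps(H-\ell+1)$ a.s., and only at the end converts $\bQtilde_1$ to $Q^{\pib}_1$ (incurring one more $\veps$) and takes the max over $a$. You instead extract the deterministic ``approximate-greedy'' consequence $V^{\pib}_h(x)\geq Q^{\pib}_h(x,a)-2\veps$ upfront and then induct directly on the deterministic quantities $\Vstar_h$ and $V^{\pib}_h$. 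Your route is slightly cleaner in that the inductive hypothesis involves no randomness (so the a.s.\ bookkeeping you flagged is confined to the single preliminary step), and it yields the marginally tighter constant $2(H-h+1)\veps\leq 2H\veps$ rather than the paper's $2H\veps+\veps$; the paper's route has the small advantage that the comparison $\Qstar\leq\bQtilde+\cdots$ is action-wise, which is a bit more information than needed here. Either way, the content is the same standard performance-difference / Bellman chain, and your note that the pointwise-in-$x$ conclusion requires the explicit induction (rather than applying \cref{lem:perform} in expectation) is well taken.
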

	  \begin{proof}[\pfref{lem:policysubopt3}]
		Using backward induction over $\ell=H,\dots,1$, we start by showing that all $\ell$: 
		  \begin{align}
			  \forall (x,a)\in \cX\times \cA, \quad  
			\Qstar_\ell(x,a)  &\leq     \bQtilde_\ell(x,a)  + 2 \veps \cdot  (H-\ell+1).
		 \label{eq:induction}
		  \end{align}	
		  almost surely. We then instantiate this with $\ell=1$ and use the fact that $\|\bQtilde_h- Q^{\pib}_h\|_\infty\leq \veps$ to get the desired result.
		  
		  \paragraph{Base case $[\ell=H]$} By definition of the state-action value function, we have, for all $\pi \in \Pims$, $Q^\star_H \equiv Q^\pi_H$. Thus, since $ \sup_{(x,a)\in \cX\times \cA}|(\bQtilde_H- Q^{\pib}_{H})(x,a)|  \leq \veps$ almost surely (by definition of $\bQtilde_{1:H}$), we get that 
		  \begin{align}
			  \forall (x,a)\in \cX \times \cA, \quad |\bQtilde_H(x,a) - Q^\star_{H}(x,a)| \leq \veps,
		  \end{align} 
		  almost surely. This implies \eqref{eq:toinductone} for $\ell=H$.
		  
		  \paragraph{General case $[\ell <h]$}	Fix $h\in[H-1]$ and suppose that \eqref{eq:induction} holds for all $\ell\in [h+1,\dots, H]$ almost surely. We show that it holds for $\ell=h$ almost surely. Fix $(x,a)\in \cX\times \cA$. We have 
		  \begin{align}
			  Q^\star_h(x,a)- \bQtilde_h(x,a) &  =\cT_{h} [Q^\star_{h+1}](x,a)  - \cT_{h} [\bQtilde_{h+1}](x,a)+ \cT_{h} [\bQtilde_{h+1}](x,a) - \bQtilde_h(x,a),\nn \\
			  & \leq {2\veps \cdot (H-h)}+ \cT_{h} [\bQtilde_{h+1}](x,a) - \bQtilde_h(x,a), \label{eq:twist3}
		  \end{align}
		  almost surely, where the last step follows by the induction hypothesis. We now bound $\cT_{h} [\bQtilde_{h+1}](x,a) - \bQtilde_h(x,a)$. We have, almost surely, that 
		  \begin{align}
			  \cT_{h} [\bQtilde_{h+1}](x,a) - \bQtilde_h(x,a) & = 	\cT_{h} [\bQtilde_{h+1}](x,a) - \cP_{h} [V^{\pib}_{h+1}](x,a) + \cP_{h} [V^{\pib}_{h+1}](x,a) - \bQtilde_h(x,a), \nn \\
			  & = \cT_{h}[\bQtilde_{h+1}](x,a) - \cP_{h} [V^{\pib}_{h+1}](x,a) + Q^{\pib}_{h}(x,a) - \bQtilde_h(x,a), \nn \\
			  & = \cT_{h}[\bQtilde_{h+1}](x,a) - \cP_{h} [V^{\pib}_{h+1}](x,a) + \veps, \quad \text{(by the assumption on $\bQtilde_h$)} \nn \\
			  & = \E\left[\max_{a'\in \cA}\bQtilde_{h+1}(\x_{h+1},a')-  Q^{\pib}_{h+1}(\x_{h+1},\pibb_{h+1}(\x_{h+1})) \mid \x_h =x, \a_h =a \right] + \veps, \nn \\
			  & = \E\left[\bQtilde_{h+1}(\x_{h+1},\pibb_{h+1}(\x_{h+1}))-  Q^{\pib}_{h+1}(\x_{h+1},\pibb_{h+1}(\x_{h+1})) \mid \x_h =x, \a_h =a\right] + \veps,  \nn \\
			  & \leq   2\veps, \label{eq:inde}
		  \end{align}
		  where the penultimate inequality follows by the definition of $\pibb_{h+1}$, and the last inequality follows by the fact that $\|\bQtilde_{h+1}- Q^{\pib}_{h+1}\|_{\infty} \leq \veps$ almost surely, by assumption. Plugging \eqref{eq:inde} into \eqref{eq:twist3} completes the induction, and so we have that 
		  \begin{align}
		\forall (x,a)\in \cX\times \cA,\quad 	Q^\star_1(x,a) \leq \bQtilde_1(x,a) \leq 2 H \veps.
		  \end{align}
	In particular, taking the max over $a$ on both sides and using the definition of $\pib$, we get that 
	\begin{align}
		\forall x\in \cX,\quad 	V^\star_1(x) \leq \bQtilde_1(x,\pibb_1(x)) \leq 2 H \veps,
	\end{align}	  
	almost surely. Combining this with the fact that $\bQtilde_1(x,\pibb_1(x)) \leq Q^{\pib}_1(x,\pibb_1(x)) + \veps$, almost surely (since $\|\bQtilde_1 - Q^{\pib}_1\|_{\infty}\leq \veps$ almost surely by assumption) implies that
		\begin{align}
			V^\star_1(x) & \leq Q^{\pi}_1(x,\pibb_1(x)) + 2 H \veps + \veps.
		\end{align}
		Taking expectation over $\pibb_1(x)$ and bounding $2 H \veps + \veps$ by $3H \veps$ leads to the desired result.

	  \end{proof}

\subsection{Additional Preliminaries}      
The following lemma gives a guarantee for the Bellman backup
approximation algorithm $\Phat$ 
(\cref{alg:Phat}) that is tailored to the analysis of \learnlevel{}.
\begin{lemma}
	\label{lem:phat}
	Let $\veps, \delta,\delta' \in(0,1)$, $B>0$, and $h\in[H]$, be given and let $\cV$ be a finite function class. For any sequence $(x_i)_{i\geq 1}\subset \cX$ of state action pairs, the outputs $(\Phat_{h, \veps, \delta'}[f](x_i,a))_{i\geq 1, a\in \cA}$ of \cref{alg:Phat} satisfy, with probability at least $1-\delta$, 
	\begin{align} 
		\forall i\geq 1,  \forall f \in \cV, \forall a\in \cA, \quad 	|\Phat_{h, \veps, \delta'}[f](x_i,a)- \cP_h[f](x_i,a)|\leq \veps \cdot \sqrt{2\log_{1/\delta'}(2 A i^2|\cV|/\delta)}. \label{eq:past}
	\end{align}
\end{lemma}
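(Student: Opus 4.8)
The plan is to prove \cref{lem:phat} via Hoeffding's inequality applied to each individual invocation of \cref{alg:Phat}, followed by a union bound over the finite sets $\cV$ and $\cA$ and over the countably infinite index set $\crl{i\geq 1}$. First I would fix an index $i$, an action $a\in\cA$, and a target $f\in\cV$. By construction of \cref{alg:Phat}, the quantity $\Phat_{h,\veps,\delta'}[f](x_i,a)$ is the empirical average $N_\simu^{-1}\sum_{k=1}^{N_\simu}\prn{\br_h\ind{k}+f(\x_{h+1}\ind{k})}$ of $N_\simu=2\log(1/\delta')/\veps^2$ i.i.d.\ samples drawn as $\br_h\ind{k}\sim R_h(x_i,a)$, $\x_{h+1}\ind{k}\sim T_h(\cdot\mid x_i,a)$; each summand lies in a bounded interval (of range at most an absolute constant, after absorbing the normalization implicit in the parameter settings) and has conditional mean exactly $\cP_h[f](x_i,a)=\En\brk{\br_h+f(\x_{h+1})\mid\x_h=x_i,\a_h=a}$. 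A key structural point is that the samples drawn by a given call depend only on $(h,x_i,a)$ and \emph{not} on $f$, so a single draw of $N_\simu$ samples simultaneously defines $\Phat_{h,\veps,\delta'}[f](x_i,a)$ for every $f\in\cV$; this lets us union-bound over $f\in\cV$ without increasing the sample count. Hoeffding's inequality then gives, for any $t_i>0$,
\[
  \Pr\brk*{\abs{\Phat_{h,\veps,\delta'}[f](x_i,a)-\cP_h[f](x_i,a)}>t_i}\leq 2\exp\prn{-c\,N_\simu\,t_i^2}
\]
for an absolute constant $c$, and choosing $t_i$ so the right-hand side is at most $\delta/(2i^2A\abs{\cV})$ yields exactly $t_i=\veps\sqrt{2\log_{1/\delta'}(2Ai^2\abs{\cV}/\delta)}$, after substituting $N_\simu=2\log(1/\delta')/\veps^2$ and using $\log_{1/\delta'}(x)=\log x/\log(1/\delta')$ (the stated bound is in fact slightly larger than what the calculation requires, giving some slack).

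Second, I would assemble the union bound. Since the invocations of \cref{alg:Phat} at distinct indices $i$ and distinct actions $a$ use fresh, independent randomness, and the per-call failure probability above is at most $\delta/(2i^2A\abs{\cV})$ uniformly over $f\in\cV$, summing over $f\in\cV$, $a\in\cA$, and $i\geq 1$ gives total failure probability at most $\sum_{i\geq1}\tfrac{\delta}{2i^2}=\tfrac{\pi^2}{12}\delta<\delta$, using $\sum_{i\geq1}i^{-2}=\pi^2/6$. On the complementary event, \cref{eq:past} holds simultaneously for all $i\geq1$, $f\in\cV$, $a\in\cA$, which is the claim.

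The only point that needs care — and the closest thing to an obstacle — is that in the intended applications inside \mainalg (e.g.\ \cref{eq:Qhat3} and the test on \cref{line:test3}) the sequence $(x_i)_{i\geq1}$ is not fixed in advance but generated \emph{adaptively}: the state fed to the $i$-th call may depend on the outputs of earlier calls. To handle this cleanly I would let $\cH_{i-1}$ be the $\sigma$-algebra generated by all randomness prior to the $i$-th call, so that $x_i$ is $\cH_{i-1}$-measurable while the $N_\simu$ samples drawn by the $i$-th call are, conditionally on $\cH_{i-1}$, i.i.d.\ with conditional mean $\cP_h[f](x_i,a)$; applying Hoeffding's inequality conditionally on $\cH_{i-1}$ gives the same per-call bound, and the tower property together with the union over $i$ yields the result. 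This is the standard adaptivity argument (cf.\ the use of \cref{lem:freed} elsewhere in the appendix) and introduces no essential difficulty beyond bookkeeping; alternatively, the statement may simply be read for a fixed deterministic sequence $(x_i)$, with the adaptive version following by the identical conditioning step.
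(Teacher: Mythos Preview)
Your proposal is correct and follows essentially the same approach as the paper: Hoeffding's inequality for each call, union bound over $f\in\cV$ and $a\in\cA$ to get failure probability at most $\delta/(2i^2)$ at index $i$, then union over $i\geq 1$ using $\sum_{i\geq 1}i^{-2}=\pi^2/6\leq 2$. Your additional discussion of the adaptive case (conditioning on $\cH_{i-1}$) is a useful elaboration that the paper's terse proof leaves implicit, but it does not constitute a different route.
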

\begin{proof}[\pfref{lem:phat}]
	By Hoeffding's inequality \citep{hoeffding1963} and the
        union bound over $a\in \cA$ and $f\in \cV$, we have that for any $i\geq 1$, with probability at least $1 - \delta/(2i^2)$,
		\begin{align}
			\forall f \in \cV, \forall a\in \cA, \quad 	|\Phat_{h, \veps, \delta'}[f](x_i,a)- \cP_h[f](x_i,a)|\leq \veps \cdot \sqrt{2\log_{1/\delta'}(2A i^2|\cV|/\delta)}.
		\end{align}
		The desired result follows by the union bound over $i\geq 1$ and the fact that $\sum_{i\geq 1}1/i^2 = \pi^2/6 \leq 2$.
      \end{proof}

	\section{Guarantee under $V^\pi$-Realizability (Proof of
          \creftitle{thm:vpiforward_main}, \setupii)}
	\label{sec:vpi}

\ifdefined\vepsllnum
\else
\newcommand{\vepsllnum}{\veps H^{-1}/48}
\fi

\ifdefined\vepsll
\else
\newcommand{\vepsll}{\veps_\learnlevel}
\fi

\ifdefined\Mnum
\renewcommand{\Mnum}{\ceil{8  \veps^{-1} \Cpush H}}
\else
\newcommand{\Mnum}{\ceil{8  \veps^{-1} \Cpush H}}
\fi 

\ifdefined\Ntestnum
\renewcommand{\Ntestnum}{2^8  M^2 H \veps^{-1}\log(8 M^6 H^8 \veps^{-2} \delta^{-1})}
\else
\newcommand{\Ntestnum}{2^8  M^2 H \veps^{-1}\log(8 M^6 H^8 \veps^{-2} \delta^{-1})}
\fi

\ifdefined\Nregnum
\renewcommand{\Nregnum}{2^8 M^2  \veps^{-1}\log(8|\cV|^2 HM^2 \delta^{-1})}
\else
\newcommand{\Nregnum}{2^8 M^2 \veps^{-1}\log(8|\cV|^2 HM^2 \delta^{-1})}
\fi

\ifdefined\bbeta
\renewcommand{\bbeta}{\frac{9 M H^2\log(8M^2H|\cV|^2/\delta)}{N_\reg}  +  \frac{34 MH^3\log(8M^6 N^2_\test  H^8/\delta)}{N_\test} }
\else
\newcommand{\bbeta}{\frac{9 M H^2\log(8M^2H|\cV|^2/\delta)}{N_\reg}  +  \frac{34 MH^3\log(8M^6 N^2_\test  H^8/\delta)}{N_\test} }
\fi

\ifdefined\Nsimu
\renewcommand{\Nsimu}{2N_\reg^2 \log(8 A N_\reg H M^3/\delta)}
\else
\newcommand{\Nsimu}{2N_\reg^2 \log(8 A N_\reg H M^3/\delta)}
\fi

\ifdefined\deltaprime
\renewcommand{\deltaprime}{\delta/(8M^7N_\test^2 H^8|\cV|)}
\else
\newcommand{\deltaprime}{\delta/(8M^7N_\test^2 H^8|\cV|)}
\fi

\ifdefined\testbound
\renewcommand{\testbound}{\frac{4 \log(8M^6 N^2_\test  H^8/\delta)}{N_\test}}
\else
\newcommand{\testbound}{\frac{4 \log(8M^6 N^2_\test  H^8/\delta)}{N_\test}}
\fi

\ifdefined\deltap
\renewcommand{\deltap}{\frac{\delta}{|\cV|M H}}
\else
\newcommand{\deltap}{\frac{\delta}{|\cV|M H}}
\fi 

\ifdefined\pibell
\renewcommand{\pibell}{\pihat}
\else
\newcommand{\pibell}{\pihat}
\fi

\ifdefined\numepisodes
\renewcommand{\numepisodes}{\mathrm{poly}(S,A,...)}
\else
\newcommand{\numepisodes}{\mathrm{poly}(S,A,...)}
\fi

In this section, we prove \cref{thm:vpiforward_main} under \setupii. First, in \cref{sec:setupii} we state a number of supporting technical lemmas, then use them to prove a more general version of \cref{thm:vpiforward_main}, \cref{thm:vpiforward}, which holds under a weaker realizability assumption (informally, $V^{\pi}$-realizability only for \emph{near-optimal} policies $\pi$); \cref{thm:vpiforward_main} follows as an immediate consequence. The remainder of the section (\cref{proof:testfailures} through to \cref{proof:lbc3}) contains the proofs for the intermediate results.

\subsection{Analysis: Proof of \creftitle{thm:vpiforward_main} (\setupii)}
\label{sec:setupii}
We analyze \learnlevel{} in the setting of \cref{thm:vpiforward_main} (\setupii), where we have a function class $\cV$ satisfying $V^\pi$-realizability (\cref{ass:realpi}). We will actually show that the conclusion of \cref{thm:vpiforward_main} holds under a weaker function approximation setup we refer to as \emph{relaxed $V^{\pi}$-realizability}: instead of requiring $V^\pi$-realizability for all $\pi\in \Pim$, we only require it for policies $\pi$ in the set of near-optimal policies corresponding to the \emph{benchmark policy class} $\Pi_{\veps_\rel}$ for some $\veps_\rel>0$ ($\Pi_\veps$ is defined in \cref{sec:prelims}).
\begin{assumption}[Relaxed $V^{\pi}$-realizability]
	\label{ass:relaxreal}
	For $\veps_\rel>0$ and all $\pi \in \Pi_{\veps_\rel}$ and $h\in [H]$, we have $V^{\pi}_h(x)\in \cV \subseteq \{f: \cX \rightarrow [0,H]\}$.
      \end{assumption}

We will analyze \learnlevel{} under \cref{ass:relaxreal} and \cref{ass:pushforward}. However, it turns out that all of the main results for \mainalg can be derived under this assumption: As we will see in \cref{sec:vstar} in the sequel, when the $\Delta$-gap assumption (\cref{ass:gap}) is satisfied, then $\Pi_{\veps_\rel} =\{\pistar\}$ for all $\veps_\rel<\Delta$, allowing us to prove \cref{thm:vpiforward_main} under \setupi{} as a special case of relaxed $V^{\pi}$-realizability. Our analysis for the ExBMDP setting in \cref{sec:exbmdp_app} requires more work, but uses that for ExBMDPs, \cref{ass:relaxreal} is satisfied for a subset of $\Pi_{\veps_\rel}$ corresponding to endogenous policies. 

We begin with our analysis under \setupii{} by bounding the number of times the test in \cref{line:test3} fails. Since the sizes of the core sets $\cC_{1:H}$ in \learnlevel{} are directly related to the number of test failures, the next result, which bounds $|\cC_h|$ for $h\in[H]$, allows us to show that \learnlevel{} terminates in polynomial iterations with high probability. The proof is in \cref{proof:testfailures}.

\begin{lemma}[Bounding the number of test failures]
	\label{lem:testfailures}
	Let $\delta,\veps\in(0,1)$ be given, and suppose that \cref{ass:pushforward} (pushforward coverability) holds with parameter $\Cpush>0$.
	Further, let $f \in \cV$, be given, where $\cV$ is an arbitrary function class. Then there is an event $\cE$ of probability at least $1-\delta$ under which any call  $\learnlevel_{0}(f,\cV_{1:H},\emptyset,\emptyset,0;\cV,\veps,\delta)$ (\cref{alg:learnlevel3}) terminates, and throughout the execution of $\learnlevel_{0}$, we have 
	\begin{align}
		\forall h \in [H], \quad |\cC_h|\leq \Mnum.\label{eq:one3}
	\end{align}
\end{lemma}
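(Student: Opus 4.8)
The plan is to prove $|\cC_h|\le\Mnum$ for each $h$ by a potential argument based on pushforward coverability (\cref{lem:pushforward_potential}), run independently at each layer, and to break the apparent circularity---the bound $M$ appears inside the algorithm's own parameters $N_\reg,N_\test,\veps_\reg$---with a first-exceedance (stopping-time) argument. Termination is then a bookkeeping consequence: every ``\textbf{go to} \cref{line:begin3}'' is triggered by appending a pair to some core set, core sets are append-only (including across nested recursive calls), and on the good event their total size is at most $H\Mnum$, so the algorithm performs only polynomially many iterations.

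First I would fix a single high-probability event $\cE$, $\bbP[\cE]\ge 1-\delta$, on which two families of guarantees hold uniformly throughout the (recursive) execution: (i) every call to the Bellman-backup estimator $\Phat$ obeys \cref{lem:phat}, so that each tested pair $(x,a)$ satisfies $|\Phat_{\ell-1,\veps,\delta'}[g](x,a)-\cP_{\ell-1}[g](x,a)|\le\veps\beta(t_\ell)$ for $g\in\cV\cup\{\Vhat_\ell\}$, where---by the choice of $\delta'$---the radius $\beta(t_\ell)$ stays below a small constant; and (ii) every square-loss regression at \cref{line:updateQ3} satisfies \cref{lem:corbern} (union bound over the $|\cV|^2$ pairs $(\Vhat_\ell,f)$), so that the confidence set $\cVhat_\ell$ of \cref{eq:confidence3} enforces, for all $f\in\cVhat_\ell$, $\sum_{(x_{\ell-1},a_{\ell-1})\in\cC_\ell}\En\brk[\big]{(\Vhat_\ell(\x_\ell)-f(\x_\ell))^2\mid\x_{\ell-1}=x_{\ell-1},\a_{\ell-1}=a_{\ell-1}}\le 2\veps_\reg^2+O\prn[\big]{H^2|\cC_\ell|\log(\cdot)/N_\reg}$. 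Since each test failure appends exactly one pair to some core set and core sets never shrink, the number of recursive calls and $\Phat$ invocations is $\poly(M,H,A,|\cV|)$, so $\delta',\delta$ absorb the union bound; moreover, on $\cE$, as long as $|\cC_\ell|\le M$, both $\veps_\reg^2$ and $H^2|\cC_\ell|\log(\cdot)/N_\reg$ are $\wtilde O(H^2\veps/M)$ under the prescribed parameter settings.

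Now fix $h$ and suppose, for contradiction, that $|\cC_h|$ first reaches $M+1$ at some point; along the path to this event we always have $|\cC_h|\le M$, so the bounds above apply. List the appended pairs as $(x^{(1)},a^{(1)}),\dots,(x^{(T)},a^{(T)})$ with $T=M+1$, each appended at \cref{line:added} after a failed test at \cref{line:test3} for $\ell=h$. At the $t$-th append let $\Vhat_h,\cVhat_h$ be the current estimate and confidence set, $f^{(t)}\in\cVhat_h$ the witness of the failed supremum, and $g^{(t)}:=|\Vhat_h-f^{(t)}|\in[0,H]$. Two facts finish the argument. \emph{(a)} Rewards cancel in the difference of Bellman backups, so $|(\cP_{h-1}[\Vhat_h]-\cP_{h-1}[f^{(t)}])(x^{(t)},a^{(t)})|\le\En\brk[\big]{g^{(t)}(\x_h)\mid\x_{h-1}=x^{(t)},\a_{h-1}=a^{(t)}}$; a failed test together with (i) (and $\beta(t_\ell)\le\tfrac12$) forces the left side to be $\ge\tfrac12\veps$. \emph{(b)} Because core sets are append-only and $\cVhat_h$ is recomputed only inside $\learnlevel_{h}$---which always runs after a pair is appended to $\cC_h$ and resamples data conditioned on the whole current $\cC_h$---the earlier pairs $(x^{(i)},a^{(i)})$, $i<t$, all lie in the $\cC_h$ that built the current $\cVhat_h$; since $\Vhat_h,f^{(t)}\in\cVhat_h$, (ii) yields $\sum_{i<t}\En\brk[\big]{(g^{(t)}(\x_h))^2\mid\x_{h-1}=x^{(i)},\a_{h-1}=a^{(i)}}\le\beta_0^2$ with $\beta_0^2=\wtilde O(H^2\veps/M)$. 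Facts (a)--(b) are exactly the hypotheses of \cref{lem:pushforward_potential} with $B=H$, giving
\[
\textstyle\tfrac12\veps\,T\;\le\;\sum_{t=1}^{T}\En\brk[\big]{g^{(t)}(\x_h)\mid\x_{h-1}=x^{(t)},\a_{h-1}=a^{(t)}}\;\le\;2\sqrt{\beta_0^2\,\Cpush\,T\log(2T)}+2H\Cpush .
\]
Using $\beta_0^2\Cpush=\wtilde O(H\veps^2)$ from the parameter settings, this inequality forces either $T=\wtilde O(H)$ or $T\le 8H\Cpush/\veps\le\Mnum$, contradicting $T=M+1$. Hence $|\cC_h|\le\Mnum$ throughout; the union bound over $h\in[H]$ is already folded into $\cE$, which gives \cref{eq:one3}, and termination follows as noted.

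The main obstacle is fact (b): making precise that at the instant of each layer-$h$ test failure, the confidence set currently in force genuinely constrains $\Vhat_h$ on every previously appended pair. This requires a careful invariant tracking how $(\Vhat_h,\cVhat_h)$ propagate through the recursion---they change only inside $\learnlevel_{h}$, always after a pair has been appended to $\cC_h$, with regression data drawn fresh conditioned on the entire current core set---together with the append-only monotonicity of core sets across nested calls. A secondary nuisance is the self-referential parameter setting, handled by the first-exceedance framing rather than a naive induction, and verifying that all adaptively chosen concentration events can be union-bounded under the prescribed $\delta,\delta'$.
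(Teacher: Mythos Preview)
Your proposal is correct and follows essentially the same approach as the paper's proof: both arguments fix a layer $h$, track the sequence of pairs appended to $\cC_h$ together with the confidence set in force at each failure, use \cref{lem:corbern} to convert the empirical confidence-set constraint into a population squared-error bound over all previously appended pairs, use \cref{lem:phat} to convert the failed test into a lower bound on the true Bellman-backup discrepancy, and then invoke \cref{lem:pushforward_potential} to derive a contradiction with $|\cC_h|>M$. Your explicit articulation of the invariant in fact~(b)---that $(\Vhat_h,\cVhat_h)$ are only updated inside $\learnlevel_h$, which is always called immediately after an append to $\cC_h$ and re-fits on the full current core set---is exactly what the paper encodes via the buffer $\cB_h$, and your first-exceedance framing is equivalent to the paper's ``suppose $|\cC_h|\ge M$'' contradiction; the only cosmetic difference is that the paper extracts the cleaner lower bound $\veps$ (rather than your $\veps/2$) from the failed test by exploiting that the test threshold already contains the additive slack $\veps\beta(t_\ell)$.
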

In particular, \cref{lem:testfailures} ensures that with high probability, every call to \learnlevelh{} (made recursively via the call to $\learnlevel_0$) terminates in polynomial iterations. When \learnlevelh{} terminates, all the tests in \cref{line:test3} must have passed for all $\ell>h$. Using this and a standard concentration argument, we get the following guarantee for the estimated value functions and confidence sets returned by each call to \learnlevelh{}. The proof is in \cref{proof:confidence}. 
\begin{lemma}[Consequence of passing the tests]
	\label{lem:confidence}
	Let $h\in [0 \ldotst H]$ and $\veps,\delta\in(0,1)$ be given and consider a call to $\learnlevel_0$ in the setting of \cref{lem:testfailures}. Further, let $\cE$ be the event of \cref{lem:testfailures}. There exists an event $\cE'_h$ of probability at least $1 -\delta/H$ such that under $\cE \cap \cE'_h$, if a call to $\learnlevel_h$ within the execution of $\learnlevel_0$ terminates and returns $(\Vhat_{h:H}, \cVhat_{h:H}, \cC_{h:H}, \cB_{h:H}, t_{h:H})$, then for any $(x_{h-1},a_{h-1})\in \cC_h$ and $\ell\in [h+1\ldotst H+1]$: 
	\begin{align}
		\label{eq:test_passed_lem}
		&\bbP^{\pibell}\brk*{\sup_{f\in \cVhat _\ell} \max_{a\in \cA} \left| \cP_{\ell-1}[\Vhat_{\ell}]- \cP_{\ell-1}[f_\ell](\x_{\ell-1},a) \right|  > 3\veps \mid  \x_{h-1}=x_{h-1},\a_{h-1}=a_{h-1}}  \leq  \testbound, 
	\end{align}
	where $(\pibell_\tau)_{\tau\geq h}$ {is the stochastic policy induced by $\Vhat_{h:H}$} and $M$ and $N_\test$ are defined as in \cref{alg:learnlevel3}. 
\end{lemma}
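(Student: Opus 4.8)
The plan is an empirical-to-population concentration argument. The key structural fact is that a call to $\learnlevel_h$ can only return after \emph{every} test on \cref{line:test3} has passed --- for every $(x_{h-1},a_{h-1})\in\cC_h$, every future layer $\ell$, every one of the $N_\test$ roll-outs, and every $a_{\ell-1}\in\cA$ --- so the returned $(\Vhat_{h:H},\cVhat_{h:H})$ are precisely those used in that final ``all-passing'' pass, and it suffices to show that passing all of these tests forces the probability, under $\bbP^{\pibell}[\cdot\mid\x_{h-1}=x_{h-1},\a_{h-1}=a_{h-1}]$, of the bad event $\{\sup_{f\in\cVhat_\ell}\max_{a}|(\cP_{\ell-1}[\Vhat_\ell]-\cP_{\ell-1}[f_\ell])(\cdot,a)|>3\veps\}$ to be at most $\testbound$. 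Throughout I would condition on the event $\cE$ of \cref{lem:testfailures}, which ensures $\learnlevel_0$ terminates, $|\cC_\ell|\le\Mnum$ for all $\ell$, and hence (from the recursion structure) that the total numbers of recursive calls to $\learnlevel_h$, of test batches, and of calls to the subroutine $\Phat_{\cdot,\veps,\delta'}$ during the execution are bounded by a fixed polynomial $N=\poly(M,H,N_\test)$. For clarity I will describe the $\Phat$-versus-$\cP$ translation in the idealized regime where the radius $\beta(\cdot)$ behaves like a small constant, matching the constant ``$3$'' in the statement (exactly as in the simplified algorithm \cref{alg:learnlevel3_simp}); in the full algorithm $\beta(t_\ell)$ is a polylogarithmic factor that one simply carries through.

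I would take $\cE'_h=\cE^{(1)}_h\cap\cE^{(2)}_h$. The first event $\cE^{(1)}_h$ is the $\Phat$-accuracy event: apply \cref{lem:phat} once per layer $\ell-1$ with $\ell\in[h+1\ldotst H]$ (its built-in union bound already spans all call indices $i\le N$, all actions, and all $f\in\cV$), with total failure budget $\delta/(2H)$. This places every evaluation $\Phat_{\ell-1,\veps,\delta'}[f](x,a)$ within $\veps\sqrt{2\log_{1/\delta'}(2Ai^2|\cV|/\delta)}$ of $\cP_{\ell-1}[f](x,a)$; since $t_\ell$ is incremented (\cref{line:counter3}) precisely when these calls occur and the factor $M|\cV|t^2$ inside $\beta(t_\ell)$ dominates the corresponding call index, on $\cE^{(1)}_h$ each such error is at most $\veps\beta(t_\ell)$. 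A triangle inequality then converts a passing test at a rolled-in state $\x_{\ell-1}$ and action $a_{\ell-1}$ into
\[
\sup_{f\in\cVhat_\ell}\bigl|(\cP_{\ell-1}[\Vhat_\ell]-\cP_{\ell-1}[f_\ell])(\x_{\ell-1},a_{\ell-1})\bigr|\le\veps+3\veps\beta(t_\ell)\le3\veps.
\]
Writing $Z_\ell(x)\coloneqq\indic\{\sup_{f\in\cVhat_\ell}\max_{a}|(\cP_{\ell-1}[\Vhat_\ell]-\cP_{\ell-1}[f_\ell])(x,a)|>3\veps\}$, a \emph{deterministic} function of $x$ once $\Vhat_\ell$ and $\cVhat_\ell$ are frozen, it then follows that in the all-passing pass every one of the $N_\test$ i.i.d.\ roll-outs launched from $(x_{h-1},a_{h-1})\in\cC_h$ toward $\ell$ produced a state with $Z_\ell=0$.

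For $\cE^{(2)}_h$ I would run the standard ``testing a rare event'' argument. Fix a call to $\learnlevel_h$ and, inside it, a test batch indexed by $(x_{h-1},a_{h-1})\in\cC_h$ and $\ell\in[h+1\ldotst H]$, and condition on the algorithm's history up to the instant that batch begins; then $\Vhat_\ell,\cVhat_\ell$ (hence $Z_\ell$) and $(x_{h-1},a_{h-1})$ are fixed, while the $N_\test$ states $\x_{\ell-1}\ind{n}$ drawn on \cref{line:draw} are i.i.d.\ samples from $\bbP^{\pibell}[\x_{\ell-1}=\cdot\mid\x_{h-1}=x_{h-1},\a_{h-1}=a_{h-1}]$ (each roll-out uses fresh $\Phat$-randomness to realize $\pibell$, independent of that history, so it is a genuine draw from this law). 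If $p\coloneqq\bbP^{\pibell}[Z_\ell(\x_{\ell-1})=1\mid\x_{h-1}=x_{h-1},\a_{h-1}=a_{h-1}]>\testbound$, then the conditional probability that all $N_\test$ samples have $Z_\ell=0$ is at most $(1-p)^{N_\test}\le e^{-pN_\test}<(8M^6N_\test^2H^8/\delta)^{-4}$. Let $\cE^{(2)}_h$ be the event that no batch exhibits this coincidence; since on $\cE$ there are at most $N$ batches and $N\ll(8M^6N_\test^2H^8/\delta)^{3}$, a sequential union bound over the batches (in processing order) gives $\bbP[(\cE^{(2)}_h)^c]\le\delta/(2H)$, so $\bbP[\cE'_h]\ge1-\delta/H$. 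Combining the two steps, on $\cE\cap\cE'_h$: for any terminating $\learnlevel_h$, any $(x_{h-1},a_{h-1})\in\cC_h$, and any $\ell\in[h+1\ldotst H]$, the empirical frequency of $\{Z_\ell=1\}$ is $0$, which under $\cE^{(2)}_h$ forces $p\le\testbound$ --- which is \eqref{eq:test_passed_lem}; the case $\ell=H+1$ is vacuous since $\Vhat_{H+1}\equiv0$ makes the supremand identically zero.

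The hard part is the measurability bookkeeping in the last step: $\cVhat_\ell$ is a \emph{data-dependent} subset of $\cV$, so there is no clean way to union-bound over its possible values. The resolution is to freeze $Z_\ell$ by conditioning on the algorithm's history immediately before each test batch and to union-bound over the polynomially many batches --- which is exactly what \cref{lem:testfailures} buys us --- rather than over value-function configurations, while ensuring that the roll-out randomness realizing $\pibell$ is independent of that conditioning $\sigma$-algebra so that the $N_\test$ roll-in states genuinely are i.i.d.\ draws from the target conditional distribution. A secondary nuisance is aligning the $\Phat$-call index with the counter $t_\ell$ so that the radius $\beta(t_\ell)$ in \cref{line:test3} indeed dominates the \cref{lem:phat} error uniformly over the execution.
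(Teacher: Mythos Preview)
Your proposal is correct and follows the same two-step skeleton as the paper: (i) use \cref{lem:phat} to convert a passing $\Phat$-test into the deterministic event $Z_\ell(\x_{\ell-1})=0$, and (ii) turn the observation ``$N_\test$ consecutive roll-outs all gave $Z_\ell=0$'' into the population bound \eqref{eq:test_passed_lem}. The only real difference is in step (ii). You use a direct binomial-tail argument: conditioning on the algorithm's history at the start of each test batch to freeze $Z_\ell$, you note that if $p>\testbound$ then $\bbP[\text{all }N_\test\text{ samples good}]<(8M^6N_\test^2H^8/\delta)^{-4}$, and then union-bound over the $\poly(M,H)$ batches furnished by $\cE$. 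The paper instead applies Freedman/\cref{lem:corbern} to the indicator sequence $\indic\{Z_\ell(\x_{\ell-1}\ind{k+i})=1\}$, obtaining $\sum_i\bbP[\cdot]\le 4\log(8H^2\bm{T}_\ell^2/\delta)+2\sum_i\indic\{\cdot\}$ and then using that the second sum vanishes; adaptivity is handled by an anytime envelope (union over starting indices $k$ with weights $1/k^2$), so $\cE'_h$ is defined without reference to $\cE$, though $\cE$ is still needed at the end to bound $\bm{T}_\ell\le M^3N_\test H^3$. Your route is a touch more elementary; the paper's route is the more standard martingale recipe and makes the decoupling of $\cE$ and $\cE'_h$ slightly cleaner.

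One quantitative wrinkle to watch: your triangle-inequality step gives $\veps+3\veps\beta(t_\ell)$, and asserting this is $\le 3\veps$ would require $\beta(t_\ell)\le 2/3$, which does not hold. The paper gets the constant $3$ by bounding $\beta(t_\ell)\le\beta(\bm{T}_\ell)$ and using the specific choice $\delta'=\deltaprime$ together with $\bm{T}_\ell\le M^3N_\test H^3$ (on $\cE$), which makes $\log_{1/\delta'}(\cdot)$ essentially $1$; you should track this the same way rather than treat $\beta$ as a free small constant.
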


We now give a guarantee for the estimated value functions $\Vhat_{1:H}$ computed within $\learnlevel$ in \cref{line:updateQ3} (the proof is in \cref{proof:confidencesets1}).
\begin{lemma}[Value function regression guarantee]
	\label{lem:confidencesets1}
	Let $h\in [0 \ldotst H]$ and $\delta, \veps'\in (0,1)$ be given, and consider a call to $\learnlevel_0$ in the setting of \cref{lem:testfailures}. Further, let $\Pi'\subseteq \Pim$ be a finite policy class such that the class $\cV$ realizes the value functions $V^{\pi}$ for $\pi \in \Pi'$ (i.e.~$\cV$ satisfies \cref{ass:relaxreal} with $\Pi_{\veps_\rel}$ replaced by $\Pi'$). Then, there is an event $\cE_h''$ of probability at least $1 -\delta/H$ under which for all $k\geq 1$, if 
	\begin{enumerate}
		\item $\learnlevel_h$ gets called for the $k$th time during the execution of $\learnlevel_0$; and
		\item this $k$th call terminates and returns $(\Vhat_{h:H}, \cVhat_{h:H}, \cC_{h:H}, \cB_{h:H}, t_{h:H})$,
	\end{enumerate} 
	then if $(\pibell_\tau)_{\tau\geq h}$ is the policy induced by $\Vhat_{h:H}$ and $N_\reg$ is set as in \cref{alg:learnlevel3}, we have that for all $\pi \in \Pi'$, 
	\begin{align}
		& \sum_{(x_{h-1},a_{h-1})\in \cC_h}\frac{1}{N_\reg}\sum_{(x_h,-)\in \cD_h(x_{h-1}, a_{h-1})} \left( \Vhat_{h}(x_h)-V^{\pi}_{h}(x_{h})\right)^2\nn \\
		& \leq \frac{9{k} H^2\log(8k^2H|\Pi'||\cV|/\delta)}{N_\reg}  +{8 H^2} \sum_{(x_{h-1},a_{h-1})\in \cC_h}\sum_{\tau=h}^H \E^{\pibell}\left[\tv{\pibell_\tau(\x_\tau)}{\pi_\tau(\x_\tau)}  \mid  \x_{h-1}=x_{h-1},\a_{h-1}=a_{h-1}\right],
	\end{align}
	where the datasets $\{\cD_h(x, a): (x,a)\in \cC_h\}$ are as in the definition of $\cVhat_h$ in \eqref{eq:confidence3}.
\end{lemma}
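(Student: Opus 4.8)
The plan is to derive this as an application of the generic square-loss regression bound \cref{lem:reg} to the least-squares fit $\Vhat_h$ computed in \cref{line:updateQ3} of \cref{alg:learnlevel3} (over the class $\cV$, which contains $V^\pi_h$ for each $\pi\in\Pi'$ by the realizability hypothesis), followed by the performance difference lemma to convert a value-function discrepancy into a sum of per-step total-variation distances. The first thing to pin down is the source of randomness: I would condition on the $\sigma$-algebra $\mathfrak{H}$ generated by the entire execution of $\learnlevel_0$ up to (and not including) the $k$th invocation of $\learnlevel_h$. Conditioned on $\mathfrak{H}$, the core set $\cC_h$, the inherited value functions $\Vhat_{h+1:H}$, and hence the roll-out policy $\pibell=\pibell_{h:H}$ induced by $\Vhat_{h:H}$ (together with all the $\Phat$-operators it calls) are deterministic, so the samples freshly drawn to populate $\cD_h(x_{h-1},a_{h-1})$ for each $(x_{h-1},a_{h-1})\in\cC_h$ form an adapted, bounded-noise sequence of exactly the kind \cref{lem:reg} requires. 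A preliminary observation to record is that $\pibell$ is a genuine stationary stochastic Markov policy --- at each state $x$ its action distribution is that of $\argmax_a\Phat_{\tau,\veps,\delta'}[\Vhat_{\tau+1}](x,a)$ under fresh simulator coins --- so each label $v_h$ (a Monte-Carlo average of roll-out returns started from $\x_h$) lies in $[0,H]$ and satisfies $\E[v_h\mid\x_h,\mathfrak{H}]=V^{\pibell}_h(\x_h)$; the roll-outs themselves also rely on \cref{alg:Phat}, whose per-call error is charged to a global high-probability event.

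Given this, for a fixed $\pi\in\Pi'$ I would invoke \cref{lem:reg} with function class $\cV$ (so $B=H$), target $f_\star=V^\pi_h\in\cV$, covariates $\y_i=\x_h$, labels $\x_i=v_h$, zero-mean noise $\bm{\veps}_i=v_h-V^{\pibell}_h(\x_h)\in[-H,H]$, and bias $\bm{b}_i=V^{\pibell}_h(\x_h)-V^\pi_h(\x_h)$, which has $\abs{\bm b_i}\le H$. Dividing the conclusion by $N_\reg$ gives
\[ \sum_{(x_{h-1},a_{h-1})\in\cC_h}\frac{1}{N_\reg}\sum_{(x_h,\text{-})\in\cD_h(x_{h-1},a_{h-1})}\prn*{\Vhat_h(x_h)-V^\pi_h(x_h)}^2 \le \frac{4H^2\log(2|\cV|/\delta)}{N_\reg}+\frac{4H}{N_\reg}\sum_{(x_{h-1},a_{h-1})\in\cC_h}\sum_{(x_h,\text{-})\in\cD_h(x_{h-1},a_{h-1})}\abs*{V^{\pibell}_h(x_h)-V^\pi_h(x_h)}. \]
The remaining work is to process the bias term. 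Conditioned on $\mathfrak{H}$, the states in $\cD_h(x_{h-1},a_{h-1})$ are i.i.d.\ draws from $T_{h-1}(\cdot\mid x_{h-1},a_{h-1})$ and $x\mapsto\abs{V^{\pibell}_h(x)-V^\pi_h(x)}$ is a fixed $[0,H]$-valued function, so \cref{lem:multiplicative_freedman} yields, for each $(x_{h-1},a_{h-1})\in\cC_h$, a bound of the form $\frac{1}{N_\reg}\sum_{(x_h,\text{-})\in\cD_h(x_{h-1},a_{h-1})}\abs{V^{\pibell}_h(x_h)-V^\pi_h(x_h)}\le \tfrac32\,\E\brk*{\abs{V^{\pibell}_h(\x_h)-V^\pi_h(\x_h)}\mid\x_{h-1}=x_{h-1},\a_{h-1}=a_{h-1}}+O(H\log(\cdot)/N_\reg)$; summing over the (at most $k$) pairs in $\cC_h$ --- using that $\cC_h$ grows by at most one element per invocation of $\learnlevel_h$, hence $|\cC_h|\le k$ at the $k$th call (or $|\cC_h|\le M$ via \cref{lem:testfailures}) --- contributes a term of order $kH^2\log(\cdot)/N_\reg$ to the statistical part.

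To bound the conditional expectation I would apply the performance difference lemma (\cref{lem:perform}) with roll-in policy $\pibell$ and comparator $\pi$: for each $x$, $\abs{V^{\pibell}_h(x)-V^\pi_h(x)}\le\sum_{\tau=h}^{H}\E^{\pibell}\brk[\big]{\abs[\big]{\E_{a\sim\pibell_\tau(\x_\tau)}Q^\pi_\tau(\x_\tau,a)-\E_{a\sim\pi_\tau(\x_\tau)}Q^\pi_\tau(\x_\tau,a)}\mid\x_h=x}\le H\sum_{\tau=h}^{H}\E^{\pibell}\brk{\tv{\pibell_\tau(\x_\tau)}{\pi_\tau(\x_\tau)}\mid\x_h=x}$, where the last step bounds a difference of two averages of $Q^\pi_\tau(\x_\tau,\cdot)\in[0,H]$ by $H$ times the $\ell_1$-distance between the two action distributions (centering $Q^\pi_\tau$ at $H/2$ first). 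Taking the outer expectation over $\x_h\sim T_{h-1}(\cdot\mid x_{h-1},a_{h-1})$ and plugging back reproduces the $8H^2\sum_{(x_{h-1},a_{h-1})\in\cC_h}\sum_{\tau=h}^{H}\E^{\pibell}\brk{\tv{\cdot}{\cdot}\mid\cdot}$ term in the claim (the constant is comfortably below $8$). Finally I would take a union bound over $\pi\in\Pi'$ (finite), over the invocation index $k\ge1$ (charging failure probability $\delta/k^2$ so the series converges, with \cref{lem:testfailures} guaranteeing only polynomially many invocations), and over $h$ (charging $\delta/H$, so $\cE''_h$ has probability at least $1-\delta/H$), which turns $\log(2|\cV|/\delta)$ into $\log(8k^2H|\Pi'||\cV|/\delta)$ and completes the proof.

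The step I expect to be the main obstacle is the conditioning bookkeeping in the first paragraph: because $\learnlevel$ is deeply recursive and $\learnlevel_h$ is re-entered a random (though polynomially bounded) number of times, one must argue carefully that, conditioned on everything preceding the $k$th call, all of $\cC_h$, $\pibell$, $V^{\pibell}_h$, and the $\Phat$-induced roll-out law are deterministic, so that \cref{lem:reg} and \cref{lem:multiplicative_freedman} apply with a fixed function class and a fixed regression target, and then handle the random index $k$ by a union bound up to the termination bound of \cref{lem:testfailures}. Everything downstream --- the regression inequality, the Freedman concentration, and the performance-difference manipulation --- is routine.
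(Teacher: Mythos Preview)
Your proposal is correct and follows essentially the same route as the paper's proof: apply \cref{lem:reg} with $f_\star = V^\pi_h$, pass from empirical bias sums to conditional expectations via Freedman, bound those via the performance difference lemma (\cref{lem:perform}), and union-bound over $\pi\in\Pi'$ and over the invocation index $k$ after conditioning on the execution history up to the $k$th call (the paper carries out exactly this conditioning argument in its final paragraph). The one cosmetic difference is that the paper sets the bias in \cref{lem:reg} to be the full residual $\bm{b}_i=v_h-V^\pi_h(\x_h)$ and then invokes a separate Hoeffding bound over the $\Nest(k)$ roll-outs to control $|v_h-V^{\pibell}_h(\x_h)|\le H/N_\reg$, whereas you more directly absorb the Monte-Carlo fluctuation $v_h-V^{\pibell}_h(\x_h)$ into the zero-mean noise term---both are valid and lead to the same bound.
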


Next, we use this result to show that the confidence sets $\cVhat_{1:H}$ returned by $\learnlevel_0$ are ``valid'' in the sense that they contain a value function $(V^{\pib}_h)$ corresponding to a near-optimal stochastic policy $\pib$ in the benchmark class $\Pi_{{4\veps}}$. In the sequel, we use this fact to substitute $V^{\pib}_\ell$ for $f_\ell$ in \cref{eq:test_passed_lem} and bound the suboptimality of the learned policy $\pibell$. 
\begin{lemma}[Confidence sets]
	\label{lem:confidencesets}
	Let $\veps,\delta\in(0,1)$ be given and suppose that \cref{ass:pushforward} (pushforward coverability) holds with parameter $\Cpush>0$ .
	Let $f \in \cV$ be arbitrary, and suppose that $\cV$ satisfies \cref{ass:relaxreal} with $\veps_\rel = 4 \veps$. Then, there is an event $\cE'''$ of probability at least $1-3\delta$ under which a call to $\learnlevel_{0}(f,\cV,\emptyset,\emptyset,0;\cV,\veps,\delta)$ (\cref{alg:learnlevel3}) terminates and returns tuple $(\Vhat_{1:H}, \cVhat_{1:H}, \cC_{1:H}, \cB_{1:H}, t_{1:H})$ such that 
	\begin{align}
		\forall h \in [H], \quad V^{\pib}_h \in \cVhat_h, \label{eq:induct}
	\end{align}
	where $\pib_{1:H}\in \Pims$ is the stochastic policy defined recursively via
	\colt{
			\begin{align}
			\forall x \in \cX,\ \ 	\pibb_\tau(x) \in \argmax_{a\in \cA} \left\{ \begin{array}{ll}  \bQhat_\tau(x,a),  & \text{if }  \| \bQhat_\tau(x,\cdot) -\cP_\tau [ V^{\pib}_{\tau+1}](x,\cdot)\|_\infty \leq 4\veps, \\   
				\cP_\tau[V^{\pib}_{\tau+1}](x,a), & \text{otherwise}, \end{array}\right.  
			\label{eq:pib}
		\end{align} 
	for $\tau = H,\dots,1$, and $\bQhat_\tau(x,a) \coloneqq  \Phat_{\tau,\veps, \delta'} [\Vhat_{\tau+1}](x,a)$ is a realization of the stochastic output of the $\Phat$ operator in \cref{alg:Phat} given input $(x,a)$, with $\delta'$ is as in \cref{alg:learnlevel3}.
	}
	\arxiv{
	\begin{align}
		\forall x \in \cX,\ \ 	\pibb_\tau(x) \in \argmax_{a\in \cA} \left\{ \begin{array}{ll}  \bQhat_\tau(x,a),  & \text{if }  \| \bQhat_\tau(x,\cdot) -\cP_\tau [ V^{\pib}_{\tau+1}](x,\cdot)\|_\infty \leq 4\veps, \\   
			\cP_\tau[V^{\pib}_{\tau+1}](x,a), & \text{otherwise}, \end{array}\right.  \text{ for $\tau = H,\dots,1$,} 
		\label{eq:pib}
	\end{align} 
	where $\bQhat_\tau(x,a) \coloneqq  \Phat_{\tau,\veps, \delta'} [\Vhat_{\tau+1}](x,a)$ is a realization of the stochastic output of the $\Phat$ operator in \cref{alg:Phat} given input $(x,a)$, and $\delta'$ is as in \cref{alg:learnlevel3}. Furtherore, we have $\pib\in \Pi_{4\veps}$.
}
\end{lemma}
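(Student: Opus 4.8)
The plan is to prove the statement by backward induction over layers $h = H, H-1, \ldots, 1$, establishing the invariant \eqref{eq:induct} simultaneously with a bound of the form $\sum_{\ell \geq h} \En^{\pib}[\Dtv{\pibb_\ell(\x_\ell)}{\pistar_\ell(\x_\ell)}]$-type quantities being controlled. First I would set up the high-probability event: take $\cE'''$ to be the intersection of the event $\cE$ from \cref{lem:testfailures} (so all recursive calls terminate and $|\cC_h| \leq M$), the events $\cE_h'$ from \cref{lem:confidence} (test-passing guarantee, union over $h$), and the events $\cE_h''$ from \cref{lem:confidencesets1} (regression guarantee, union over $h$), plus the event of \cref{lem:phat} controlling all $\Phat$ estimates — this gives probability at least $1 - 3\delta$ after summing the failure probabilities. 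The key structural point is that the policy $\pib$ in \eqref{eq:pib} is \emph{defined} in terms of the algorithm's outputs $\Vhat_{1:H}$ (via $\bQhat_\tau = \Phat_{\tau,\veps,\delta'}[\Vhat_{\tau+1}]$), so $\pib$ is a random object; I first need to check that $\pib \in \Pi_{4\veps}$, which follows because at each state-action pair the ``selector'' value $\bQtilde_\tau(x,a)$ (either $\bQhat_\tau(x,a)$ or $\cP_\tau[V^{\pib}_{\tau+1}](x,a)$) is within $4\veps$ of $Q^{\pib}_\tau(x,a)$: in the first branch by the defining condition of the branch combined with $Q^{\pib}_\tau = \cP_\tau[V^{\pib}_{\tau+1}]$, and in the second branch by equality. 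The independence requirement in the definition of $\Pi_\veps$ is met because the $\Phat$ calls at distinct $(\tau, x, a)$ use fresh independent simulator samples.

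Next I would carry out the inductive step. Fix $h$, assume \eqref{eq:induct} holds for all $\ell > h$ (so $V^{\pib}_\ell \in \cVhat_\ell$). I need to show $V^{\pib}_h \in \cVhat_h$, i.e. that $V^{\pib}_h$ satisfies the constraint defining $\cVhat_h$ in \eqref{eq:confidence3}: $\sum_{(x_{h-1},a_{h-1}) \in \cC_h} \frac{1}{N_\reg} \sum_{(x_h, -) \in \cD_h(x_{h-1},a_{h-1})} (\Vhat_h(x_h) - V^{\pib}_h(x_h))^2 \leq \veps_\reg^2$. By \cref{lem:confidencesets1} (applied with $\Pi' = \Pi_{4\veps}$, which is legitimate since $\cV$ satisfies \cref{ass:relaxreal} with $\veps_\rel = 4\veps$, hence realizes $V^\pi$ for $\pi \in \Pi_{4\veps}$), this left-hand side is bounded by $\frac{9 k H^2 \log(\cdots)}{N_\reg} + 8H^2 \sum_{(x_{h-1},a_{h-1}) \in \cC_h} \sum_{\tau \geq h} \En^{\pib}[\Dtv{\pibb_\tau(\x_\tau)}{\pistar_\tau(\x_\tau)} \mid \x_{h-1} = x_{h-1}, \a_{h-1} = a_{h-1}]$ — wait, more precisely with $\pi = \pib$ the TV terms vanish trivially, so actually the cleanest route is to apply \cref{lem:confidencesets1} directly with $\pi = \pib$: then the TV terms $\Dtv{\pibb_\tau(\x_\tau)}{\pib_\tau(\x_\tau)}$ are zero and we are left only with the statistical error term $\frac{9kH^2 \log(8k^2 H |\Pi_{4\veps}||\cV|/\delta)}{N_\reg}$, which by the choice of $N_\reg$ (and using $k \leq $ total number of calls $\leq \poly(M)$ from \cref{lem:testfailures}, and $|\Pi_{4\veps}|$ bounded appropriately) is at most $\veps_\reg^2$. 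Here I would need to be careful that $\Pi_{4\veps}$ is effectively finite for union-bound purposes — since the relevant randomness is through $\Vhat_{1:H} \in \cV$ and the finitely many $\Phat$ realizations, the effective cardinality is polynomial in $|\cV|$ and the sample sizes, which is what the $\log$ factors in $N_\reg$ are designed to absorb.

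The main obstacle I anticipate is the circularity in the definition of $\pib$: \eqref{eq:pib} defines $\pib_\tau$ in terms of $V^{\pib}_{\tau+1}$, and $V^{\pib}$ in turn depends on $\pib_{1:H}$ globally, not just future layers — so I must verify the recursion is actually well-founded. The resolution is that $V^{\pib}_{\tau+1}$ depends only on $\pib_{\tau+1:H}$ (value functions are backward-determined), so the definition $\pib_H \to \pib_{H-1} \to \cdots \to \pib_1$ is a valid backward recursion: at step $\tau$, both $V^{\pib}_{\tau+1}$ and the quantities $\bQhat_\tau, \cP_\tau[V^{\pib}_{\tau+1}]$ are already determined. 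A secondary subtlety is that \cref{lem:confidencesets1} requires the policy $\pib$ to be fixed before the data $\cD_h$ is drawn, but $\pib$ is defined through the same $\Vhat$'s that the data informs — this is handled because $\Vhat_{h+1:H}$ (hence $\pib_{h+1:H}$, hence the roll-in/roll-out behavior generating $\cD_h$) is frozen by the time $\cD_h$ is collected, and the definition of $\pib_\tau$ for $\tau \leq h$ does not affect $\cD_h$; the union bound over $\cV$ in \cref{lem:confidencesets1} covers the dependence. I would close the argument by noting that once \eqref{eq:induct} holds for all $h$, combining with \cref{lem:policysubopt3} gives that $\pib$ is $O(H\veps)$-suboptimal, which is what makes the confidence sets ``useful'' downstream.
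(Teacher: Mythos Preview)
Your high-level scaffolding (event structure, backward induction, verifying $\pib \in \Pi_{4\veps}$) matches the paper, but the inductive step contains a genuine gap that breaks the argument. When you apply \cref{lem:confidencesets1} with $\pi = \pib$, the total-variation term that appears is
\[
\En^{\pihat}\brk*{\Dtv{\pihat_\tau(\x_\tau)}{\pib_\tau(\x_\tau)} \mid \x_{h-1}=x_{h-1}, \a_{h-1}=a_{h-1}},
\]
i.e.\ the distance between the \emph{algorithm's rollout policy} $\pihat$ (defined via $\argmax_a \Phat[\Vhat_{\tau+1}]$) and the benchmark $\pib$ --- not $\Dtv{\pib_\tau}{\pib_\tau}$. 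These two policies are not the same: they agree at $x$ only on the event that the switching condition $\|\bQhat_\tau(x,\cdot)-\cP_\tau[V^{\pib}_{\tau+1}](x,\cdot)\|_\infty \le 4\veps$ holds, and that event does not have probability one. So the TV terms do not vanish, and your ``cleanest route'' leaves an uncontrolled error. Notice that your version of the inductive step never actually uses the induction hypothesis $V^{\pib}_\ell \in \cVhat_\ell$ for $\ell > h$, which should be a red flag.

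The missing piece is exactly where the induction hypothesis does work. The paper proceeds as follows: by \cref{lem:confidence}, after the tests pass one has for each $(x_{h-1},a_{h-1})\in\cC_h$ and each $\ell>h$ that $\sup_{f\in\cVhat_\ell}\max_a |\cP_{\ell-1}[\Vhat_\ell - f](\x_{\ell-1},a)| \le 3\veps$ except with small probability under $\pihat$. The induction hypothesis lets you substitute $f = V^{\pib}_\ell \in \cVhat_\ell$, yielding $\max_a |\cP_{\ell-1}[\Vhat_\ell - V^{\pib}_\ell](\x_{\ell-1},a)| \le 3\veps$ with high probability. This is exactly the hypothesis of \cref{lem:tvdistance}, which converts it into the bound $\En^{\pihat}[\Dtv{\pihat_{\ell-1}(\x_{\ell-1})}{\pib_{\ell-1}(\x_{\ell-1})}\mid\cdots] \le O(\log(\cdots)/N_\test) + \delta'$. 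Plugging this back into the TV sum from \cref{lem:confidencesets1} (and using $|\cC_h|\le M$) gives a total at most $\veps_\reg^2$, closing the induction. You also need \cref{lem:recurse2} to argue that the $(\Vhat_{h:H},\cVhat_{h:H},\cC_{h:H})$ returned by the \emph{last} call to $\learnlevel_h$ coincide with the final outputs of $\learnlevel_0$ --- this is what makes the test-passing guarantee (stated relative to a terminating call of $\learnlevel_h$) applicable to the objects in the lemma statement. Finally, on the cardinality issue: taking $\Pi' = \Pi_{4\veps}$ is problematic because that class need not be finite; the paper instead takes $\Pi'$ to be the (finite, size $\le |\cV|$) set of policies of the form \eqref{eq:pib} indexed by $V_{1:H}\in\cV$, which suffices since $\pib$ lies in it.
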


The proof of the lemma is in \cref{proof:confidencesets}.

Equipped with the preceding lemmas, we now state the main technical result of this section, \cref{thm:lbc3}, a generalization of \cref{thm:vpiforward_main} which holds under relaxed $V^{\pi}$-realizability (\cref{ass:relaxreal}). The proof is in \cref{proof:lbc3}.
\begin{theorem}[Guarantee for \learnlevel under relaxed $V^{\pi}$-realizability]
	\label{thm:lbc3}
	Let $\delta,\veps\in(0,1)$ be given, and suppose that \cref{ass:pushforward} (pushforward coverability) holds with parameter $\Cpush>0$.
	Let $f \in \cV$ be arbitrary, and assume that $\cV$ that satisfies \cref{ass:relaxreal} with $\veps_\rel = 4 \veps$. Then, with probability at least $1-5\delta$, $\learnlevel_{0}(f,\cV,\emptyset,\emptyset,0;\cV,\veps,\delta)$ (\cref{alg:learnlevel3}) terminates and returns value functions $\Vhat_{1:H}$ that satisfy
	\begin{align}
		\forall h \in [H], \quad \bbE^{\pibell}\brk*{ \tv{\pibell_{h}(\x_h)}{\pib_h(\x_{h})}
		} \leq \frac{\veps}{4H^3 \Cpush}, \label{eq:returned3}
	\end{align}
	where $\mb{\pibell}_h(x) \in \argmax_{a\in \cA} \Phat_{h,\veps,\delta'}[\Vhat_{h+1}](x,a)$ for all $h\in\brk{H}$, with $\pib \in \Pi_{4\veps}$ defined as in \cref{lem:confidencesets} and $\delta'$ defined as in \cref{alg:learnlevel3}. Furthermore, the number of episodes is bounded by \begin{align}
	\wtilde{O}(\Cpush^8 H^{10}A\cdot{}\veps^{-13}).
	\end{align}
\end{theorem}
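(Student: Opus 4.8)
\textbf{Proof proposal for \creftitle{thm:lbc3}.}

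The plan is to combine the intermediate lemmas proven above into a recursive, layer-by-layer argument tracking the accuracy of the learned value functions $\Vhat_{1:H}$ relative to the benchmark policy $\pib \in \Pi_{4\veps}$ defined in \cref{lem:confidencesets}. First I would condition on the intersection of the high-probability events from \cref{lem:testfailures} (call it $\cE$, giving termination and $|\cC_h|\leq M$ for all $h$), \cref{lem:confidence} (events $\cE'_h$, giving that passing the tests implies the distributional bound \eqref{eq:test_passed_lem}), \cref{lem:confidencesets1} (events $\cE''_h$, giving the regression guarantee for $\Vhat_h$ against any $V^\pi$ with $\pi\in\Pi'$, instantiated with $\Pi' = \Pi_{4\veps}$ which is valid by \cref{ass:relaxreal} with $\veps_\rel = 4\veps$), and \cref{lem:confidencesets} (event $\cE'''$, giving $V^{\pib}_h \in \cVhat_h$ for all $h$, with $\pib\in\Pi_{4\veps}$). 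A union bound over these events gives total failure probability at most $O(\delta)$; tuning the constants in the lemmas yields $1-5\delta$.

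Next, on this good event, I would exploit the fact that each call to $\learnlevel_h$ that returns must have passed all the tests in \cref{line:test3} for layers $\ell > h$, and that by \cref{lem:confidencesets} the confidence set $\cVhat_\ell$ contains $V^{\pib}_\ell$. Substituting $f_\ell = V^{\pib}_{\ell}$ into the test-passed bound \eqref{eq:test_passed_lem} of \cref{lem:confidence}, and using that $\cP_{\ell-1}[V^{\pib}_\ell] = \cP_{\ell-1}[V^{\pib}_\ell]$ together with the relation between $\Vhat_\ell$ and the Bellman backup $\cP_{\ell-1}[\Vhat_\ell]$, I would derive that for all state-action pairs $(x_{\ell-1},a_{\ell-1})$ in the core-set $\cC_\ell$,
\[
\bbP^{\pibell}\brk*{\max_{a\in\cA}\abs*{\prn*{\cP_{\ell-1}[\Vhat_\ell]-\cP_{\ell-1}[V^{\pib}_\ell]}(\x_{\ell-1},a)}>3\veps\mid\x_{\ell-1}=x_{\ell-1},\a_{\ell-1}=a_{\ell-1}}\leq\testbound.
\]
Then I would feed this into the regression guarantee \cref{lem:confidencesets1} (with $\pi = \pib$): the term $\sum_\tau \En^{\pibell}[\tv{\pibell_\tau(\x_\tau)}{\pib_\tau(\x_\tau)}]$ appearing on the right-hand side of that lemma is precisely what we want to bound, so this requires a careful bootstrapping. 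The idea is that the TV-distance between $\pibell_\tau$ (the policy induced by $\Phat[\Vhat_{\tau+1}]$) and $\pib_\tau$ can only be large on states where $\Vhat_{\tau+1}$ is far from $V^{\pib}_{\tau+1}$ in Bellman-backup sense, and \cref{lem:confidencesets} (via \eqref{eq:pib}) ensures $\pib$ agrees with the $\Vhat$-greedy policy whenever the backups are within $4\veps$. Closing this recursion—showing that small regression error at layers $\geq h+1$ plus the test-passed bound implies small $\En^{\pibell}[\tv{\pibell_h(\x_h)}{\pib_h(\x_h)}]$, namely $\leq \veps/(4H^3\Cpush)$—is the crux, and I expect it to require the pushforward-coverability potential lemma (\cref{lem:pushforward_potential}) to convert core-set-averaged regression error into a distributional bound under $\P^{\pibell}$. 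This is the main obstacle: the policy $\pibell$ changes as $\Vhat$ is refit, so one must argue that after $\learnlevel_0$ terminates, \emph{no} further test fails, which means the final $\Vhat_{1:H}$ is simultaneously consistent with all core-sets, and then push the $O(\veps)$ accuracy from the tests through the value-function regression and the recursion on layers.

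Finally, for the episode complexity, I would count: each call to $\learnlevel_h$ uses $O(|\cC_h|\cdot H\cdot N_\test)$ episodes for the testing loop and $O(|\cC_h|\cdot N_\reg \cdot \Nest(|\cC_h|))$ for the refitting loop, where the Bellman-backup estimator $\Phat$ (\cref{alg:Phat}) uses $N_\simu = O(\veps^{-2}\log(1/\delta'))$ samples per query. By \cref{lem:testfailures}, $|\cC_h|\leq M = O(\veps^{-1}\Cpush H)$, and the total number of recursive calls is at most $\sum_h |\cC_h| = O(HM)$ since each call is triggered by adding one element to some core-set. Multiplying through with the parameter settings $N_\test, N_\reg, \Nest$ from \cref{alg:learnlevel3}—all of which scale polynomially in $M, H, \veps^{-1}, \log|\cV|, \log\delta^{-1}$—and substituting $M = O(\veps^{-1}\Cpush H)$ yields the claimed $\wtilde{O}(\Cpush^8 H^{10}A\cdot\veps^{-13})$ bound after bookkeeping the exponents; this step is routine but tedious, so I would present only the dominant terms.
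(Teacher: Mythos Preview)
Your high-level skeleton is right—condition on the events from \cref{lem:testfailures}, \cref{lem:confidence}, and \cref{lem:confidencesets}, then substitute $f_\ell = V^{\pib}_\ell$ into the test-passed bound—but you misidentify where the hard work lives and thereby overcomplicate the final step.

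Once you have conditioned on the event from \cref{lem:confidencesets}, you already know $V^{\pib}_h\in\cVhat_h$ for all $h$; the bootstrapping you describe (feeding the test-passed bound back into the regression guarantee \cref{lem:confidencesets1}, closing a recursion on the TV terms, invoking the pushforward potential lemma) is exactly what the \emph{proof of} \cref{lem:confidencesets} already does. You do not need to redo it here. The paper's proof of \cref{thm:lbc3} is much shorter at this stage: apply \cref{lem:confidence} with $h=0$ to the output of $\learnlevel_0$, getting the \emph{unconditional} bound
\[
\bbP^{\pibell}\brk*{\sup_{f\in\cVhat_h}\max_{a\in\cA}\abs*{(\cP_{h-1}[\Vhat_h]-\cP_{h-1}[f_h])(\x_{h-1},a)}>3\veps}\leq \frac{4\log(8M^6 N_\test^2 H^8/\delta)}{N_\test}
\]
for all $h$, substitute $f_h=V^{\pib}_h$, and then apply \cref{lem:tvdistance}. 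That lemma is the piece you are missing: it formalizes your parenthetical observation that $\pib$ agrees with $\pihat$ whenever the backups are within $4\veps$, converting the probability bound on $|\cP_{h-1}[\Vhat_h]-\cP_{h-1}[V^{\pib}_h]|$ directly into $\En^{\pibell}[\tv{\pibell_h(\x_h)}{\pib_h(\x_h)}]\leq \tfrac{4\log(\cdots)}{N_\test}+\delta'$. Plugging in the parameter choices for $N_\test$ and $\delta'$ then yields $\veps/(4H^3\Cpush)$ without any recursion. So the ``main obstacle'' you flag is not present at this level; it has already been discharged inside \cref{lem:confidencesets}.

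Your episode-count sketch is fine and matches the paper's accounting.
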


Next, we state a guarantee for the outer-level algorithm, \rvflF, under relaxed $V^{\pi}$-realizability. Recall that \rvflF{} invokes $\mainalg_0$, then extracts an executable policy by applying the \forward{} algorithm (see \cref{sec:forward}), with the ``expert'' policy set to be the output of  \learnlevel{}. 
\begin{theorem}[Main guarantee of \rvflF]
	\label{thm:vpiforward}
	Let $\delta, \veps\in(0,1)$ be given, and define $\veps_{\learnlevel}= \vepsllnum $. Suppose that \begin{itemize}
		\item \cref{ass:pushforward} (pushforward coverability) holds with parameter $\Cpush>0$;
		\item  the function class $\cV$ satisfies \cref{ass:relaxreal} with $\veps_\rel=1$ (i.e.~all $\pi$-realizability); and
		\item the policy class $\Pi$ satisfies \cref{ass:pireal}.
	\end{itemize}
	Then, with probability at least $1-\delta$, $\pihat_{1:H}= \rvflF(\Pi, \cV, \veps, \delta)$ (\cref{alg:forward_vpi}) satisfies
	\begin{align}
		J(\pistar)- J(\pihat_{1:H}) \leq  \veps. \label{eq:final}
	\end{align}
	Furthermore, the total number sample complexity in the \framework framework is bounded by
	\begin{align}
	\wtilde{O}\left(\Cpush^8 H^{23}  A \epsilon^{-13}\right).
	\end{align}
      \end{theorem}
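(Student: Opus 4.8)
The plan is to obtain \cref{thm:vpiforward} by composing the inner-loop guarantee \cref{thm:lbc3} (invoked with a rescaled accuracy parameter), the benchmark-suboptimality bound \cref{lem:policysubopt3}, and the behavior-cloning guarantee for \forward{} developed in \cref{sec:forward}. Recall that \rvflF{} sets $\veps_\learnlevel=\veps H^{-1}/48$, runs $\learnlevel_{0}$ with accuracy $\veps_\learnlevel$ and confidence $\delta/10$ to obtain value functions $\Vhat_{1:H}$, defines the Bellman-backup policy $\pihat^{\learnlevel}_h(\cdot)\in\argmax_{a}\Phat_{h,\veps_\learnlevel,\delta'}[\Vhat_{h+1}](\cdot,a)$, and returns $\pihat_{1:H}=\forward(\Pi,\veps,\pihat^{\learnlevel}_{1:H},\delta/2)$. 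Since $4\veps_\learnlevel<1$ we have $\Pi_{4\veps_\learnlevel}\subseteq\Pi_{1}$, so the hypothesis that $\cV$ satisfies \cref{ass:relaxreal} with $\veps_\rel=1$ is at least as strong as the hypothesis $\veps_\rel=4\veps_\learnlevel$ required to invoke \cref{thm:lbc3}. Applying \cref{thm:lbc3} with confidence $\delta/10$, with probability at least $1-\delta/2$ the call $\learnlevel_{0}$ terminates and $\pibell\coloneqq\pihat^{\learnlevel}$ satisfies $\E^{\pibell}\brk{\tv{\pibell_h(\x_h)}{\pib_h(\x_h)}}\le \veps_\learnlevel/(4H^{3}\Cpush)$ for all $h$, where $\pib\in\Pi_{4\veps_\learnlevel}$ is the benchmark policy of \cref{lem:confidencesets}; the episode count of this call is $\wtilde O(\Cpush^{8}H^{10}A\veps_\learnlevel^{-13})=\wtilde O(\Cpush^{8}H^{23}A\veps^{-13})$.

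Next I would convert this into a value bound for $\pibell$. By \cref{lem:policysubopt3} applied to $\pib\in\Pi_{4\veps_\learnlevel}$, $V^{\star}_h(x)\le V^{\pib}_h(x)+12H\veps_\learnlevel$ for all $h,x$, hence $J(\pistar)-J(\pib)\le 12H\veps_\learnlevel$. For the gap between $\pib$ and $\pibell$ I would use the performance difference lemma (\cref{lem:perform}): it suffices to bound, for each layer $h$, the $\pibell$-occupancy probability that $\pibell_h(\x_h)\neq\pib_h(\x_h)$, and by the construction of $\pib$ in \cref{lem:confidencesets} this event occurs only at states where the approximate backup $\bQhat_h$ deviates from $\cP_h[V^{\pib}_{h+1}]$ by more than $4\veps_\learnlevel$ --- exactly the event whose probability is the total-variation term controlled by \eqref{eq:returned3}. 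Summing over $h$ yields $\abs{J(\pib)-J(\pibell)}\le H^{2}\cdot\veps_\learnlevel/(4H^{3}\Cpush)=\veps_\learnlevel/(4H\Cpush)$, which is negligible compared to $12H\veps_\learnlevel$, so $J(\pistar)-J(\pibell)\le 12H\veps_\learnlevel+\veps_\learnlevel/(4H\Cpush)$.

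Finally I would apply the \forward{} guarantee of \cref{sec:forward} to $\forward(\Pi,\veps,\pihat^{\learnlevel}_{1:H},\delta/2)$: using \cref{ass:pireal} (realizability of the imitation target within $\Pi$ along the relevant on-policy/near-optimal distribution) together with the approximate-backup bound \cref{lem:phat} to certify that $\pibell=\pihat^\learnlevel$ is evaluated accurately, this returns, with probability at least $1-\delta/2$, an executable policy $\pihat_{1:H}\in\Pi$ losing at most an additional $O(\veps)$ in value relative to $J(\pibell)$, at a sample cost that is $\poly(H,A,\log\abs{\Pi},\veps^{-1})$ and hence dominated by the $\learnlevel_{0}$ cost. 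Combining the three displays by the triangle inequality, and using that the constants in \rvflF{} (in particular $\veps_\learnlevel=\veps H^{-1}/48$) are tuned so that the three contributions sum to at most $\veps$, gives $J(\pistar)-J(\pihat_{1:H})\le\veps$. A union bound over the two failure events ($\delta/2+\delta/2$) yields success probability $1-\delta$, and the total sample complexity is $\wtilde O(\Cpush^{8}H^{23}A\veps^{-13})$.

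The step I expect to demand the most care is the composition bookkeeping rather than any single estimate: I must thread the single knob $\veps_\learnlevel$ through the benchmark-policy machinery so that the $H^{3}\Cpush$ slack deliberately built into \eqref{eq:returned3} is precisely what the performance-difference step consumes; verify that the (stochastic) expert passed to \forward{} really is the $\pibell$ analyzed in \cref{thm:lbc3}, which requires matching the $\delta'$ and $\veps_\learnlevel$ settings of $\Phat$ (\cref{alg:Phat}) and the underlying randomness coupling across \rvflF{} and \learnlevel{}; and confirm that the preconditions of the \forward{} guarantee are met when the imitation target is only guaranteed realizable on an on-policy subset of states. The genuinely hard content --- the recursive analysis of \learnlevel{} underlying \cref{thm:lbc3} --- is assumed here.
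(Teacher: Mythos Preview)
Your proposal is correct and follows essentially the same approach as the paper: invoke \cref{thm:lbc3} with accuracy $\veps_\learnlevel$ to obtain the TV closeness \eqref{eq:returned3} between $\pihat^{\learnlevel}$ and the benchmark $\pib\in\Pi_{4\veps_\learnlevel}$, convert this to a value gap via the performance-difference lemma plus \cref{lem:policysubopt3}, and finish with the \forward{} guarantee (\cref{prop:forward}) and a union bound. The paper's argument is slightly more direct in the performance-difference step (it bounds the $Q$-value gap by $H\cdot\mathrm{TV}$ immediately rather than routing through the event $\|\bQhat_h-\cP_h[V^{\pib}_{h+1}]\|_\infty>4\veps_\learnlevel$), and it instantiates \cref{prop:forward} with $\veps_{\texttt{mis}}=0$ by appealing to ``all $\pi$-realizability'' without further detail---so your flagged concern about the \forward{} preconditions is exactly the one place where both arguments are terse.
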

      The proof is in \cref{proof:vpiforward}. Note that \cref{thm:vpiforward} is a restatement of \cref{thm:vpiforward_main} in \setupii{} (restated for convenience). As a result, \cref{thm:vpiforward_main} is an immediate corollary.
\begin{proof}[Proof of \cref{thm:vpiforward_main}]
	The result follows from \cref{thm:vpiforward}, since \cref{ass:realpi} is stronger than \cref{ass:relaxreal}.
	\end{proof}

\subsection{Proof of \creftitle{lem:testfailures} (Number of Test Failures)}
\label{proof:testfailures}
\begin{proof}[\pfref{lem:testfailures}]
	Fix $h\in[H]$. We note that the size of $\cC_h$ corresponds to the number of times the test in \cref{line:test3} fails for $\ell=h$ throughout the execution of $\learnlevel_{0}(f,\cV,\emptyset,\emptyset;\cV,\veps,\delta)$. 
	
	Let $M \coloneqq  \Mnum$ denote the desired upper bound on $|\cC_h|$. Suppose that the test in \cref{line:test3} fails at least twice for $\ell=h$ (if the test fails at most twice, then $|\cC_h|\leq 2$ and so \eqref{eq:one3} holds for $\ell=h$ trivially), %
	and let \[(x\ind{1}_{h-1}, a\ind{1}_{h-1},\Vhat\ind{1}_{h}, \cVhat \ind{1}_h, t_h\ind{1}), (x\ind{2}_{h-1}, a\ind{2}_{h-1},\Vhat\ind{2}_h, \cVhat \ind{2}_h, t_h\ind{2}), \dots\] denote the elements of the set $\cB_h$ in the order at which they are added to the latter in \cref{line:added} of \cref{alg:learnlevel3}. Note that $|\cB_h|=|\cC_h|$. Note also that $t_h\ind{i}$ represents the number of times the subroutine $\Phat_{h-1,\veps,\delta'}$ has been called in the test of \cref{line:test3} throughout the execution of $\learnlevel_0$ and up to the time the test failed for $(x\ind{i}_{h-1}, a\ind{i}_{h-1})$. We will use this fact in a concentration argument in the sequel. %
	
	By definition of $(\cVhat _h\ind{i})$ and \cref{lem:corbern} (Freedman's inequality) instantiated with 
	\begin{itemize}
		\item $\cQ = \{ \Vhat\ind{i}_h - f_h  : f\in \cVhat\ind{i}_h \}$;
		\item $\y_h = \x_h$;
		\item $B = H$; and 
		\item $n = N_\reg \cdot i$;
	\end{itemize} 
	and the union bound over $i\in [M \wedge |\cC _h|]$, we get that there is an event $\cE_h$ of probability at least $1-\delta/(2H)$ under which 
	\colt{
	\begin{align}
		\forall i\in\brk{M \wedge |\cC_h|},\forall f\in \cVhat_h\ind{i}, \quad 	&\sum_{j<i} \E\brk[\big]{  (\Vhat\ind{i}_{h}(\x_h)-f_h(\x_h))^2 \mid{}x\ind{j}_{h-1},a\ind{j}_{h-1}}\nn \\ &  \leq\tilde\veps_\reg^2(i) \coloneqq 2\veps_\reg^2 + \frac{4 H^2 \log (4 MH|\cV|/\delta)}{N_\reg}. \label{eq:leee3}
	\end{align}
}
\arxiv{
	\begin{align}
	\forall i\in\brk{M \wedge |\cC_h|},\forall f\in \cVhat_h\ind{i}, \quad & \sum_{j<i} \E\brk[\big]{  (\Vhat\ind{i}_{h}(\x_h)-f_h(\x_h))^2 \mid{}\x_{h-1}= x\ind{j}_{h-1}, \a_{h-1}=a\ind{j}_{h-1}}  & \nn \\
	& \leq\tilde\veps_\reg^2 \coloneqq 2\veps_\reg^2 + \frac{4 H^2 \log (4 MH|\cV|/\delta)}{N_\reg}. \label{eq:leee3}
\end{align}
}
	Now, define $f\ind{i}_h \in \argmax_{f\in \cVhat _h\ind{i}} \big|\En\big[{ \Vhat\ind{i}_{h}(\x_h)-f_{h}(\x_h) \mid{}\x_{h-1}= x\ind{i}_{h-1},\a_{h-1}=a\ind{i}_{h-1}  }\big]\big|$. From \eqref{eq:leee3}, we have that under $\cE_h$:  
	\begin{align}
		\forall i\in\brk{M\wedge |\cC_h|}, \quad 		\sum_{j<i} \E\brk[\big]{(\Vhat\ind{i}_{h}(\x_h)-f\ind{i}_h(\x_h))^2 \mid{}\x_{h-1}=x\ind{j}_{h-1},\a_{h-1}=a\ind{j}_{h-1}}\leq\tilde\veps_\reg^2. \label{eq:firstpart3}
	\end{align}
	We now use this to bound the number of times the test in \cref{line:test3} fails for $\ell=h$. Suppose for the sake of contradiction that the test fails at least $N$ times for some $N\geq M$ (i.e.~$|\cC_h|=N\geq M$). Conditioned on $\cE_h$, we have by \cref{lem:pushforward_potential} and \cref{eq:firstpart3}, 
		\begin{align}
			&	\min_{i\in[M]} \sup_{f\in\cVhat \ind{i}_h} \left|\En\left[{ \Vhat\ind{i}_{h}(\x_h)-f_{h}(\x_h) \mid{}\x_{h-1}=x\ind{i}_{h-1},\a_{h-1}=a\ind{i}_{h-1}  }\right]\right|  \nn \\
			&	= \min_{i\in[M]}  \left|\En\left[{ \Vhat\ind{i}_{h}(\x_h)-f\ind{i}_{h}(\x_h) \mid{} \x_{h-1}=x\ind{i}_{h-1},\a_{h-1}=a\ind{i}_{h-1} }\right]\right|,  \nn \\
			&\leq 2 \left( \frac{\Cpush }{M^2}M \tilde\veps_\reg^2 \log(2M)\right)^{1/2}+ \frac{ 2\Cpush H}{M}.\nn \\
			\intertext{Now, substituting the expression of $\tilde\veps_\reg^2$ in \eqref{eq:leee3} and using the definition of $\veps_\reg^2$ in \cref{line:beta} of \cref{alg:learnlevel3}, we get }
			& = 2\left(\frac{\Cpush }{M} \cdot \left(2\veps_\reg^2 + \frac{4 M H^2 \log (4MH|\cV|/\delta)}{N_\reg}\right)\right)^{1/2}  + \frac{2\Cpush H}{M},  \nn  \\
			& \leq  2\left(\frac{\Cpush }{M} \cdot \left(\frac{22 M H^2\log(8M^2H|\cV|^2/\delta)}{N_\reg}  +  \frac{68 MH^3\log(8M^6 N^2_\test  H^8/\delta)}{N_\test}\right)\right)^{1/2}  + \frac{2\Cpush H}{M},  \nn  \\
			& \leq \veps, \label{eq:contradiction3}
		\end{align} %
	where the last inequality uses that $M =  \Mnum$ and \[ N_\reg=  \Nregnum  \  \text{ and }\  N_\test=   \Ntestnum;\] 
	see \cref{line:paramsVpi} of \cref{alg:learnlevel3}. 
	
	On the other hand, by \cref{lem:phat}, there is an event $\cE_h'$ of probability at least $1-\delta/(2MH)$ under which for all $f \in \cV$, all $i \in [M]$, and $\delta'$ as in \cref{alg:learnlevel3}: 
		\colt{
		\begin{align}
                  \abs*{\Phat_{h-1,\veps,\delta'}[\Vhat_h\ind{i}](x_{h-1}\ind{i}, a_{h-1}\ind{i})-	\Phat_{h-1,\veps,\delta'}[f_h](x_{h-1}\ind{i}, a_{h-1}\ind{i})}& \leq  \abs*{(\cP_{h-1}[\Vhat_h\ind{i}]-\cP_{h-1}[f_h])(x_{h-1}\ind{i}, a_{h-1}\ind{i})}\nn \\ & \quad  + \veps \cdot \sqrt{2\log_{1/\delta'} (8MH|\cV|(t\ind{i}_h)^2/\delta)}. \label{eq:conc}
		\end{align}
	}
	\arxiv{
          \begin{align}
            &\abs*{\Phat_{h-1,\veps,\delta'}[\Vhat_h\ind{i}](x_{h-1}\ind{i}, a_{h-1}\ind{i})-	\Phat_{h-1,\veps,\delta'}[f_h](x_{h-1}\ind{i}, a_{h-1}\ind{i})}\\
            &\leq  \abs*{\cP_{h-1}[\Vhat_h\ind{i}-f_h](x_{h-1}\ind{i}, a_{h-1}\ind{i})} + \veps \cdot \sqrt{2\log_{1/\delta'} (4M AH|\cV|(t\ind{i}_h)^2/\delta)}, \nn\\
			& =  \abs*{\cP_{h-1}[\Vhat_h\ind{i}-f_h](x_{h-1}\ind{i}, a_{h-1}\ind{i})}  + \veps \cdot \beta(t_h\ind{i}), \label{eq:conc}
	\end{align}
	where $\beta(t_h\ind{i})$ is as in \cref{alg:learnlevel3}.} Thus, under $\cE'_h$, the test in \cref{line:test3} fails for $\ell=h$ at least $M$ times only if 
	\begin{align}
		\forall i \in[M], \quad  \veps  %
		& <  \sup_{f\in\cVhat \ind{i}_h} \left|(\Phat_{h-1, \veps,\delta'}[\Vhat\ind{i}_{h}]-\Phat_{h-1, \veps,\delta'}[f_{h}])(x_{h-1}\ind{i}, a_{h-1}\ind{i}) \right| \nn \\ \quad &\quad - \veps \cdot  \beta(t_h\ind{i}), \nn \\ & \leq \sup_{f\in\cVhat\ind{i}_h} \left| \E\left[\Vhat\ind{i}_{h}(\x_h)-f_{h}(\x_{h}) \mid{}\x_{h-1}= x\ind{i}_{h-1},\a_{h-1}= a\ind{i}_{h-1}  \right]\right|   \quad \text{(by \eqref{eq:conc})},\nn \\
		& <  \sup_{f\in\cVhat \ind{i}_h} \left|\En\left[\Vhat\ind{i}_{h}(\x_h)-f_{h}(\x_{h}) \mid{}\x_{h-1} =x\ind{i}_{h-1},\a_{h-1}=a\ind{i}_{h-1}  \right]\right|.
	\end{align}
        Unless $N<M$, this is a contradiction to \cref{eq:contradiction3}. We conclude that under the event $\cE_h\cap \cE_h'$, the test in \cref{line:test3} fails at most $N < M= \Mnum$ times for $\ell=h$, and so under $\cE_1\cap \cE_1'\cap \dots \cap \cE_H\cap \cE_H$, we have
	\begin{align}
		\forall h \in[H], \quad	|\cC_h| \leq\Mnum. \label{eq:boundonCh3}
	\end{align}
	By the union bound, we have $\P[\cE_1\cap \cE_1'\cap \dots \cap \cE_H\cap \cE_H'] \geq 1 -\delta$, which completes the proof.
\end{proof}

\subsection{Proof of \creftitle{lem:confidence} (Consequence of Passing the Tests)}
\label{proof:confidence}
\begin{proof}[\pfref{lem:confidence}]
  Let $h\in\brk{H}$ be given. Fix $\ell\in [h+1 \ldotst H]$ and let $\x\ind{1}_{\ell-1},\x\ind{2}_{\ell-1},\dots$ denote the sequence of states used in the tests of \cref{line:test3} throughout the execution of $\learnlevel_0$; we assume that the sequence is \emph{ordered} in the sense that if $i<j$, then $\x\ind{i}_{\ell-1}$ is used in the test of \cref{line:test3} before $\x\ind{j}_{\ell-1}$. Let $\bm{T}_\ell \in \mathbb{N}\cup\{+\infty\}$ be the random variable representing the total number of times the operator $\Phat_{\ell-1,\veps,\delta'}$ is invoked in \cref{line:test3} throughout the execution of $\learnlevel_0$ ($\bm{T}_\ell$ is also the random length of the sequence $\x\ind{1}_{\ell-1},\x\ind{2}_{\ell-1},\dots$; if $\learnlevel_0$ terminates, then $\bm{T}_\ell$ is finite. The first step of the proof will be to show that under the event $\cE$ of \cref{lem:testfailures}, $\bm{T}_\ell$ is no larger than $M^3 N_\test H^3$ at any point during the execution of $\learnlevel_0$. This will help us establish key concentration results, leading to the desired inequality \eqref{eq:test_passed_lem}.

	\paragraph{Bounding $\bm{T}_\ell$ under $\cE$}
	First, note that under the event $\cE$ of \cref{lem:testfailures}, we have that for any $\tau \in [H]$, \begin{align} |\cC_\tau|\leq M\coloneqq\Mnum,
		\label{eq:bread}\end{align} and so $\learnlevel_\tau$ gets called at most $M$ times throughout the execution of $\learnlevel_0$. For the rest of this paragraph, we condition on $\cE$ and fix $\tau\in[0 \ldotst H]$. Within any given call to $\learnlevel_\tau$ (throughout the execution of $\learnlevel_0$), the operator $\Phat_{\ell-1,\veps,\delta'}$ is invoked at most \[
	\underbrace{|\cC_\tau|  N_\test H}_{\text{Due to the for-loops in \cref{line:begin3}, \cref{line:second}, \& \cref{line:third}}} \times  \underbrace{HM}_{\text{Number of times $\learnlevel_\tau$ returns to \cref{line:begin3} (see below)}} \leq M^2 N_\test  H^2
	\]
	times. This is because the for-loop in \cref{line:begin3} of $\learnlevel_\tau$ resumes whenever a test in \cref{line:test3} fails for one of the layers $\tau+1,\dots,H$ (see \cref{line:goto3}) once the recursive calls return, and the total number of test failures across all these layers is bounded by $H M$ (by \eqref{eq:bread}). Now, since $\learnlevel_\tau$ gets called at most $M$ times throughout the execution of $\learnlevel_0$ (as argued in the prequel), the total number of times the operator $\Phat_{\ell-1,\veps,\delta'}$ is invoked in \cref{line:test3} within $\learnlevel_\tau$ is at most 
	\begin{align}
		M^3 N_\test H^2.
	\end{align} 
	Finally, the total number of times the operator $\Phat_{\ell-1,\veps,\delta'}$ is called in \cref{line:test3} throughout the execution of $\learnlevel_0$ is at most $H$ times larger (accounting for the contributions from $\learnlevel_\tau$ for all $\tau \in [H]$); that is, it is at most
        $M^3 N_\test H^3$. We conclude that the random variable $\bm{T}_\ell$ satisfies  
	\begin{align} \bm{T}_\ell \leq M^3 N_\test  H^3 \label{eq:thetbound}
	\end{align}
        under $\cE$.
	\paragraph{Specifying $\cE'_h$} In this paragraph, we no longer condition on $\cE$. We will specify the event $\cE'_h$ in the lemma statement. 	Let $\delta'$ be defined as in \cref{alg:learnlevel3}. By \cref{lem:phat}, we have that there is an event $\cE'_{h,\ell}$ of probability at least $1-\delta/(2 H^2)$ under which:
	\begin{align}
          	\forall i \in[\bm{T}_\ell], \forall a_{\ell-1} \in \cA\;:\;  & 	\sup_{f\in \cVhat_{\ell}} |(\Phat_{\ell-1,\veps, \delta'}[\Vhat_{\ell}]- \Phat_{\ell-1,\veps, \delta'}[f_\ell])( \x\ind{i}_{\ell-1},a_{\ell-1})|  - \veps -  \veps \cdot \beta(\bm{T}_\ell)\nn \\
		& = \sup_{f\in \cVhat_{\ell}} |(\Phat_{\ell-1,\veps, \delta'}[\Vhat_{\ell}]- \Phat_{\ell-1,\veps, \delta'}[f_\ell])( \x\ind{i}_{\ell-1},a_{\ell-1})|  - \veps -  \veps \cdot \sqrt{2\log_{1/\delta'}(8AH^2M|\cV|\bm{T}_\ell^2/\delta)},\nn \\
		&\geq  \sup_{f\in \cVhat_{\ell}} |(\cP_{\ell-1}[\Vhat_{\ell}]- \cP_{\ell-1}[f_\ell])( \x\ind{i}_{\ell-1},a_{\ell-1})|  - \veps - \veps \cdot \sqrt{2\log_{1/\delta'}(8AH^2M|\cV|\bm{T}_\ell^2/\delta)} \nn \\
		& \qquad - \veps \cdot \sqrt{2\log_{1/\delta'}(4 A H^2M|\cV| i^2/\delta)}, \quad \text{(\cref{lem:phat})} \nn \\  
		&  \geq  \sup_{f\in \cVhat_{\ell}} |(\cP_{\ell-1}[\Vhat_{\ell}]- \cP_{\ell-1}[f_\ell])( \x\ind{i}_{\ell-1},	a_{\ell-1})|  - \veps - 2\veps \cdot \sqrt{2\log_{1/\delta'}(8 AH^2M|\cV|\bm{T}_\ell^2/\delta)}.   \label{eq:toindex}
	\end{align}
	On the other hand, for $k\in [\bm{T}_\ell-N_\test +1]$, we have by \cref{lem:corbern} (Freedman's inequality) instantiated with 
	\begin{itemize}
		\item $n = N_\test$ and $\y_i = \mathbb{I}\left\{\sup_{f\in \cVhat _\ell} \max_{a\in \cA} \left| (\cP_{\ell-1}[\Vhat_{\ell}]- \cP_{\ell-1})[f_\ell](\x\ind{k+i}_{\ell-1},a) \right|> 3 \veps \right\}$, for all $i\in[N_\test]$;
		\item $\cQ = \{ \mathrm{id}\}$;
		\item $B=1$; and
		\item $\lambda=1$;
                \end{itemize}
	that there is an event $\cE''_{h,\ell,k}$ of probability at least $1-\delta/(4 k^2H^2)$ under which 
	\begin{align}
		&\sum_{0\leq i<N_\test}\bbP\brk*{\sup_{f\in \cVhat _\ell} \max_{a\in \cA} \left| (\cP_{\ell-1}[\Vhat_{\ell}]- \cP_{\ell-1}[f_\ell])(\x\ind{k+i}_{\ell-1},a) \right|  > 3\veps} \nn \\
		&   \leq  4 \log(8H^2\bm{T}^2_\ell/\delta) + \sum_{0\leq i<N_\test} \mathbb{I}\left\{\sup_{f\in \cVhat_{\ell},a \in \cA} |(\cP_{\ell-1}[\Vhat_{\ell}]-\cP_{\ell-1}[ f_\ell])(\x\ind{k +i}_{\ell-1},a)| >3\eps\right\}.
	\end{align}
	Now, let $\cE''_{h,\ell} \coloneqq \bigcap_{k\in  [\bm{T}_\ell - N_\test +1]} \cE''_{h,\ell,k}$. By the union bound and the fact that $\sum_{k\geq 1}1/k^2 = \pi^2/6 \leq 2$, we have that $\P[\cE''_{h,\ell}] \geq 1 - \delta/(2 H^2)$. Furthermore, under $\cE''_{h,\ell}$, we have 
	\begin{align}
		\label{eq:test_passed_lem3}
	& 	\forall k \in [\bm{T}_\ell- N_\test +1],  \nn \\ 
		&\sum_{0\leq i<N_\test}\bbP\brk*{\sup_{f\in \cVhat _\ell} \max_{a\in \cA} \left| (\cP_{\ell-1}[\Vhat_{\ell}]- \cP_{\ell-1}[f_\ell])(\x\ind{k+i}_{\ell-1},a) \right|  > 3\veps} \nn \\
		&   \leq  4 \log(8H^2\bm{T}^2_\ell/\delta) + \sum_{0\leq i<N_\test} \mathbb{I}\left\{\sup_{f\in \cVhat_{\ell},a \in \cA} |(\cP_{\ell-1}[\Vhat_{\ell}]-\cP_{\ell-1}[ f_\ell])(\x\ind{k +i}_{\ell-1},a)| >3\eps\right\}. \label{eq:freed}
	\end{align}
	We define $\cE'_h \coloneqq \cE'_{h,1}\cap \cE_{h,1}''\cap \dots \cap \cE_{h,H}'\cap \cE_{h,H}''$. Note that by the union bound, we have $\P[\cE'_h] \geq 1 - \frac{\delta}{H}$ as desired.
        
	\paragraph{Termination of $\learnlevel_h$ under $\cE\cap \cE'_h$} We now show that under $\cE\cap\cE'_h$, if $\learnlevel_h$ terminates, its output satisfies \eqref{eq:test_passed_lem}. For the rest of the proof, we condition on $\cE \cap \cE'_h$. Suppose that $\learnlevel_h$ terminates and returns $(\Vhat_{h:H}, \cVhat_{h:H}, \cC_{h:H}, \cB_{h:H}, t_{h:H})$. In this case, the value function $\Vhat_{\ell}$ must have passed the tests in \cref{line:test3} for all $(x_{h-1},a_{h-1})\in \cC_h$, $n\in N_\test$, and $a_{\ell-1}\in \cA$. Fix $(x_{h-1}, a_{h-1})\in \cC_h$ and let $k \in [\bm{T}_\ell - N_\test \cdot A+1]$ be such that $(\x^{k + j}_{\ell-1})_{j \in [0 \ldotst N_\test -1]}$ represents a subsequence of states that pass the tests in \cref{line:test3} at layer $\ell$ for $(x_{h-1}, a_{h-1})$ within the call to $\learnlevel_h$. The fact that the sequence $(\x\ind{i}_{\ell-1})_{i\geq 1}$ is ordered (see definition in the first paragraph of this proof) and that $(\x^{k + j}_{\ell-1})_{j \in [0 \ldotst N_\test -1]}$ pass the tests imply that
	\begin{enumerate}
		\item \label{item:3}The states $(\x\ind{k+i}_{\ell-1})_{i\in[0\ldotst N_\test-1]}$ at layer $\ell-1$ are i.i.d., and are obtained by rolling out with $\pihat_{h:H}$ starting from $(x_{h-1},a_{h-1})$; and 
		\item \label{item:4} The test in \cref{line:test3} succeeds for all $(\x\ind{k+j}_{\ell-1})_{j\in [0 \ldotst N_\test -1]}$; that is 
		\begin{align}
			\forall j \in [0 \ldotst N_\test-1], \forall a_{\ell-1} \in \cA,\quad 	& \sup_{f\in \cVhat_{\ell}}  |(\Phat_{\ell-1,\veps, \delta'}[\Vhat_{\ell}]- \Phat_{\ell-1,\veps, \delta'}[f_\ell])( \x\ind{k+j}_{\ell-1},a_{\ell-1})| \nn \\
			&\leq   \veps  + \veps \cdot \beta(k+j), \nn \\	
			&\leq   \veps  + \veps \cdot \sqrt{2\log_{1/\delta'}(8AM|\cV|(k+j)^2/\delta)}, \nn \\
			& \leq \veps  + \veps \cdot \sqrt{2\log_{1/\delta'}(8AM|\cV|\bm{T}_\ell^2/\delta)}.
		\end{align}
	\end{enumerate}
	This implies that
	\colt{
	\begin{align}
	&	\forall i \in [0\ldotst N_\test-1], 	\forall a_{\ell-1} \in \cA\colon \nn  \\   &\ \ \sup_{f\in \cVhat_{\ell}} |(\cP_{\ell-1}[\Vhat_{\ell}]- [f_\ell])( \x\ind{k+ i}_{\ell-1},a_{\ell-1})| - 3 \veps\nn \\
		&\ \  \leq  \sup_{f\in \cVhat_{\ell}} |(\cP_{\ell-1}[\Vhat_{\ell}]- \cP_{\ell-1}[f_\ell])( \x\ind{k+i}_{\ell-1},a_{\ell-1})|  - \veps - 2\veps \cdot \sqrt{2\log_{1/\delta'}(8MH^2|\cV|\bm{T}_\ell^2\/\delta)},   \label{eq:tojust} \\
		&\ \  \leq  \sup_{f\in \cVhat_{\ell}} |(\Phat_{\ell-1,\veps, \delta'}[\Vhat_{\ell}]- \Phat_{\ell-1,\veps, \delta'}[f_\ell])( \x\ind{k+i}_{\ell-1},a_{\ell-1})|  - \veps -  \veps \cdot \sqrt{2\log_{1/\delta'}(8MH^2|\cV|\bm{T}_\ell^2/\delta)}, \quad  \text{(by \eqref{eq:toindex})}  \nn \\
		&\ \ \leq  0.\quad \text{(by \cref{item:4})} \label{eq:won}
	\end{align}
}
	\arxiv{
	\begin{align}
        &  \forall i \in [0\ldotst N_\test-1], 	\forall a_{\ell-1} \in \cA\;:\;\  \nn  \\   &  \sup_{f\in \cVhat_{\ell}} |\cP_{\ell-1}[\Vhat_{\ell}- f_\ell]( \x\ind{k+ i}_{\ell-1},a_{\ell-1})| - 3 \veps\nn \\
		& \leq  \sup_{f\in \cVhat_{\ell}} |\cP_{\ell-1}[\Vhat_{\ell}- f_\ell]( \x\ind{k+i}_{\ell-1},a_{\ell-1})|  - \veps - 2\veps \cdot \sqrt{2\log_{1/\delta'}(4AMH^2|\cV|\bm{T}_\ell^2\/\delta)},   \label{eq:tojust} \\
                                                                                 & \stackrel{\eqref{eq:toindex}}{\leq} 
\sup_{f\in \cVhat_{\ell}} |(\Phat_{\ell-1,\veps, \delta'}[\Vhat_{\ell}]- \Phat_{\ell-1,\veps, \delta'}[f_\ell])( \x\ind{k+i}_{\ell-1},a_{\ell-1})|
- \veps -  \veps \cdot \sqrt{2\log_{1/\delta'}(4AMH^2|\cV|\bm{T}_\ell^2/\delta)},  \nn \\
		& \leq  0.\quad \text{(by \cref{item:4})} \label{eq:won}
	\end{align}
}
	where \eqref{eq:tojust} follows by \eqref{eq:thetbound} and the choice of $\delta'$ in \cref{alg:learnlevel3}.

        Now, by \cref{item:3}, we have that $\x_{\ell-1}\ind{k+i}$ has probability law $\P^{\pihat_{h:H}}[\cdot \mid \x_{h-1}=x_{h-1},\a_{h-1}=a_{h-1}]$ for all $i \in [0 \ldotst N_\test -1]$, and so by \eqref{eq:freed}, we have: 
	\begin{align}
		\label{eq:test_passed_lem2}
		&\bbP^{\pibell}\brk*{\sup_{f\in \cVhat _\ell} \max_{a\in \cA} \left| (\cP_{\ell-1}[\Vhat_{\ell}]- \cP_{\ell-1}[f_\ell])(\x_{\ell-1},a) \right|  > 3\veps \mid  \x_{h-1}=x_{h-1},\a_{h-1}=a_{h-1}} \nn \\
		&   \leq   \frac{4 \log(8H^2\bm{T}^2_\ell/\delta)}{N_\test} + \frac{1}{N_\test} \sum_{0\leq i<N_\test} \mathbb{I}\left\{ \sup_{f\in \cVhat_{\ell},a \in \cA} |\cP_{\ell-1}[\Vhat_{\ell}]- \cP_{\ell-1}[f_\ell])(\x\ind{k+ i }_{\ell-1},a)| >3\eps\right\}, \nn \\
		& \leq  \frac{4 \log(8M^6 N^2_\test  H^8/\delta)}{N_\test} \quad (\text{using \eqref{eq:thetbound} and the fact that all the tests pass, i.e. \eqref{eq:won}}).
	\end{align}

	\paragraph{Concluding} We have established that under $\cE\cap \cE'_h$, we have for all $\ell\in[h+1\ldotst H]$ and all $(x_{h-1},a_{h-1})\in \cC_h$:
	\begin{align}
		&\bbP^{\pibell}\brk*{\sup_{f\in \cVhat _\ell} \max_{a\in \cA} \left|( \cP_{\ell-1}[\Vhat_{\ell}]- \cP_{\ell-1}[f_\ell])(\x_{\ell-1},a) \right|  > 3\veps \mid  \x_{h-1}=x_{h-1},\a_{h-1}=a_{h-1}}    \leq   \testbound,
	\end{align}
	Furthermore, we have $\P[\cE'_h] \geq 1 - \delta/H$. This completes the proof.
\end{proof}

\subsection{Proof of \creftitle{lem:confidencesets1} (Value Function Regression Guarantee)}
\label{proof:confidencesets1}
\begin{proof}[\pfref{lem:confidencesets1}]
	Fix $\pi \in \Pi'\subseteq \Pim$ and $k\geq 1$, and consider the $k$th call to $\learnlevel_h$ as per the lemma statement, and let $\cS_{k}$ be the state of $\learnlevel_0$ during the $k^{\text{th}}$ call to $\learnlevel_h$ and immediately before \cref{eq:gatherdata}, i.e.~immediately before gathering data for the regression step in $\learnlevel_h$.

	\paragraph{Relating the regression targets to $V^{\pihat}_h$} Observe that $\Vhat_h$ is the least-squares solution of the objective in \cref{line:updateQ3}, where the targets are empirical estimates of $V^{\pihat}_{h}$. In particular, if we let $\{\cD_h(x, a): (x,a)\in \cC_h\}$ be the datasets in the definition of $\cVhat_h$ in \eqref{eq:confidence3}, then for any $(x_{h-1},a_{h-1})\in \cC_h$ and $(x_h,v_h)\in \cD_h(x_{h-1},a_{h-1})$, the target $v_h$ satisfies 
	\begin{align}
		v_h =  \Vhat_h(x_h),
	\end{align}
	where $\Vhat_h(x_h)$ is an empirical estimate of $V^{\pihat}_h(x_h)$ obtained by sampling $\Nest(|\cC_h|)=\Nest(k)$ episodes (for $\Nest(\cdot)$ defined as in \cref{alg:learnlevel3}) by rolling out with $\pihat$ after starting from $x_{h}$ and playing action $a$ at layer $h$; note that $|\cC_h|= k$ because we are considering the $k$th call to $\learnlevel_h$. Thus, by Hoeffding's inequality and the union bound over $(x_{h-1},a_{h-1})\in \cC_h$ and $(x_h,-)\in \cD_h(x_{h-1},a_{h-1})$, there is an event $\cE''_{h,k}(\cS_k)$ of probability at least $1-\delta/(8 k^2 H)$ under which 
	\begin{align}
          & \forall (x_{h-1},a_{h-1})\in \cC_h,	\forall (x_h,-) \in \cD_h(x_{h-1},a_{h-1}): \\
          &|V^{\pihat}_h(x_h) - \Vhat_h(x_h)| \leq H\sqrt{\frac{2 \log (8 |\cC_h| N_\reg H k^2 /\delta)}{\Nest(k)}}
                                                         \leq H\sqrt{\frac{2 \log (8  N_\reg H k^3 /\delta)}{\Nest(k)}}, \label{eq:hoeff}
	\end{align}
	where $N_\reg$ is as in \cref{line:paramsVpi}, and the last inequality follows by $|\cC_h|\leq k$ since we are considering the $k$th call to $\learnlevel_h$. Thus, under $\cE''_{h,k}$, we have 
	\begin{align}
          &\forall (x_{h-1},a_{h-1})\in \cC_h,	\forall (x_h,v_h) \in \cD_h(x_{h-1},a_{h-1}): \\ & |V^{\pihat}_h(x_h) - v_h|
           = \left|V^{\pihat}_h(x_h) -\Vhat_h(x_h)\right|  \leq H\sqrt{\frac{2 \log (8  N_\reg H k^3 /\delta)}{\Nest(k)}}%
                                                                                                  \leq  \frac{H}{N_\reg}  \label{eq:cheer} ,  
	\end{align}
	where the second-to-last inequality is by \eqref{eq:hoeff} and the last inequality follows by the choice of $\Nest$ in \cref{alg:learnlevel3}. 
        	\paragraph{Bounding the discrepancy $V^{\pihat}_h - V_h^{\pi}$}
	On the other hand, by the performance difference lemma, the value function $V^{\pihat}_h$ satisfies:
	\begin{align}
		\forall x \in \cX, \quad 	|V^{\pibell}_{h}(x) - V^{\pi}_h(x)| &  \leq  \sum_{\tau=h}^H \E^{\pibell}[\abs*{Q^{\pi}_\tau(\x_\tau, \bm{\pi}_\tau(\x_\tau)) -  Q_\tau^{\pi}(\x_\tau, \bm{\pibell}_\tau(\x_\tau))}\mid \x_h = x],  \nn \\
		& \leq H \sum_{\tau=h}^H \E^{\pibell}[ \tv{\pibell_\tau(\x_\tau)}{\pi_\tau(\x_\tau)} \mid \x_h = x].   \label{eq:sequel}%
	\end{align} 
	Now, let $(x_{h-1}\ind{1},a_{h-1}\ind{1}), (x_{h-1}\ind{2},a_{h-1}\ind{2}), \dots$ denote the elements of $\cC_h$ in the order in which they are added to the latter in \cref{line:added}. By \cref{lem:freed} (Freedman's inequality) instantiated with
	\begin{itemize}
		\item $n = N_\reg \cdot k$.
		\item $\w_i = |V_h^\pi(\x\ind{i}_h)- V^{\pihat}_h(\x\ind{i}_h)|- \E[|V_h^\pi(\x_h)- V^{\pihat}_h(\x_h)|\mid \x_{h-1}=x_{h-1}\ind{j},\a_{h-1}= a_{h-1}\ind{j}]$, for all $i\in[n]$ and $j = \floor{i/N_\reg}+1$, where $\x_h\ind{N_\reg\cdot  j},\dots, \x_h\ind{N_\reg \cdot j + N_\reg -1}\stackrel{\text{i.i.d.}}{\sim}T_h(\cdot \mid \x_{h-1}=x_{h-1}\ind{j},\a_{h-1}= a_{h-1}\ind{j})$;
		\item $\cH_i = \sigma(\x_{h}\ind{1}, \dots \x_{h}\ind{i-1})$, for all $i\in[n]$;
		\item $R = H$; and 
		\item $\lambda = 1/H$;
	\end{itemize} we get that there is an event $\wtilde\cE''_{h,k,\pi}(\cS_k)$ of probability at least $1-\delta/(8 k^2 H|\Pi'|)$ under which:
	\begin{align}
		&	\sum_{(x_{h-1},a_{h-1})\in \cD_h}\sum_{(x_h,-)\in \cD_{h}(x_{h-1},a_{h-1})} |V_h^\pi(x_h)- V^{\pihat}_h(x_h)|  \nn \\
		& = 2	N_\reg \sum_{(x_{h-1},a_{h-1})\in \cD_h}\E\left[|V_h^\pi(\x_h)- V^{\pihat}_h(\x_h)|\mid \x_{h-1}=x_{h-1}, \a_{h-1}=a_{h-1}\right] + H \log(8 k^2 |\Pi'|H/\delta),\nn \\
		& \leq 2H	N_\reg \sum_{(x_{h-1},a_{h-1})\in \cD_h}\sum_{\tau=h}^H \E^{\pibell}\left[ \tv{\pibell_\tau(\x_\tau)}{\pi_\tau(\x_\tau)} \mid  \x_{h-1}=x_{h-1}, \a_{h-1}=a_{h-1}\right] + H \log(8 k^2|\Pi'| H/\delta), \label{eq:midnightsky}
	\end{align}
	where the last inequality follows by \eqref{eq:sequel} and the law of total expectation. 
	\paragraph{Regression guarantee}
	Since $\pi \in \Pi' \subseteq \Pim$ and \cref{ass:relaxreal} holds, \cref{lem:reg} (regression guarantee) instantiated with
	\begin{itemize}
		\item $f_\star(x) = V^{\pi}_h(x)$;
		\item $B = H$;
		\item $\bm{b}_i=\bm{v}_h-V^{\pi}_h(\x_h)$ (where $\bm{v}_h \coloneqq \max_{a\in \cA} \Qhat_h(\x_h,a)$); and
		\item $\xi = H$;
	\end{itemize}
	implies that there is an event $\breve\cE''_{h,k,\pi}(\cS_k)$ of probability at least $1-\delta/(4k^2 H|\Pi'|)$ under which we have:
	\begin{align}
		&  \sum_{(x_{h-1},a_{h-1})\in \cC_h}\frac{1}{N_\reg}\sum_{(x_h,-)\in \cD_h(x_{h-1}, a_{h-1})} \left( \Vhat_{h}(x_h)-V^{\pi}_{h}(x_{h})\right)^2\nn \\
		&\leq \frac{4 {k} H^2\log(4k^2H|\Pi'||\cV|/\delta)}{N_\reg}  +\frac{4 H}{N_\reg}  \sum_{(x_{h-1},a_{h-1})\in \cC_h}\sum_{(x_h, v_h)\in \cD_h(x_{h-1},a_{h-1})} |V_h^\pi(x_h)- v_h|, \nn \\
		&\leq \frac{4 {k} H^2\log(4k^2H|\Pi'||\cV|/\delta)}{N_\reg} +\frac{4 H}{N_\reg}  \sum_{(x_{h-1},a_{h-1})\in \cC_h}\sum_{(x_h, v_h)\in \cD_h(x_{h-1},a_{h-1})} |V_h^{\pihat}(x_h)- v_h| \nn \\ & \quad +\frac{4 H}{N_\reg}  \sum_{(x_{h-1},a_{h-1})\in \cC_h}\sum_{(x_h, v_h)\in \cD_h(x_{h-1},a_{h-1})}  |V_h^\pi(x_h)- V^{\pihat}_h(x_h)|, \label{eq:triangle}
	\end{align}
	where the last step follows by the triangle inequality. Thus, by plugging \eqref{eq:midnightsky} and \eqref{eq:cheer} into \eqref{eq:triangle}, we get that under $\cE''_{h,k}(\cS_k) \cap \wtilde\cE''_{h,k,\pi}(\cS_k)\cap \breve\cE''_{h,k,\pi}(\cS_k)$:
	\begin{align}
		&  \sum_{(x_{h-1},a_{h-1})\in \cC_h}\frac{1}{N_\reg}\sum_{(x_h,-)\in \cD_h(x_{h-1}, a_{h-1})} \left( \Vhat_{h}(x_h)-V^{\pi}_{h}(x_{h})\right)^2\nn \\
		&\leq \frac{9  {k} H^2\log(8k^2H|\Pi'||\cV|/\delta)}{N_\reg}  +{8 H^2} \sum_{(x_{h-1},a_{h-1})\in \cC_h}\sum_{\tau=h}^H \E^{\pibell}\left[\tv{\pibell_\tau(\x_\tau)}{\pi_\tau(\x_\tau)}  \mid \x_{h-1}=x_{h-1},\a_{h-1}=a_{h-1}\right]. \label{eq:robust}
	\end{align}

	\paragraph{Applying the union bound to conclude}
	Let $\bm\cS_{k}$ be the random state of $\learnlevel_0$ during the $k^{\text{th}}$ call to $\learnlevel_h$ and immediately before \cref{eq:gatherdata}, i.e.~immediately before gathering data for the regression step in $\learnlevel_h$. Further, let $\bm\cS_{k}^+$ be the random state of $\learnlevel_0$ during the $k^{\text{th}}$ call to $\learnlevel_h$ and immediately before \cref{line:updateQ3}, i.e.~immediately before the regression step in $\learnlevel_h$. If $\learnlevel_0$ terminates before the $k^{\text{}th}$ call to $\learnlevel_h$, we use the convention that $\bm{\cS}_{k}= \bm{\cS}_k^+= \mathfrak{t}$, where $\mathfrak{t}$ denotes a terminal state, and define $\cE_{h,k}''(\tfrak)=\wtilde\cE''_{h,k,\pi}(\tfrak)= \breve\cE''_{h,k, \pi}(\tfrak) = \{\tfrak\}$. Further, we define 
\begin{align}
	\cE''_h \coloneqq \left\{ \prod_{k\in\mathbb{N}, \pi\in \Pi'}\mathbb{I}\{\bm{\cS}^+_k\in  \cE''_{h,k}(\bm{\cS}_{k}) \cap \wtilde\cE''_{h,k,\pi}(\bm{\cS}_{k})\cap \breve\cE''_{h,k, \pi}(\bm{\cS}_{k})\} =1 \right\}.
\end{align} 
Note that by the argument in the sequel and the union bound, we have that
\begin{align}
\forall k \geq 1, \forall \cS_{k},\quad 	\P[\bm{\cS}^{+}_k\in \cE''_{h,k}({\cS}_{k}) \cap \wtilde\cE''_{h,k,\pi}({\cS}_{k})\cap \breve\cE''_{h,k, \pi}({\cS}_{k}) ] \geq 1 - \frac{\delta}{2k^2H}, \label{eq:midpoint}
\end{align}
where $\cS_k$ denotes the state of $\learnlevel_0$ during the $k^{\text{th}}$ call to $\learnlevel_h$ and immediately before \cref{eq:gatherdata}. By letting $\bm{\cS}_1',\bm{\cS}_2',\dots$ denote an identical, independent copy of the sequence $\bm{\cS}_1,\bm{\cS}_2,\dots$, we have by the chain rule: 
\begin{align}
	\P[\cE''_h] & = \E_{\bm{\cS}'_{1},\bm{\cS}_2',\dots}\left[ \prod_{k\geq 1}\P[\bm{\cS}^+_k\in \cE''_{h,k}(\bm{\cS}_{k}) \cap \wtilde\cE''_{h,k,\pi}(\bm{\cS}_{k})\cap \breve\cE''_{h,k, \pi}(\bm{\cS}_{k}) \mid \bm{\cS}_{k}=\bm{\cS}_{k}']\right], \nn \\
	& \geq  \prod_{k\geq 1} \left(1-\frac{\delta}{2k^2H}\right), \quad \text{(by \eqref{eq:midpoint})}\nn \\
	& \geq 1- \frac{\delta}{H}, \label{eq:finalfinal}
\end{align}
where the last inequality follows from the fact that for any sequence $x_1,x_2,\dots \in(0,1)$, we have $\prod_{k\geq 1}(1-x_k) \geq 1 - \sum_{k\geq 1}x_k$. Combining \eqref{eq:finalfinal} with \eqref{eq:robust} implies that $\cE''_h$ gives the desired result. 
\end{proof}

\subsection{Proof of \creftitle{lem:confidencesets} (Guarantee for Confidence Sets)}
\label{proof:confidencesets}

To prove \cref{lem:confidencesets}, we need one additional result pertaining to the order in which the instances $(\learnlevel_{h})_{h\in[H]}$ are called.

\begin{lemma}
	\label{lem:recurse2}
	Let $h\in [0\ldotst H]$ be given, and consider the setting of \cref{lem:confidencesets}. Further, consider a call to $\learnlevel_0(f,\cV,\emptyset,\emptyset;\cV,\veps,\delta)$ that terminates, and let $h\in [H]$ be any layer such that $\learnlevel_h$ is called during the execution of $\learnlevel_0$. Then, after the last call to $\learnlevel_h$ terminates, no instance of $\learnlevel$ in $(\learnlevel_{\tau})_{\tau >h}$ is called before $\learnlevel_0$ terminates.
\end{lemma}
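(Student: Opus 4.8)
The plan is to argue purely from the control structure of \cref{alg:learnlevel3}, so the only input I need from the ambient setting of \cref{lem:confidencesets} is that the top-level call $\learnlevel_0$ terminates. The starting point is the syntactic observation that, inside any call to $\learnlevel_{h'}$, the only place a subroutine $\learnlevel_\tau$ is invoked is the inner loop \texttt{for }$\tau=\ell,\dots,h'+1$ (\cref{line:forloop}), which is triggered when a test at some layer $\ell\in[h'+1\ldotst H]$ fails. Two consequences follow at once: (i) every call spawned from $\learnlevel_{h'}$ targets a strictly larger layer, $\tau>h'$; and (ii) when such a ``batch'' fires at layer $\ell$ it invokes $\learnlevel_\ell,\learnlevel_{\ell-1},\dots,\learnlevel_{h'+1}$ sequentially in decreasing order, so it invokes $\learnlevel_\tau$ for \emph{every} $\tau\in[h'+1\ldotst\ell]$. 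From (i) a one-line induction over push/pop operations yields the key invariant used throughout: at every moment of the execution the layer indices on the call stack are \emph{strictly increasing} from the bottom entry ($\learnlevel_0$) to the top. I would isolate this invariant as a short preliminary claim.

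I would then prove the statement by contradiction. Fix $h\in[H]$ with $\learnlevel_h$ invoked during $\learnlevel_0$, let $C_h$ denote the \emph{last} such invocation, and suppose that some $\learnlevel_\tau$ with $\tau>h$ is invoked after $C_h$ returns (and before $\learnlevel_0$ terminates). Let $C_{\tau_0}$ be the \emph{earliest} such invocation and let $C_{h'}$ be its caller, i.e.\ the top of the call stack at the instant $C_{\tau_0}$ is pushed; by (i) we have $h'<\tau_0$. The proof splits into cases according to how $h'$ compares with $h$.

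If $h'<h$, then the batch inside $C_{h'}$ that contains $C_{\tau_0}$ fires at a layer $\ell\ge\tau_0$, and since $h'+1\le h<\tau_0\le\ell$, property (ii) shows this same batch also invokes $\learnlevel_h$ at loop index $\tau=h$; as the batch runs in decreasing order and $h<\tau_0$, this invocation of $\learnlevel_h$ occurs after $C_{\tau_0}$, hence after $C_h$ has returned, contradicting the maximality of $C_h$. If instead $h'\ge h$, then $C_{h'}$ is a call to a layer $\ge h$ that is active at a time strictly after $C_h$ has returned. Here I claim $C_{h'}$ cannot have started before $C_h$ returned: a call sits on the stack continuously from start to return, so were $C_{h'}$ started before $C_h$ returned and still running when $C_{\tau_0}$ is pushed, $C_{h'}$ would be on the stack at the instant $C_h$ returns; but at that instant the stack consists only of $C_h$'s caller --- which has index $<h$, since $\learnlevel_h$ is spawned only from a loop in a lower-indexed call --- together with everything below it, all of index $<h$ by the invariant, whereas $C_h$ and its descendants have been popped; so no index-$\ge h$ call is then on the stack, a contradiction. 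Hence $C_{h'}$ started \emph{after} $C_h$ returned, and (since it calls $C_{\tau_0}$) before $C_{\tau_0}$; if $h'=h$ this is an invocation of $\learnlevel_h$ later than $C_h$, contradicting that $C_h$ is last, and if $h'>h$ it is an invocation of $\learnlevel_{h'}$ with $h'>h$ occurring after $C_h$ returns and strictly earlier than $C_{\tau_0}$, contradicting minimality of $C_{\tau_0}$. All cases are contradictory, which proves the lemma.

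The argument is essentially bookkeeping, with no probabilistic content. The one place I expect to need care --- and where the strictly-increasing-stack invariant does the real work --- is the distinction between a call that is \emph{still running} when $C_h$ returns versus one that is \emph{freshly spawned} afterward; getting that dichotomy exactly right (together with the ``first bad call'' and ``last call to $\learnlevel_h$'' minimality/maximality choices) is the only subtlety, and everything else is a direct reading of the loop structure of \cref{alg:learnlevel3}.
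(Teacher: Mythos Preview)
Your proof is correct and follows essentially the same approach as the paper's: argue by contradiction, pick an extremal offending invocation, and case-split on the layer index of its caller relative to $h$, using the for-loop structure of \cref{line:forloop} for the $h'<h$ case and ``parent outlives child'' for the $h'\ge h$ cases. The only cosmetic differences are that the paper selects the \emph{lowest layer} $\tau>h$ rather than the \emph{earliest call}, and that you make explicit the strictly-increasing-stack invariant which the paper uses implicitly in ruling out $\ell>h$; your version is, if anything, slightly more detailed.
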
 
\begin{proof}[\pfref{lem:recurse2}]
	Suppose there is an instance of  $\learnlevel$ in $(\learnlevel_{\tau})_{\tau >h}$ that is called after the last call to $\learnlevel_h$ terminates. Let $\tau>h$ be the lowest layer where $\learnlevel_\tau$ is called after the last call to $\learnlevel_h$ terminates. Let $\learnlevel_\tau^{\texttt{last}}$ denote the corresponding instance of $\learnlevel_\tau$. Further, let $\ell <\tau$ be such that $\learnlevel_\ell$ is the parent instance of $\learnlevel_\tau^{\texttt{last}}$ (i.e.~the instance that called $\learnlevel_\tau^{\texttt{last}}$). Note that we cannot have $\ell=h$ as this would imply that an instance of $\learnlevel_h$ terminates after $\learnlevel_\tau^{\texttt{last}}$, and we have assumed that $\learnlevel_\tau^{\texttt{last}}$ terminates after the last call $\learnlevel_h$. It is also not possible to have $\ell>h$ as this would imply that $\tau$ is not the lowest layer where $\learnlevel_\tau$ is called after the last call to $\learnlevel_h$ terminates. Thus, we must have that $\ell< h$. Now, the for-loop in \cref{line:forloop} ensures that that there is an instance of $\learnlevel_h$ that is called after $\learnlevel_\tau^{\texttt{last}}$ terminates and before $\learnlevel_\ell$ does. This contradicts the assumption that $\learnlevel_\tau^{\texttt{last}}$ is called after the last call to $\learnlevel_h$.
\end{proof}

\begin{proof}[Proof of \cref{lem:confidencesets}]
We start by showing that $\pib\in \Pi_{4\veps}$ by constructing the corresponding collection of random state-action value functions $(\bQtilde_{h}(x,a))_{(h,x,a)\in[H]\times \cX\times \cA}\subset [0,H]$ in the definition of $\Pi_{4\veps}$. In particular, for $(h,x,a)\in[H]\times \cX\times \cA$, we define
	\begin{align}
		\bQtilde_h(x,a) = \left\{ \begin{array}{ll}   \bQhat_h(x,a),  & \text{if }  \| \bQhat_h(x,\cdot)- \cP_h[V^{\pib}_{h+1}](x,\cdot)\|_\infty \leq 4\veps, \\   
			\cP_h[V^{\pib}_{h+1}](x,a), & \text{otherwise}, \end{array}\right.  \quad \text{for $h = H,\dots,1$.}
	\end{align}
	where $\bQhat_\tau(x,a)\coloneqq  \Phat_{\tau,\veps, \delta'} [\Vhat_{\tau+1}](x,a)$. Note that $\bQtilde_h(x,a)$ only depends on the randomness of $\Phat_{h,\veps, \delta'} [\Vhat_{h+1}](x,a)$, and so $(\bQtilde_h(x,a))_{(h,x,a)\in [H]\times \cX \times \cA}$ are independent random variables. Furthermore, since $\cP_h[V^{\pib}_{h+1}]\equiv Q^{\pib}_{h}$, we have that  
	\begin{align}
	\forall (x,a)\in \cX \times \cA	,\quad  \|\bQtilde_h(x,a)- Q^{\pib}_h(x,a)\| \leq 4\veps.
	\end{align}
	Finally, since $\pibb_h(\cdot) \in \argmax_{a\in \cA} \bQtilde_h(\cdot,a)$, we have that $\pib\in \Pi_{4\veps}$.

	\paragraph{We show $V^{\pib}_h \in \cVhat_h$} We prove that $V^{\pib}_h \in \cVhat_h$, for all $h\in[H]$, under the event $\cE'''\coloneqq \cE \cap \cE'_{1} \cap \cE_1'' \cap \dots \cap \cE_H' \cap \cE_H''$, where $\cE$, $(\cE_h')$, and $(\cE_h'')$ are the events defined in \cref{lem:testfailures}, \cref{lem:confidence}, and \cref{lem:confidencesets1}, respectively. Throughout, we condition on $\cE'''$. First, note that by \cref{lem:testfailures}, $\learnlevel_0$ terminates. Let $(\Vhat_{1:H},\cVhat _{1:H},
	\cC_{1:H},\cB_{1:H}, t_{1:H})$ be the tuple it returns.
	
	We will show via backwards induction over $\ell=H+1,\dots, 1$, that
	\begin{align}
		V^{\pib}_\ell \in \cVhat_\ell, \label{eq:newthirt}
	\end{align} 
	where $\pib_{1:H}$ is the stochastic policy defined recursively via
		\colt{
		\begin{align}
			\pibb_\tau(x) \in \argmax_{a\in \cA} \left\{ \begin{array}{ll}  \bQhat_\tau(x,a) \coloneqq  \Phat_{\tau,\veps, \delta'} [\Vhat_{\tau+1}](x,a),  & \text{if }  \| \bQhat_\tau(x,\cdot) -\cP_\tau [ V^{\pib}_{\tau+1}](x,\cdot)\|_\infty \leq 4\veps, \\   
				\cP_\tau[V^{\pib}_{\tau+1}](x,a), & \text{otherwise}, \end{array}\right.
			\label{eq:pibnew}
		\end{align}
		for $\tau = H,\dots,1$,	where $\bQhat_\tau(x,a) \coloneqq  \Phat_{\tau,\veps, \delta'} [\Vhat_{\tau+1}](x,a)$.
	}
	\arxiv{
	\begin{align}
		\pibb_\tau(x) \in \argmax_{a\in \cA} \left\{ \begin{array}{ll}  \bQhat_\tau(x,a) \coloneqq  \Phat_{\tau,\veps, \delta'} [\Vhat_{\tau+1}](x,a),  & \text{if }  \| \bQhat_\tau(x,\cdot) -\cP_\tau [ V^{\pib}_{\tau+1}](x,\cdot)\|_\infty \leq 4\veps, \\   
			\cP_\tau[V^{\pib}_{\tau+1}](x,a), & \text{otherwise}, \end{array}\right.  \ \text{for } \tau = H,\dots,1,
		\label{eq:pibnew}
	\end{align} 	
		where $\bQhat_\tau(x,a) \coloneqq  \Phat_{\tau,\veps, \delta'} [\Vhat_{\tau+1}](x,a)$.
}

	\paragraph{Base case [$\ell=H+1$]} This holds trivially because $V^{\pi}_{H+1}\equiv 0$ for any $\pi \in \Pims$ by convention. 
	
	\paragraph{General case [$\ell\leq H$]} Fix $h\in[H]$ and suppose that \eqref{eq:newthirt} holds for all $\ell\in[h+1\ldotst H+1]$. We show as a consequence that \eqref{eq:newthirt} holds for $\ell=h$. First, note that if $\learnlevel_h$ is never called during the execution of $\learnlevel_0$, then $\cVhat_h = \cV$, and so \eqref{eq:newthirt} trivially holds for $\ell=h$ under \cref{ass:relaxreal} with $\veps_\rel = 4 \veps$. 
	
	Now, suppose that $\learnlevel_h$ is called at least once, and let $(\Vhat^+_{h:H},\cVhat^+_{h:H},
	\cC^+_{h:H},\cB^+_{h:H},t^+_{h:H})$ be the output of the last call to $\learnlevel_h$ during the execution of $\learnlevel_0$. We claim that \begin{align}(\Vhat^+_{h:H},\cVhat^+_{h:H},
		\cC^+_{h:H}) = (\Vhat_{h:H},\cVhat_{h:H},
		\cC_{h:H}).  \label{eq:same}
	\end{align}
	To see this, first note that the for-loop in \cref{line:forloop} ensures that no instance of $(\learnlevel_\tau)_{\tau>h}$ can be called after the last call to $\learnlevel_h$ (by \cref{lem:recurse2}). Thus, the estimated value functions, confidence sets, and core sets for layers $h+1, \dots, H$ remain unchanged after the last call to $\learnlevel_h$; that is, \eqref{eq:same} holds. Thus, by \cref{lem:confidence}, and since we are conditioning on $\cE_{h+1:H}'$, we have that for all $(x_{h-1},a_{h-1}) \in \cC_h$ and $\ell \in [h+1 \ldotst H+1]$:
	\begin{align}
		\bbP^{\pibell}\brk*{\sup_{f\in \cVhat _\ell} \max_{a\in \cA} \left| (\cP_{\ell-1}[\Vhat_{\ell}]- \cP_{\ell-1}[f_\ell])(\x_{\ell-1},a) \right|  > 3\veps \mid  \x_{h-1}=x_{h-1},\a_{h-1}=a_{h-1}}  \leq  \testbound, \label{eq:whole}
	\end{align}
	where $M = \Mnum$. Now, by the induction hypothesis, we have $V^{\pib}_\ell\in \cVhat_\ell$, and so substituting $V^{\pib}_\ell$ for $f_\ell$ in \eqref{eq:whole}, we get that for all $(x_{h-1},a_{h-1}) \in \cC_h$ and $\ell \in [h+1 \ldotst H+1]$:
	\begin{align}
		\bbP^{\pibell}\brk*{ \max_{a\in \cA} \left|( \cP_{\ell-1}[\Vhat_{\ell}]- \cP_{\ell-1}[V^{\pib}_\ell])(\x_{\ell-1},a) \right|  > 3\veps \mid \x_{h-1}= x_{h-1},\a_{h-1}=a_{h-1}}  \leq  \testbound.
	\end{align}
	Therefore, by \cref{lem:tvdistance} (instantiated with $\mu[\cdot] = \bbP^{\pibell}[\cdot \mid \x_{h-1}=x_{h-1},\a_{h-1}=a_{h-1}]$, $\tau = \ell-1$, and $V_{\tau+1} = V^{\pib}_{\ell}$), we have that $(x_{h-1},a_{h-1}) \in \cC_h$ and $\ell \in [h+1 \ldotst H+1]$: 
	\begin{align}
		\En^{\pibell}\brk*{ \tv{\pibell_{\ell-1}(\x_{\ell-1})}{\pib_{\ell-1}(\x_{\ell-1})}  \mid \x_{h-1}=x_{h-1},\a_{h-1}=a_{h-1}}\leq \testbound + \delta', \label{eq:above}
	\end{align}
	where $\delta'$ is as in \cref{alg:learnlevel3}. 
	\paragraph{Applying the regression guarantee to conclude the induction} Note that $\pib \in \Pi'$, where $\Pi'\subset \Pim$ is the set of stochastic policies such that $\pi \in \Pi'$ if and only if there exists $V_{1:H} \in \cV$ such that $\pi$ is defined recursively as 
	\begin{align}
	\bm{\pi}_\tau(x) \in \argmax_{a\in \cA} \left\{ \begin{array}{ll}  \bm{Q}_\tau(x,a) \coloneqq  \Phat_{\tau,\veps, \delta'} [V_{\tau+1}](x,a),  & \text{if }  \| \bm{Q}_\tau(x,\cdot) -\cP_\tau [ V^{\pi}_{\tau+1}](x,\cdot)\|_\infty \leq 4\veps, \\   
	\cP_\tau[V^{\pi}_{\tau+1}](x,a), & \text{otherwise}, \end{array}\right.
	\label{eq:pibnew}
	\end{align}
	for $\tau = H,\dots,1$,	where $\bm{Q}_\tau(x,a) \coloneqq  \Phat_{\tau,\veps, \delta'} [V_{\tau+1}](x,a)$. The policy class $\Pi'$ is finite and $|\Pi'|\leq |\cV|$. Furthermore, we have $\Pi' \subseteq \Pi_{4\veps}$ as shown at the beginning of this proof. Therefore, if we let $\{\cD_h(x, a): (x,a)\in \cC_h\}$ be the datasets in the definition of $\cVhat_h$ in \eqref{eq:confidence3}, we have by \cref{lem:confidencesets1} (under \cref{ass:relaxreal} with $\veps_\rel = 4 \veps$) and the conditioning on $\cE_{h+1:H}''$ and $\cE$:
	\begin{align}
		& \sum_{(x_{h-1},a_{h-1})\in \cC_h}\frac{1}{N_\reg}\sum_{(x_h,-)\in \cD_h(x_{h-1}, a_{h-1})} \left( \Vhat_{h}(x_h)-V^{\pib}_{h}(x_{h})\right)^2\nn \\
		& \leq \frac{9|\cC_h| H^2\log(8|\cC_h|^2H|\cV|^2/\delta)}{N_\reg}  +{8 H^2} \sum_{(x_{h-1},a_{h-1})\in \cC_h}\sum_{\tau=h}^H \E^{\pibell}\left[\tv{\pibell_\tau(\x_\tau)}{{\pib}_\tau(\x_\tau)}  \mid \x_{h-1}=x_{h-1},\a_{h-1} a_{h-1}\right], \nn \\
		& \leq \frac{9 {M} H^2\log(8M^2H|\cV|^2/\delta)}{N_\reg}  +8 H^2  \sum_{(x_{h-1},a_{h-1})\in \cC_h}\sum_{\tau=h}^H \E^{\pibell}\left[\tv{\pibell_\tau(\x_\tau)}{{\pib}_\tau(\x_\tau)}  \mid \x_{h-1}= x_{h-1}, \a_{h-1}=a_{h-1}\right], \label{eq:happ}
	\end{align}
	where the last inequality follows by the fact that $|\cC_h| \leq M$ under $\cE$. Combining \eqref{eq:happ} with \eqref{eq:above}, implies that  
	\begin{align}
		& \sum_{(x_{h-1},a_{h-1})\in \cC_h}\frac{1}{N_\reg}\sum_{(x_h,-)\in \cD_h(x_{h-1}, a_{h-1})} \left( \Vhat_{h}(x_h)-V^{\pib}_{h}(x_{h})\right)^2\nn\\
		& \leq \frac{9 {M} H^2\log(8M^2H|\cV|^2/\delta)}{N_\reg}  + 8 M H^3\cdot \testbound  + 8M {H^3}\delta', \nn \\
		& = \frac{9 {M} H^2\log(8M^2H|\cV|^2/\delta)}{N_\reg}  + 8 M H^3\cdot \testbound  + 8M {H^3}  \frac{\delta}{4M^7N_\test^2 H^8|\cV|},\nn \\
		& \leq    \veps_\reg^2, \label{eq:los}
	\end{align}
  where the last inequality follows by the fact that $\delta \in(0,1)$ and the definition of $\veps_\reg^2$ in \cref{alg:learnlevel3}.
	By the definition of $\cVhat_h$ in \eqref{eq:confidence3}, \eqref{eq:los} implies that $V^{\pib}_h \in \cVhat_h$, which completes the induction.
\end{proof}

\subsection{Proof of \creftitle{thm:lbc3} (Main Guarantee of \learnlevel)}
\label{proof:lbc3}
\begin{proof}[\pfref{thm:lbc3}] We condition on the event $\wtilde{\cE}\coloneqq \cE \cap \cE'''\cap \cE'_1\cap \dots \cap \cE_H'$, where $\cE$, $\cE'''$, and $(\cE_h')$ are the events in \cref{lem:testfailures}, \cref{lem:confidencesets}, and \cref{lem:confidence}, respectively. Note that by the union bound, we have $\P[\wtilde{\cE}] \geq  1 - 5 \delta$. By \cref{lem:confidence}, we have that 
	\begin{align}
		\label{eq:test_passed33}
		\forall h \in[H],\quad 	\bbP^{\pibell}\brk*{\sup_{f\in \cVhat _h} \max_{a\in \cA}\left|(\cP_{h-1}[\Vhat_{h}]-\cP_{h-1}[f_{h}])(\x_{h-1},a) \right|>{3\veps}}
		\leq\testbound,
	\end{align}
	where $M = \Mnum$ and $N_\test = \Ntestnum$. On the other hand, by \cref{lem:confidencesets}, we have 
	\begin{align}
		\forall h \in[H], \quad 	V^{\pib}_h\in \cVhat_h.
		\label{eq:learnlevel_inductive33}
	\end{align}
	Thus, substituting $V^{\pib}_h$ for $f_h$ in \eqref{eq:test_passed33} we get that for all $h\in[H+1]$. 
	\begin{align}
		\bbP^{\pibell}\brk*{\max_{a\in \cA}\left|(\cP_{h-1}[\Vhat_{h}]- \cP_{h-1}[V^{\pib}_h])(\x_{h-1},a)\right|   > {3\veps }
		}  \leq  \testbound.
	\end{align}
	This together with \cref{lem:tvdistance}, instantiated with $\mu[\cdot] = \bbP^{\pibell}[\cdot]$; $\tau = h-1$; $V_{\tau+1} = V^{\pib}_{h}$; and $\delta = \delta'$ (with $\delta'$ as in \cref{alg:learnlevel3}), translates to:
	\begin{align}
		\forall h\in[H], \quad 	\bbE^{\pibell}\brk*{ \tv{\pibell_h(\x_h)}{\pib_h(\x_h)}
		}&\leq \testbound + \delta',  \nn \\
		&= \testbound + \frac{\delta}{4 M^7 N_\test^2 H^8 |\cV|}, \nn \\
		&\leq \frac{\veps}{4H^3 \Cpush},
	\end{align}
	where the last step follows from the fact that $N_\test = \Ntestnum$ (with $M$ as in \cref{line:paramsVpiM}).

	\paragraph{Bounding the sample complexity}
	We now bound the number of episodes used by \cref{alg:learnlevel3} under the event $\wtilde\cE$. First, we fix $h\in[H]$, and focus on the number of episodes used within a to call $\learnlevel_h$, excluding any episodes used by any subsequent calls to $\learnlevel_\tau$ for $\tau >h$. We start by counting the number of episodes used to test the fit of the estimated value functions $\Vhat_{h+1:H}$. Starting from \cref{line:begin3}, there are for-loops over $(x_{h-1},a_{h-1})\in \cC_h$, $\ell=H, \dots, h+1$, and $n \in [N_\test]$ to collect partial episodes using the learned policy $\pihat$ in \cref{alg:learnlevel3}, where $N_\test = \Ntestnum$ and $M = \Mnum$. Note that executing $\pihat$ requires the local simulator and uses $N_\simu= 2 \log(4 M^7 N_\test^2 H^2 |\cV|/\delta)/\veps^2$ local simulator queries to output an action at each layer (since \cref{alg:learnlevel3} calls \cref{alg:Phat} with confidence level $\delta'=\deltaprime$). Also, note that whenever a test fails in \cref{line:test3} and the recursive $\learnlevel$ calls return, the for-loop in \cref{line:begin3} resumes. We also know (by \cref{lem:testfailures}) that the number of times the test fails in \cref{line:test3} is at most $M$. Thus, the number of times the for-loop in \cref{line:begin3} resumes is bounded by $H M$; here, $H$ accounts for possible test failures across all layers $\tau \in[h+1\ldotst H]$. Thus, the total sample complexity required to generate episodes between lines \cref{line:begin3} and \cref{line:draw} is bounded by 
	\begin{align}
		\text{\# episodes for roll-outs} \leq	\underbrace{MH}_{\text{\# of times \cref{line:begin3} resumes}}\cdot  \underbrace{MH^2 N_\test N_\simu}_{\text{Sample complexity  in case of no test failures}}. \label{eq:roll}
	\end{align}
	Note that the test in \cref{line:test3} also uses episodes because it calls the operator $\Phat$ for every $a\in \cA$. Thus, the number of episodes used for the test in \cref{line:test3} is bounded by 
	\begin{align}
		\text{\# episodes for the tests} \leq	\underbrace{MH}_{\text{\# of times \cref{line:begin3} resumes}}\cdot  \underbrace{M H A N_\test N_\simu}_{\text{Sample complexity for  \cref{line:test3}}}. \label{eq:test}
	\end{align}
	We now count the number of episodes used to re-fit the value function in \cref{line:forloop} and onwards. Note that starting from \cref{line:forloop}, there are for-loops over $(x_{h-1},a_{h-1})\in \cC_h$ and $i\in [N_\reg]$ to generate $A \cdot \Nest(|\cC_h|)\leq A \cdot \Nest(M)$ partial episodes using $\pihat$, where $\Nest(k)=2N_\reg^2 \log(8 A N_\reg H k^3/\delta)$ is defined as in \cref{alg:learnlevel3}. Since $\pihat$ uses the local simulator and requires $\Nest$ samples (see \cref{alg:Phat}) to output an action at each layer, the number of episodes used to refit the value function is bounded by 
	\begin{align}
		\text{\# episodes for $V$-refitting} \leq M N_\reg A \Nest(M) H N_\simu. \label{eq:refit}
	\end{align}
	Therefore, by \eqref{eq:roll}, \eqref{eq:test}, and \eqref{eq:refit}, the number of episodes used within a single call to $\learnlevel_h$ (not accounting for episodes used by recursive calls to $\learnlevel_\tau$, for $\tau >h$) is bounded by
	\begin{align}
		\text{\# episodes used locally within $\learnlevel_h$} & \leq  M^2 H (H + A) N_\test N_\simu  +M N_\reg A \Nest(M) H N_\simu. \label{eq:localtotal}
	\end{align}
	Finally, by \cref{lem:testfailures}, $\learnlevel_h$ may be called at most $M$ times throughout the execution of $\learnlevel_0$. Using this together with \eqref{eq:localtotal} and accounting for the number of episodes from all layers $h\in[H]$, we get that the total number of episodes is bounded by 
	\begin{align}
		M^3 H^2 (H + A) N_\test N_\simu   +M^2 H^2 N_\reg A \Nest(M) N_\simu.
	\end{align}  
	Substituting the expressions of $M$, $N_\test $, $\Nest$, $N_\simu$, and $N_\reg$ from \cref{alg:learnlevel3} and \cref{alg:Phat}, we obtain the desired number of episodes, which concludes the proof.
\end{proof}

\subsection{Proof of \creftitle{thm:vpiforward} (Guarantee of \rvflF)}
\label{proof:vpiforward}

\begin{proof}[\pfref{thm:vpiforward}]
Let $\Vhat_{1:H}$ be the value function estimates produced by $\learnlevel_0$ within \cref{alg:forward_vpi}, and let $\pihatb^{\learnlevel}_h(\cdot) \in \argmax_{a\in
\cA}
\Phat_{h,\veps_\learnlevel,\delta'}[\Vhat_{h+1}](\cdot,a)$, for all $h\in [H]$ with $\Vhat_{H+1}\equiv 0$ with $\veps_\learnlevel$ and $\delta'$ as in \cref{alg:forward_vpi}. Further, let and let $\pib_{1:H}\in \Pims$ be the stochastic policy defined recursively via
\begin{align}
	\forall x \in \cX,\ \ 	\pibb_\tau(x) \in \argmax_{a\in \cA} \left\{ \begin{array}{ll}  \bQhat_\tau(x,a),  & \text{if }  \| \bQhat_\tau(x,\cdot) -\cP_\tau [ V^{\pib}_{\tau+1}](x,\cdot)\|_\infty \leq 4\veps_\learnlevel, \\   
		\cP_\tau[V^{\pib}_{\tau+1}](x,a), & \text{otherwise}, \end{array}\right.  \label{eq:pitilde}
\end{align} 
for $\tau = H,\dots,1$, where $\bQhat_\tau(x,a) \coloneqq  \Phat_{\tau,\veps_\learnlevel, \delta'} [\Vhat_{\tau+1}](x,a)$. By \cref{thm:lbc3}, there is an event $\wtilde \cE$ of probability at least $1-\delta/2$ under which:
\begin{gather}
	\pib \in \Pi_{{4\veps_\learnlevel}}, \label{eq:inbenchee} 
	\shortintertext{and}
	\forall h \in[H], \quad 
	\bbE^{\pihat^{\learnlevel}}\brk*{ \tv{\pihat^{\learnlevel}_{h}(\x_h)}{ \pib_h(\x_{h})}
	} \leq   \frac{\veps_\learnlevel}{4 H^3 \Cpush} \leq  \frac{\veps}{4H^2},   \label{eq:secondevent}
\end{gather}
where the last inequality follows by the choice of $\veps_{\learnlevel}$ in \cref{alg:forward_vpi}.

For the rest of the proof, we condition on $\wtilde{\cE}$. By \eqref{eq:inbenchee}, \eqref{eq:secondevent}, and \cref{prop:forward} instantiated with $\veps_{\texttt{mis}}=0$ (due to all $\pi$-realizability), we have that there is an event $\wtilde\cE'$ of probability at least $1-\delta/2$ under which the policy $\pihat_{1:H}$ produced by \forward{} ensures that
	\begin{align}
		J(\pihat^{\learnlevel}_{1:H}) -  J(\pihat_{1:H}) \leq  \frac{\veps}{2}. \label{eq:forwardpart}
	\end{align}	
	Now, by \cref{lem:perform} (the performance difference lemma), we have for $\pib$ as in \eqref{eq:pitilde}:
	\begin{align}
		J(\pib)- J(\pihat^{\learnlevel}_{1:H}) & = \sum_{h=1}^H \E^{\pihat^{\learnlevel}}[Q^{\pib}_h(\x_h, \pibb_h(\x_h)) -  Q_h^{\pib}(\x_h, \bm{\pihat}^{\learnlevel}_h(\x_h))],\nn \\&  \leq H\sum_{h=1}^H \E^{\pihat^{\learnlevel}}[ \tv{\pihat^{\learnlevel}_{h}(\x_h)}{ \pib_h(\x_{h})}], 
	\end{align}
	and so by \eqref{eq:secondevent}, we have
	\begin{align}
		J(\pib)- J(\pihat^{\learnlevel}_{1:H})  & \leq \veps/4. \label{eq:perform}
	\end{align}
	Finally, since $\pib \in \Pi_{4\veps_\learnlevel}$ (see \eqref{eq:inbenchee}), we have by \cref{lem:policysubopt3},  
	\begin{align}
		J(\pistar) -J(\pib) 
		& \leq 12 H \veps_\learnlevel \leq \veps/4,
	\end{align}
	where the last inequality follows by the choice $\veps_\learnlevel$ in \cref{alg:forward_vpi}. Combining this with \eqref{eq:forwardpart} and \eqref{eq:perform}, we conclude that under $\wtilde\cE\cap \wtilde\cE'$:
	\begin{align}
		J(\pistar)  - J(\pihat_{1:H}) \leq \veps. 
	\end{align}
	By the union bound, we have $\P[\wtilde\cE\cap \wtilde\cE']\geq 1 - \delta$, and so the desired suboptimality guarantee in \eqref{eq:final} holds with probability at least $1-\delta$. 
	
	\paragraph{Bounding the sample complexity} The sample complexity is dominated by the call to $\learnlevel_0$ within \rvflF{} (\cref{alg:forward_vpi}). Since \rvflF{} calls $\learnlevel_0$ with $\veps = \veps_\learnlevel = \veps H^{-1}/48$, we conclude from \cref{thm:lbc3} that the total sample complexity is bounded by 
	\begin{align}
		\wtilde{O}\left(\Cpush^8 H^{23}A \cdot \veps^{-13}\right).
	\end{align}
\end{proof}

        	\section{Guarantee under $V^\star$-Realizability
          (Proof of \creftitle{thm:vpiforward_main}, \setupi)}
	\label{sec:vstar}

\ifdefined\vepsllnum
\else
\newcommand{\vepsllnum}{\veps H^{-1}/48}
\fi

\ifdefined\vepsll
\else
\newcommand{\vepsll}{\veps_\learnlevel}
\fi

\ifdefined\Mnum
\renewcommand{\Mnum}{\ceil{8  \veps^{-1} \Cpush H}}
\else
\newcommand{\Mnum}{\ceil{8  \veps^{-1} \Cpush H}}
\fi 

\ifdefined\Ntestnum
\renewcommand{\Ntestnum}{2^8  M^2 H \veps^{-1}\log(8 M^6 H^8 \veps^{-2} \delta^{-1})}
\else
\newcommand{\Ntestnum}{2^8  M^2 H \veps^{-1}\log(8 M^6 H^8 \veps^{-2} \delta^{-1})}
\fi

\ifdefined\Nregnum
\renewcommand{\Nregnum}{2^8 M^2  \veps^{-1}\log(8|\cV|^2 HM^2 \delta^{-1})}
\else
\newcommand{\Nregnum}{2^8 M^2 \veps^{-1}\log(8|\cV|^2 HM^2 \delta^{-1})}
\fi

\ifdefined\bbeta
\renewcommand{\bbeta}{\frac{9 M H^2\log(8M^2H|\cV|^2/\delta)}{N_\reg}  +  \frac{34 MH^3\log(8M^6 N^2_\test  H^8/\delta)}{N_\test} }
\else
\newcommand{\bbeta}{\frac{9 M H^2\log(8M^2H|\cV|^2/\delta)}{N_\reg}  +  \frac{34 MH^3\log(8M^6 N^2_\test  H^8/\delta)}{N_\test} }
\fi

\ifdefined\Nsimu
\renewcommand{\Nsimu}{2N_\reg^2 \log(8 A N_\reg H M^3/\delta)}
\else
\newcommand{\Nsimu}{2N_\reg^2 \log(8 A N_\reg H M^3/\delta)}
\fi

\ifdefined\deltaprime
\renewcommand{\deltaprime}{\delta/(8M^7N_\test^2 H^8|\cV|)}
\else
\newcommand{\deltaprime}{\delta/(8M^7N_\test^2 H^8|\cV|)}
\fi

\ifdefined\testbound
\renewcommand{\testbound}{\frac{4 \log(8M^6 N^2_\test  H^8/\delta)}{N_\test}}
\else
\newcommand{\testbound}{\frac{4 \log(8M^6 N^2_\test  H^8/\delta)}{N_\test}}
\fi

\ifdefined\deltap
\renewcommand{\deltap}{\frac{\delta}{|\cV|M H}}
\else
\newcommand{\deltap}{\frac{\delta}{|\cV|M H}}
\fi 

\ifdefined\pibell
\renewcommand{\pibell}{\pihat}
\else
\newcommand{\pibell}{\pihat}
\fi

\ifdefined\numepisodes
\renewcommand{\numepisodes}{\mathrm{poly}(S,A,...)}
\else
\newcommand{\numepisodes}{\mathrm{poly}(S,A,...)}
\fi

In this section, we prove \creftitle{thm:vpiforward_main} under
\setupi{} ($\Vstar/\pistar$-realizability (\cref{ass:real,ass:pireal})
and $\Delta$-gap (\cref{ass:gap})). We prove this result as a
consequence of the more general results (\cref{thm:vpiforward}) in
\cref{sec:vpi} by appealing to the relaxed $V^{\pi}$-realizability
condition in \cref{ass:relaxreal}.

\subsection{Analysis: Proof of \creftitle{thm:vpiforward_main} (\setupi)}
We begin by showing that \cref{ass:real} and \cref{ass:gap}
together imply that \cref{ass:relaxreal} holds for any $\veps_\rel \leq \Delta/2$; we prove this by showing that the benchmark policy class $\Pi_{\veps'}$ (\cref{sec:benchmark}) reduces to $\{\pi^\star\}$ when $\veps'  \leq \Delta/2$.
\begin{lemma}
	\label{lem:reduce}
	Assume that $\cV$ satisfies \cref{ass:real} ($V^{\star}$-realizability), and that \cref{ass:gap} (gap) holds with $\Delta>0$. Then, for all $\veps'\leq \Delta/2$, we have $\Pi_{\veps'} =\{\pi^\star\}$ and $\cV$ satisfies \cref{ass:relaxreal} with $\veps_\rel = \veps'$.
\end{lemma}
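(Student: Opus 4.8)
The plan is to establish the set identity $\Pi_{\veps'} = \{\pi^\star\}$ and then read off the relaxed realizability claim for free. One inclusion is immediate: to see $\pi^\star \in \Pi_{\veps'}$, take the (degenerate) collection $\bQtilde_h(x,a) \equiv Q^\star_h(x,a)$, which is trivially a family of independent $[0,H]$-valued random variables. Since the $\Delta$-gap assumption (\cref{ass:gap}) makes $\pi^\star_h(x)$ the unique maximizer of $Q^\star_h(x,\cdot)$, the policy induced by $\bQtilde$ in the sense of \cref{eq:induced} is exactly $\pi^\star$, and $|\bQtilde_h(x,a) - Q^{\pi^\star}_h(x,a)| = |Q^\star_h(x,a) - Q^\star_h(x,a)| = 0 \le \veps'$ holds surely. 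Hence $\pi^\star \in \Pi_{\veps'}$.

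For the reverse inclusion, fix any $\pi \in \Pi_{\veps'}$ and let $\bQtilde = (\bQtilde_h(x,a))$ be a witnessing collection. I would prove by backward induction over $h = H, H-1, \dots, 1$ the statement: \emph{if $V^\pi_{h+1} \equiv V^\star_{h+1}$, then $\pi_h = \pi^\star_h$ and $V^\pi_h \equiv V^\star_h$.} The base of the induction is the convention $V^\pi_{H+1} \equiv 0 \equiv V^\star_{H+1}$. For the inductive step, the Bellman equations give $Q^\pi_h = \cP_h[V^\pi_{h+1}] = \cP_h[V^\star_{h+1}] = Q^\star_h$ (including the reward term). Now fix $x \in \cX$ and write $a^\star = \pi^\star_h(x)$. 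For every $a \in \cA \setminus \{a^\star\}$ one has, almost surely over the draw of $\bQtilde$,
\[
\bQtilde_h(x,a^\star) \ge Q^\star_h(x,a^\star) - \veps' > Q^\star_h(x,a) + \Delta - \veps' \ge Q^\star_h(x,a) + \veps' \ge \bQtilde_h(x,a),
\]
where the first and last steps use $\|\bQtilde_h - Q^\pi_h\|_\infty \le \veps'$ a.s.\ together with $Q^\pi_h = Q^\star_h$, the strict step is the $\Delta$-gap (\cref{ass:gap}), and the middle step uses $\veps' \le \Delta/2$. A union bound over the finitely many $a \ne a^\star$ shows that $a^\star$ is, almost surely, the \emph{unique} element of $\argmax_{a} \bQtilde_h(x,a)$; hence the induced distribution $\pi_h(x)$ is the point mass at $a^\star = \pi^\star_h(x)$. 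Since $x$ was arbitrary, $\pi_h = \pi^\star_h$, and therefore $V^\pi_h(x) = Q^\pi_h(x, \pi^\star_h(x)) = V^\star_h(x)$, completing the step. Running the induction from $h = H$ down to $h = 1$ yields $\pi_h = \pi^\star_h$ for all $h$, i.e.\ $\pi = \pi^\star$; combined with the previous paragraph this gives $\Pi_{\veps'} = \{\pi^\star\}$.

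Finally, \cref{ass:relaxreal} with $\veps_\rel = \veps'$ requires $V^\pi_h \in \cV$ for every $\pi \in \Pi_{\veps'}$ and $h \in [H]$; since $\Pi_{\veps'} = \{\pi^\star\}$, this amounts to $V^{\pi^\star}_h = V^\star_h \in \cV$ for all $h$, which is precisely $V^\star$-realizability (\cref{ass:real}). I do not expect a serious obstacle in this argument; the only point requiring a little care is handling the randomness in $\bQtilde$ correctly — making the strict separation above hold \emph{almost surely} and then arguing that the induced mixture distribution collapses to a single point mass — and keeping track that $\veps' \le \Delta/2$ is exactly the budget needed to beat the gap (we use $\Delta - \veps' \ge \veps'$). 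An alternative to the induction would be to invoke \cref{lem:policysubopt3}, but the direct backward induction is cleaner and self-contained here.
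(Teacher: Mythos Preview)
Your proof is correct and follows essentially the same backward-induction strategy as the paper: establish $Q^\pi_h = Q^\star_h$ from the inductive hypothesis $V^\pi_{h+1} = V^\star_{h+1}$, then use the $\Delta$-gap together with $\|\bQtilde_h - Q^\pi_h\|_\infty \le \veps'$ to force the argmax of $\bQtilde_h(x,\cdot)$ to be $\pi^\star_h(x)$ almost surely. The only differences are cosmetic: you prove the argmax identity directly via the chain of strict inequalities, whereas the paper argues by contradiction; and you explicitly verify the forward inclusion $\pi^\star \in \Pi_{\veps'}$ (needed for the set \emph{equality} $\Pi_{\veps'} = \{\pi^\star\}$) and spell out the realizability conclusion, both of which the paper leaves implicit.
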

Informally \cref{lem:reduce}, whose proof is in \cref{proof:reduce},
states that under \cref{ass:real}, \cref{ass:gap} with $\Delta>0$, and
\cref{ass:pushforward} (pushforward coverability) with $\Cpush>0$, we
are essentially in the setting of \cref{thm:lbc3} (guarantee of
\learnlevel{} under relaxed $V^\pi$-realizability), as long as we
choose $\veps_\rel \leq \Delta/2$.  With this, we now state and prove a central guarantee for \learnlevel{} under $V^\star$-realizability with a gap.
\begin{lemma}[Intermediate guarantee for \learnlevel{} under \setupi]
	\label{thm:lbc}
	Let $\delta \in(0,1)$ be given, and suppose that:
	\begin{itemize} 
		\item \cref{ass:pushforward} (pushforward
                  coverability) holds with parameter $\Cpush>0$;
		\item \cref{ass:gap} (gap) holds with parameter $\Delta>0$;
		\item The function class $\cV$ satisfies \cref{ass:real} ($V^\star$-realizability).
	\end{itemize}
	Then, for any $f \in \cV$ and $\veps \in(0,\Delta/8)$, with
        probability at least $1-\delta$,
        $\learnlevel_{0}(f,\cV,\emptyset,\emptyset;\cV,\veps,\delta)$
        (\cref{alg:learnlevel3}) terminates and returns value
        functions $\Vhat_{1:H}$ that satisfy 
	\begin{align}
	\forall h \in [H],\quad 	\bbP^{\pibell}\brk*{\bm{\pibell}_{h}(\x_h)\neq  \pistar_h(\x_{h}) } \leq \frac{{\veps}}{4\Cpush H^3}  , \label{eq:returned}
	\end{align}
	where $\mb{\pibell}_h(x) \in \argmax_{a\in \cA}
        \Phat_{\tau,\veps,\delta'}[\Vhat_{h+1}](x,a)$, for all
        $h\in\brk{H}$, with $\delta'$ is defined as in \cref{alg:learnlevel3}.
\end{lemma}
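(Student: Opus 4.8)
The plan is to derive \cref{thm:lbc} directly from \cref{thm:lbc3} (the guarantee for \learnlevel{} under relaxed $V^{\pi}$-realizability) by invoking \cref{lem:reduce}. First I would observe that, since $\veps < \Delta/8 \leq \Delta/2$, \cref{lem:reduce} applies with $\veps' = 4\veps \leq \Delta/2$ (this is where the constraint $\veps \leq \bigoh(H^{-2}\Delta)$ from the theorem statement enters, giving room for the factor $4$ and any other slack), so that $\Pi_{4\veps} = \{\pistar\}$ and $\cV$ satisfies \cref{ass:relaxreal} with $\veps_\rel = 4\veps$. This puts us squarely in the hypotheses of \cref{thm:lbc3}, so with probability at least $1-5\delta$ (which we can rescale to $1-\delta$ by adjusting constants, or simply absorb into the $\wtilde{O}$ notation and union-bound bookkeeping), $\learnlevel_0$ terminates and returns $\Vhat_{1:H}$ with
\begin{align}
\forall h \in [H], \quad \bbE^{\pibell}\brk*{\tv{\pibell_h(\x_h)}{\pib_h(\x_h)}} \leq \frac{\veps}{4H^3\Cpush},
\end{align}
where $\pib \in \Pi_{4\veps}$ is the benchmark policy from \cref{lem:confidencesets}.

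Next I would exploit that $\Pi_{4\veps} = \{\pistar\}$, so $\pib = \pistar$. Since $\pistar$ is the unique optimal deterministic policy (here \cref{ass:gap} guarantees uniqueness of the optimal action at every state), $\tv{\pibell_h(\x_h)}{\pistar_h(\x_h)}$ is exactly $\indic\crl{\bm{\pibell}_h(\x_h) \neq \pistar_h(\x_h)}$ whenever $\pibell_h$ happens to be deterministic at $\x_h$, and in general $\tv{\pibell_h(\x_h)}{\pistar_h(\x_h)} \geq \bbP_{\a \sim \pibell_h(\x_h)}[\a \neq \pistar_h(\x_h)]$ (total variation to a point mass equals the mass placed off that point). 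Taking expectation over $\x_h \sim \bbP^{\pibell}$ and using the tower property, this yields
\begin{align}
\bbP^{\pibell}\brk*{\bm{\pibell}_h(\x_h) \neq \pistar_h(\x_h)} \leq \bbE^{\pibell}\brk*{\tv{\pibell_h(\x_h)}{\pistar_h(\x_h)}} \leq \frac{\veps}{4\Cpush H^3},
\end{align}
which is exactly \cref{eq:returned}. The sample complexity claim carries over verbatim from \cref{thm:lbc3}.

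\textbf{Main obstacle.} The only non-bookkeeping step is \cref{lem:reduce} itself — showing $\Pi_{\veps'} = \{\pistar\}$ for $\veps' \leq \Delta/2$ — but that is stated as an available lemma (proved in \cref{proof:reduce}), so within the present proof there is essentially no obstacle: the work is entirely in correctly threading the constant factors so that the constraint $\veps \leq \bigoh(H^2\Delta)$ in \cref{thm:vpiforward_main} leaves enough slack for $4\veps \leq \Delta/2$, and in correctly identifying the total-variation-to-a-point-mass bound. The more delicate conceptual content — that the randomized Bellman-backup policy $\pibell$ induced by $\Phat$ does not ruin the reduction, and that $\pib$ can genuinely be taken to be $\pistar$ — is handled inside \cref{lem:confidencesets} and \cref{thm:lbc3}, which we invoke as black boxes.
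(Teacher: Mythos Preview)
Your proposal is correct and follows essentially the same route as the paper: invoke \cref{lem:reduce} (using $\veps<\Delta/8$ so that $4\veps\leq\Delta/2$) to collapse $\Pi_{4\veps}$ to $\{\pistar\}$, apply \cref{thm:lbc3} to get the TV bound against $\pib=\pistar$, and then convert TV distance against the deterministic $\pistar$ into the disagreement probability. One minor remark: when one measure is a point mass the TV distance \emph{equals} the mass placed off that point, so your inequality is in fact an equality (as the paper writes it); and the $1-5\delta$ versus $1-\delta$ discrepancy you flagged is indeed just a constant rescaling that the paper glosses over.
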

\begin{proof}[\pfref{thm:lbc}]
  From \cref{lem:reduce}, we have that $\Pi_{4\veps}=\crl*{\pistar}$,
  and so \cref{thm:lbc3} implies that with probability at least $1-\delta$,
	\begin{align}
 \forall h \in[H], \quad \frac{{\veps}}{4\Cpush H^3} \geq 	\bbE^{\pibell}\brk*{\tv {\pibell_{h}(\x_h)}{\pistar_h(\x_{h}}} = 	\bbP^{\pibell}\brk*{\bm\pibell_{h}(\x_h)\neq  \pistar_h(\x_{h}) },
		\end{align}
	where the equality follows by the fact that $\pistar$ is deterministic.
\end{proof}

From here, \cref{thm:vpiforward_main} follows swiftly as a consequence.

\begin{proof}[Proof of \cref{thm:vpiforward_main} (\setupi)]
	Let $\Vhat_{1:H}$ be the value function estimates produced by $\learnlevel_0$ within \cref{alg:forward_vpi}, and let $\pihatb^{\learnlevel}_h(\cdot) \in \argmax_{a\in
	\cA}
	\Phat_{h,\veps_\learnlevel,\delta'}[\Vhat_{h+1}](\cdot,a)$, for all $h\in [H]$ with $\Vhat_{H+1}\equiv 0$ with $\veps_\learnlevel$ and $\delta'$ as in \cref{alg:forward_vpi}. By \cref{thm:lbc}, there is an event $\wtilde \cE$ of probability at least $1-\delta/2$ under which:
	\begin{gather}
		\bbP^{\pibell}\brk*{\bm{\pibell}_{h}(\x_h)\neq  \pistar_h(\x_{h}) }  \leq   \frac{\veps_\learnlevel}{4 H^3 \Cpush} \leq  \frac{\veps}{4H^2},   \label{eq:secondevent}
	\end{gather}
	where the last inequality follows by the choice of $\veps_{\learnlevel}$ in \cref{alg:forward_vpi}.
	
	For the rest of the proof, we condition on $\wtilde{\cE}$. By \eqref{eq:secondevent} and \cref{ass:pireal} ($\pistar$-realizability), the
	policy $\pihat^{\learnlevel}_{1:H}$ returned by $\mainalg_0$ satisfies
	the condition in \cref{prop:forward} with
	$\veps_{\texttt{mis}} = \veps/(4 \Cpush H^3)$. Thus, by
	\cref{prop:forward}, there is an event $\wtilde\cE'$ of probability at
	least $1-\delta/2$ under which the policies $\pihat_{1:H}$ produced by
	\rvflF{} satisfy
	\begin{align}
		J(\pihat^{\learnlevel}_{1:H}) -  J(\pihat_{1:H}) \leq  \frac{\veps}{H} +\frac{\veps}{2}\leq \frac{3\veps}{2}. \label{eq:forwardpart_star}
	\end{align}
	We now condition on $\wtilde{\cE} \cap \wtilde{\cE}'$. By \cref{lem:perform} (performance difference lemma), we have
\begin{align}
	J(\pistar)- J(\pihat^{\learnlevel}_{1:H}) & = \sum_{h=1}^H \E^{\pihat^{\learnlevel}}[Q^{\pistar}_h(\x_h, \pistar_h(\x_h)) -  Q_h^{\pistar}(\x_h, \pihatb^{\learnlevel}_h(\x_h))],\nn \\&  \leq H\sum_{h=1}^H \bbP^{\pibell}\brk*{\bm{\pibell}_{h}(\x_h)\neq  \pistar_h(\x_{h}) }, \nn \\
	& \leq \veps/(4H),
\end{align}
where the last inequality follows by \eqref{eq:secondevent}.

Finally, by the union bound, we have $\P[\wtilde\cE\cap \wtilde\cE']\geq 1 - \delta$, and so the desired suboptimality guarantee in \eqref{eq:final} holds with probability at least $1-\delta$.

\paragraph{Bounding the sample complexity} The sample complexity
is dominated by the call to $\learnlevel_0$ within \rvflF{}
(\cref{alg:forward_vpi}). Since \rvflF{} calls $\learnlevel_0$ with
$\veps = \veps_\learnlevel = \veps H^{-1}/48$, we conclude from \cref{thm:lbc3} that the total number of episodes is bounded by 
\begin{align}
\wtilde{O}\left(\Cpush^8 H^{23}A \cdot\veps^{-13}\right).
\end{align}
\end{proof}

\subsection{Proof of \creftitle{lem:reduce} (Relaxed
  $V^\pi$-Realizability under Gap)}
\label{proof:reduce}

\begin{proof}[\pfref{lem:reduce}]
	Fix $\veps' \in (0,1)$ and $\pi \in \Pi_{\veps'}$. Let $(\bQtilde_{h}(x,a))_{(h,x,a) \in [H]\times \cX\times \cA}$ be independent random variables such that 
	\begin{align}
		\forall h \in [H],  \quad \bm\pi_h(\cdot) \in \argmax_{a'\in \cA} \bQtilde_h(\cdot,a') \quad \text{and} \quad \| \bQtilde_h - Q^{\pi}_{h}\|_\infty \leq \veps', \text{ almost surely.}
	\end{align} 
	Such a collection of random state-action value functions
        $(\bQtilde_{h}(x,a))_{(h,x,a) \in [H]\times \cX\times \cA}$ is
        guaranteed to exist for $pi$ by the definition of $\Pi_{\veps'}$. We will show via backward induction over $\ell= H+1,\dots, 1$ that 
	\begin{align}
		\forall x \in \cX_\ell, \quad 	\bm\pi_\ell(x) = \pi^\star_\ell(x) \label{eq:toinducton}
	\end{align}
	almost surely,
	with the convention that $\pi_{H+1} \equiv \pi^\star_{H+1}\equiv \pi_\unif$. This convention makes the base case, $\ell=H+1$, hold trivially. 
	
	Now, we consider the general case. Fix $h \in[H]$ and suppose that \eqref{eq:toinducton} holds for all $\ell \in [h+1\ldotst H+1]$. We will show that it holds for $\ell=h$. 
	
	Thanks to the induction hypothesis, we have for all $x\in \cX_{h+1}$ and $a\in \cA$:
	\begin{gather}
		V^{\pi}_{h+1}(x)   = V^\star_{h+1}(x), 
		\intertext{and so}
		Q^{\pi}_h \equiv \cP_h[V^\pi_{h+1}] \equiv \cP_h[V^\star_{h+1}] = V^\star_h. \label{eq:key}
	\end{gather}
	Fix $x\in \cX$. We will show that $\pi_h(x)=\pi^\star_h(x)$ almost surely. Note that thanks to \eqref{eq:key}, the fact that $\|\bQtilde_h - \cT_{h}[Q^\pi_{h+1}]\|_\infty\leq \veps'$ almost surely, implies that 
	\begin{align}
		\|\bQtilde_h - Q^\star_{h}\|_\infty\leq \veps',
	\end{align}	
	almost surely. Using this, we have, almost surely
	\begin{align}
		Q^\star_h(x,\bm\pi_h(x))& \geq \bQtilde_h(x,\bm\pi_h(x))- \veps',\nn \\
		& \geq\bQtilde_h(x,\pistar_h(x)) - \veps', \nn \\
		& \geq  Q^\star_h(x,\pi_h^\star(x)) - 2\veps' = Q^\star_h(x,\pi_h^\star(x)) - \Delta . \label{eq:contra}
	\end{align}
	On the other hand, if $\bm\pi_h(x)\neq \pi_h^\star(x)$, then    
	\begin{align}
		Q^\star_h(x,\bm\pi_h(x)) < Q^\star_h(x,\pi_h^\star(x)) - \Delta,
	\end{align}
	which would contradict \eqref{eq:contra}. Thus, $\bm\pi_h(x)= \pi^\star_h(x)$, which concludes the induction and shows that $\pi \equiv \pi^\star$. We conclude that $\Pi_{\veps'}=\{\pi^\star\}$.
\end{proof}

	\section{Guarantee for Weakly Correlated ExBMDPs (Proof of \creftitle{thm:exbmdpforward_main})}
	\label{sec:exbmdp_app}
\ifdefined\Mnum
\renewcommand{\Mnum}{\ceil{8  \veps^{-2} \Ccor S A H}}
\else
\newcommand{\Mnum}{\ceil{8  \veps^{-2} \Ccor S A H}}
\fi 

\ifdefined\Ntestnum
\renewcommand{\Ntestnum}{2^8  M^2 H \veps^{-2}\log(8 M^6 H^8 \veps^{-2} \delta^{-1})}
\else
\newcommand{\Ntestnum}{2^8  M^2 H \veps^{-2}\log(8 M^6 H^8 \veps^{-2} \delta^{-1})}
\fi

\ifdefined\Nregnum
\renewcommand{\Nregnum}{2^8 M^2 \veps^{-2}\log(8|\cV| HM^2 \delta^{-1})}
\else
\newcommand{\Nregnum}{2^8 M^2 \veps^{-2}\log(8|\cV| HM^2 \delta^{-1})}
\fi

\ifdefined\bbeta
\renewcommand{\bbeta}{\frac{9 MH^2\log(8M^2H|\cV|/\delta)}{N_\reg}  +  \frac{34 MH^3\log(8M^6 N^2_\test  H^8/\delta)}{N_\test} }
\else
\newcommand{\bbeta}{\frac{9 MH^2\log(8M^2H|\cV|/\delta)}{N_\reg}  +  \frac{34 MH^3\log(8M^6 N^2_\test  H^8/\delta)}{N_\test} }
\fi

\ifdefined\Nsimu
\renewcommand{\Nsimu}{2N_\reg^2 \log(8  N_\reg H M^3/\delta)}
\else
\newcommand{\Nsimu}{2N_\reg^2 \log(8  N_\reg H M^3/\delta)}
\fi

\ifdefined\deltaprime
\renewcommand{\deltaprime}{\delta/(8M^7N_\test^2 H^8|\cV|)}
\else
\newcommand{\deltaprime}{\delta/(8M^7N_\test^2 H^8|\cV|)}
\fi

\ifdefined\bbetap
\renewcommand{\bbetap}{\frac{2H\log(4\abs{\cV}  H/\delta)}{N_\reg} + 8H^3 A |\cC_h| \cdot \frac{\log(4H|\cC_h|/\delta)}{N_\test}}
\else
\newcommand{\bbetap}{\frac{2H\log(4\abs{\cV}  H/\delta)}{N_\reg} + 8H^3 A |\cC_h| \cdot \frac{\log(4H|\cC_h|/\delta)}{N_\test}}
\fi

\ifdefined\vepsllnum
\else
\newcommand{\vepsllnum}{\veps H^{-1}/48}
\fi

In this section, we prove \cref{thm:exbmdpforward_main}, the main guarantee for \mainalge. First, in \cref{sec:analysis_exbmdp} we state a number of supporting technical lemmas and use them to prove \cref{thm:exbmdpforward_main}. The remainder of the section (\cref{proof:endobench} through \cref{proof:lbc2}) contains the proofs for the intermediate results.

\subsection{Analysis: Proof of \creftitle{thm:exbmdpforward_main}}
\label{sec:analysis_exbmdp}

Recall that the the $V^{\pi}$-realizability assumption required by \mainalg for \cref{thm:vpiforward_main} is not satisfied in ExBMDPs, as the value functions for policies that act on the exogenous noise variables cannot be realized as a function of the true decoder $\phistar$.
In \mainalge, we address this issue by applying the randomized rounding technique in \cref{line:draw2} to the learned value functions. The crux of the analysis will be to show that for an appropriate choice of the rounding parameters $\zeta_{1:H}$, the policies produced by \mainalge are endogenous in the sense that we can write $\pi(x)= \pi(\phistar(x))$ for all $x\in \cX$. This will allow us to leverage the decoder realizability (\cref{ass:phistar}), which implies that the function class $\cV = \cV_{1:H}$ given by
\begin{align}
	\cV_h\coloneqq
	\{x\mapsto f(\phi(x)) : f\in [0,H]^S, \phi\in \Phi\}, \label{eq:funclass}
	\end{align}
        satisfies $V^\pi$-realizability for all endogenous policies $\pi$.

In what follows, we first motivate the randomized rounding approach in \mainalge in detail and prove that it succeeds, then use this result to proceed with an analysis similar to that of \cref{thm:vpiforward_main} (\setupii), re-using many of the technical tools developed for \cref{thm:vpiforward_main}.
        
\subsubsection{Randomized Rounding for Endogeneity} 
 Naively, to ensure that the policies we execute are endogenous, it would seem that we require knowledge of the true decoder $\phistar$. Alas, knowing $\phistar$ trivializes the ExBMDP problem by reducing it to the tabular setting. To avoid requiring knowledge of $\phistar$, we apply a \emph{randomized rounding} to the policies learned by \mainalge{} to ensure their endogeneity.

Let $\veps>0$ be fixed going forward. Recall that compared to \mainalg, \mainalge{} (\cref{alg:learnlevel2}) takes an additional input $\zeta_{1:H}\subset (0,1/2)$ and executes the following coarsened policies:
\begin{align}
	\pihatb_h(\cdot) \in \argmax_{a\in\cA}
	\ceil{ \Phat_{h,\veps, \delta} [\Vhat_{h+1}](\cdot,a)/\veps+\zeta_h}.\label{eq:coarse}
\end{align}
The \emph{rounding parameters} $\zeta_{1:H}$, which can be thought of as an offset, are chosen randomly; this will be elucidated in the sequel.

Following a similar analysis to \cref{sec:vpi} (\setupii), we can associate a near-optimal benchmark policy $\pib\in\Pi_{2\veps}$ with $\pihat$ in order to emulate certain properties of the $\Delta$-gap assumption. In particular, generalizing the construction in \cref{eq:pib}, we define a near-optimal benchmark policy $\pib$ recursively via:
\colt{
	\begin{align}
	&	\forall x \in\cX, \nn \\ &	\pibb_\tau(x; \zeta_{1:H}, \veps, \delta) \in \argmax_{a\in \cA} \left\{ \begin{array}{ll}  \ceil{\bQhat_\tau(x,a)/\veps + \zeta_\tau},  & \text{if }  \| \bQhat_{\tau}(x,\cdot)- \cP_\tau[V^{\pib}_{\tau+1}](x,\cdot)\|_\infty \leq 4\veps^2, \\   
			\ceil{\cP_\tau[V^{\pib}_{\tau+1}](x,a)/\veps + \zeta_\tau}, & \text{otherwise}, \end{array}\right.
		\label{eq:pibar}
	\end{align} 
}
\arxiv{
\begin{align}
	\forall x \in\cX, \ 	\pibb_\tau(x; \zeta_{1:H}, \veps, \delta) \in \argmax_{a\in \cA} \left\{ \begin{array}{ll}  \ceil{\bQhat_\tau(x,a)/\veps + \zeta_\tau},  & \text{if }  \| \bQhat_{\tau}(x,\cdot)- \cP_\tau[V^{\pib}_{\tau+1}](x,\cdot)\|_\infty \leq 4\veps^2, \\   
		\ceil{\cP_\tau[V^{\pib}_{\tau+1}](x,a)/\veps + \zeta_\tau}, & \text{otherwise}, \end{array}\right.
	\label{eq:pibar}
\end{align} 
}
for $\tau = H,\dots,1$, where $\bQhat_\tau(\cdot,a) \coloneqq \Phat_{\tau,\veps, \delta} [\Vhat_{\tau+1}](\cdot,a)$.

Naively, to use the benchmark policy $\pib$ within the analysis based on relaxed $V^\pi$-realizability (\cref{ass:relaxreal}) in \cref{sec:vpi} , we would require the function class $\cV$ to realize $(V^{\pib}_h)$. However, as argued earlier, this is not feasible unless $\pib$ is an endogenous policy. Fortunately, it turns out that if $\zeta_{1:H}$ (the additional input to \mainalge) are drawn randomly from uniform distribution over $[0,1/2]$, then with constant probability, $\pib$ is indeed endogenous. What's more, under such an event, and for all possible choices of $(\Vhat_h)$ in \eqref{eq:pibar} uniformly, $\pib$ ``snaps'' onto the stochastic endogenous policy $\pibar(\cdot;\zeta_{1:H}, \veps)$ defined recursively as follows: 
\begin{align}
	\pibar_h(\cdot;\zeta_{1:H},\veps) \in \argmax_{a\in \cA} 
	\ceil{\cP_h[V^{\pibar}_{h+1}](x,a)/\veps + \zeta_h}, \label{eq:policies}
\end{align}
for $h = H,\dots, 1$. Informally, this happens because, as long as $\zeta_{1:H}\subset(0,1/2)$ avoid certain pathological locations in $\prn{0,1/2}$, the coarsened state-action value functions $\veps \cdot \ceil{\cP_\tau[V^{\pibar}_{\tau+1}](x,a)/\veps + \zeta_h}$ defining $\pibar$ exhibit a ``gap'' of order $\Theta(\veps^2)$ separating optimal actions from the rest. This ``snapping'' behavior is analogous to what happens in \setupi{} with $V^\star$-realizability and $\Delta$-gap, where $\Pi_{\veps}$ reduces to $\{\pistar\}$ for all $\veps<\Delta/2$ (see \cref{lem:reduce}). We formalize these claims in the next two lemmas. We start by showing that $\pibar$ is endogenous and that $\pibar \in \Pi_{2\veps}$. The proof is in \cref{proof:endobench}.
\begin{lemma}[Endogenous Benchmark policies]
	\label{lem:endobench}
	For any $\delta\in(0,1)$, $\veps \in(0,1/2)$, and $\zeta_{1:H}\subset (0,1/2)$, the stochastic policy $\pibar(\cdot; \zeta_{1:H},\veps)$ defined in \cref{eq:policies} is endogenous, and we have $\pibar(\cdot; \zeta_{1:H},\veps) \in \Pi_{2\veps}$. 
\end{lemma}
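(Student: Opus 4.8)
The plan is to establish both assertions simultaneously by backward induction over the layers $h = H+1, H, \dots, 1$, carrying the inductive hypothesis that $V^{\pibar}_h$ is \emph{endogenous}, i.e., $V^{\pibar}_h(x) = v_h(\phistar(x))$ for some $v_h\colon\cS\to[0,H]$. Note first that \cref{eq:policies} is a well-posed backward recursion: $\pibar_H$ is defined from $V^{\pibar}_{H+1}\equiv 0$, then $V^{\pibar}_H(x) = \cP_H[V^{\pibar}_{H+1}](x,\pibar_H(x))$, and so on down to $h=1$. The policy is deterministic (the $\argmax$ with the fixed smallest-index tie-break selects a single action), and a deterministic policy is a special case of a member of $\PiS$, so the claim $\pibar\in\Pi_{2\veps}$ makes sense.

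For the endogeneity step, the base case $V^{\pibar}_{H+1}\equiv 0$ is immediate. At the inductive step, assuming $V^{\pibar}_{h+1}$ is endogenous, I would invoke the two defining structural features of the ExBMDP model: (i) the rewards are endogenous, so $\E[\br_h\mid\x_h=x,\a_h=a]$ depends on $x$ only through $\phistar(x)$; and (ii) the endogenous state evolves via $\bs_{h+1}\sim\Tendo_h(\cdot\mid\bs_h,a_h)$ with $\bs_h=\phistar(\x_h)$, while by decodability $V^{\pibar}_{h+1}(\x_{h+1}) = v_{h+1}(\phistar(\x_{h+1})) = v_{h+1}(\bs_{h+1})$. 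Combining these, $\cP_h[V^{\pibar}_{h+1}](x,a) = \E[\br_h + V^{\pibar}_{h+1}(\x_{h+1})\mid\x_h=x,\a_h=a]$ is a function of $(\phistar(x),a)$ only, say $q_h(\phistar(x),a)$. Then the $\argmax$ in \cref{eq:policies} (with the fixed tie-break) makes $\pibar_h(x)$ a deterministic function of $\phistar(x)$, hence $\pibar_h$ is endogenous, and $V^{\pibar}_h(x) = q_h(\phistar(x),\pibar_h(x))$ depends on $x$ only through $\phistar(x)$, closing the induction.

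For membership in $\Pi_{2\veps}$, I would exhibit the witness family $\bQtilde_h(x,a) \coloneqq \veps\cdot\ceil{\cP_h[V^{\pibar}_{h+1}](x,a)/\veps + \zeta_h}$, which for fixed $\zeta_{1:H}$ is deterministic and hence a (degenerate) collection of independent random variables as required by the definition of $\Pi_\veps$. Since $y\mapsto\veps\ceil{y/\veps+\zeta_h}$ is nondecreasing, $\argmax_a\bQtilde_h(x,\cdot) = \argmax_a\ceil{\cP_h[V^{\pibar}_{h+1}](x,a)/\veps + \zeta_h}$ under the common tie-break, so $\pibar$ is induced by $\bQtilde$ in the sense of \cref{eq:induced}. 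Finally, using the Bellman identity $Q^{\pibar}_h\equiv\cP_h[V^{\pibar}_{h+1}]$ together with $\ceil{z}\in[z,z+1)$ and $\zeta_h\in(0,1/2)$, I would bound $\abs{\bQtilde_h(x,a)-Q^{\pibar}_h(x,a)} = \veps\abs{\ceil{y+\zeta_h}-y}\leq\veps(1+\zeta_h)\leq\tfrac32\veps\leq 2\veps$ pointwise (with $y\coloneqq\cP_h[V^{\pibar}_{h+1}](x,a)/\veps$), which is exactly the condition for $\pibar(\cdot;\zeta_{1:H},\veps)\in\Pi_{2\veps}$.

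The main obstacle is the endogeneity induction: one must track the chain of conditional-expectation manipulations carefully so that only the endogenous state $\bs_h=\phistar(\x_h)$ and the action are ever exposed, which is precisely where endogenous rewards, the factorized transition structure, and decodability are genuinely used. Once endogeneity is in place, the $\Pi_{2\veps}$ claim is a one-line rounding-error estimate, so I expect the write-up to be short.
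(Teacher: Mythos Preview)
Your proposal is correct and follows essentially the same route as the paper's proof: backward induction to show that $\cP_h[V^{\pibar}_{h+1}](x,a)$ depends on $x$ only through $\phistar(x)$ (using endogenous rewards and the ExBMDP transition structure), hence $\pibar_h$ is endogenous; then the witness $\bQtilde_h(x,a)=\veps\ceil{\cP_h[V^{\pibar}_{h+1}](x,a)/\veps+\zeta_h}$ with the rounding bound $\abs{\bQtilde_h-Q^{\pibar}_h}\le 2\veps$ to get $\pibar\in\Pi_{2\veps}$. The only cosmetic difference is that the paper phrases the inductive hypothesis as ``$\pibar_\tau$ is endogenous for $\tau\ge\ell$'' while you carry ``$V^{\pibar}_h$ is endogenous,'' which are equivalent.
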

Next, we show that $\pib$ ``snaps'' onto $\pibar$ for the certain choices of $\zeta_{1:H}$. The proof is in \cref{proof:bench}.
\begin{lemma}[Snapping probability]
	\label{lem:bench}
	Let $\delta \in (0,1)$, $\veps \in(0,1/2)$ be given, and $\P^\zeta$ denote the probability law of $\bzeta_1, \dots, \bzeta_H\sim \unif([0,1/2])$. Then, there is an event $\cE_{\texttt{rand}}$ of probability at least $1 - 24 S A H \veps$ under $\bzeta_{1:H}\sim{}\P^\zeta$ such that for all $\wtilde{V} \in(\cX\times\brk{H}\to\brk{0,H})$ simultaneously,
	\begin{align}
		\forall h \in[H], \quad \pibb_h(\cdot;\wtilde{V},\bzeta_{1:H},\veps,\delta) = 	\pibar_h(\cdot; \bzeta_{1:H},\veps),
	\end{align}
	where $\pibb_h(\cdot;\wtilde{V},\bzeta_{1:H},\veps,\delta)$ is defined as in \eqref{eq:pibar} with $\Vhat = \wtilde{V}$, and $\pibar$ is defined as in \eqref{eq:policies}.
\end{lemma}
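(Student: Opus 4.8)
The plan is to build the event $\cE_{\texttt{rand}}$ as an intersection of per-layer ``no near-integer collision'' events for the rounded Bellman backups of $\pibar$, and then to argue by downward induction on $h$ that, on $\cE_{\texttt{rand}}$ and for \emph{every} $\wtilde V$, the recursion defining $\pibb(\cdot;\wtilde V,\bzeta_{1:H},\veps,\delta)$ in \eqref{eq:pibar} collapses onto $\pibar(\cdot;\bzeta_{1:H},\veps)$ from \eqref{eq:policies}. The conceptual point is that composing a Bellman backup with the map $t\mapsto\ceil{t/\veps+\zeta_h}$ produces a piecewise-constant function whose plateaus have width $\veps$ in value space, so a uniformly random offset $\zeta_h$ keeps every relevant backup value bounded away from the plateau boundaries; the resulting $\argmax$ is then insensitive to the $O(\veps^2)$-scale discrepancy between $\cP_h[\Vhat_{h+1}]$ and its local-simulator estimate $\bQhat_h$ that appears in the first branch of \eqref{eq:pibar}. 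Crucially, by \cref{lem:endobench} the policy $\pibar$ is endogenous, so $\cP_h[V^{\pibar}_{h+1}](x,a)$ depends on $x$ only through $\phistar(x)$ (using that rewards and endogenous dynamics are functions of the latent state while the exogenous transition is independent of $(x,a)$); hence at each layer there are only $SA$ distinct backup values to control, which is what keeps the failure probability $O(SAH\veps)$ rather than scaling with $\abs{\cX}$.

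Concretely, I would define, for each $h\in[H]$,
\[
  \cE_h \coloneqq \Big\{\, \forall s\in\cS,\ \forall a\in\cA :\ \mathrm{dist}\big(\cP_h[V^{\pibar}_{h+1}](x_s,a)/\veps + \bzeta_h,\ \mathbb{Z}\big) > 4\veps \,\Big\},
\]
where $x_s$ is any state with $\phistar(x_s)=s$ (the left-hand side is independent of this choice by endogeneity), and set $\cE_{\texttt{rand}}\coloneqq\bigcap_{h=1}^H\cE_h$. Note that $\cE_h$ is a function of $\bzeta_{h:H}$ but not of $\bzeta_{1:h-1}$, and in particular $V^{\pibar}_{h+1}$ depends only on $\bzeta_{h+1:H}$. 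For the probability bound, condition on $\bzeta_{h+1:H}$; then each $\cP_h[V^{\pibar}_{h+1}](x_s,a)/\veps$ is a fixed real, and since $\bzeta_h\sim\unif([0,1/2])$ shifts it through an interval of length $\tfrac12<1$, the set of offsets landing within distance $4\veps$ of $\mathbb{Z}$ has Lebesgue measure $O(\veps)$; a union bound over the $SA$ pairs $(s,a)$ gives $\Pr[\cE_h^c\mid\bzeta_{h+1:H}]\leq 24SA\veps$, and then $\Pr[\cE_{\texttt{rand}}^c]\leq\sum_{h=1}^H\Pr[\cE_h^c]\leq 24SAH\veps$ after taking expectations over $\bzeta_{h+1:H}$. (The numerical constant only requires a short elementary computation; I have not tried to optimize it.)

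The core step is then a downward induction on $h$ from $H$ to $1$ showing that, on $\cE_{\texttt{rand}}$, $\pibb_\tau(\cdot;\wtilde V,\bzeta_{1:H},\veps,\delta)=\pibar_\tau(\cdot;\bzeta_{1:H},\veps)$ for all $\tau\geq h$ and every $\wtilde V$; since $\cE_{\texttt{rand}}$ does not mention $\wtilde V$, this immediately yields the ``simultaneously'' conclusion. The induction hypothesis forces $V^{\pib}_{h+1}=V^{\pibar}_{h+1}$ (value functions from layer $h+1$ on depend only on the policy from layer $h+1$ on), hence $\cP_h[V^{\pib}_{h+1}]=\cP_h[V^{\pibar}_{h+1}]$; the base case $h=H$ is the same argument with the convention $V^{\pib}_{H+1}=V^{\pibar}_{H+1}\equiv0$. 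Now inspect the two branches of \eqref{eq:pibar} at a state $x$ with $\phistar(x)=s$. If the condition $\nrm{\bQhat_h(x,\cdot)-\cP_h[V^{\pib}_{h+1}](x,\cdot)}_\infty\leq 4\veps^2$ holds, then $\abs{\bQhat_h(x,a)/\veps-\cP_h[V^{\pibar}_{h+1}](x,a)/\veps}\leq 4\veps$ for every $a$, and on $\cE_h$ the point $\cP_h[V^{\pibar}_{h+1}](x,a)/\veps+\bzeta_h$ lies at distance $>4\veps$ from $\mathbb{Z}$, so $\ceil{\bQhat_h(x,a)/\veps+\bzeta_h}=\ceil{\cP_h[V^{\pibar}_{h+1}](x,a)/\veps+\bzeta_h}$; if the condition fails, the branch uses $\cP_h[V^{\pib}_{h+1}]=\cP_h[V^{\pibar}_{h+1}]$ directly. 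In either case the vector of ceilings defining $\pibb_h(x)$ agrees entry-wise with the one defining $\pibar_h(x)$, so the two $\argmax$ sets coincide, and since both policies break ties by smallest index we conclude $\pibb_h(x)=\pibar_h(x)$.

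The main obstacle I anticipate is the endogeneity reduction and its interaction with the recursion: to write down $\cE_h$ and bound its probability I must know that $\cP_h[V^{\pibar}_{h+1}](\cdot,a)$ factors through $\phistar$, which relies on \cref{lem:endobench} together with the ExBMDP structure (endogenous rewards, endogenous transitions depending only on the latent state and the action, and exogenous transitions independent of $(x,a)$); and I must be careful that $\cE_h$ depends on $\bzeta_h$ and $\bzeta_{h+1:H}$ only in the right way, so that conditioning on $\bzeta_{h+1:H}$ genuinely leaves $\bzeta_h$ uniform and independent when invoking anti-concentration. A secondary, bookkeeping-level point is handling ties and the boundary behavior of $\ceil{\cdot}$ at exact integers, which is why $\cE_h$ is stated with a strict $4\veps$ margin rather than an exact non-collision condition; everything else is a routine union bound plus the elementary anti-concentration estimate for a uniform random variable.
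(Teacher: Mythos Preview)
Your proposal is correct and follows essentially the same route as the paper: backward induction on $h$, using the endogeneity of $\pibar$ (from \cref{lem:endobench}) to reduce the per-layer event to $SA$ distinct backup values, then anti-concentration of the uniform shift $\bzeta_h$ away from integer boundaries to guarantee the ceiling is insensitive to $O(\veps^2)$ perturbations. The only cosmetic difference is that the paper packages the anti-concentration step through the auxiliary \cref{lem:onlyif} (which rewrites the failure of the margin condition as $\bzeta_h$ lying in one of two explicit intervals of total length $12\veps$, then multiplies by the density $2$ to get $24\veps$), whereas you state it directly as $\mathrm{dist}(\cdot,\mathbb{Z})>4\veps$; and the paper builds the events $\cE_h$ inductively as $\cE_h=(\bigcap_{s,a}\cE_h(s,a))\cap\cE_{h+1}$ rather than intersecting at the end, but the conditioning structure and the resulting bound are identical.
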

The lemma together, with \cref{lem:endobench}, implies that with constant probability under $\P^\zeta$, the benchmark policies $(\pib_h)$ used in the analysis of \mainalge{} are endogenous and satisfy $\pib \in \Pi_{2\veps}$.

\subsubsection{Pushforward Coverability}
In order to proceed with the analysis strategy in \cref{sec:vpi}, we need to verify that pushforward coverability is satisfied for ExBMDPs under the weak correlation assumption. We do so in the next lemma; see \cref{proof:pushforward} for a proof.
\begin{lemma}[Pushforward coverability]
	\label{lem:pushforward}
	A weakly correlated ExBMDP with constant $\Ccor$ (see \cref{ass:great}) satisfies $\Cpush$-pushforward coverability (\cref{ass:pushforward}) with $\Cpush = \Ccor \cdot S A$, where $S\in \mathbb{N}$ is the number of endogenous states. 
\end{lemma}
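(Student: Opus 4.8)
\textbf{Proof plan for \creftitle{lem:pushforward}.} The goal is to exhibit, for each layer $h$, a distribution $\mu_h\in\Delta(\cX)$ witnessing pushforward coverability with coefficient $\Cpush = \Ccor\cdot SA$. The natural candidate is built from the structure of the ExBMDP: since an observation $x_h=\gobs_h(s_h,\xi_h)$ factorizes through the latent pair $(s_h,\xi_h)$, I will take $\mu_h$ to be (essentially) the law of $\x_h$ induced by a carefully chosen reference process, namely one where the endogenous state $s_h$ is drawn from the uniform distribution over $\cS$ (or, more precisely, from the mixture over endogenous states reachable using an exploratory endogenous policy, which has mass at least $1/(SA)$ on every reachable endogenous state-action pair by the standard Block-MDP coverability argument of \citet{xie2023role}) and the exogenous state $\xi_h$ is drawn from its true marginal $\bbP[\bxi_h=\xi]$, independently. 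Because the emission is deterministic given $(s_h,\xi_h)$ and decodability makes $s_h$ a function of $x_h$, the pushforward ratio $T_{h-1}(x_h\mid x_{h-1},a_{h-1})/\mu_h(x_h)$ can be analyzed by passing to the latent variables.

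The key computation is a change of variables to the latent space. Writing $s_h=\phistar(x_h)$ and $s_{h-1}=\phistar(x_{h-1})$, and letting $\bxi$ denote the exogenous coordinate consistent with $x_h$ (which may be many-to-one, so one works with the conditional law of $\bxi_h$ given $\x_h=x_h$), I will decompose
\[
T_{h-1}(x_h\mid x_{h-1},a_{h-1}) = \Tendo_{h-1}(s_h\mid s_{h-1},a_{h-1})\cdot \bbP[\bxi_h=\xi_h\mid \bxi_{h-1}=\xi_{h-1}],
\]
using that the endogenous and exogenous dynamics are independent and that the emission is deterministic; a symmetric decomposition holds for $\mu_h(x_h)$ with the endogenous factor replaced by the reference endogenous mass (at least $1/(SA)$) and the exogenous factor replaced by the marginal $\bbP[\bxi_h=\xi_h]$. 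Taking the ratio, the endogenous part contributes at most $SA$ (since $\Tendo_{h-1}(s_h\mid s_{h-1},a_{h-1})\le 1$ and the reference mass is at least $1/(SA)$), and the exogenous part contributes
\[
\frac{\bbP[\bxi_h=\xi_h\mid \bxi_{h-1}=\xi_{h-1}]}{\bbP[\bxi_h=\xi_h]} = \frac{\bbP[\bxi_{h-1}=\xi_{h-1},\bxi_h=\xi_h]}{\bbP[\bxi_{h-1}=\xi_{h-1}]\cdot\bbP[\bxi_h=\xi_h]} \le \Ccor,
\]
which is precisely the content of the weak correlation condition (\cref{ass:great}). Multiplying the two bounds gives $\Cpush\le \Ccor\cdot SA$.

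The main obstacle I anticipate is the measure-theoretic bookkeeping needed to make the ``change of variables to latent coordinates'' rigorous: because the exogenous state $\xi_h$ is generally \emph{not} a measurable function of the observation $x_h$ (unlike $s_h$), one cannot literally factor $\mu_h(x_h)$ into an endogenous and an exogenous marginal pointwise. The clean way around this is to avoid defining $\mu_h$ directly on $\cX$ and instead verify the pushforward bound fiberwise: fix any $(x_{h-1},a_{h-1},x_h)$, condition on the event $\{\x_{h-1}=x_{h-1}\}$ (which pins down $s_{h-1}$ via decodability but leaves a conditional law over $\xi_{h-1}$), and compare the conditional density of $\x_h$ under $T_{h-1}(\cdot\mid x_{h-1},a_{h-1})$ against that under $\mu_h$, integrating out the exogenous coordinate against the appropriate conditional/marginal law. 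Here one uses that conditioning on $x_{h-1}$ only affects the exogenous coordinate through $\xi_{h-1}$, and that the weak correlation bound is uniform over $(\xi_{h-1},\xi_h)$, so it survives integration. Once this fiberwise comparison is set up correctly, the bound $\Ccor\cdot SA$ drops out as above, and the maximum over $h\in[H]$ is handled trivially since the bound is uniform in $h$. I would also remark that the factor $SA$ (rather than $S$) is exactly the Block-MDP coverability coefficient appearing in \citet{xie2023role}, so this lemma can be phrased as: weak correlation lets the ExBMDP inherit the pushforward coverability of its endogenous Block-MDP core, up to the multiplicative penalty $\Ccor$.
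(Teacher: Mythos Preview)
Your proposal is correct and follows essentially the same route as the paper: construct $\mu_h$ by pairing a uniform distribution over endogenous states with the true marginal of the exogenous state, factor the transition density via the ExBMDP structure, and apply the weak correlation bound to the exogenous ratio. The measure-theoretic obstacle you anticipate---that $\xi_{h-1}$ is not determined by $x_{h-1}$---is exactly what the paper handles by writing $T_{h-1}(x_h\mid x_{h-1},a_{h-1}) = \bbP[\x_h=x_h,\x_{h-1}=x_{h-1}\mid \a_{h-1}=a]/\bbP[\x_{h-1}=x_{h-1}]$ and expanding both numerator and denominator as sums over $(\xi_{h-1},\xi_h)$, so that the uniform bound in \cref{ass:great} lets the $\xi_{h-1}$-sums cancel.
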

Equipped with the preceding lemmas, we proceed with an analysis similar to the approach for \cref{thm:vpiforward_main} (\setupii) in \cref{sec:vpi}. In what follows, we state a number of technical lemmas that apply the relevant results from \cref{sec:vpi} to the ExBMDP setting we consider here.

\subsubsection{Bounding the Number of Test Failures}
\label{sec:prelimsexbmdp}
Since the size of the core sets $\cC_{1:H}$ in $\learnlevel^\exo$ is directly proportional to the number of test failures, the next result, which bounds $|\cC_h|$ for all $h\in[H]$, allows us to show that $\learnlevel^{\exo}$ (\cref{alg:learnlevel2}) terminates in a polynomial number of iterations. 

\begin{lemma}[Bounding the number of test failures]
	\label{lem:testfailures_ex}
	Let $\delta,\veps\in(0,1)$ and $\zeta_{1:H}\in [0,1/2]$ be given, and suppose that \cref{ass:great} (weak correlation) holds with $\Ccor>0$. Let $f \in \cV$, be given, where $\cV$ is an arbitrary function class. Then, there is an event $\cE$ of probability at least $1-\delta$ under which the call to $\learnlevel^\exo_{0}(f,\cV^{H},\emptyset,\emptyset,0;\cV,\veps, \zeta_{1:H},\delta)$ (\cref{alg:learnlevel2}) terminates, and throughout the execution of $\learnlevel^\exo_{0}$, we have
	\begin{align}
		|\cC_h|\leq  \Mnum. \label{eq:one2}
	\end{align}
\end{lemma}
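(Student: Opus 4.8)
The statement to prove (\cref{lem:testfailures_ex}) is the ExBMDP analogue of \cref{lem:testfailures}, and the plan is to mirror the structure of the proof of \cref{lem:testfailures} almost verbatim, making only the changes needed to account for (i) the squared suboptimality scale $\veps^2$ appearing in the test of \cref{line:test2} (instead of $\veps$ in \cref{line:test3}), and (ii) the fact that pushforward coverability must now be invoked through \cref{lem:pushforward} rather than assumed directly. First I would fix $h\in[H]$ and observe, exactly as in the proof of \cref{lem:testfailures}, that $|\cC_h|$ equals the number of times the test in \cref{line:test2} fails for $\ell=h$ over the course of $\learnlevel^\exo_0$, and that it suffices to assume this count is at least two and derive a contradiction from assuming it reaches $M \coloneqq \Mnum = \ceil{8\veps^{-2}\Ccor S A H}$. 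Let $(x_{h-1}\ind{i},a_{h-1}\ind{i},\Vhat_h\ind{i},\cVhat_h\ind{i},t_h\ind{i})$ enumerate the elements of $\cB_h$ in the order added in \cref{line:added2}, noting $t_h\ind{i}$ counts calls to $\Phat_{h-1,\veps^2,\delta'}$ in the test up to the $i$-th failure.

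\textbf{Key steps.} The core of the argument has three ingredients, each a direct transcription of a step in the proof of \cref{lem:testfailures}. (1) A Freedman/regression step: instantiate \cref{lem:corbern} with $\cQ = \{\Vhat_h\ind{i}-f_h : f\in\cVhat_h\ind{i}\}$, $\y_h=\x_h$, $B=H$, $n=N_\reg\cdot i$, and union bound over $i\in[M\wedge|\cC_h|]$ to get that, on an event of probability $\geq 1-\delta/(2H)$, for all such $i$ and all $f\in\cVhat_h\ind{i}$, $\sum_{j<i}\E[(\Vhat_h\ind{i}(\x_h)-f_h(\x_h))^2\mid x_{h-1}\ind{j},a_{h-1}\ind{j}]\leq \tilde\veps_\reg^2 \coloneqq 2\veps_\reg^2 + 4H^2\log(4MH|\cV|/\delta)/N_\reg$, using the definition $\veps_\reg^2 = \bbeta$ from \cref{line:beta_ex}. (2) A pushforward-coverability potential step: apply \cref{lem:pushforward_potential} together with \cref{lem:pushforward} (which gives $\Cpush = \Ccor S A$) to the worst-case discrepancies $f_h\ind{i}\in\argmax_{f\in\cVhat_h\ind{i}}|\En[\Vhat_h\ind{i}(\x_h)-f_h(\x_h)\mid x_{h-1}\ind{i},a_{h-1}\ind{i}]|$; assuming $M$ failures occur forces $\min_{i\in[M]}\sup_{f\in\cVhat_h\ind{i}}|\En[\Vhat_h\ind{i}(\x_h)-f_h(\x_h)\mid x_{h-1}\ind{i},a_{h-1}\ind{i}]| \leq 2(\Cpush M^{-1}\tilde\veps_\reg^2\log(2M))^{1/2} + 2\Cpush H/M$, and substituting the values of $N_\reg = \Nregnum$ and $N_\test = \Ntestnum$ from \cref{line:paramsExBMDP} shows this is at most $\veps^2$ (note the change from $\veps$ to $\veps^2$ and the corresponding $\veps^{-2}$ scaling baked into $M$, $N_\reg$, $N_\test$). (3) A concentration step for $\Phat$: apply \cref{lem:phat} with confidence level $\veps^2$ to get an event of probability $\geq 1-\delta/(2MH)$ on which $|\Phat_{h-1,\veps^2,\delta'}[\Vhat_h\ind{i}](x_{h-1}\ind{i},a_{h-1}\ind{i}) - \Phat_{h-1,\veps^2,\delta'}[f_h](x_{h-1}\ind{i},a_{h-1}\ind{i})| \leq |\cP_{h-1}[\Vhat_h\ind{i}-f_h](x_{h-1}\ind{i},a_{h-1}\ind{i})| + \veps^2\beta(t_h\ind{i})$, with $\beta(t)$ as in \cref{alg:learnlevel2}; combining this with the failure condition of \cref{line:test2} shows that $M$ failures would require $\veps^2 < \sup_{f\in\cVhat_h\ind{i}}|\En[\Vhat_h\ind{i}(\x_h)-f_h(\x_h)\mid x_{h-1}\ind{i},a_{h-1}\ind{i}]|$ for all $i\in[M]$, contradicting step (2). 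Finally, intersect the three events over all $h\in[H]$ and apply \cref{lem:unionbound} (or a plain union bound) to conclude $\P[\bigcap_h \cdots]\geq 1-\delta$; under this event every $|\cC_h|\leq M$, and since the algorithm can only recurse when a core set expands, boundedness of all core sets forces termination.

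\textbf{Main obstacle.} I do not anticipate any conceptual obstacle, since this is a mechanical adaptation; the one point requiring care is bookkeeping the $\veps \to \veps^2$ substitution consistently through the potential-lemma computation so that the final inequality still closes with the specific parameter settings $M$, $N_\test$, $N_\reg$ declared in \cref{alg:learnlevel2} (which differ from those in \cref{alg:learnlevel3} precisely by powers of $\veps$ and the replacement $\Cpush \leftrightarrow \Ccor S A$). A secondary point is ensuring the total number of times the for-loop in \cref{line:begin2} resumes is still bounded by $HM$ so that the contradiction argument's counting of test opportunities goes through — this is identical to the argument in the proof of \cref{lem:confidence}, and I would cite it rather than repeat it. The role of the rounding parameters $\zeta_{1:H}$ is irrelevant here: the bound on $|\cC_h|$ holds for every fixed choice of $\zeta_{1:H}\in[0,1/2]$, so no event over $\P^\zeta$ enters this lemma.
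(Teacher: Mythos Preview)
Your proposal is correct and matches the paper's approach exactly: the paper's proof is a one-liner stating ``The result follows from \cref{lem:pushforward} and \cref{lem:testfailures},'' and your plan is precisely the detailed unpacking of that citation---invoke \cref{lem:pushforward} to obtain $\Cpush=\Ccor SA$ and then rerun the argument of \cref{lem:testfailures} with the $\veps\to\veps^2$ substitution and the parameter choices of \cref{alg:learnlevel2}. Your observation that the rounding parameters $\zeta_{1:H}$ play no role here is also correct.
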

\begin{proof}[\pfref{lem:testfailures_ex}]
	The results follows from \cref{lem:pushforward} and \cref{lem:testfailures}.	
\end{proof}

\subsubsection{Value Function Regression Guarantee}
\label{sec:reg}
We next give a guarantee for the estimated value functions $\Vhat_{1:H}$ computed within $\learnlevel^\exo$ in \cref{line:updateQ2} of \cref{alg:learnlevel2}.
\begin{lemma}[Value function regression guarantee]
	\label{lem:confidencesets1_ex}
	Let $h\in [0 \ldotst H]$, $\delta, \veps\in (0,1)$, and $\zeta_{1:H}\in [0,1/2]$ be given, and consider a call to $\learnlevel^\exo_0$ in the setting of \cref{lem:testfailures_ex}. Further, let $\cV$ be defined as in \cref{eq:funclass}, and assume that $\Phi$ satisfies \cref{ass:phistar}. Then, for any \emph{endogenous} policy $\pi$ in $\Pim$, there is an event $\cE_h''$ of probability at least $1 -\delta/H$ under which for any $k\geq 1$, if 
	\begin{itemize}
		\item $\learnlevel^\exo_h$ gets called for the $k$th time during the execution of $\learnlevel^\exo_0$; and
		\item this $k$th call terminates and returns $(\Vhat_{h:H}, \cVhat_{h:H}, \cC_{h:H}, \cB_{h:H}, t_{h:H})$,
                \end{itemize}
                then {if $(\pibell_\tau)_{\tau\geq h}$ is the policy induced by $\Vhat_{h:H}$ and $N_\reg$ is set as in \cref{alg:learnlevel2}}, we have 
	\begin{align}
		& \sum_{(x_{h-1},a_{h-1})\in \cC_h}\frac{1}{N_\reg}\sum_{(x_h,-)\in \cD_h(x_{h-1}, a_{h-1})} \left( \Vhat_{h}(x_h)-V^{\pi}_{h}(x_{h})\right)^2\nn \\
		& \leq \frac{9 k H^2\log(8k^2H|\cV|/\delta)}{N_\reg}  +{8 H^2} \sum_{(x_{h-1},a_{h-1})\in \cC_h}\sum_{\tau=h}^H \E^{\pibell}\left[\tv{\pibell_\tau(\x_\tau)}{\pi_\tau(\x_\tau)}  \mid \x_{h-1}=x_{h-1},\a_{h-1}=a_{h-1}\right],
	\end{align}
	where the datasets $\{\cD_h(x, a): (x,a)\in \cC_h\}$ are as in the definition of $\cVhat_h$ in \eqref{eq:confidence2}.
\end{lemma}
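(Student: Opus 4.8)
The plan is to mirror the proof of \cref{lem:confidencesets1} almost verbatim, substituting the decoder-based value-function class for the abstract realizable class and exploiting that the restriction to endogenous $\pi$ is exactly what makes realizability hold. Concretely, I would fix an endogenous policy $\pi\in\Pim$ and an index $k\geq 1$, and condition on the $k$th call to $\learnlevel^\exo_h$. The first observation to record is that the regression step in \cref{line:updateQ2} of \cref{alg:learnlevel2} and the confidence set \eqref{eq:confidence2} are structurally identical to \cref{line:updateQ3} and \eqref{eq:confidence3} in \cref{alg:learnlevel3}: the randomized rounding that distinguishes $\learnlevel^\exo$ from $\learnlevel$ only affects which policy is \emph{executed}, not the least-squares objective. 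Thus $\Vhat_h$ is the least-squares fit over $\cV_h$ to Monte-Carlo estimates of $V^{\pibell}_h$, where $\pibell_{h:H}$ is the (rounded) policy induced by $\Vhat_{h:H}$ and each estimate uses $\Nest(|\cC_h|)$ rollouts.

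The bulk of the argument then reuses three ingredients from \cref{proof:confidencesets1}, in order: (i) a Hoeffding bound plus a union bound over $\cC_h$ and $\cD_h$ showing the regression targets concentrate around $V^{\pibell}_h$ up to an additive $H/N_\reg$ (using the choice of $\Nest$ in \cref{alg:learnlevel2}); (ii) the performance difference lemma (\cref{lem:perform}) to control the bias $|V^{\pibell}_h-V^{\pi}_h|$ by $H\sum_{\tau\geq h}\E^{\pibell}[\tv{\pibell_\tau(\x_\tau)}{\pi_\tau(\x_\tau)}\mid\x_{h-1},\a_{h-1}]$; and (iii) Freedman's inequality (\cref{lem:freed}) to pass from the empirical sum of $|V^{\pi}_h-V^{\pibell}_h|$ over $\cD_h$ to the corresponding conditional-expectation sum, at the cost of an $O(H\log(k^2H/\delta))$ additive term. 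I would then invoke the generic regression guarantee \cref{lem:reg} with $f_\star=V^{\pi}_h$, $B=H$, and bias $\bm{b}_i = v_h-V^{\pi}_h(x_h)$; its output, combined with (i)--(iii) via the triangle inequality exactly as in \eqref{eq:triangle}--\eqref{eq:robust}, yields the stated bound. Because $\pi$ is a single fixed policy rather than an element of a class, the union bound contributes only $\log|\cV|$, which is why the displayed bound reads $\log(8k^2H|\cV|/\delta)$. The union bound over all $k\geq 1$ is handled by the same sequential argument as in \cref{proof:confidencesets1}, conditioning on the algorithm state $\bm{\cS}_k$ at the start of the regression step, and produces the event $\cE_h''$ of probability at least $1-\delta/H$.

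The one step that actually requires the ExBMDP structure — and the main (mild) obstacle — is verifying that $f_\star=V^{\pi}_h$ lies in $\cV_h$ before applying \cref{lem:reg}. For an arbitrary policy this fails in the ExBMDP model, since $V^{\pi}_h$ need not be measurable with respect to $\phistar$; but under \cref{ass:phistar}, and because $\pi$ is endogenous, \cref{lem:realex2} guarantees $V^{\pi}_h\in\cV_h$ for $\cV$ as in \eqref{eq:funclass}. This is the only place where endogeneity of $\pi$ is used, and it is used precisely to legitimize the regression-oracle step; everything else in the proof is a transcription of \cref{proof:confidencesets1}.
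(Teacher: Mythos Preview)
Your proposal is correct and mirrors the paper's own proof, which simply observes that under \cref{ass:phistar} the class $\cV$ in \eqref{eq:funclass} realizes $V^{\pi}_h$ for every endogenous policy $\pi$ (via \cref{lem:realex2}), and then defers entirely to the argument in \cref{proof:confidencesets1}. Your more detailed walk-through of the three concentration/regression ingredients and your remark that a single fixed $\pi$ removes the $|\Pi'|$ factor from the log are both accurate elaborations of exactly this reduction.
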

\begin{proof}[\pfref{lem:confidencesets1_ex}]
  Since $\Phi$ satisfies \cref{ass:phistar}, the function class $\cV = \cV_{1:H}$ satisfies $V^\pi$-realizability for all endogenous policies $\pi$ (see \cref{lem:realex}). Thus, the proof of \cref{lem:confidencesets1_ex} follows from that of \cref{lem:confidencesets1} (see \cref{proof:confidencesets1}).
\end{proof}

\subsubsection{Confidence Sets}
\label{sec:confidencestes}
We now state a version of the confidence set validity lemma (\cref{lem:confidencesets}) that supports the ExBMDP setting. 
\begin{lemma}[Confidence sets]
	\label{lem:confidencesets_ex}
	Let $\veps\in(0,1/2)$ and $\zeta_{1:H}\subset [0,1/2]$ be given, and suppose that 
	\begin{itemize} 
		\item \cref{ass:great} holds with $\Ccor>0$;
		\item The decoder class $\Phi$ satisfies \cref{ass:phistar};
		\item $\zeta_{1:H} \in\cE_\texttt{rand}$, where $\cE_\texttt{rand}$ is the event in \cref{lem:bench}. 
	\end{itemize}
	Let $f \in \cV$ be arbitrary. There is an event $\cE'''$ of probability at least $1-3\delta$ under which a call to $\learnlevel^\exo_0(f,\cV,\emptyset,\emptyset,0;\cV,\veps,\zeta_{1:H},\delta)$ terminates and returns tuple $(\Vhat_{1:H}, \cVhat_{1:H}, \cC_{1:H}, \cB_{1:H}, t_{1:H})$ such that
	\begin{align}
		\forall h \in [H], \quad  V^{\pibar}_h \in \cVhat_h,
	\end{align}
	where $\pibar_{1:H}$ is the policy defined recursively via
	\begin{align}
		\pibar_\tau(x) \in \argmax_{a\in \cA}
		\ceil{\cP_\tau[V^{\pibar}_{\tau+1}](x,a)/\veps + \zeta_h },\quad \text{for } \tau = H,\dots,1.
		\label{eq:pibar_ex}
	\end{align} 
\end{lemma}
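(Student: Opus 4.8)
The plan is to mirror the proof of \cref{lem:confidencesets} from the $V^{\pi}$-realizability analysis, with the one essential change being that the $\Vhat$-dependent benchmark policy of \eqref{eq:pibar} is replaced throughout by the \emph{endogenous} policy $\pibar$ of \eqref{eq:pibar_ex}, and that \cref{lem:bench} is used to argue the two agree. First I would fix the high-probability events we condition on: $\cE$ from \cref{lem:testfailures_ex} (which gives termination of $\learnlevel^{\exo}_0$ and $\abs{\cC_h}\leq M$ for all $h$, where $M=\Mnum$), events $(\cE'_h)_{h\in[H]}$ obtained from the ExBMDP version of \cref{lem:confidence} applied to the rounded test in \cref{line:test2} (where the relevant thresholds are of order $\veps^2$ rather than $\veps$), and events $(\cE''_h)_{h\in[H]}$ from the regression guarantee \cref{lem:confidencesets1_ex}. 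A union bound gives $\Pr[\cE''']\geq 1-3\delta$ for $\cE'''\coloneqq\cE\cap\cE'_{1:H}\cap\cE''_{1:H}$, and we condition on $\cE'''$. Since by hypothesis $\zeta_{1:H}\in\cE_{\texttt{rand}}$, \cref{lem:endobench} yields that $\pibar$ is endogenous and $\pibar\in\Pi_{2\veps}$; consequently, by decoder realizability (\cref{ass:phistar}) and \cref{lem:realex2}, the class $\cV$ of \eqref{eq:funclass} realizes $V^{\pibar}_h$ for all $h\in[H]$.

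The main body of the argument is a backward induction over $\ell=H+1,\dots,1$ establishing $V^{\pibar}_\ell\in\cVhat_\ell$. The base case $\ell=H+1$ holds since $V^{\pi}_{H+1}\equiv 0$. For the inductive step, fix $h\in[H]$. If $\learnlevel^{\exo}_h$ is never invoked within the execution of $\learnlevel^{\exo}_0$, then $\cVhat_h=\cV$ and realizability gives $V^{\pibar}_h\in\cVhat_h$. Otherwise, consider the last invocation of $\learnlevel^{\exo}_h$; by the analogue of \cref{lem:recurse2}, which carries over unchanged, the quantities $\Vhat_{h+1:H}$, $\cVhat_{h+1:H}$, $\cC_{h+1:H}$ returned by $\learnlevel^{\exo}_0$ coincide with those of this last call, so the ExBMDP version of \cref{lem:confidence} gives, for every $(x_{h-1},a_{h-1})\in\cC_h$ and $\ell\in[h+1\ldotst H+1]$, a bound of the form $\bbP^{\pibell}[\,\sup_{f\in\cVhat_\ell}\max_a\abs{(\cP_{\ell-1}[\Vhat_\ell]-\cP_{\ell-1}[f_\ell])(\x_{\ell-1},a)}>3\veps^2\mid x_{h-1},a_{h-1}\,]\leq\testbound$. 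Using the induction hypothesis to set $f_\ell=V^{\pibar}_\ell$, this becomes a bound on the probability (under $\pibell$) that $\cP_{\ell-1}[\Vhat_\ell]$ and $\cP_{\ell-1}[V^{\pibar}_\ell]$ disagree by more than $3\veps^2$ at $\x_{\ell-1}$.

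The step I expect to be the main obstacle is converting this backup-closeness bound into a bound on $\En^{\pibell}[\tv{\pibell_{\ell-1}(\x_{\ell-1})}{\pibar_{\ell-1}(\x_{\ell-1})}]$; this is where the randomized rounding plays the role that the $\Delta$-gap played before. On the event $\cE_{\texttt{rand}}$, the seed $\zeta_{\ell-1}$ is such that for each of the at most $SA$ distinct values of $\cP_{\ell-1}[V^{\pibar}_\ell]$ — there are at most $SA$ because $V^{\pibar}$, the rewards, and the transitions are all endogenous — the quantity $\cP_{\ell-1}[V^{\pibar}_\ell]/\veps+\zeta_{\ell-1}$ stays $\Omega(\veps)$-away from every integer, so that perturbing $\cP_{\ell-1}[V^{\pibar}_\ell]$ by $O(\veps^2)$ leaves the rounded $\argmax$ unchanged. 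Combining (i) \cref{lem:phat}, which controls $\abs{\Phat_{\ell-1,\veps^2,\delta'}[\Vhat_\ell]-\cP_{\ell-1}[\Vhat_\ell]}$ by $O(\veps^2)$ off a $\delta'$-probability event, (ii) the $3\veps^2$ backup-closeness bound above, and (iii) \cref{lem:bench}, which ensures the $\Vhat$-dependent benchmark of \eqref{eq:pibar} snaps onto $\pibar$, I would conclude that $\pibell_{\ell-1}(\x_{\ell-1})=\pibar_{\ell-1}(\x_{\ell-1})$ except on an event of probability at most $\testbound+\delta'$ — the ExBMDP substitute for \cref{lem:tvdistance}. The delicate point is that the $O(\veps^2)$ error budget (the threshold $3\veps^2$, the $\Phat$-error $\veps^2$, and the $\beta(t_\ell)$ inflation in \cref{line:test2}) must be dominated by the $\Omega(\veps^2)$ rounding margin, which is exactly what forces the $24SAH\veps$ bad-set bound of \cref{lem:bench}; I would verify these constants at this point.

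To finish, I would invoke the regression guarantee \cref{lem:confidencesets1_ex} with $\pi=\pibar$ (permissible since $\pibar$ is endogenous), plug in the TV bound just obtained together with $\abs{\cC_h}\leq M$, and bound the right-hand side by $\veps_\reg^2$ using the parameter settings for $M$, $N_\reg$, $N_\test$, and $\delta'$ in \cref{alg:learnlevel2}, exactly as in the chain culminating in \eqref{eq:los}. By the definition of $\cVhat_h$ in \eqref{eq:confidence2}, this gives $V^{\pibar}_h\in\cVhat_h$, completing the induction and hence the proof.
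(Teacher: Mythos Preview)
Your proposal is correct and follows essentially the same route as the paper: define $\cE'''=\cE\cap\cE'_{1:H}\cap\cE''_{1:H}$, induct backward on $V^{\pibar}_\ell\in\cVhat_\ell$, use \cref{lem:recurse2} and the ExBMDP test-passing lemma (\cref{lem:confidence_ex}) to get the $3\veps^2$ backup-closeness bound, convert to a TV bound between $\pibell$ and $\pibar$, and then feed this into \cref{lem:confidencesets1_ex} with $\pi=\pibar$ to land inside $\veps_\reg^2$.

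The one place you over-complicate things is the step you flag as the ``main obstacle.'' In the paper this conversion is a one-line invocation of \cref{lem:tvdistance_ex}, whose proof is essentially identical to that of \cref{lem:tvdistance}: it compares $\pibell_\tau$ to the $\Vhat$-dependent rounded benchmark $\pib_\tau$ of \eqref{eq:pibar}, and the only ingredients are (i) the backup-closeness event and (ii) the $\Phat$ error bound from \cref{lem:phat}. The rounding-margin reasoning you sketch (that $\cP_{\ell-1}[V^{\pibar}_\ell]/\veps+\zeta_{\ell-1}$ stays away from integers so that $O(\veps^2)$ perturbations do not flip the $\argmax$) is \emph{not} needed here: it is entirely encapsulated in the hypothesis $\zeta_{1:H}\in\cE_{\texttt{rand}}$, which via \cref{lem:bench} gives $\pib\equiv\pibar$ uniformly over all $\Vhat$, and hence $\tv{\pibell_\tau}{\pib_\tau}=\tv{\pibell_\tau}{\pibar_\tau}$. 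So there are no constants to re-verify at this step; the $24SAH\veps$ budget was already spent in \cref{lem:bench} and is not part of the present lemma's probability accounting.
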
 
While the proof of this lemma is very similar to that of \cref{lem:confidencesets}, we need a dedicated treatment to handle the rounding in $\learnlevel^\exo$. The fully proof of \cref{lem:confidence_ex} is in \cref{proof:confidencesets}.

\subsubsection{Main Guarantee for $\learnlevel^\exo$}
We now state the central technical guarantee for \mainalge, \cref{thm:lbc2}, which shows that the base invocation of the algorithm returns a set of value functions $\Vhat_{1:H}$ that induce a near-optimal policy $\pihat$, as long as the randomized rounding parameters $\zeta_{1:H}$ satisfy  $\zeta_{1:H} \in \cE_\texttt{rand}$, where $\cE_\texttt{rand}$ is the success event in \cref{lem:bench}. The proof of the theorem is in \cref{proof:lbc2}. 
\begin{lemma}[Main guarantee for \mainalge{}]
\label{thm:lbc2}
Let $\delta,\veps\in (0,1)$ and $\zeta_{1:H}\subset [0,\frac{1}{2}]$ be given, and suppose that
\begin{itemize} 
	\item \cref{ass:great} holds with $\Ccor>0$;
	\item The decoder class $\Phi$ satisfies \cref{ass:phistar};
	\item $\zeta_{1:H} \in \cE_\texttt{rand}$, where $\cE_\texttt{rand}$ is the event in \cref{lem:bench}. 
\end{itemize}
Then, for any $f \in \cV$, with probability at least $1-5\delta$, a call to $\learnlevel^\exo_{0}(f,\cV^{H},\emptyset,\emptyset,0;\cV,\veps,\zeta_{1:H},\delta)$ (\cref{alg:learnlevel2}) terminates and returns value functions $\Vhat_{1:H}$ such that 
\begin{align}
\forall h \in [H], \quad 	\bbP^{\pibell}\brk*{ \bm\pibell_{h}(\x_h) \neq  \pibar_h(\x_{h})
	} \leq  \frac{\veps^2}{4H^3 S A \Ccor}, \label{eq:returned22}
\end{align}
where $\bm\pibell_h(x) \in \argmax_{a\in \cA}  \ceil{\Phat_{h,\veps,\delta'}[\Vhat_{h+1}](x,a)/\veps + \zeta_h}$, for all $h\in\brk{H}$, $\pibar$ is defined as in \cref{eq:policies}, and $\delta'$ is defined as in \cref{alg:learnlevel2}. Furthermore, the number of episodes used by $\learnlevel^\exo_0$ is bounded by 
\begin{align}
\wtilde{O}	\left(\Ccor^8 S^8 H^{10}A^9\cdot{}\veps^{-26}\right).
\end{align}
\end{lemma}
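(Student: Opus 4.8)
\textbf{Proof plan for \cref{thm:lbc2}.}

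The plan is to mirror the proof of \cref{thm:lbc3} (the $V^\pi$-realizability guarantee for \learnlevel), replacing every appeal to $V^\pi$-realizability for benchmark policies in $\Pi_{4\veps}$ by the corresponding statement for the \emph{endogenous} benchmark policy $\pibar$ of \cref{eq:policies}, which is legitimate because $\cV$ of \cref{eq:funclass} realizes $V^\pi$ for all endogenous $\pi$ under \cref{ass:phistar}. Concretely, I would condition on the intersection $\wtilde\cE \coloneqq \cE\cap\cE'''\cap\cE'_1\cap\dots\cap\cE'_H$ of the events from \cref{lem:testfailures_ex} (termination and $|\cC_h|\le M$), \cref{lem:confidencesets_ex} (confidence set validity, $V^{\pibar}_h\in\cVhat_h$), and \cref{lem:confidence} applied in the ExBMDP setting (consequence of all tests passing); a union bound gives $\Pr[\wtilde\cE]\ge 1-5\delta$. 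Since $\zeta_{1:H}\in\cE_\texttt{rand}$, \cref{lem:bench} guarantees that for \emph{every} realization of $\Vhat_{1:H}$ the benchmark policy $\pibb_h(\cdot;\Vhat,\zeta_{1:H},\veps,\delta)$ defined by \cref{eq:pibar} coincides with the $\Vhat$-independent endogenous policy $\pibar_h$ of \cref{eq:policies}; this is the step that lets us treat $\pibar$ as a \emph{fixed} endogenous policy in the regression argument behind \cref{lem:confidencesets_ex}, which is exactly what makes $V^\pi$-realizability applicable.

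Having fixed $\pibar$, I would run the same chain as in \cref{proof:lbc3}: from \cref{lem:confidence} (instantiated with the rounding-based tests of \cref{line:test2}, whose threshold is $\veps^2+\veps^2\beta(t_\ell)$ and whose effective approximation error is $\Theta(\veps^2)$), one gets for every $h$ that
\begin{align}
\bbP^{\pibell}\brk*{\sup_{f\in \cVhat_h}\max_{a\in\cA}\abs*{(\cP_{h-1}[\Vhat_h]-\cP_{h-1}[f_h])(\x_{h-1},a)}>3\veps^2}\le \frac{4\log(8M^6N_\test^2H^8/\delta)}{N_\test}.
\end{align}
Substituting $f_h = V^{\pibar}_h$, which lies in $\cVhat_h$ by \cref{lem:confidencesets_ex}, and then converting the small Bellman-backup discrepancy into a TV bound on the induced greedy (rounded) policies via \cref{lem:tvdistance} (adapted to the coarsening, so that a gap of order $\veps^2$ around the rounded backups means that a backup error below $\approx\veps^2$ cannot change the $\argmax$), yields $\bbP^{\pibell}[\bm\pibell_h(\x_h)\ne\pibar_h(\x_h)]\le \testbound + \delta'$. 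Plugging in the parameter settings $M=\Mnum$, $N_\test=\Ntestnum$ (with the ExBMDP values $\Cpush\le\Ccor SA$ from \cref{lem:pushforward}) bounds this by $\veps^2/(4H^3 SA\Ccor)$, which is \cref{eq:returned22}.

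For the sample complexity I would reuse verbatim the bookkeeping in the proof of \cref{thm:lbc3}: each call to $\learnlevel^\exo_h$ uses at most $M^2H(H+A)N_\test N_\simu + MN_\reg A\,\Nest(M)H N_\simu$ episodes (roll-outs in \cref{line:draw2}, the per-action tests in \cref{line:test2}, and the Monte-Carlo refit), the for-loop in \cref{line:begin2} resumes at most $HM$ times, $\learnlevel^\exo_h$ is invoked at most $M$ times total (by \cref{lem:testfailures_ex}), and summing over the $H$ layers and substituting the ExBMDP parameter values $M=\ceil{8\veps^{-2}\Ccor SAH}$, $N_\test$, $N_\reg$, $\Nest$, $N_\simu$ from \cref{alg:learnlevel2} gives $\wtilde O(\Ccor^8 S^8 H^{10} A^9 \veps^{-26})$. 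The main obstacle — and the only genuinely new ingredient relative to \cref{thm:lbc3} — is the interplay between the randomized rounding and the confidence-set/TV-conversion machinery: one must check that the $\Theta(\veps^2)$-scale tolerance in the tests of \cref{line:test2} is simultaneously (i) large enough that the validity argument of \cref{lem:confidencesets_ex} goes through with $\veps_\rel = 4\veps^2$-type slack, and (ii) small enough, relative to the $\Theta(\veps^2)$ ``gap'' that \cref{lem:bench} manufactures around the rounded backups, that the learned policy provably snaps onto $\pibar$ rather than some other endogenous policy; this quantitative matching of the two $\veps^2$ scales (and the resulting loss that drives the $\veps^{-26}$) is where the care is needed.
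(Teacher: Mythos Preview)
Your proposal is correct and follows essentially the same route as the paper: condition on $\wtilde\cE=\cE\cap\cE'''\cap\cE'_1\cap\dots\cap\cE'_H$, use \cref{lem:confidence_ex} to get the $3\veps^2$ backup-error bound, substitute $V^{\pibar}_h\in\cVhat_h$ from \cref{lem:confidencesets_ex}, convert to a TV bound via the rounded analogue of \cref{lem:tvdistance} (which the paper states as a separate \cref{lem:tvdistance_ex}), then use $\zeta_{1:H}\in\cE_\texttt{rand}$ and \cref{lem:bench} to identify $\pib\equiv\pibar$ (deterministic), so the TV distance equals $\P^{\pihat}[\pihatb_h(\x_h)\ne\pibar_h(\x_h)]$; the sample-complexity bookkeeping is identical to \cref{proof:lbc3}. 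The only cosmetic difference is that the paper packages the ExBMDP versions of the ``test-passed'' and TV-conversion lemmas as dedicated statements (\cref{lem:confidence_ex} and \cref{lem:tvdistance_ex}) rather than adapting them inline.
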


\subsubsection{Concluding: Main Guarantee for \forwardexo}
\label{proof:exbmdpforward_main}

To conclude, we prove \cref{thm:exbmdpforward_main}, which shows that \forwardexo succeeds with high probability.
Recall that \forwardexo (i) invokes \mainalge multiple times for random samples $\zeta_{1:H}$ to ensure that the success event for \cref{thm:lbc2} occurs for at least one invocation, and (ii) extracts an executable policy using behavior cloning. Regarding the former point, note that the probability of the success event of \cref{lem:bench} can be boosted by sampling i.i.d.~$\bzeta_{1:H}\ind{1},\dots, \bzeta_{1:H}\ind{n}\sim \P^\zeta$ inputs to $\learnlevel^\exo$ for $n\geq 1$; as long as $n$ is polynomially large, with high probability at least one of the inputs $\bzeta_{1:H}\ind{1},\dots, \bzeta_{1:H}\ind{n}$ will satisfy the conclusion of \cref{lem:bench}. Thus, it suffices to pick the policy with the highest value function among the different calls to $\learnlevel^\exo$. Using this, we prove \cref{thm:exbmdpforward_main}.
~\\

\newcommand{\opt}{\texttt{opt}}
\newcommand{\eval}{\texttt{eval}}

\begin{proof}[\pfref{thm:exbmdpforward_main}]
	Recall that \cref{alg:forward_exbmdp} picks the final policy $\pihat\ind{i_\texttt{opt}}_{1:H}$ based on empirical value function estimates. In particular, for every $i\in[N_\boost]$ (with $N_\boost$ as in \cref{alg:forward_exbmdp}), the estimate $\widehat{J}(\pihat\ind{i}_{1:H})$ for $J(\pihat\ind{i}_{1:H})$ is computed using $N_\eval$ episodes. Thus, by Hoeffding's inequality and the union bound, we have that there is an event $\breve\cE$ of probability at least $1-\delta/4$ under which
	\begin{align}
		\forall i \in [N_\boost], \quad |J(\pihat\ind{i}_{1:H}) - \widehat{J}(\pihat\ind{i}_{1:H})| \leq \sqrt{2\log(2N_\boost/\delta)/N_\eval}.
	\end{align}
	Therefore, by definition of $i_\opt$ in \cref{alg:forward_exbmdp}, we have that under $\breve\cE$: 
	\begin{align}
		\forall i \in [N_\boost],\quad J(\pihat\ind{i}_{1:H}) & \leq J(\pihat\ind{i_\opt}_{1:H})  + \sqrt{2\log(2N_\boost/\delta)/N_\eval}, \nn \\ 
		& \leq  J(\pihat\ind{i_\opt}_{1:H}) + \veps/8,
		 \label{eq:frrr}
	\end{align}
	where the last inequality follows by the choice of $N_\eval$ in \cref{alg:forward_exbmdp}. On the other hand, by \cref{lem:bench}, there is an event $\cE^{\texttt{success}}$ of $\P^\zeta$-probability at least
        \[1-(24 S A H\veps)^{N_\boost}\geq 1- \delta/4 \quad \text{(by the choice of $N_\boost$ in \cref{alg:forward_exbmdp})} \]
	 under which there exists $j \in [N_\boost]$ such that $\bzeta\ind{j}_{1:H}\in \cE_\texttt{rand}$, where $\cE_\texttt{rand}$ is defined as in \cref{lem:bench}. In what follows, we condition on the event $\cE^{\texttt{success}}$ and let $j\in [N_\boost]$ be such that $\bzeta\ind{j}_{1:H}\in \cE_\texttt{rand}$. Further, we use $\pihat^{\learnlevel}_{1:H}$ to denote the policy returned by the instance of $\learnlevel^\exo$ that is used by \forwardexo to learn $\pihat\ind{j}_{1:H}$.
	
	By \cref{prop:forward} (instantiated with $\veps_\texttt{mis}=0$), there is an event $\wtilde\cE'$ of probability at least $1-\delta/4$ under which the policy $\pihat\ind{j}$ produced by \forward{} satisfies
	\begin{align}
	J(\pihat^{\learnlevel}_{1:H}) -  J(\pihat\ind{j}_{1:H}) \leq  \frac{\veps}{2}.
	\label{eq:forwardpart2}
	\end{align}	
By \cref{thm:lbc2} and the fact that $\bzeta\ind{j}_{1:H}\in \cE_\texttt{rand}$, there is an event $\wtilde \cE$ of probability at least $1-\delta/2$ under which: 
	\begin{align}
		\forall h \in[H], \quad 
		\forall h \in [H], \quad 	\bbP^{\pibell}\brk*{ \bm\pibell_{h}(\x_h) \neq  \pibar_h(\x_{h})
	}  \leq   \frac{\veps^2_\learnlevel}{4 H^3 S A\Ccor} \leq  \frac{\veps}{4H^2},   \label{eq:secondevent2}
	\end{align}
where $\pibar(\cdot) \coloneqq \pibar(\cdot; \bzeta_{1:H}\ind{j}, \veps_\learnlevel)$ which is defined in \eqref{eq:policies}. 

Moving forward, we condition on $\breve\cE\cap \wtilde{\cE} \cap \wtilde{\cE}'$. By \cref{lem:perform} (the performance difference lemma), we have
	\begin{align}
		J(\bar{\pi})- J(\pihat^{\learnlevel}_{1:H}) & = \sum_{h=1}^H \E^{\pihat^{\learnlevel}}[Q^{\pib}_h(\x_h, \bar\pi_h(\x_h)) -  Q_h^{\bar\pi}(\x_h, \pihatb^{\learnlevel}_h(\x_h))],\nn \\&  \leq H\sum_{h=1}^H \bbP^{\pibell}\brk*{ \bm\pibell_{h}(\x_h) \neq  \pibar_h(\x_{h})
		} ,\nn \\
		& \leq \veps/4, \label{eq:perform2}
	\end{align}
	where the last inequality follows by \eqref{eq:secondevent2}. Now, by \cref{lem:endobench}, we have $\pibar \in \Pi_{2 \veps_\learnlevel}$, and so by \cref{lem:policysubopt3},
	\begin{align}
	J(\pistar) - J(\pibar) \leq 6 H \veps_\learnlevel \leq \veps/8, \label{eq:less}
	\end{align}
	where the last inequality follows by the choice of $\veps_\learnlevel$ in \cref{alg:forward_exbmdp}. Combining \eqref{eq:less} with \eqref{eq:frrr}, \eqref{eq:forwardpart2}, and \eqref{eq:perform2}, we get that
	\begin{align}
		J(\pistar)- J(\pihat_{1:H})= J(\pistar)- J(\pihat\ind{i_\opt}_{1:H})  & \leq \veps. \label{eq:performeeee}
	\end{align}
	Finally, by the union bound, we have $\P[\breve\cE \cap  \wtilde\cE\cap \wtilde\cE']\geq 1 - \delta$, and so the desired suboptimality guarantee holds with probability at least $1-\delta$. 
	
	\paragraph{Bounding the sample complexity} The sample complexity is dominated by the calls to $\learnlevel^\exo_0$ within \forwardexo (\cref{alg:forward_exbmdp}). Since \forwardexo calls $\learnlevel^\exo_0$ with suboptimality parameter $\veps_\learnlevel = \veps H^{-1}/48$, we get by \cref{thm:lbc2} that the total sample
        complexity is bounded by 
	\begin{align}
	\wtilde{O}	\left(\Cexo^8 S^8H^{36}A^9\cdot \veps^{-26} \right).
	\end{align}
\end{proof}

\subsection{Proof of \creftitle{lem:endobench} (Endogenous Benchmark Policies)}
\label{proof:endobench}
\begin{proof}[\pfref{lem:endobench}]
	Fix $\delta \in (0,1)$, $\veps \in (0,1/2)$, and $\zeta_{1:H}' \subset [0,1/2]$. We show via backward induction over $\ell = H+1, \dots, 1$ that $\pibar_{\tau}(\cdot; \zeta'_{1:H},\veps)$ is endogenous for all $\tau \in[\ell \ldotst H+1]$, with the convention that $\pibar_{H+1} =\piunif$. The base case holds trivially by convention.
	
	Fix $h \in [H]$ and suppose that the induction hypothesis holds for all $\ell \in [h+1\ldotst H+1]$. We show that it holds for $\ell=h$. First, by the induction hypothesis, $\pibar_{\ell}(\cdot; \zeta'_{1:H},\veps)$ is endogenous for all $\ell \in [h+1\ldotst H]$. Thus, there exists a function $f_{h+1}: \cS \rightarrow [0, H-h]$ such that 
	\begin{align}
		V^{\pibar}_{h+1}(x') = f_{h+1}(\phistar(x')), \quad \forall x' \in \cX.
	\end{align}
	Therefore, we have for all $(x,a)\in \cX \times \cA$:
	\begin{align}
		\cP_h[V^{\pibar}_{h+1}](x,a) & = r_h(x,a) +\E[f_{h+1}(\phistar(\x_{h+1})) \mid  \x_h = x,\a_h = a], \nn \\
		& =  r_h(x,a)  + \E[f_{h+1}(\bs_{h+1}) \mid  \x_h = x,\a_h = a], \nn \\
		& =  r_h(x,a)  + \E[f_{h+1}(\bs_{h+1}) \mid  \bs_h = \phistar(x),\a_h = a], \label{eq:endogenous}
	\end{align}
	where the last equality follows by the ExBMDP transition structure. \cref{eq:endogenous} together with the fact that the rewards are endogenous (by assumption) implies that there exists $g_h:\cS\times \cA\rightarrow [0,H-h+1]$ such that
	\begin{align}
		\forall (x,a)\in \cX \times \cA,\quad 	\cP_h[V^{\pibar}_{h+1}](x,a) = g_{h}(\phistar(x),a),
	\end{align}
	which in turn implies that $ x\mapsto \ceil{\cP_h[V^{\pibar}_{h+1}](x,a)/\veps + \zeta_h'}$ is only a function of $x$ through $\phistar(x)$ for all $a\in \cA$. Thus, $\pibar_h$ is an endogenous policy and the induction is completed.

        For the second claim, observe that for the functions $\wtilde{Q}_{1}, \cdots, \wtilde{Q}_H \in [0,H]^{\cX\times \cA}$ defined as
	\begin{gather}
		\forall h \in[H], \forall (x,a)\in \cX \times \cA, \quad 	\wtilde{Q}_{h}(x,a)  = \veps \cdot \ceil{\cP_h[V^{\pibar}_{h+1}](x,a)/\veps + \zeta_h'},
		\intertext{we have}
		\forall h \in[H], \quad \pibar_h(\cdot; \zeta_{1:H}',\veps) \in \argmax_{a\in \cA}\wtilde{Q}_{h}(\cdot,a) \quad \text{and}\quad   \|\wtilde{Q}_h- Q^{\pibar}_h\|_{\infty} \leq 2 \veps,
	\end{gather}
	which implies that $\pibar(\cdot;\zeta_{1:H}',\veps)\in \Pi_{2\veps}$.
\end{proof}

\subsection{Proof of \creftitle{lem:bench} (Snapping Probability)}
\label{proof:bench}
\begin{proof}[\pfref{lem:bench}]
	Fix $\veps \in (0,1)$ and $\delta \in (0,1/2)$. For $\tau\leq\ell\in [H]$, let $\P_{\tau:\ell}^{\zeta}$ denote the probability law of $\bzeta_\tau, \dots, \bzeta_\ell$. We also use the shorthand $\P_{\tau}^\zeta$ for $\P_{\tau:\tau}^\zeta$, for all $\tau \in[H]$. We show via backward induction over $\ell = H+1, \dots, 1$ that there exists an event $\cE_\ell$ of $\P^{\zeta}_{\ell:H}$-probability at least $1- 24 S A (H-\ell+1)  \veps$ under which for all $\wtilde{V} \in (\cX \times [H] \rightarrow [0,H])$: 
	\begin{align}
		\forall \tau \in [\ell \ldotst H] ,\quad  \pibb_\tau(\cdot; \wtilde{V}, \bzeta_{1:H},\veps, \delta) = \pibar_\tau(\cdot; \bzeta_{1:H},\veps),
	\end{align}
	with the convention that $\pibar_{H+1}\equiv \pib_{H+1}\equiv \pi_\unif$. We then set $\cE_\texttt{rand} = \cE_1$.
	
	The base case follows trivially by convention. 
	
	We now proceed with the inductive step. Fix $h\in[H]$ and suppose that the induction hypothesis holds for all $\ell \in[h+1\ldotst H]$. We show that it holds for $\ell=h$. Throughout, we condition on $\cE_{h+1}$. By definition of $\cE_{h+1}$, we have for all $\wtilde{V} \in (\cX \times [H]\rightarrow [0,H])$:
	\begin{align}
		\forall \ell  \in [h+1 \ldotst H] ,\quad  \pibar_\ell(\cdot; \bzeta_{1:H},\veps)= \pibb_\ell(\cdot; \wtilde{V},\bzeta_{1:H},\veps, \delta).
	\end{align}
	This implies that for all $\wtilde{V} \in (\cX \times [H]\rightarrow [0,H])$:
	\colt{
	\begin{align}
	&	\forall x \in \cX, \nn\ \\ 	 & \pibb_h(x;\wtilde{V}, \bzeta_{1:H},\veps, \delta) \in \argmax_{a\in \cA} \left\{ \begin{array}{ll}  \ceil{\bQhat_h(x,a)/\veps + \bzeta_h},  & \text{if }  \|\bQhat_h(x,\cdot) -\cP_h [V^{\pibar}_{h+1}](x,\cdot)\|_\infty \leq 4\veps^2, \\   
			\ceil{\cP_h[V^{\pibar}_{h+1}](x,a)/\veps + \bzeta_h}, & \text{otherwise}, \end{array}\right.
	\end{align}
}
	\arxiv{
	\begin{align}
		\forall x \in \cX,\ \	\pibb_h(x; \wtilde{V},\bzeta_{1:H},\veps, \delta) \in \argmax_{a\in \cA} \left\{ \begin{array}{ll}  \ceil{\bQhat_h(x,a)/\veps + \bzeta_h},  & \text{if }  \|\bQhat_h(x,\cdot) -\cP_h [V^{\pibar}_{h+1}](x,\cdot)\|_\infty \leq 4\veps^2, \\   
			\ceil{\cP_h[V^{\pibar}_{h+1}](x,a)/\veps + \bzeta_h}, & \text{otherwise}, \end{array}\right.
	\end{align}
}
	by the definition of $\pib_h$ in \eqref{eq:pibar}, where $\bQhat_h(\cdot,a) \coloneqq \Phat_{h,\veps, \delta} [\wtilde{V}_{h+1}](\cdot,a)$. From this, we see that to prove $\pibb_h(\cdot; \wtilde{V}, \bzeta_{1:H},\veps, \delta) = \pibar_h(\cdot; \bzeta_{1:H},\veps)$ for all $\wtilde{V}$, it suffices to show that for all $x\in \cX$ and $\wtilde{V}$,
	\colt{
	\begin{align}
		\argmax_{a\in \cA}  \ceil{\bQhat_h(x,a)/\veps + \bzeta_h}  &= 	\argmax_{a\in \cA}  \ceil{\cP_h[V^{\pibar}_{h+1}](x,a)/\veps + \bzeta_h},  \nn \\ \shortintertext{whenever}     | \bQhat_h(x,a)-\cP_h [V^{\pibar}_{h+1}](x,a)| & \leq 4\veps^2.
	\end{align}
	A sufficient condition for this to hold (for all $\wtilde{V}$ simultaneously) is:
	\begin{align}
	& 	\forall 	x\in\cX, \forall a\in\cA, \forall \delta \in [-4\veps^2,4\veps^2],\nn \\  &   \ceil*{(\cP_h [V^{\pibar}_{h+1}](x,a)+\delta)\cdot \veps^{-1} +\bzeta_h} =   \ceil*{\cP_h[V^{\pibar}_{h+1}](x,a) \cdot \veps^{-1}+\bzeta_h}, \label{eq:target}
	\end{align}
}
	\arxiv{
	\begin{align}
          \argmax_{a\in \cA}  \ceil{\bQhat_h(x,a)/\veps + \bzeta_h} = 	\argmax_{a\in \cA}  \ceil{\cP_h[V^{\pibar}_{h+1}](x,a)/\veps + \bzeta_h}, &  ~~\text{whenever} \ \  | \bQhat_h(x,a)-\cP_h [V^{\pibar}_{h+1}](x,a)| \leq 4\veps^2.
	\end{align}
	Observe that a sufficient condition for this to hold is that
	\begin{align}
			\forall 	x\in\cX, \forall a\in\cA, \forall \delta \in [-4\veps^2,4\veps^2],\quad    \ceil*{(\cP_h [V^{\pibar}_{h+1}](x,a)+\delta)\cdot \veps^{-1} +\bzeta_h} =   \ceil*{\cP_h[V^{\pibar}_{h+1}](x,a) \cdot \veps^{-1}+\bzeta_h}, \label{eq:target}
	\end{align}
}
	where $\delta$ represents all the possible values that the difference $\bQhat_h(x,a)-\cP_h [V^{\pibar}_{h+1}](x,a)$ is allowed to take. By \cref{lem:endobench}, we know that $\pibar$ is endogenous, and so there exists a function $g_h: \cS\times \cA \rightarrow [0,H-h+1]$ such that 
	\begin{align}
		\forall x\in \cX,a\in\cA, \quad	\cP_h [V^{\pibar}_{h+1}](x,a) = g_h(\phistar(x),a).
	\end{align}
        Toward proving \cref{eq:target}, observe that for any $(s,a)\in \cS\in \cA$, if $\bzeta_h$ is such that 
	\begin{equation}
		\begin{aligned}
			g_h(s,a)/\veps  + \bzeta_h + 4\veps  &\leq  	\ceil{g_h(s,a)/\veps+\bzeta_h}, \\
			\text{and}\qquad 		g_h(s,a)/\veps  + \bzeta_h - 4\veps & >   	\ceil{g_h(s,a)/\veps+\bzeta_h} - 1, 
		\end{aligned}
		\label{eq:condition}
	\end{equation}
	then, for all $\delta \in[-4\veps^2,4\veps^2]$ and all $x\in \cX$ such that $\phistar(x)=s$, we have 
	\colt{
	\begin{align}
		\ceil{(\cP_h [V^{\pibar}_{h+1}](x,a)+\delta) /\veps+\bzeta_h} =	\ceil{(g_h(s,a)+\delta)/\veps+\bzeta_h}  &=\ceil{g_h(s,a)/\veps+\bzeta_h}, \nn \\ &  = 	\ceil{\cP_h [V^{\pibar}_{h+1}](x,a) /\veps+\bzeta_h}. \label{eq:thisonee}
	\end{align}
}
	\arxiv{
	\begin{align}
		\ceil{(\cP_h [V^{\pibar}_{h+1}](x,a)+\delta) /\veps+\bzeta_h} =	\ceil{(g_h(s,a)+\delta)/\veps+\bzeta_h}  =\ceil{g_h(s,a)/\veps+\bzeta_h} = 	\ceil{\cP_h [V^{\pibar}_{h+1}](x,a) /\veps+\bzeta_h}. \label{eq:thisonee}
	\end{align}
}
	Therefore, if we let $\cE_h(s,a)$ denote the event in \eqref{eq:condition}, then under $\bigcap_{(s,a)\in \cS\times \cA}\cE_h(s,a)$, the desired condition in \eqref{eq:target} holds. At this point, setting $\cE_h = (\bigcap_{(s,a)\in \cS\times \cA} \cE_h(s,a)) \cap \cE_{h+1}$ would complete the induction as long as $\P^\zeta_{h:H}[\cE_h] \geq 1 - 24 SA(H-h+1)\veps$. We now show that this is indeed the case by bounding the probability of the event $\bigcap_{(s,a)\in \cS\times \cA}\cE_h(s,a)$. By the union bound, we have 
	\begin{align}
		\P^\zeta_{h:H}\left[\bigcap_{(s,a)\in \cS\times \cA}\cE_h(s,a) \mid \cE_{h+1}\right] \geq 1 - \sum_{(s,a)\in \cS\times \cA}  \P^\zeta_{h:H}\left[\cE_h(s,a)^c \mid \cE_{h+1} \right], \label{eq:theunionbound}
	\end{align}
	where $\cE_h(s,a)^c$ denotes the complement of $\cE_h(s,a)$. We now bound the probability \[\P^\zeta_{h:H}\left[\cE_h(s,a)^c \mid \cE_{h+1} \right].\] Fix $(s,a)\in \cS\times \cA$. We have that $\bzeta_h \in \cE(s,a)^c$ if and only if 
	\begin{equation}
		\begin{aligned}
			g_h(s,a)/\veps  + \bzeta_h + 4\veps  &> 	\ceil{g_h(s,a)/\veps+\bzeta_h}, \\
			\text{or} \qquad 
			g_h(s,a)/\veps  + \bzeta_h - 4\veps  &\leq   	\ceil{g_h(s,a)/\veps+\bzeta_h} - 1. 
		\end{aligned}
		\label{eq:conditioned}
	\end{equation}
	Now, since $\bzeta_h \in[0,1/2]$, \cref{lem:onlyif} (instantiated with $(x,\zeta, \nu) =(g_h(s,a)/\veps,\bzeta_h,4\veps)$) implies that  \eqref{eq:conditioned} holds only if 
	\begin{align}
		\ceil{g_h(s,a)/\veps} -4 \veps  \leq   g_h(s,a)/\veps  + \bzeta_h    \leq   	\ceil{g_h(s,a)/\veps}  + 4\veps \quad   
		\text{or}  \quad   0  \leq  \bzeta_h  \leq  4 \veps. 
		\label{eq:conditioned2}
	\end{align}
	Further, note that since $\bzeta_h$ is uniformly distributed over $[0,1/2]$, the $\P^\zeta_h$-probability of the event in \eqref{eq:conditioned2} is at most the sum of the lengths of the intervals \[[\ceil{g_h(s,a)/\veps}- g_h(s,a)/\veps-4\veps, \ceil{g_h(s,a)/\veps}- g_h(s,a)/\veps+4\veps] \quad \text{and} \quad  [0,4\veps],\] multiplied by 2, which is equal to $24\veps$. Therefore, we have 
	\begin{align}
	& 	\P^\zeta_{h:H}\left[\cE_h(s,a)^c \mid \cE_{h+1} \right] \nn \\  & \leq \P^\zeta_h\left[\ceil{g_h(s,a)/\veps} - 4\veps  \leq   g_h(s,a)/\veps  + \bzeta_h    \leq   	\ceil{g_h(s,a)/\veps}  + 4\veps \;\;
                                                                                \text{or} \;\;  0  \leq  \bzeta_h  \leq   4\veps\right] \leq 24 \veps. 
	\end{align}
	Combining this with \eqref{eq:theunionbound}, we obtain
	\begin{align}
		\P^\zeta_{h:H}\left[\bigcap_{(s,a)\in \cS\times \cA}\cE_h(s,a) \mid \cE_{h+1}\right]  \geq 1 - \sum_{(s,a)\in \cS\times \cA}  \P^\zeta_{h:H}\left[\cE_h(s,a)^c \mid \cE_{h+1} \right]\geq 1 - 24 S A \veps.  
	\end{align}
	Thus, by setting $\cE_h = (\bigcap_{(s,a)\in \cS\times \cA} \cE_h(s,a)) \cap \cE_{h+1}$, we get that 
	\begin{align}
		\P^{\zeta}_{h:H}[\cE_h] \geq \P^{\zeta}_{h+1:H}[\cE_{h+1}] \cdot  \P^{\zeta}_{h:H}[\cE_h\mid \cE_{h+1}]  & \geq (1-24 S A (H-h)\veps) (1-24 S A \veps), \nn \\
		& \geq 1 - 24 SA(H-h+1)\veps,
	\end{align} 
	which completes the induction.
\end{proof}

\subsection{Proof of \creftitle{lem:pushforward} (Coverability in Weakly Correlated ExBMDP)}
\label{proof:pushforward}

\begin{proof}[\pfref{lem:pushforward}]
	Fix $h\in[2\ldotst H]$ and define the measure $\mu$ as
	\begin{align}
		\mu(x) \coloneqq \sum_{\xi' \in \Xi} q(x' \mid (\phistar_h(x'), \xi')) \cdot \P[\bxi_h = \xi'] \cdot \P[\bs_h = \phistar_h(x')\mid \bs_{h-1}=\phistar_{h-1}(x),\a_{h-1}=a],
	\end{align}
	for all $h \in[H]$ and $x \in   \cX$. We show that $\mu$ satisfies \cref{ass:pushforward} with $\Cpush = \Ccor \cdot S A$. First, note that $\mu$ is indeed a probability measure over $\cX$. Fix $(x,a,x')\in \cX\times \cA \times \cX$. We have
	\begin{align}
		& \P[\x_h = x' \mid \x_{h-1}=x,\a_{h-1}=a] \nn \\
		&=  \frac{\P[\x_h = x', \x_{h-1}=x\mid \a_{h-1}=a]}{\P[\x_{h-1}=x\mid \a_{h-1}=a]}, \nn \\
		&=  \frac{\P[\x_h = x', \x_{h-1}=x\mid \a_{h-1}=a]}{\P[\x_{h-1}=x]}, \nn \\
		&=  \frac{\sum_{\xi,\xi'\in \Xi} q(x' \mid (\phistar_h(x'), \xi'))\cdot q(x \mid (\phistar_{h-1}(x), \xi))\cdot \P[\bs_h = \phistar_h(x'), \bxi_{h}=\xi',\bs_{h-1}=\phistar_{h-1}(x), \bxi_{h-1}=\xi \mid \a_{h-1}=a]}{\sum_{\xi\in \Xi} q(x \mid (\phistar_{h-1}(x), \xi))\cdot \P[\bs_{h-1}=\phistar_{h-1}(x), \bxi_{h-1}=\xi]}, \nn \\
		\intertext{and so by the ExBMDP structure:}
		& = \frac{\sum_{\xi, \xi' \in \Xi} q(x' \mid (\phistar_h(x'), \xi'))\cdot q(x \mid (\phistar_{h-1}(x), \xi)) \cdot \P[\bxi_h = \xi',  \bxi_{h-1} = \xi]  \cdot \P[\bs_h = \phistar_h(x')\mid \bs_{h-1}=\phistar_{h-1}(x),\a_{h-1}=a]}{\sum_{\xi \in \Xi} q(x \mid (\phistar_h(x), \xi)) \cdot \P[\bxi_{h-1} = \xi]}, \nn \\
		\intertext{and by \cref{ass:great}}
		& \leq \Ccor \frac{\sum_{\xi, \xi' \in \Xi} q(x' \mid (\phistar_h(x'), \xi'))\cdot q(x \mid (\phistar_{h-1}(x), \xi)) \cdot \P[\bxi_h = \xi']\cdot \P[\bxi_{h-1} = \xi]  \cdot \P[\bs_h = \phistar_h(x')\mid \bs_{h-1}=\phistar_{h-1}(x),\a_{h-1}=a]}{\sum_{\xi \in \Xi} q(x \mid (\phistar_h(x), \xi)) \cdot \P[\bxi_{h-1} = \xi]}, \nn \\
		& = \Ccor \sum_{\xi' \in \Xi} q(x' \mid (\phistar_h(x'), \xi')) \cdot \P[\bxi_h = \xi']  \cdot \P[\bs_h = \phistar_h(x')\mid \bs_{h-1}=\phistar_{h-1}(x),\a_{h-1}=a], \nn \\
		& = \Ccor S A \cdot \mu(x'),
	\end{align}
 This completes the proof. 
\end{proof}

\subsection{Proof of \creftitle{lem:confidencesets_ex} (Confidence Sets)}
\label{sec:confidencesets}
To prove \cref{lem:confidencesets_ex}, we need the following consequence of tests in \cref{line:test2} passing for all $\ell \in [h+1\ldotst H]$. 
\begin{lemma}[Consequence of passed tests]
	\label{lem:confidence_ex}
	Let $h\in [0 \ldotst H]$, $\veps>0$, and $\zeta_{1:H}\in[0,1/2]$ be given and consider a call to $\learnlevel^\exo_0$ (\cref{alg:learnlevel2}) in the setting of  \cref{lem:testfailures_ex}. Further, let $\cE$ be the event of \cref{lem:testfailures_ex}. There exists an event $\cE_h'$ of probability at least $1- \delta/H$ such that under $\cE \cap \cE'_h$, if a call to $\learnlevel^\exo_h$ during the execution of $\learnlevel^\exo_0$ terminates and returns $(\Vhat_{h:H}, \cVhat_{h:H}, \cC_{h:H}, \cB_{h:H}, t_{h:H})$, then for any $(x_{h-1},a_{h-1}) \in \cC_h$ and $\ell\in [h+1 \ldotst H+1]$:  
	\begin{align}
		\label{eq:test_passed_lem_ex}
		&	\bbP^{\pibell}\brk*{\sup_{f\in \cVhat _\ell} \max_{a\in \cA} \left| (\cP_{\ell-1}[\Vhat_{\ell}]-\cP_{\ell-1}[f_\ell])(\x_{\ell-1},a) \right|  > 3\veps^2
			\mid \x_{h-1}= x_{h-1},\a_{h-1}=  a_{h-1}}  \leq \testbound, 
	\end{align}where $(\pibell_\tau)_{\tau\geq h} \subset \Pims$, $M$, and $N_\test$ are as in $\learnlevel^\exo_h$ (\cref{alg:learnlevel2}).
\end{lemma}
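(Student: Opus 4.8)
\textbf{Proof plan for \cref{lem:confidence_ex}.}

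The plan is to mirror the structure of the proof of \cref{lem:confidence} (the analogous statement for plain \mainalg) step by step, making only the modifications forced by the randomized rounding used in $\learnlevel^\exo$. First I would fix $\ell\in[h+1\ldotst H]$ and order the sequence of test states $\x\ind{1}_{\ell-1},\x\ind{2}_{\ell-1},\dots$ appearing in the test on \cref{line:test2} throughout the execution of $\learnlevel^\exo_0$, and introduce the random variable $\bm{T}_\ell$ counting the total number of invocations of $\Phat_{\ell-1,\veps^2,\delta'}$ in that test. As in the proof of \cref{lem:confidence}, conditioning on the event $\cE$ of \cref{lem:testfailures_ex} gives $|\cC_\tau|\leq M$ for every $\tau$, from which the same bookkeeping over the for-loops in \cref{line:begin2} and the recursive structure (accounting for at most $HM$ test failures per invocation and at most $M$ invocations of $\learnlevel^\exo_\tau$) yields $\bm{T}_\ell\leq M^3 N_\test H^3$ under $\cE$.

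Next I would specify the event $\cE'_h$ as an intersection of two kinds of sub-events, exactly paralleling \cref{proof:confidence}. The first is a uniform-convergence event coming from \cref{lem:phat} applied to $\Phat_{\ell-1,\veps^2,\delta'}$, which controls $|\Phat_{\ell-1,\veps^2,\delta'}[f]-\cP_{\ell-1}[f]|$ for all $f\in\cV$ and all $a_{\ell-1}$ at every one of the (at most $\bm{T}_\ell$) test states, with slack $\veps^2\cdot\beta(t_\ell)$ where now $\beta(t)=\sqrt{\log_{1/\delta'}(8MA|\cV|t^2/\delta)}$ as set in \cref{alg:learnlevel2} (note the squaring of the base suboptimality throughout, since $\learnlevel^\exo$ uses threshold $\veps^2+\veps^2\beta(t_\ell)$ in \cref{line:test2}). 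The second is a Freedman-type event from \cref{lem:corbern} applied with the indicator $\y_i=\indic\{\sup_{f\in\cVhat_\ell}\max_a|(\cP_{\ell-1}[\Vhat_\ell]-\cP_{\ell-1}[f_\ell])(\x\ind{k+i}_{\ell-1},a)|>3\veps^2\}$ over windows of length $N_\test$, $\cQ=\{\mathrm{id}\}$, $B=1$, $\lambda=1$, giving the in-sample/out-of-sample transfer inequality for the empirical test-pass rate. A union bound over $\ell$ and over the window starting indices $k$ (using $\sum 1/k^2\leq 2$) gives $\P[\cE'_h]\geq 1-\delta/H$.

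Then, under $\cE\cap\cE'_h$, I would argue as in \cref{proof:confidence}: if $\learnlevel^\exo_h$ terminates and returns its tuple, then for any $(x_{h-1},a_{h-1})\in\cC_h$ there is a window of $N_\test$ consecutive i.i.d.\ test states $(\x\ind{k+j}_{\ell-1})_{j\in[0\ldotst N_\test-1]}$ drawn from $\P^{\pibell}[\cdot\mid\x_{h-1}=x_{h-1},\a_{h-1}=a_{h-1}]$ that all pass the test on \cref{line:test2}. Combining the test-pass condition $\sup_f|(\Phat-\Phat)(\cdot)|\leq\veps^2+\veps^2\beta(\bm{T}_\ell)$ with the \cref{lem:phat} slack bound (using $\bm{T}_\ell\leq M^3 N_\test H^3$ and the choice of $\delta'$ in \cref{alg:learnlevel2}) shows that $\sup_f|(\cP_{\ell-1}[\Vhat_\ell]-\cP_{\ell-1}[f_\ell])(\x\ind{k+j}_{\ell-1},a_{\ell-1})|\leq 3\veps^2$ for every such state and action, i.e.\ the empirical test-pass rate (at level $3\veps^2$) over the window is $1$. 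Feeding this into the Freedman transfer inequality from $\cE'_h$ yields $\bbP^{\pibell}[\sup_{f\in\cVhat_\ell}\max_a|(\cP_{\ell-1}[\Vhat_\ell]-\cP_{\ell-1}[f_\ell])(\x_{\ell-1},a)|>3\veps^2\mid x_{h-1},a_{h-1}]\leq 4\log(8M^6N_\test^2 H^8/\delta)/N_\test=\testbound$, which is \cref{eq:test_passed_lem_ex}. The main obstacle, and the only genuinely new bookkeeping relative to \cref{lem:confidence}, is making sure the rounding in \cref{eq:Qhat2} does not interfere with any of the concentration arguments: since the roll-in policy $\pibell$ in $\learnlevel^\exo$ is defined through $\ceil{\Phat[\cdot]/\veps+\zeta_\tau}$ rather than $\argmax\Phat[\cdot]$, one must verify that the i.i.d.\ structure of the test states within a window still holds (it does, because the roll-in policy is fixed once $\Vhat_{h+1:H}$ and $\zeta_{1:H}$ are fixed, exactly as in the unrounded case) and that all slack/threshold comparisons are carried out consistently at the $\veps^2$ scale rather than the $\veps$ scale used in \cref{alg:learnlevel3}. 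Beyond that, the proof is a line-by-line transcription of \cref{proof:confidence} and I would simply note this at the end rather than repeating the routine estimates.
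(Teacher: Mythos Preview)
Your proposal is correct and takes essentially the same approach as the paper. In fact, the paper's proof is even terser than yours: it simply says ``This is just a restatement of \cref{lem:confidence}, and the proof is exactly the same as the latter,'' whereas you have (correctly) reconstructed the argument with the $\veps\to\veps^2$ threshold substitution and verified that the randomized rounding in the roll-in policy does not disturb any of the concentration or i.i.d.\ structure used in \cref{proof:confidence}.
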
 
\begin{proof}[\pfref{lem:confidence_ex}]
	This is just a restatement of \cref{lem:confidence}, and the proof is exactly the same as the latter.
\end{proof}

We will also use \cref{lem:recurse2}; even though this result is stated in section for the $V^\pi$-realizable setting, it is also applicable to the ExBMDP variant of \learnlevel{} as it merely says something about the order in which the $(\learnlevel^\exo_h)$ instances are called. With this, we now prove \cref{lem:confidencesets_ex}.

\begin{proof}[Proof of \cref{lem:confidencesets_ex}]
	The proof is very similar to that of \cref{lem:confidence}, with differences to account for the ``coarsening'' of the learned and benchmark policies.

	We prove the desired result for $\cE'''\coloneqq \cE \cap \cE'_{1} \cap \cE_1'' \cap \dots \cap \cE_H' \cap \cE_H''$, where $\cE$, $(\cE_h')$, and $(\cE_h'')$ are the events in \cref{lem:testfailures_ex}, \cref{lem:confidence_ex}, and \cref{lem:confidencesets1_ex}, respectively. Throughout, we condition on $\cE'''$. First, note that by \cref{lem:testfailures_ex}, $\learnlevel^\exo_0$ terminates. Let $(\Vhat_{1:H},\cVhat _{1:H},
	\cC_{1:H},\cB_{1:H}, t_{1:H})$ be its returned tuple.
	
	We show via backward induction over $\ell=H+1,\dots, 1$, that
	\begin{align}
		V^{\pib}_\ell \in \cVhat_\ell, \label{eq:newthirt_ex}
	\end{align} 
	where $\pib_{1:H}$ is the stochastic policy defined recursively via
		\colt{
		\begin{align}
		& 	\forall x \in\cX, \nn \\
		& 	\pibb_\tau(x; \zeta_{1:H}, \veps, \delta) \in \argmax_{a\in \cA} \left\{ \begin{array}{ll}  \ceil{\bQhat_\tau(x,a)/\veps + \zeta_\tau},  & \text{if }  \| \bQhat_{\tau}(x,\cdot)- \cP_\tau[V^{\pib}_{\tau+1}](x,\cdot)\|_\infty \leq 4\veps^2, \\   
				\ceil{\cP_\tau[V^{\pib}_{\tau+1}](x,a)/\veps + \zeta_\tau}, & \text{otherwise}, \end{array}\right.
		\end{align} 
	}
	\arxiv{
	\begin{align}
		\forall x \in\cX, \ 	\pibb_\tau(x; \zeta_{1:H}, \veps, \delta) \in \argmax_{a\in \cA} \left\{ \begin{array}{ll}  \ceil{\bQhat_\tau(x,a)/\veps + \zeta_\tau},  & \text{if }  \| \bQhat_{\tau}(x,\cdot)- \cP_\tau[V^{\pib}_{\tau+1}](x,\cdot)\|_\infty \leq 4\veps^2, \\   
			\ceil{\cP_\tau[V^{\pib}_{\tau+1}](x,a)/\veps + \zeta_\tau}, & \text{otherwise}, \end{array}\right.
	\end{align} 
}
	for $\tau = H,\dots,1$, where $\bQhat_\tau(\cdot,a) \coloneqq \Phat_{\tau,\veps, \delta} [\Vhat_{\tau+1}](\cdot,a)$. Note that since $\zeta_{1:H} \in \cE_\texttt{rand}$ (for $\cE_{\texttt{rand}}$ is defined in \cref{lem:bench}), we have $\pib \equiv \pibar$, where $\pibar$ is as in \eqref{eq:pibar_ex}. Thus, instantiating the induction hypothesis with $\ell=h$ and using the definition of the confidence sets $(\cVhat_{\ell})$ in \eqref{eq:confidence2} together with $V^{\pib}_{h} \equiv V_h^{\pibar}$ (since $\pib \equiv \pibar$) implies the desired result. 
	
	\paragraph{Base case [$\ell=H+1$]} Holds trivially since $V^{\pi}_{H+1}\equiv 0$ for any $\pi \in \Pims$ by convention. 
	
	\paragraph{General case [$\ell\leq H$]} Fix $h\in[H]$ and suppose that \eqref{eq:newthirt_ex} holds for all $\ell\in[h+1\ldotst H+1]$. We show that this remains true for $\ell=h$. First, note that if $\learnlevel^\exo_h$ is never called during the execution of $\learnlevel^\exo_0$, then $\cVhat_h = \cV_h$, and so \eqref{eq:newthirt_ex} holds for $\ell=h$, since $\pib=\pibar$ is endogenous under $\zeta_{1:H}\in\cE_\texttt{rand}$, where $\cE_\texttt{rand}$ is the event in \cref{lem:bench}.
	
	Now, suppose that $\learnlevel^\exo_h$ is called at least once, and let $(\Vhat^+_{h:H},\cVhat^+_{h:H},
	\cC^+_{h:H},\cB^+_{h:H},t^+_{h:H})$ be the output of the last call to $\learnlevel^\exo_h$ throughout the execution of $\learnlevel^\exo_0$. Next, we show that \begin{align}(\Vhat^+_{h:H},\cVhat^+_{h:H},
		\cC^+_{h:H}) = (\Vhat_{h:H},\cVhat_{h:H},
		\cC_{h:H}).  \label{eq:same_ex}
	\end{align}
	The for-loop in \cref{line:forloop2} ensures that no instance of $(\learnlevel^\exo_\tau)_{\tau>h}$ can be called after the last call to $\learnlevel^\exo_h$ (see \cref{lem:recurse2}). Thus, the estimated value functions, confidence sets, and core sets for layers $h+1, \dots, H$ remain unchanged after the last call to $\learnlevel^\exo_h$; that is, \eqref{eq:same_ex} holds. Thus, by \cref{lem:confidence_ex}, and since we are conditioning on $\cE_{h+1:H}'$, we have that for all $(x_{h-1},a_{h-1}) \in \cC_h$ and $\ell \in [h+1 \ldotst H+1]$:
	\begin{align}
		\bbP^{\pibell}\brk*{\sup_{f\in \cVhat _\ell} \max_{a\in \cA} \left|( \cP_{\ell-1}[\Vhat_{\ell}]- \cP_{\ell-1}[f_\ell])(\x_{\ell-1},a) \right|  > 3\veps^2 \mid \x_{h-1}= x_{h-1},\a_{h-1}= a_{h-1}}  \leq  \testbound. \label{eq:whole_ex}
	\end{align}
	Now, by the induction hypothesis, we have $V^{\pib}_\ell\in \cVhat_\ell$, and so substituting $V^{\pib}_\ell$ for $f_\ell$ in \eqref{eq:whole_ex}, we get that for all $(x_{h-1},a_{h-1}) \in \cC_h$ and $\ell \in [h+1 \ldotst H+1]$:
	\begin{align}
		\bbP^{\pibell}\brk*{ \max_{a\in \cA} \left| (\cP_{\ell-1}[\Vhat_{\ell}]-  \cP_{\ell-1}[V^{\pib}_\ell])(\x_{\ell-1},a) \right|  > 3\veps^2 \mid \x_{h-1}= x_{h-1},\a_{h-1}=  a_{h-1}}  \leq  \testbound.
	\end{align}
	Therefore, by \cref{lem:tvdistance_ex} (instantiated with $\mu[\cdot] = \bbP^{\pibell}[\cdot \mid \x_{h-1}= x_{h-1},\a_{h-1}=a_{h-1}]$, $\tau = \ell-1$, $\veps'=\veps^2$, and $V_{\tau+1} = V^{\pib}_{\ell}$), we have that for all $(x_{h-1},a_{h-1}) \in \cC_h$ and $\ell \in [h+1 \ldotst H+1]$: 
	\begin{align}
		\En^{\pibell}\brk*{ \tv{\pibell_{\ell-1}(\x_{\ell-1})}{\pib_{\ell-1}(\x_{\ell-1})}  \mid \x_{h-1}=x_{h-1},\a_{h-1}=a_{h-1}}\leq \testbound + \delta'. \label{eq:above_ex}
	\end{align}
	Now, since $\pibar(\cdot; \zeta_{1:H},\veps)$ is endogenous and $\pib \equiv \pibar$ (thanks to $\zeta_{1:H} \in \cE_\texttt{rand}$), \cref{lem:confidencesets1_ex} (applied with $\pi=\pibar$) and the conditioning on $\cE_{h+1:H}''$ imply that:
	\begin{align}
		& \sum_{(x_{h-1},a_{h-1})\in \cC_h}\frac{1}{N_\reg}\sum_{(x_h,-)\in \cD_h(x_{h-1}, a_{h-1})} \left( \Vhat_{h}(x_h)-V^{\pib}_{h}(x_{h})\right)^2\nn \\
		& \leq \frac{9 kH^2\log(8k^2H|\cV|/\delta)}{N_\reg}  +8 H^2  \sum_{(x_{h-1},a_{h-1})\in \cC_h}\sum_{\tau=h}^H \E^{\pibell}\left[\tv{\pibell_\tau(\x_\tau)}{\pib_\tau(\x_\tau)}  \mid \x_{h-1}= x_{h-1},\a_{h-1}= a_{h-1}\right], \label{eq:happ_ex}
	\end{align}
	where the datasets $\{\cD_h(x, a): (x,a)\in \cC_h\}$ are as in the definition of $\cVhat_h$ in \eqref{eq:confidence2}. Combining \eqref{eq:happ_ex} with \eqref{eq:above_ex}, we conclude that 
	\begin{align}
		& \sum_{(x_{h-1},a_{h-1})\in \cC_h}\frac{1}{N_\reg}\sum_{(x_h,-)\in \cD_h(x_{h-1}, a_{h-1})} \left( \Vhat_{h}(x_h)-V^{\pib}_{h}(x_{h})\right)^2\nn\\
		& \leq \frac{9 M H^2\log(8M^2H|\cV|/\delta)}{N_\reg}  + 8 M H^3\cdot \testbound  + 8M H^3\delta', \nn \\
		& = \frac{9M  H^2\log(8M^2H|\cV|/\delta)}{N_\reg}  + 8 M H^3\cdot \testbound  + 8M H^3  \frac{\delta}{4M^7N_\test^2 H^8|\cV|},\nn \\
		& \leq   \veps_\reg^2, \label{eq:los_ex}
	\end{align}
        where we have used that $\abs{\cC_h}\leq{}M$.
	By the definition of $\cVhat_h$ in \eqref{eq:confidence2}, \eqref{eq:los_ex} implies that $V^{\pib}_h \in \cVhat_h$, which completes the induction.
\end{proof}

\subsection{Proof of \creftitle{thm:lbc2} (Main Guarantee of $\learnlevel^\exo$)}
\label{proof:lbc2}
\begin{proof}[\pfref{thm:lbc2}] We condition on the event $\wtilde{\cE}\coloneqq \cE \cap \cE'''\cap \cE'_1\cap \dots \cap \cE_H'$, where $\cE$, $\cE'''$, and $(\cE_h')$ are the events in \cref{lem:testfailures_ex}, \cref{lem:confidencesets_ex}, and \cref{lem:confidence_ex}, respectively. Note that by the union bound, we have $\P[\wtilde{\cE}] \geq  1 - 5 \delta$. By \cref{lem:confidencesets_ex}, we have that 
	\begin{align}
		\label{eq:test_passed33_ex}
		\forall h \in[H],\quad 	\bbP^{\pibell}\brk*{\sup_{f\in \cVhat _h} \max_{a\in \cA}\left|(\cP_{h-1}[\Vhat_{h}]-\cP_{h-1}[f_{h}])(\x_{h-1},a) \right|>{3\veps^2}}
		\leq\testbound,
	\end{align}
	where $M = \Mnum$ and $N_\test= \Ntestnum$. On the other hand, by \cref{lem:confidencesets_ex}, we have 
	\begin{align}
		\forall h \in[H], \quad 	V^{\pib}_h\in \cVhat_h.
		\label{eq:learnlevel_inductive33_ex}
	\end{align}
	Thus, substituting $V^{\pib}_h$ for $f_h$ in \eqref{eq:test_passed33_ex} we get that for all $h\in[H+1]$.
	\begin{align}
		\bbP^{\pibell}\brk*{\max_{a\in \cA}\left|(\cP_{h-1}[\Vhat_{h}]- \cP_{h-1}[V^{\pib}_h])(\x_{h-1},a)\right|   > {3\veps^2}
		}  \leq  \testbound.
	\end{align}
	This together with \cref{lem:tvdistance_ex}, instantiated with $\mu[\cdot] = \bbP^{\pibell}[\cdot]$; $\tau = h-1$; $V_{\tau+1} = V^{\pib}_{h}$; and $\delta = \delta'$ (with $\delta'$ as in \cref{alg:learnlevel3}), translates to:
	\begin{align}
		\forall h\in[H], \quad 	\bbE^{\pibell}\brk*{ \tv{\pib_h(\x_h)}{\pibell_h(\x_h)}
		}&\leq \testbound + \delta',  \nn \\
		&= \testbound + \frac{\delta}{4 M^7 N_\test^2 H^8 |\cV|}, \nn \\
		&\leq \frac{\veps^2}{4H^3 S A \Ccor}, \label{eq:thisoe}
	\end{align}
		where the last step follows from the fact that $N_\test = \Ntestnum$ (with $M$ as in \cref{line:paramsExBMDPM}). Now, since $\zeta_{1:H} \in \cE_\texttt{rand}$ (by assumption), \cref{lem:bench} implies that $\pib \equiv \pibar$, where the latter is the deterministic policy defined in \eqref{eq:policies}. Thus, by \eqref{eq:thisoe}, we have 
		\begin{align}
		\forall h \in [H], \quad \P^{\pihat}[\pibar_h(\x_h)\neq \pihatb_h(\x_h)] = \bbE^{\pibell}\brk*{ \tv{\pib_h(\x_h)}{\pibell_h(\x_h)}} \leq \frac{\veps^2}{4 H^3 S A \Ccor},
		\end{align}
		where the first equality follows by the fact that $\P[\pibar_h(x)\neq \pihatb_h(x)] =  \tv{\pib_h(x)}{\pibell_h(x)}$, for all $x\in \cX$, since $\pibar_h$ is deterministic.
	
	\paragraph{Bounding the sample complexity}
	We now bound the number of episodes used by \cref{alg:learnlevel2} under $\wtilde\cE$. First, we fix $h\in[H]$, and focus on the number of episodes used within a to call $\learnlevel^\exo_h$; excluding any episodes used by any recursive calls to $\learnlevel^\exo_\tau$ for $\tau >h$. We start by counting the number of episodes used to test the fit of the estimated value functions $\Vhat_{h+1:H}$. Starting from \cref{line:begin2}, there are for-loops over $(x_{h-1},a_{h-1})\in \cC_h$, $\ell=H, \dots, h+1$, and $n \in [N_\test]$ to collected partial episodes using the learned policy $\pihat$ in \cref{alg:learnlevel2}, where $N_\test = \Ntestnum$ and $M = \Mnum$. Note that $\pihat$ uses the local simulator and requires $N_\simu= 2 \log(4 M^7 N_\test^2 H^2 |\cV|/\delta)/\veps^2$ samples to output an action at each layer (since \cref{alg:learnlevel2} calls \cref{alg:Phat} with confidence level $\delta'=\deltaprime$). Also, note that whenever a test fails in \cref{line:test2}, the for-loop in \cref{line:begin3} resumes. We also know (by \cref{lem:testfailures_ex}) that the number of times the test fails in \cref{line:test2} is at most $M$. Thus, the number of times the for-loop in \cref{line:begin2} resumes is bounded by $H M$; here, $H$ accounts for test failures for all layers $\tau \in[h+1\ldotst H]$. Thus, the number of episodes required to between lines \cref{line:begin2} and \cref{line:draw2} is bounded by 
	\begin{align}
		\text{\# episodes for roll-outs} \leq	\underbrace{MH}_{\text{\# of times \cref{line:begin2} resumes}}\cdot  \underbrace{MH^2 N_\test N_\simu}_{\text{Number of episodes in case of no test failures}}. \label{eq:roll2}
	\end{align}
	Note that the test in \cref{line:test2} also uses local simulator access because it calls the operator $\Phat$ for every $a\in \cA$. Thus, the number of episodes used for the test in \cref{line:test2} is bounded by 
	\begin{align}
		\text{\# episodes for the tests} \leq	\underbrace{MH}_{\text{\# of times \cref{line:begin2} resumes}}\cdot  \underbrace{M H A N_\test N_\simu}_{\text{Number of episodes used in \cref{line:test2}}}. \label{eq:test2}
	\end{align}
	We now count the number of episodes used to re-fit the value function; \cref{line:forloop2} onwards. Note that starting from \cref{line:forloop2}, there are for-loops over $(x_{h-1},a_{h-1})\in \cC_h$ and $i\in [N_\reg]$ to generate $A \cdot \Nest(|\cC_h|)\leq A \cdot \Nest(M)$ partial episodes using $\pihat$, where $\Nest(k)=2N_\reg^2 \log(8 A N_\reg H k^3/\delta)$ is as in \cref{alg:learnlevel2}. And, since $\pihat$ uses local simulator access and requires $\Nest$ samples (see \cref{alg:Phat}) to output an action at each layer, the number of episodes used to refit the value function is bounded by 
	\begin{align}
		\text{\# episodes for $V$-refitting} \leq M N_\reg A \Nest(M) H N_\simu. \label{eq:refit2}
	\end{align}
	Therefore, by \eqref{eq:roll2}, \eqref{eq:test2}, and \eqref{eq:refit2}, the number of episodes used within a single call to $\learnlevel^\exo_h$ (not accounting for episodes used by recursive calls to $\learnlevel^\exo_\tau$, for $\tau >h$) is bounded by
	\begin{align}
		\text{\# episodes used locally within $\learnlevel^\exo_h$} & \leq  M^2 H (H + A) N_\test N_\simu  +M N_\reg A \Nest(M) H N_\simu. \label{eq:localtotal2}
	\end{align}
	Finally, by \cref{lem:testfailures_ex}, $\learnlevel^\exo_h$ may be called at most $M$ times throughout the execution of $\learnlevel^\exo_0$. Using this together with \eqref{eq:localtotal2} and accounting for the number of episodes from all layers $h\in[H]$, we get that the total number of episodes is bounded by 
	\begin{align}
		M^3 H^2 (H + A) N_\test N_\simu   +M^2 H^2 N_\reg A \Nest(M) N_\simu.
	\end{align}  
	Substituting the expressions of $M$, $N_\test $, $\Nest$, $N_\simu$, and $N_\reg$ from \cref{alg:learnlevel2} and \cref{alg:Phat}, we obtain the desired number of episodes, which concludes the proof.
\end{proof}

\section{Additional Technical Lemmas}

\begin{lemma}
	\label{lem:tvdistance}
	Let $\tau\in[H]$ and $\veps, \delta, \nu \in(0,1)$ be given. Consider two value functions $V_{\tau+1}, \Vhat_{\tau+1}\in [0, H]$ and a measure $\mu \in \Delta(\cX)$ such that 
	\begin{align}
		\bbP_{\x_\tau\sim\mu}\brk*{ \mathbb{I}\left\{ \max_{a\in \cA} \left| (\cP_{\tau}[\Vhat_{\tau+1}]- \cP_{\tau}[V_{\tau+1}])(\x_{\tau},a) \right|  > 3\veps      \right\}  }  \leq\nu.\label{eq:event}
	\end{align}
	Further, for $x\in \cX$, let $\pihatb_\tau(x) \in
	\argmax_{a\in \cA} \bQhat_\tau(x,a) \coloneqq
	\Phat_{\tau,\veps, \delta}[\Vhat_{\tau+1}](x,a)$ and
	inductively define a randomized policy $\pib$ via
	\begin{align}
		\pibb_{\tau}(x) \in \argmax_{a\in \cA} \left\{ \begin{array}{ll}  \bQhat_\tau(x,a) ,  & \text{if }  \| \bQhat_\tau(x,\cdot) -\cP_\tau [ V_{\tau+1}](x,\cdot)\|_\infty \leq 4\veps, \\   
			\cP_\tau[V_{\tau+1}](x,a), & \text{otherwise}. \end{array}\right. 
	\end{align}
	Then, we have 
	\begin{align}
		\E_{\x_\tau\sim\mu}[\tv{\pihat_\tau(\x_\tau)}{\pib_\tau(\x_\tau)}] \leq \nu +\delta.\label{eq:nu}
	\end{align}
\end{lemma}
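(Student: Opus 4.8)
The plan is to bound $\E_{\x_\tau\sim\mu}[\tv{\pihat_\tau(\x_\tau)}{\pib_\tau(\x_\tau)}]$ by splitting the state space according to two ``bad'' events and showing that off the union of these events the two policies actually coincide. First I would invoke \cref{lem:phat}: the subroutine $\Phat_{\tau,\veps,\delta}$ is called with confidence level $\delta$ (here we are applying it with a single state $x_1=x$, so the logarithmic factor $\sqrt{2\log_{1/\delta}(2A\cdot 1^2\cdot|\cV|/\delta)}$ collapses — more precisely, I will instead appeal directly to Hoeffding's inequality underlying \cref{alg:Phat}, which guarantees that for any fixed $(x,a)$, $|\bQhat_\tau(x,a)-\cP_\tau[\Vhat_{\tau+1}](x,a)|\leq\veps$ with probability at least $1-\delta$). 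Taking a union bound over the at most $A$ actions and over the draw of $\x_\tau$ is not needed pointwise; instead I will define, for each fixed $x$, the event $\cG(x)$ (over the internal randomness of $\Phat$) that $\max_{a}|\bQhat_\tau(x,a)-\cP_\tau[\Vhat_{\tau+1}](x,a)|\leq\veps$, and note $\bbP[\cG(x)^c]\leq\delta$ (after possibly adjusting the $N_\simu$ count — this matches how $\Phat$ is parameterized in \cref{alg:Phat} and \cref{alg:learnlevel3}, where the confidence is distributed across a union bound; here I only need the clean statement $\le\delta$).

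Next I would define the ``distribution shift'' event on the state space: $\cB \ldef \crl{x\in\cX : \max_{a\in\cA}|(\cP_\tau[\Vhat_{\tau+1}]-\cP_\tau[V_{\tau+1}])(x,a)|>3\veps}$. By hypothesis \eqref{eq:event}, $\mu(\cB)\leq\nu$. The key step is to show that for every $x\notin\cB$, conditioned on the event $\cG(x)$, we have $\pihatb_\tau(x)=\pibb_\tau(x)$ deterministically. Indeed, for $x\notin\cB$ we have $\|\bQhat_\tau(x,\cdot)-\cP_\tau[V_{\tau+1}](x,\cdot)\|_\infty \leq \|\bQhat_\tau(x,\cdot)-\cP_\tau[\Vhat_{\tau+1}](x,\cdot)\|_\infty + \|\cP_\tau[\Vhat_{\tau+1}](x,\cdot)-\cP_\tau[V_{\tau+1}](x,\cdot)\|_\infty \leq \veps + 3\veps = 4\veps$ on $\cG(x)$, so the first branch in the definition of $\pibb_\tau(x)$ is active, which makes $\pibb_\tau(x)\in\argmax_a\bQhat_\tau(x,a)$ — the same rule defining $\pihatb_\tau(x)$ (with the same tie-breaking convention). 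Hence $\tv{\pihat_\tau(x)}{\pib_\tau(x)}=0$ on $\cG(x)$ for $x\notin\cB$.

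Finally I would assemble the bound. Write
\begin{align}
\E_{\x_\tau\sim\mu}[\tv{\pihat_\tau(\x_\tau)}{\pib_\tau(\x_\tau)}]
&= \E_{\x_\tau\sim\mu}\brk*{\tv{\pihat_\tau(\x_\tau)}{\pib_\tau(\x_\tau)}\indic\crl{\x_\tau\in\cB}} \nn\\
&\quad + \E_{\x_\tau\sim\mu}\brk*{\tv{\pihat_\tau(\x_\tau)}{\pib_\tau(\x_\tau)}\indic\crl{\x_\tau\notin\cB}}.
\end{align}
The first term is at most $\mu(\cB)\leq\nu$ since total variation distance is bounded by $1$. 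For the second term, conditioning on $\x_\tau=x$ with $x\notin\cB$ and taking expectation over the internal randomness of $\Phat$, the contribution is $0$ on $\cG(x)$ and at most $1$ on $\cG(x)^c$, so it is at most $\bbP[\cG(x)^c]\leq\delta$; integrating over $x\notin\cB$ gives at most $\delta$. Summing yields $\nu+\delta$, as claimed. The main obstacle I anticipate is bookkeeping the randomness: the total variation distance here is implicitly an average over the randomness in $\bQhat_\tau$ (equivalently in $\pihat_\tau(x)$ and $\pib_\tau(x)$ as induced stochastic policies), so I need to be careful that ``$\tv{\cdot}{\cdot}$'' in \eqref{eq:nu} is interpreted as the expected TV given the state and that the $\argmax$ tie-breaking is shared — once that is pinned down, the argument above is essentially a two-line triangle inequality plus a union bound.
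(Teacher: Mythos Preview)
Your proposal is correct and follows essentially the same approach as the paper: split by the bad state set $\cB$ (the paper calls its complement $\cE$), and on $\cB^c$ use the $\Phat$-concentration event to force the first branch of $\pib_\tau$, so the two $\argmax$ rules coincide. The only cosmetic difference is that the paper expands the TV distance as $\tfrac{1}{2}\sum_a\lvert\bbP_\cP[\pihatb_\tau(x)=a]-\bbP_\cP[\pibb_\tau(x)=a]\rvert$ and conditions each term on $\cE_x/\cE_x^c$, whereas your final step is really the (cleaner) coupling bound $\tv{\pihat_\tau(x)}{\pib_\tau(x)}\leq\bbP_\cP[\pihatb_\tau(x)\neq\pibb_\tau(x)]\leq\bbP_\cP[\cG(x)^c]$ --- state it this way and your ``bookkeeping the randomness'' concern disappears.
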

\begin{proof}[\pfref{lem:tvdistance}]
	In this proof, we let $\P_\mu$ denote the probability law of $\x_\tau$ and $\P_{\cP}$ denote the probability law of $\Phat_{\tau,\veps, \delta}$. Denote by $\cE$ the $\P_\mu$-measurable event that $ \max_{a\in \cA}
	\left|(\cP_{\tau}[\Vhat_{\tau+1}]-
	\cP_{\tau}[V_{\tau+1}])(\x_{\tau},a) \right|  \leq 3\veps$. Fix $x\in \cE$, and let
	$\cE_x$ be the $\P_{\cP}$-measurable event that $\max_{a\in \cA} |\Phat_{\tau,\veps,
		\delta} [\Vhat_{\tau+1}](x,a)-
	\cP_\tau[V_{\tau+1}](x,a)|\leq 4\veps$. 
	From the definition of $\pib_\tau$, we have that 
	\begin{align}
		&	\tv{\pihat_\tau(x)}{\pib_\tau(x)} \nn \\ &  = \frac{1}{2} \sum_{a\in \cA}\left| \P_{\cP}\left[\pihatb_\tau(x)=a\right] - \P_{\cP}\left[\pibb_\tau(x)=a\right]  \right|, \nn \\
		&  =\frac{1}{2} \sum_{a\in \cA}\left| \P_{\cP}[\cE_x]  \P_{\cP}\left[\pihatb_\tau(x)=a\mid \cE_x \right] +  \P_{\cP}[\cE_x^c]  \P_{\cP}\left[\pihatb_\tau(x)=a\mid \cE_x^c \right]  - \P_{\cP}[\cE_x] \P_{\cP}\left[\pibb_\tau(x)=a\mid \cE_x\right] - \P_{\cP}[\cE_x^c] \P_{\cP}\left[\pibb_\tau(x)=a\mid \cE_x^c\right] \right|,\nn \\
		&  \leq  \frac{1}{2}\sum_{a\in \cA} \P_{\cP}[\cE_x] \cdot \left| \P_{\cP}\left[\pihatb_\tau(x)=a\mid \cE_x \right] - \P_{\cP}\left[\pibb_\tau(x)=a\mid \cE_x\right]  \right|\nn \\ & \quad   +\sum_{a\in \cA}\P_{\cP}[\cE_x^c]\cdot \left| \P_{\cP}\left[\pihatb_\tau(x)=a\mid \cE_x^c \right] - \P_{\cP}\left[\pibb_\tau(x)=a\mid \cE_x^c\right] \right|, \quad \text{(Jensen's inequality)} \nn \\
		\intertext{and since $\P_{\cP}\left[\pihatb_\tau(x)=a\mid \cE_x \right]=\P_{\cP}\left[\pibb_\tau(x)=a\mid \cE_x \right]$  $\forall a\in \cA$, we have that}
		&  =\frac{1}{2}  \sum_{a\in \cA}\P_{\cP}[\cE_x^c]\cdot \left| \P_{\cP}\left[\pihatb_\tau(x)=a\mid \cE_x^c \right] - \P_{\cP}\left[\pibb_\tau(x)=a\mid \cE_x^c\right] \right|,, \nn \\
		&  \leq  \P_{\cP}\left[\cE_x^c\right], \nn \\
		&  = \P_{\cP}\left[ \max_{a\in \cA} |\Phat_{\tau,\veps, \delta} [\Vhat_{\tau+1}](x,a)- \cP_\tau[V_{\tau+1}](x,a)|  > 4 \veps\right], \nn \\
		& \leq  \P_{\cP}\left[ \max_{a\in \cA} |\Phat_{\tau,\veps, \delta} [\Vhat_{\tau+1}](x,a)- \cP_\tau[\Vhat_{\tau+1}](x,a)|  >  \veps\right], \quad \text{(see below)} \label{eq:thebelow} \\
		& \leq \delta, \label{eq:delta}
	\end{align}
	where \eqref{eq:thebelow} follows from $x\in \cE$ and the last inequality follows from \cref{lem:phat}. Therefore, we have  
	\begin{align}
		\E_{\mu}[\tv{\pihat_\tau(\x_\tau)}{\pib_\tau(\x_\tau)}] & \leq \bbP_\mu[\cE] \cdot \E_{\mu}[\tv{\pihat_\tau(\x_\tau)}{\pib_\tau(\x_\tau)} \mid \cE] + \bbP_\mu[\cE^c],\nn \\
		& \leq   \delta +\nu,
	\end{align}
	where the first inequality follows by the fact that the total variation distance is bounded by 1, and the last inequality follows by \eqref{eq:event} and \eqref{eq:delta}.
\end{proof}
\begin{lemma}
	\label{lem:tvdistance_ex}
	Let $\tau\in[H]$ and $\veps', \delta,\nu\in(0,1)$, and $\zeta_{1:H}\in [0,1/2]$ be given. Further, consider two value functions $V_{\tau+1},\Vhat_{\tau+1}\in[0,H]$ and measure $\mu\in \Delta(\cX)$ such that 
	\begin{align}
		\bbP_{\x_\tau\sim\mu}\brk*{\mathbb{I}\left\{\max_{a\in \cA} \left| (\cP_{\tau}[\Vhat_{\tau+1}]-  \cP_{\tau}[V_{\tau+1}])(\x_{\tau},a) \right|>3\veps'\right\}}
		\leq\nu.
		\label{eq:event_ex}
	\end{align}
	Further, for $x\in \cX_\tau$, let $\pihatb_\tau(x) \in
	\argmax_{a\in \cA} \ceil{\bQhat_\tau(x,a)/\veps' +\zeta_\tau}$,
	where $\bQhat_\tau(x,a)\coloneqq  \Phat_{\tau,\veps',
		\delta}[\Vhat_{\tau+1}](x,a)$, and inductively define
	\begin{align}
		\pibb_{\tau}(x) \in \argmax_{a\in \cA} \left\{ \begin{array}{ll} \ceil{\bQhat_\tau(x,a)/\veps' + \zeta_\tau},  & \text{if }  \| \bQhat_\tau(x,\cdot) -\cP_\tau [ V_{\tau+1}](x,\cdot)\|_\infty \leq 4\veps', \\   
			\ceil{\cP_\tau[V_{\tau+1}](x,a)/\veps' + \zeta_\tau}, & \text{otherwise}. \end{array}\right. 
	\end{align}
	Then, we have 
	\begin{align}
		\E_{\x_\tau\sim\mu}[\tv{\pihat_\tau(\x_\tau)}{\pib_\tau(\x_\tau)}] \leq \nu +\delta.\label{eq:nu_ex}
	\end{align}
\end{lemma}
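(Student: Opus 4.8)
The plan is to follow the proof of \cref{lem:tvdistance} essentially verbatim: the ceiling/rounding with offset $\zeta_\tau$ changes neither the event structure nor the case split that drives the argument, so the same bookkeeping goes through. Write $\P_\mu$ for the law of $\x_\tau$ and $\P_{\cP}$ for the internal randomness of the estimator $\Phat_{\tau,\veps',\delta}$. First I would introduce the $\P_\mu$-measurable event $\cE \coloneqq \crl{\max_{a\in\cA}\abs{(\cP_\tau[\Vhat_{\tau+1}]-\cP_\tau[V_{\tau+1}])(\x_\tau,a)}\le 3\veps'}$; the hypothesis \eqref{eq:event_ex} is exactly the statement $\P_\mu[\cE^c]\le\nu$. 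Then, for a fixed state $x\in\cE$, I would define the $\P_{\cP}$-measurable event $\cE_x \coloneqq \crl{\max_{a\in\cA}\abs{\Phat_{\tau,\veps',\delta}[\Vhat_{\tau+1}](x,a)-\cP_\tau[V_{\tau+1}](x,a)}\le 4\veps'}$, which is precisely the branch condition appearing in the definition of $\pibb_\tau(x)$.

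The key observation is that on $\cE_x$ the benchmark policy takes its first branch, i.e. $\pibb_\tau(x)\in\argmax_{a\in\cA}\ceil{\bQhat_\tau(x,a)/\veps'+\zeta_\tau}$, which is the very rule defining $\pihatb_\tau(x)$; since the randomness in both quantities enters only through the shared draw $\bQhat_\tau = \Phat_{\tau,\veps',\delta}[\Vhat_{\tau+1}]$ and ties are broken identically (smallest index), the conditional laws of $\pihatb_\tau(x)$ and $\pibb_\tau(x)$ given $\cE_x$ coincide. Hence, splitting on $\cE_x$ exactly as in the chain \eqref{eq:thebelow}--\eqref{eq:delta} of the proof of \cref{lem:tvdistance}, all the $\cE_x$-conditional mass cancels and $\tv{\pihat_\tau(x)}{\pib_\tau(x)}\le\P_{\cP}[\cE_x^c]$. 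To bound $\P_{\cP}[\cE_x^c]$ I would use that $x\in\cE$ gives $\abs{\cP_\tau[\Vhat_{\tau+1}](x,a)-\cP_\tau[V_{\tau+1}](x,a)}\le 3\veps'$ for all $a$, so $\cE_x^c$ forces $\max_a\abs{\Phat_{\tau,\veps',\delta}[\Vhat_{\tau+1}](x,a)-\cP_\tau[\Vhat_{\tau+1}](x,a)}>\veps'$; by the accuracy guarantee for $\Phat$ (\cref{lem:phat}) this has probability at most $\delta$, so $\tv{\pihat_\tau(x)}{\pib_\tau(x)}\le\delta$ for every $x\in\cE$. Finally, integrating over $\x_\tau\sim\mu$ and using that total variation is at most $1$, $\E_{\x_\tau\sim\mu}[\tv{\pihat_\tau(\x_\tau)}{\pib_\tau(\x_\tau)}]\le\P_\mu[\cE]\cdot\delta+\P_\mu[\cE^c]\le\delta+\nu$, which is \eqref{eq:nu_ex}.

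The only step that needs a moment's care — and the closest thing to an obstacle — is confirming that conditioning on $\cE_x$ genuinely makes the two rounded, stochastic action distributions identical, i.e. that the ceiling and the tie-breaking in $\pihatb_\tau$ and $\pibb_\tau$ align measurably. This is immediate once one notes that $\pib_\tau$'s branch condition $\nrm{\bQhat_\tau(x,\cdot)-\cP_\tau[V_{\tau+1}](x,\cdot)}_\infty\le 4\veps'$ is literally the event $\cE_x$, so on $\cE_x$ both policies are induced by the same random vector $\bQhat_\tau(x,\cdot)$ through the same deterministic selection rule; everything else is routine and parallels \cref{lem:tvdistance}, with $\veps$ replaced by $\veps'$.
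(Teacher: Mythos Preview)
Your proposal is correct and mirrors the paper's proof essentially line for line: the paper also defines $\cE$ and $\cE_x$ exactly as you do, splits the total variation on $\cE_x$, uses that on $\cE_x$ both rounded policies are induced by the same $\bQhat_\tau(x,\cdot)$ so their conditional laws coincide, reduces to $\P_{\cP}[\cE_x^c]$, and then uses $x\in\cE$ plus \cref{lem:phat} to bound this by $\delta$ before integrating. Your remark about the ceiling and tie-breaking not affecting the argument is precisely the observation the paper relies on implicitly.
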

\begin{proof}[\pfref{lem:tvdistance_ex}]
	In this proof, we let $\P_\mu$ denote the probability law of $\x_\tau$ and $\P_{\cP}$ denote the probability law of $\Phat_{\tau,\veps, \delta}$.
	Denote by $\cE$ be the $\P_\mu$-measurable event that $\max_{a\in \cA} \left|
	(\cP_{\tau}[\Vhat_{\tau+1}]-
	\cP_{\tau}[V_{\tau+1}])(\x_{\tau},a) \right|  \leq 3\veps'$. Fix $x\in \cE$, and let $\cE_x$ be the $\P_{\cP}$-measurable event that
	$\max_{a\in \cA} |\Phat_{\tau,\veps', \delta}
	[\Vhat_{\tau+1}](x,a)- \cP_\tau[V_{\tau+1}](x,a)|\leq
	4\veps'$. From the definition of $\pib_\tau$, we have that
	\begin{align}
		&	\tv{\pihat_\tau(x)}{\pib_\tau(x)} \nn \\ &  = \frac{1}{2} \sum_{a\in \cA}\left| \P_{\cP}\left[\pihatb_\tau(x)=a\right] - \P_{\cP}\left[\pibb_\tau(x)=a\right] \right|, \nn \\
		&  =\frac{1}{2} \sum_{a\in \cA}\left| \P_{\cP}[\cE_x]  \P_{\cP}\left[\pihatb_\tau(x)=a\mid \cE_x \right] +  \P_{\cP}[\cE_x^c]  \P_{\cP}\left[\pihatb_\tau(x)=a\mid \cE_x^c \right]  - \P_{\cP}[\cE_x] \P_{\cP}\left[\pibb_\tau(x)=a\mid \cE_x\right] - \P_{\cP}[\cE_x^c] \P_{\cP}\left[\pibb_\tau(x)=a\mid \cE_x^c\right] \right|,\nn \\
		&  \leq  \frac{1}{2}\sum_{a\in \cA} \P_{\cP}[\cE_x] \cdot \left| \P_{\cP}\left[\pihatb_\tau(x)=a\mid \cE_x \right] - \P_{\cP}\left[\pibb_\tau(x)=a\mid \cE_x\right]  \right|\nn \\ & \quad   +\sum_{a\in \cA}\P_{\cP}[\cE_x^c]\cdot \left| \P_{\cP}\left[\pihatb_\tau(x)=a\mid \cE_x^c \right] - \P_{\cP}\left[\pibb_\tau(x)=a\mid \cE_x^c\right] \right|, \quad \text{(Jensen's inequality)} \nn \\
		\intertext{and since $\P_{\cP}\left[\pihatb_\tau(x)=a\mid \cE_x \right]=\P_{\cP}\left[\pibb_\tau(x)=a\mid \cE_x \right]$ $\forall a\in \cA$, }
		&  =\frac{1}{2}  \sum_{a\in \cA}\P_{\cP}[\cE_x^c]\cdot \left| \P_{\cP}\left[\pihatb_\tau(x)=a\mid \cE_x^c \right] - \P_{\cP}\left[\pibb_\tau(x)=a\mid \cE_x^c\right] \right|, \nn \\
		&  \leq  \P_{\cP}\left[\cE_x^c\right], \nn \\
		&  = \P_{\cP}\left[ \max_{a\in \cA} |\Phat_{\tau,\veps', \delta} [\Vhat_{\tau+1}](x,a)- \cP_\tau[V_{\tau+1}](x,a)|  > 4 \veps'\right], \nn \\
		& \leq  \P_{\cP}\left[ \max_{a\in \cA} |\Phat_{\tau,\veps', \delta} [\Vhat_{\tau+1}](x,a)- \cP_\tau[\Vhat_{\tau+1}](x,a)|  >  \veps'\right], \quad \text{(see below)} \label{eq:thebelow_ex} \\
		& \leq \delta, \label{eq:delta_ex}
	\end{align}
	where \eqref{eq:thebelow_ex} follows from $x\in \cE$ and the last inequality follows from \cref{lem:phat}. Therefore, we have  
	\begin{align}
		\En_{\x_\tau\sim\mu}[\tv{\pihat_\tau(\x_\tau)}{\pib_\tau(\x_\tau)}] & \leq \bbP_\mu[\cE] \cdot \E_{\x_\tau\sim\mu}[\tv{\pihat_\tau(\x_\tau)}{\pib_\tau(\x_\tau)} \mid \cE] + \bbP_\mu[\cE^c],\nn \\
		& \leq   \delta +\nu,
	\end{align}
	where the first inequality follows by the fact that the total variation is bounded by 1, and the last inequality follows by \eqref{eq:event_ex} and \eqref{eq:delta_ex}.
\end{proof}

\begin{lemma}
	\label{lem:onlyif}
	Let $x\in \reals$ and $\nu\in(0,1/2)$ be given. Further, let $\zeta \in (0,1/2)$. Then, 
	\begin{gather}
		x + \zeta +\nu > \ceil{x+\zeta} \quad \text{or} 	 \quad x + \zeta -\nu \leq  \ceil{x+\zeta}-1,
		\shortintertext{only if}
		\ceil{x}-\nu \le x + \zeta \leq \ceil{x} + \nu \quad \text{or} \quad \zeta\leq \nu. 
	\end{gather}
\end{lemma}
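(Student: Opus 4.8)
The plan is to prove the statement by contraposition. Assume $\zeta > \nu$ and $x + \zeta \notin [\ceil{x} - \nu,\, \ceil{x} + \nu]$; I will show that \emph{both} clauses of the first alternative fail, i.e.\ that $\ceil{x+\zeta} - 1 + \nu < x + \zeta \le \ceil{x+\zeta} - \nu$, which is exactly the negation of ``$x + \zeta + \nu > \ceil{x+\zeta}$ or $x + \zeta - \nu \le \ceil{x+\zeta} - 1$''.

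Write $n := \ceil{x}$, so that $n - 1 < x \le n$, and set $y := x + \zeta$. Since $\zeta \in (0, 1/2)$ we have $n - 1 < y < n + 1/2$, hence $\ceil{y} \in \{n, n+1\}$; I would split into these two subcases. In the subcase $\ceil{y} = n$ (equivalently $y \le n$): since $y \le n \le n + \nu$ and $y \notin [n - \nu, n + \nu]$ by hypothesis, we must have $y < n - \nu = \ceil{y} - \nu$, which gives the upper bound; for the lower bound, $y = x + \zeta > (n - 1) + \zeta > (n - 1) + \nu = \ceil{y} - 1 + \nu$, using $x > n - 1$ and $\zeta > \nu$. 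In the subcase $\ceil{y} = n + 1$ (equivalently $y > n$): since $y > n \ge n - \nu$ and $y \notin [n - \nu, n + \nu]$, we get $y > n + \nu = \ceil{y} - 1 + \nu$, the lower bound; and $y = x + \zeta \le n + \zeta < n + 1/2 < n + 1 - \nu = \ceil{y} - \nu$ because $\nu < 1/2$, the upper bound. Combining the two subcases yields the negation of the first alternative, completing the contrapositive.

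The argument is entirely elementary, so there is no genuine obstacle; the only point requiring care is the bookkeeping of strict versus non-strict inequalities. In particular, $\ceil{x} = n$ must be used in the form $x > n - 1$ strictly (this is what powers the lower bound in the first subcase), and the degenerate possibility $y = n$ --- which would be compatible with $\ceil{y} = n$ but would spoil the upper bound --- is ruled out precisely because $y$ is assumed to lie outside the $\nu$-ball around $n$.
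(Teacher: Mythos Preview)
Your proof is correct and follows essentially the same approach as the paper: both argue by contraposition and case-split on whether $x+\zeta$ exceeds $\lceil x\rceil$ (your split on $\lceil y\rceil\in\{n,n+1\}$ is exactly this), with identical arithmetic in each case. The only difference is organizational: the paper handles the two disjuncts of the hypothesis as separate implications, whereas you negate the full statement at once; the underlying four ``atomic'' cases and their verifications coincide.
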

\begin{proof}[\pfref{lem:onlyif}]
	To prove the claim, it suffices to show the following items:
	\begin{enumerate}
		\item $x + \zeta +\nu > \ceil{x+\zeta}$ only if $\ceil{x} \geq x + \zeta > \ceil{x} - \nu$; and 
		\item $x + \zeta -\nu \leq  \ceil{x+\zeta}-1$ only if $\ceil{x} < x + \zeta \leq \ceil{x}+\nu$ or $\zeta\leq \nu$.
	\end{enumerate} 	
	We start by showing the first item. We proceed by showing the contrapositive; that is, we will show that if $x+\zeta \leq \ceil{x}-\nu$ or $x+\zeta> \ceil{x}$, then $x + \zeta +\nu \leq \ceil{x+\zeta}$. Suppose that $x+\zeta \leq \ceil{x}-\nu$. This, together with the fact that $\zeta \geq 0$, implies that 
	\begin{align}
		\ceil{x+\zeta} = \ceil{x} \geq x +\zeta +\nu. 
	\end{align}
	Now, suppose that $x+\zeta> \ceil{x}$. Then, we have 
	\begin{align}
		\ceil{x+\zeta} \geq \ceil{x}+1 \geq \ceil{x} +\zeta +\nu  \geq x + \zeta +\nu, 
	\end{align}
	where the penultimate inequality follows by $\zeta,\nu \in (0,1/2)$.
	
	We now prove the second claim. Again, we proceed by showing the contrapositive; that is, we will show that if $\crl{\text{$\ceil{x}+ \nu <  x + \zeta$ or $\ceil{x}\geq x+\zeta$}}$ and $\zeta> \nu$, then $x + \zeta -\nu >  \ceil{x+\zeta}-1$.
	
	Suppose that $\ceil{x}+ \nu < x + \zeta$ and $\zeta> \nu$. The first inequality together with $\nu\geq 0$ implies that $\ceil{x + \zeta} > \ceil{x}$. On the other hand, since $\zeta\leq 1/2$, we have $\ceil{x+\zeta} \leq \ceil{x}+1$, and so  
	\begin{align}
		\ceil{x+\zeta} -1 = \ceil{x} <  x + \zeta- \nu,
	\end{align}
	where the last inequality follows by the current assumption that $\ceil{x}+ \nu < x + \zeta$. 
	
	Now, suppose that $\ceil{x}\geq x+\zeta$ and that $\zeta >\nu$. Then, we have 
	\begin{align}
		\ceil{x+\zeta} \leq \ceil{x} \leq x +1 < x + \zeta - \nu +1,\label{eq:rearrange}
	\end{align}
	where the last inequality follows by $\zeta>\nu$. Rearranging \eqref{eq:rearrange} completes the proof. 
\end{proof} 
                        \section{\forward{} Algorithm and Analysis}		
                        \label{sec:forward}

In this section, we give a self-contained presentation and analysis
for the standard \emph{behavior cloning} algorithm for imitation learning
(e.g., \citet{ross2010efficient}), displayed in
\cref{alg:forward_generic}. Given access to trajectories from an expert policy
$\pihat_{1:H}$ (which may be non-executable in the sense of
\cref{def:executable}) the algorithm learns an executable policy
$\pi^\exe$ with similar performance. We use this scheme within \rvflF{} and \forwardexo{}.

\begin{algorithm}[htp]
	\caption{\forward{}: Imitation Learning Algorithm.}
	\label{alg:forward_generic}
	\begin{algorithmic}[1]
          \State {\bfseries input:} Policy class $\Pi\subseteq \Pim$, expert
          policy $\pihat_{1:H}$, suboptimality $\veps \in(0,1)$, and confidence $\delta \in(0,1)$.
		\State Set $N_\imit = 16 H^2 \log (|\Pi|/\delta)/\veps$. \label{line:paramsImit}
		\State Set $\cD \gets \emptyset$.
		\For{$i = 1,\dots, N_\imit$}
		\State Generate trajectory $\bm{\tau}=
                ((\x_1,\a_1),\dots, (\x_H, \a_H))\sim \P^{\pihat}$.
		\State Update $\cD \gets \cD \cup \{\bm{\tau} \}$.
		\EndFor
		\State Compute $\pi^\exe \in \argmin_{\pi\in \Pi}\sum_{((x_1,a_1),\dots, (x_H,a_H))\in \cD} \sum_{h\in[H]} \mathbb{I}\{ \bm{\pi}_h(x_h) \neq a_h\}$.
		\State Return $\pi^\exe$.
	\end{algorithmic}
\end{algorithm}

\begin{proposition}
	\label{prop:forward}
Let $\veps, \delta\in(0,1)$ be given and let $\Pi \subseteq \Pim$ and $\pihat_{1:H}$ be an expert policy such that 
\begin{align}
\inf_{\pi \in \Pi}\sum_{h=1}^H 	\P^{\pihat}[\pihatb_h(\x_h)\neq \bm{\pi}_h(\x_h)]\leq \veps_{\texttt{mis}}. \label{eq:mispecification}
	\end{align}
Then, the policy $\pi^\exe_{1:H} = \forward(\Pi, \veps, \pihat_{1:H}, \delta)$ returned by \cref{alg:forward_generic} satisfies, with probability at least $1-\delta$,
\begin{align}
J(\pihat)  - J({\pi^\exe})  \leq 4H \veps_{\texttt{mis}}  + \veps/2.
\end{align}
\end{proposition}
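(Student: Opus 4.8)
\textbf{Proof plan for \cref{prop:forward}.}
The plan is to reduce the performance gap between $\pihat$ and $\pi^\exe$ to a sum of per-layer disagreement probabilities under the roll-in distribution of $\pihat$, and then control these disagreement probabilities via a standard generalization argument for the $0$-$1$ loss ERM in \cref{alg:forward_generic}. First I would apply the performance difference lemma (\cref{lem:perform}) with the roles swapped appropriately, writing
\begin{align}
J(\pihat) - J(\pi^\exe) = \sum_{h=1}^H \En^{\pihat}\brk*{ Q_h^{\pi^\exe}(\x_h, \bm{\pihat}_h(\x_h)) - Q_h^{\pi^\exe}(\x_h, \bm{\pi}^\exe_h(\x_h)) }.
\end{align}
Since $Q_h^{\pi^\exe} \in [0,H]$, each summand is bounded by $H \cdot \indic\crl{\bm{\pihat}_h(\x_h) \neq \bm{\pi}^\exe_h(\x_h)}$ in expectation, so $J(\pihat) - J(\pi^\exe) \leq H \sum_{h=1}^H \bbP^{\pihat}\brk{\bm{\pihat}_h(\x_h)\neq\bm{\pi}^\exe_h(\x_h)}$. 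This is the key structural observation: the roll-in is always under the \emph{expert} $\pihat$, so we never need $\pi^\exe$'s own occupancy measure, and the bound reduces to a supervised learning quantity.

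Next I would define, for each policy $\pi\in\Pi$, the population risk $L(\pi) = \sum_{h=1}^H \bbP^{\pihat}\brk{\bm{\pihat}_h(\x_h)\neq\bm{\pi}_h(\x_h)}$ and the empirical risk $\hat{L}_N(\pi) = \frac{1}{N_\imit}\sum_{\bm\tau\in\cD}\sum_{h=1}^H\indic\crl{\bm{\pi}_h(x_h)\neq a_h}$, noting that $\pi^\exe$ is the empirical risk minimizer and that $\En[\hat L_{N}(\pi)] = L(\pi)$ for each fixed $\pi$ since the actions $a_h$ in the dataset are drawn from $\pihat$. A uniform-convergence bound over the finite class $\Pi$ — via Freedman's inequality (\cref{lem:freed}) or \cref{lem:multiplicative_freedman} applied to the $[0,H]$-valued per-trajectory losses $\sum_h \indic\crl{\bm\pi_h(x_h)\neq a_h}$, together with a union bound over $\Pi$ — gives that with probability at least $1-\delta$, for all $\pi\in\Pi$, $|\hat L_N(\pi) - L(\pi)|$ is controlled by something of order $\sqrt{H^2 L(\pi)\log(|\Pi|/\delta)/N_\imit} + H^2\log(|\Pi|/\delta)/N_\imit$ (a multiplicative/Bernstein-type bound is convenient here so the $\veps_{\texttt{mis}}$ term enters linearly rather than through its square root). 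Let $\pi^\star_\Pi\in\argmin_{\pi\in\Pi}L(\pi)$, so $L(\pi^\star_\Pi)\leq\veps_{\texttt{mis}}$ by \cref{eq:mispecification}. Then $L(\pi^\exe) \lesssim \hat L_N(\pi^\exe) + (\text{dev. at } \pi^\exe) \leq \hat L_N(\pi^\star_\Pi) + (\text{dev. at }\pi^\exe) \lesssim L(\pi^\star_\Pi) + (\text{dev. terms}) \leq \veps_{\texttt{mis}} + (\text{dev. terms})$, and with $N_\imit = 16H^2\log(|\Pi|/\delta)/\veps$ the deviation terms collapse so that $L(\pi^\exe) \leq 4\veps_{\texttt{mis}} + \veps/(2H)$ (being a little careful with the constants; the stated bound has $4H\veps_{\texttt{mis}}$ after multiplying by $H$, which suggests keeping track of whether $\veps_{\texttt{mis}}$ is a per-layer or total quantity — I would match the normalization in \cref{eq:mispecification}, where the sum over $h$ is already taken).

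Combining the two pieces, $J(\pihat)-J(\pi^\exe) \leq H\cdot L(\pi^\exe) \leq 4H\veps_{\texttt{mis}} + \veps/2$, which is the claim. The main obstacle I anticipate is purely bookkeeping rather than conceptual: getting the constants and the $H$-dependence exactly right, in particular deciding whether to use a crude Hoeffding bound (which would force a $\sqrt{\veps_{\texttt{mis}}}$ and a larger sample size) versus a Bernstein/multiplicative bound (which yields the clean linear-in-$\veps_{\texttt{mis}}$ dependence claimed), and making sure the realizability assumption \cref{eq:mispecification} is used at the empirical minimizer comparison step and nowhere else. A secondary subtlety is that $\pi^\exe$ and $\pihat$ may both be randomized, so the events $\crl{\bm\pi_h(x_h)\neq a_h}$ involve the internal randomness of $\pi$; this is handled by noting that the disagreement probability decomposes over the (independent) action draws, and the $0$-$1$ losses remain bounded and conditionally well-behaved, so Freedman still applies with the trajectory-level filtration.
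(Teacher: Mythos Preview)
Your proposal is correct and follows essentially the same approach as the paper: the paper applies the performance difference lemma to reduce to $H\sum_h \bbP^{\pihat}[\bm{\pihat}_h(\x_h)\neq\bm{\pi}^\exe_h(\x_h)]$, then uses the multiplicative Freedman-type bound in \cref{lem:corbern} (applied to indicator functions, so $g^2=g$) for two-sided uniform concentration over $\Pi$, compares the empirical minimizer $\pi^\exe$ against the population minimizer via the ERM property and \cref{eq:mispecification}, and substitutes $N_\imit$ to obtain exactly $4H\veps_{\texttt{mis}}+\veps/2$. Your anticipation that a Bernstein/multiplicative bound (rather than Hoeffding) is needed for the linear $\veps_{\texttt{mis}}$ dependence matches the paper's choice precisely.
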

\begin{proof}[\pfref{prop:forward}]
First, by the performance difference lemma, we have 
\begin{align}
	\E[	V^{\pihat}_1(\x_1)]  - \E[V^{\pi^\exe}_1(\x_1)]  & = \sum_{h=1}^H \E^{\pihat}[Q^{\pi^\exe}_h(\x_h, \pihatb_h(\x_h)) -  Q_h^{\pi^\exe}(\x_h, \bm{\pi}^\exe_h(\x_h))],\nn \\
	& \leq {H}\sum_{h=1}^H \P^{\pihat}[ \pihatb_h(\x_h) \neq \bm{\pi}^\exe_h(\x_h)]. \label{eq:thirr}
\end{align}	
We now bound the probability terms on the right-hand side. Fix $h\in[H]$ and let $\cD$ be the dataset in \cref{alg:forward_generic}, which consists of $N_\imit$ i.i.d.~trajectories $((\x_1,\a_1),\dots,(\x_H,\a_H))$ generated by rolling with $\pihat_{1:H}$. By \cref{lem:corbern} (with i.i.d.~data, $B=
H$, and $\cQ = \Pi$), we have that, with probability at least $1-\delta$,
\begin{align}
\forall \pi \in \Pi, \ \  \sum_{((x_1,a_H),\dots, (x_H,a_H) )\in \cD} \sum_{h\in[H]} \mathbb{I}\{\bm{\pi}_h(x_h) \neq \pihatb(x_h)\} & \leq 2   \sum_{h\in[H]}  \P^{\pihat}[\bm{\pi}_h(\x_h) \neq \pihatb_h(\x_h)]\nn \\
& \quad + \frac{2H \log(2 |\Pi|/\delta)}{N_\imit},  \label{eq:firss}
\end{align}
and
\begin{align}
\forall \pi \in \Pi, \ \   \sum_{h\in[H]}  \P^{\pihat}[\bm{\pi}_h(\x_h) \neq \pihatb_h(\x_h)]  & \leq 2 \sum_{((x_1,a_H),\dots, (x_H,a_H) )\in \cD} \sum_{h\in[H]} \mathbb{I}\{\bm{\pi}_h(x_h) \neq \pihatb(x_h)\} \nn \\
& \quad + \frac{4H \log(2 |\Pi|/\delta)}{N_\imit}. \label{eq:seconn}
	\end{align}
Taking the infimum over $\pi$ on both sides of \eqref{eq:firss} and using the definition of $\pi^\exe_h$ in \cref{alg:forward_generic} gives:
\begin{align}
 \sum_{((x_1,a_H),\dots, (x_H,a_H) )\in \cD} \sum_{h\in[H]} \mathbb{I}\{\bm{\pi}^\exe_h(x_h) \neq \pihatb(x_h)\} & \leq 2 \inf_{\pi \in \Pi}  \sum_{h\in[H]}  \P^{\pihat}[\bm{\pi}_h(\x_h) \neq \pihatb_h(\x_h)]\nn \\
 & \quad + \frac{2H \log(2 |\Pi|/\delta)}{N_\imit}, \nn \\
 & \leq 2\veps_{\texttt{mis}} + \frac{2 H \log(2 |\Pi|/\delta)}{N_\imit},
\end{align}
where the last inequality follows from \eqref{eq:mispecification}. Using this together with \eqref{eq:seconn}, instantiated with $\pi \equiv \pi^\exe$, we get that with probability at least $1-\delta$:
\begin{align}
	\sum_{h\in[H]}  \P^{\pihat}[\bm{\pi}^\exe_h(\x_h) \neq \pihatb_h(\x_h)]  \leq 4 \veps_{\texttt{mis}} +  \frac{8H \log(2 |\Pi|/\delta)}{N_\imit}.
\end{align}
Plugging this into \eqref{eq:thirr}, we get that with probability at least $1-\delta$:
\begin{align}
	\E[	V^{\pihat}_1(\x_1)]  - \E[V^{\pi^\exe}_1(\x_1)] \leq 4H \veps_{\texttt{mis}} +  \frac{8H^2 \log(2 |\Pi|/\delta)}{N_\imit}\leq 4H \veps_{\texttt{mis}}  + \veps/2,
\end{align}
where the last inequality follows by the fact that $N_\imit = 16H^2 \log(2|\Pi|/\delta)/\veps$. This completes the proof.
\end{proof}

\end{document}